\tikzset{
	-Latex,auto,node distance =1 cm and 1 cm,semithick,
	state/.style ={ellipse, draw, minimum width = 0.7 cm},
	point/.style = {circle, draw, inner sep=0.04cm,fill,node contents={}},
	bidirected/.style={Latex-Latex,dashed},
	el/.style = {inner sep=2pt, align=left, sloped}
}
\renewcommand{\ss}[1]{_{\text{#1}}}
\protected\def\check@optarg#1{%
    \@ifnextchar\thmtformatoptarg\@secondoftwo{#1}%
}
\let\oldlistoftheorems\listoftheorems
\renewcommand{\listoftheorems}{
    \renewcommand{\listtheoremname}{List of Theorems}
    \oldlistoftheorems[ignoreall, show={theorem}]
}
\newlength{\thmtopsep}\setlength{\thmtopsep}{\topsep}
\newlength{\thmbotsep}\setlength{\thmbotsep}{\topsep}
\newtheoremstyle{theoremstyle}
    {\thmtopsep}{\thmbotsep}
    {}           
    {}           
    {\bfseries}  
    {.}          
    {.5em}       
    {}           
\theoremstyle{theoremstyle}
\newtheorem{theorem}{Theorem}
\newtheorem{corollary}[theorem]{Corollary}
\newtheorem{definition}[theorem]{Definition}
\newtheorem{example}[theorem]{Example}
\newtheorem{lemma}[theorem]{Lemma}
\newtheorem{proposition}[theorem]{Proposition}
\newtheorem{remark}[theorem]{Remark}
\newtheorem{conjecture}[theorem]{Conjecture}
\newtheorem{openproblem}[theorem]{Open Problem}
\crefname{assumption}{Assumption}{Assumptions}
\Crefname{assumption}{Assumption}{Assumptions}
\crefname{corollary}{Corollary}{Corollaries}
\Crefname{corollary}{Corollary}{Corollaries}
\crefname{definition}{Definition}{Definitions}
\Crefname{definition}{Definition}{Definitions}
\crefname{example}{Example}{Examples}
\Crefname{example}{Example}{Examples}
\crefname{fact}{Fact}{Facts}
\Crefname{fact}{Fact}{Facts}
\crefname{lemma}{Lemma}{Lemmas}
\Crefname{lemma}{Lemma}{Lemmas}
\crefname{model}{Model}{Models}
\Crefname{model}{Model}{Models}
\crefname{proposition}{Proposition}{Propositions}
\Crefname{proposition}{Proposition}{Propositions}
\crefname{question}{Question}{Questions}
\Crefname{question}{Question}{Questions}
\crefname{remark}{Remark}{Remarks}
\Crefname{remark}{Remark}{Remarks}
\crefname{theorem}{Theorem}{Theorems}
\Crefname{theorem}{Theorem}{Theorems}
\newlist{asslist}{enumerate}{1}
\setlist[asslist]{
    ref=\theassumption.(\arabic*),
    label=(\arabic*),
    itemsep=-0.25\baselineskip,
    topsep=0.25\baselineskip
}
\crefname{asslisti}{Assumption}{Assumptions}
\Crefname{asslisti}{Assumption}{Assumptions}
\newlist{corlist}{enumerate}{1}
\setlist[corlist]{
    ref=\thecorollary.(\arabic*),
    label=(\arabic*),
    itemsep=-0.25\baselineskip,
    topsep=0.25\baselineskip
}
\crefname{corlisti}{Corollary}{Corollaries}
\Crefname{corlisti}{Corollary}{Corollaries}
\newlist{deflist}{enumerate}{1}
\setlist[deflist]{
    ref=\thedefinition.(\arabic*),
    label=(\arabic*),
    itemsep=-0.25\baselineskip,
    topsep=0.25\baselineskip
}
\crefname{deflisti}{Definition}{Definitions}
\Crefname{deflisti}{Definition}{Definitions}
\newlist{exlist}{enumerate}{1}
\setlist[exlist]{
    ref=\theexample.(\arabic*),
    label=(\arabic*),
    itemsep=-0.25\baselineskip,
    topsep=0.25\baselineskip
}
\crefname{exlisti}{Example}{Examples}
\Crefname{exlisti}{Example}{Examples}
\newlist{factlist}{enumerate}{1}
\setlist[factlist]{
    ref=\thefact.(\arabic*),
    label=(\arabic*),
    itemsep=-0.25\baselineskip,
    topsep=0.25\baselineskip
}
\crefname{factlisti}{Fact}{Facts}
\Crefname{factlisti}{Fact}{Facts}
\newlist{lemlist}{enumerate}{1}
\setlist[lemlist]{
    ref=\thelemma.(\arabic*),
    label=(\arabic*),
    itemsep=-0.25\baselineskip,
    topsep=0.25\baselineskip,
    topsep=0.25\baselineskip
}
\crefname{lemlisti}{Lemma}{Lemmas}
\Crefname{lemlisti}{Lemma}{Lemmas}
\newlist{modlist}{enumerate}{1}
\setlist[modlist]{
    ref=\themodel.(\arabic*),
    label=(\arabic*),
    itemsep=-0.25\baselineskip,
    topsep=0.25\baselineskip
}
\crefname{modlisti}{Model}{Models}
\Crefname{modlisti}{Model}{Models}
\newlist{proplist}{enumerate}{1}
\setlist[proplist]{
    ref=\theproposition.(\arabic*),
    label=(\arabic*),
    itemsep=-0.25\baselineskip,
    topsep=0.25\baselineskip
}
\crefname{proplisti}{Proposition}{Propositions}
\Crefname{proplisti}{Proposition}{Propositions}
\newlist{qlist}{enumerate}{1}
\setlist[qlist]{
    ref=\theremark.(\arabic*),
    label=(\arabic*),
    itemsep=-0.25\baselineskip,
    topsep=0.25\baselineskip
}
\crefname{qlisti}{Question}{Questions}
\Crefname{qlisti}{Question}{Questions}
\newlist{remlist}{enumerate}{1}
\setlist[remlist]{
    ref=\theremark.(\arabic*),
    label=(\arabic*),
    itemsep=-0.25\baselineskip,
    topsep=0.25\baselineskip
}
\crefname{remlisti}{Remark}{Remarks}
\Crefname{remlisti}{Remark}{Remarks}
\newlist{thmlist}{enumerate}{1}
\setlist[thmlist]{
    ref=\thetheorem.(\arabic*),
    label=(\arabic*),
    itemsep=-0.25\baselineskip,
    topsep=0.25\baselineskip
}
\crefname{thmlisti}{Theorem}{Theorems}
\Crefname{thmlisti}{Theorem}{Theorems}
\begin{document}
\doparttoc 
\faketableofcontents 

%

%

\twocolumn[

\aistatstitle{Wide Mean-Field Bayesian Neural Networks Ignore the Data}

\aistatsauthor{ 
Beau Coker$^*$\footnotemark[1] \And 
Wessel P.~Bruinsma$^*$\footnotemark[2]\footnotemark[3] \And  
David R.~Burt$^*$\footnotemark[2] \And 
Weiwei Pan\footnotemark[1] \And
Finale Doshi-Velez\footnotemark[1]}


\aistatsaddress{
\footnotemark[1]Harvard University\;\;
\footnotemark[2]University of Cambridge\;\;
\footnotemark[3]Invenia Labs
} 
]

\begin{abstract}
Bayesian neural networks (BNNs) combine the expressive power of deep learning with the advantages of Bayesian formalism. In recent years, the analysis of wide, deep BNNs has provided theoretical insight into their priors and posteriors. However, we have no analogous insight into their posteriors under approximate inference. In this work, we show that mean-field variational inference \emph{entirely fails to model the data} when the network width is large and the activation function is odd. Specifically, for fully-connected BNNs with odd activation functions and a homoscedastic Gaussian likelihood, we show that the \emph{optimal} mean-field variational posterior predictive (i.e., function space) distribution converges to the prior predictive distribution as the width tends to infinity. We generalize aspects of this result to other likelihoods. 
Our theoretical results are suggestive of underfitting behavior previously observered in BNNs. While our convergence  bounds are non-asymptotic and constants in our analysis can be computed, they are currently too loose to be applicable in standard training regimes. Finally, we show that the optimal approximate posterior need not tend to the prior if the activation function is not odd, showing that our statements cannot be generalized arbitrarily.
\end{abstract}


\section{INTRODUCTION}\label{sec:introduction}
Bayesian neural networks (BNNs) provide a systematic method of capturing uncertainty in neural networks by placing priors on the weights of the network. Although it has been speculated for decades that BNNs are capable of combining the benefits of Bayesian inference and deep learning, we are only beginning to understand the theoretical properties of this model class and its associated inference techniques. 
One tool for understanding the behavior of modern BNNs with large architectures is to study the limiting behavior of this model as the number of hidden units in each layer, i.e., the \emph{width} of the model, goes to infinity. In this case, the prior predictive distribution of a BNN converges in distribution to the \textit{NNGP}, a Gaussian process (GP) with the \emph{neural network kernel} that depends on the prior on the weights and architecture of the network \citep{neal_1996, matthews_2018}. Analogously, in the case of regression with a Gaussian likelihood, the associated BNN posterior converges to the NNGP posterior \citep{hron_2020}.

However, since exact inference for BNNs is intractable, approximate inference is commonly used in practical settings. While asymptotically exact sampling MCMC methods have been successfully applied to BNNs \citep{neal_1996,izmailov2021bayesian}, these methods can require considerable amounts of computation and it is generally not feasible to ensure mixing.  Variational inference offers a computationally appealing alternative by converting the problem of (approximate) inference into a gradient-based optimization problem. 

Unfortunately, the properties of commonly used approximations of BNN posteriors, like mean-field variational inference (MFVI), have not been extensively studied. MFVI assumes complete posterior independence between the weights, but generalizing asymptotic analysis  to  this  approximation  is  non-trivial. Unlike the true posterior predictive distribution, we do not know if the variational posterior predictive distribution approaches a GP as the width approaches infinity. We also do not know if documented properties of BNNs in the finite-width regime generalize to the wide limit. Empirical evidence suggests that finite BNNs trained with MFVI underestimate certain types of uncertainty \citep{Foong:2019:On_the_Expressiveness_of_Approximate} and underfit the data \citep{tomczak_2021, dusenberry2020efficient}. In the case of single hidden layer networks with ReLU activation, \citep{Foong:2019:On_the_Expressiveness_of_Approximate} showed that MFVI networks underestimate uncertainty in-between clusters of data, but their proof fundamentally cannot be extended to the case of several hidden layers. We establish the strong theoretical results for MFVI networks, under the assumption that they are sufficiently wide. 
In this paper, we show that, unfortunately, a number of notable deficiencies of these approximate posteriors become more severe as width increases. For mean-field variational Bayesian neural networks of arbitrary depth with odd, Lipschitz activation functions, we prove a surprising result: the optimal variational posterior predictive distribution converges to the prior predictive distribution as the width tends to infinity. That is, asymptotically, the mean-field variational posterior predictive distribution of a wide BNN completely ignores the data, unlike the true posterior predictive distribution. Furthermore, we derive non-asymptotic, computable bounds that offer insight into the relative rates with which the number of observations, depth, and width of the network affect this convergence. The bounds we prove in their current form are generally too loose to provide numerically useful results for networks of the commonly trained widths, but they offer theoretical support for previously observed issues of underfitting in these networks. 
Finally, we show by a counterexample that this result does not hold for non-odd activation functions, including ReLU, but we provide an example showing that ReLU BNNs can nonetheless underfit data. Code to reproduce all of the experiments is available on GitHub.\footnote{\url{https://github.com/dtak/wide-bnns-public}}


\section{RELATED WORK}\label{sec:related-work}

    
	\paragraph{Wide-limits of BNNs.}
	There are many works that analyze distributions over wide neural networks with the goal of gaining theoretical insight into neural network performance. As the width tends to infinity, \citet{neal_1996} showed that single-layer, fully-connected BNN priors with bounded activation functions converge to GPs.  \Citet{Lee:2017:Deep_Neural_Networks_as_Gaussian} and \citet{ matthews_2018} extend this result to deeper networks with activations that satisfy a ``linear envelope'' condition (which includes ReLU, and is implied by Lipschitz-ness). \citet{hron_2020} extend the result by showing BNN posteriors converge to GP posteriors. All of these works can be seen as offering insights into modeling assumptions made when employing BNNs. Unfortunately, the \textit{true} BNN posterior is computationally intractable for all but the smallest networks. In contrast, we analyze properties of approximate inference, which allows us to make statements about the BNN posterior typically used in practice.
	
	\paragraph{Neural Tangent Kernel.}
	Other works analyze wide neural networks after training the weights with gradient descent, showing that the network output approaches kernel regression with the neural tangent kernel (NTK) \citep{jacot_2020, lee_2019}.
	This also provides a Bayesian interpretation to ensembles of trained neural networks \citep{he_2020}.
	The key insight in these works is that as the width increases, the weight parameters change less and less during training, permitting the network to be approximated by a first-order Taylor expansion around the initial weights.
	For this phenomenon to happen, these works assume the weights are unregularized during training \citep{chen_2020}.
	In contrast, in variational inference, one trains the variational parameters of a distribution over the weights, rather than the weights themselves. Because the variational parameters are regularized by the Kullback-Leibler (KL) divergence to the prior over the weights, the variational parameters do not stick near their initial values and thus the same first-order approximation cannot be used. We show that this regularization is too strong for wide networks, since it forces the resulting approximation of the posterior to converge to the prior as the width tends to infinity.
 Unlike NTK our result does not rely on the dynamics of any particular optimization algorithm, and instead characterizes the optimal posterior.
	
	\paragraph{Issues with MFVI Inference in BNNs.}
	 Many works have empirically observed challenges with mean-field  approximations to Bayesian neural networks. \Citet{mackay1992practical} noted deficiencies of factorized Laplace approximations in single-hidden layer BNNs. However, little is known theoretically about mean-field variational inference in BNNs.  \Citet{Foong:2019:On_the_Expressiveness_of_Approximate} showed that single-hidden layer networks with ReLU activations and mean-field distributions over the weights cannot have high variance between two regions with low variance. However, the authors also show that BNNs with two hidden layers can uniformly approximate \textit{any} function-space mean and variance so long as the width is sufficiently large. \cite{farquhar_2020} suggested the universality result could be extended to other properties (e.g.,~higher moments) of the approximate posterior, leading them to recommend training deeper networks. This means that there \textit{exist} mean-field variational distributions that do not exhibit the known pathologies of approximate inference in BNNs. However, despite this existence, we show that even for wide, deep networks the \textit{optimal} mean-field variational distribution (i.e., the one that maximizes the evidence lower bound (ELBO)) converges to the prior, regardless of the data. 
	 
\vspace{1em} 
\citet{trippe_2018} discuss \textit{over-pruning}, which is the phenomenon whereby the variational posterior over many of the output-layer weights concentrates to a point mass around zero, allowing the variational posterior over any corresponding incoming weights to revert to the prior.  This is undesirable behavior because the amount of over-pruning increases with the degree of over-parameterization and because over-pruning degrades performance --- simpler models that do not permit pruning often perform better. As in our work, the explanation for over-pruning centers around the tension between the likelihood term and the KL divergence term in the objective function, the ELBO. 
To reduce the KL divergence, the optimization procedure may result in hidden units being pruned from the model (i.e., since many weights before the last layer can be set to the prior). Ultimately, we show that the KL divergence of the optimal variational posterior can only be so large, which prevents the variational posterior of wide networks from modeling anything but the prior. 

Our work offers theoretical insight into earlier works on underfitting. 
Empirically, it has been found that re-scaling the regularization to the prior improves the performance of BNNs trained with variational inference \citep{Osawa:2019:Practical_Deep_Learning_With_Bayesian}. This is closely related to observations regarding the performance of \textit{cold posteriors}, which is the empirical phenomenon that down-weighting the importance of the KL divergence in the ELBO (and/or overcounting the data in the likelihood) yields better model performance \citep{wenzel_2020}. It is possible this practice serves to undo the over-regularization of the KL divergence that we investigate.

\section{BACKGROUND}\label{sec:background}

We consider the application of Bayesian neural networks in supervised learning: we have observed a dataset with $N$ points, $\{(\vx_n,\vy_n)\}_{n=1}^N$ with inputs $\vx_n \in \R^{D\ss{i}}$ and outputs $\vy_n \in Y$. Our goal is to infer a (probabilistic) mapping from $\R^{D\ss{i}}$ to $Y$ that is consistent with the data and generalizes to new, unseen observations. We use a Bayesian neural network as the model for this mapping.

\paragraph{Bayesian Neural Networks (BNNs).}
Consider the feed-forward neural network of width $M$ and depth $L$ given by
\begin{align}
	\vf(\vx) &= \tfrac{1}{\sqrt{M}}\mW_{L+1} \phi(\vz_{L}) + \vb_{L+1}, \label{eqn:nn-output}\\
	\vz_{l} &= \tfrac{1}{\sqrt{M}}\mW_{l} \phi(\vz_{l - 1}) + \vb_{l} \quad \text{for} \quad \text{$l = 2, \ldots, L$}, \label{eqn:nn-recursion}\\
	\vz_{1} &= \tfrac{1}{\sqrt{D\ss{i}}}\mW_{1} \vx + \vb_{1}\label{eqn:nn-base}
\end{align}
$(\mW_{L+1}, \vb_{L+1}) \in \R^{D\ss{o} \times M} \times \R^{D\ss{o}}$,
$(\mW_\ell, \vb_\ell) \in \R^{M \times M} \times \R^M$ for $\ell = 2, \ldots, L$, and $(\mW_1, \vb_1) \in \R^{M \times D\ss{i}} \times \R^M$ are the weight and bias parameters, respectively; $\phi\colon \R\to\R$ is the activation function, applied element-wise.  

Let $\vtheta$ represent the concatenation of all parameters. A \textit{Bayesian} neural network places a prior distribution $P$ over $\vtheta$ and a likelihood distribution $\mathcal{L}(\vtheta)$ over $Y$ conditional on $\vtheta$.
In this paper, we study the prior composed of independent standard Gaussian distributions over the weights: $\vtheta \sim \Normal(\vnull, \mI)$. Often, we will be interested in the distribution induced over $\vf = \vf_{\vtheta}$ through the randomness in $\vtheta$. For a distribution over the weights, $P'$, we will refer to the distribution induced over $\vf_{\vtheta}$ by $P'$ as the $P'$ \emph{predictive distribution}. We note that this is a minor abuse of terminology, as a predictive distribution would typically be defined over subsets of $Y$ and depends on the likelihood function. For example, in classification, the predictive refers to the distribution over the output of the network (i.e., logits). 

\paragraph{Convergence to Gaussian Processes (GPs).}
As the width $M$ tends to infinity, an application of the central limit theorem reveals that for any finite collection of inputs $\{\vx_s\}_{s=1}^S$, the distribution over the neural network $\{\vf(\vx_s)\}_{s=1}^S$ induced by the prior $P$ converges in distribution to a multivariate normal distribution \citep{neal_1996, matthews_2018}.
In other words, as the width tends to infinity $\vf$ converges to a multi-output Gaussian process, called the \emph{neural network Gaussian process} (NNGP). 

\paragraph{Variational Inference.}
Unfortunately, the posterior distribution of a finite-width BNN is not available in closed form. Markov chain Monte Carlo (MCMC) methods can be employed to approximately sample from the posteriors (e.g.,~\citealp{izmailov2021bayesian}); however due to a high-dimensional and multi-modal posterior, these methods will generally not mix in a practical amount of time. Because of its advantageous computational properties on high-dimensional problems, variational inference is an appealing alternative \citep{blundell2015weight}. Variational inference proposes a tractable family of distributions $\mathcal{Q}$ and finds an approximation of the true posterior $Q\in\mathcal{Q}$. This approximation is found by minimizing the KL divergence between $Q$ and the true posterior, which is equivalent to maximizing a lower bound on the marginal likelihood called the evidence lower bound (ELBO):
\begin{equation}
	\label{eq:elbo}
	\text{ELBO}(Q) = \mathbb{E}_{\theta\sim Q}[\log \mathcal{L}(\vtheta)] - \KL(Q, P),
\end{equation}
The first term in the ELBO is the expected log likelihood, which measures how well the model fits the data, and the second term is a regularization term, which measures how close $Q$ is to the prior $P$. 

A common choice for the family of variational distributions $\mathcal{Q}$ is the set of factorized (independent) Gaussian distributions.
Under $Q\in\mathcal{Q}$, we write $\vtheta \sim \Normal(\vmu_Q, \diag(\vsigma^2_Q))$. Since both the prior and variational distribution are Gaussian, the KL divergence can be calculated in closed-form:
\begin{equation}
	\KL(\Q, \P)
	=
	\tfrac12
	(\norm{\vmu_\Q}_2^2 + \norm{r(\vsigma^2_\Q)}_1), \label{eq:kl}
\end{equation}
where $r\colon (0, \infty) \to [0, \infty)$, $r(a) = a - 1 - \log(a)$ is applied element-wise.
Notice that \cref{eq:kl} acts like $\ell^2$-regularization of the mean parameters, which will play an important role in the proof of \cref{thm:convergence-of-moments}.
For this variational family $\mathcal{Q}$, under weak regularity conditions, it can be shown that an optimal solution $Q^* \in \argmax_{Q \in \mathcal{Q}} \operatorname{ELBO}(Q)$ always exists (see \cref{app:existence-mf-solution}).
Note, however, that an optimal solution is certainly not unique, because permutations of neurons have the same expected log-likelihood and KL divergence to the prior.

While mean-field variational inference scales gracefully from a computational perspective, its success ultimately relies on the variational family being sufficiently large so that the maximizer of the ELBO qualitatively resembles the posterior. In the next section, we prove that this fails badly for certain BNN models.

\section{THE VARIATIONAL POSTERIOR PREDICTIVE REVERTS TO THE PRIOR PREDICTIVE}\label{sec:main_analysis}

In this section, we analyze the convergence of optimal mean-field Gaussian variational posterior predictive distributions for Gaussian and other likelihoods. We give a sketch of the proof strategy. Additionally, we discuss the quantitative effect of depth and the number of observations on our results.  

\subsection{Gaussian Likelihood}

We begin by stating a simplified version of our main result for a homoscedastic Gaussian likelihood: under fairly broad conditions, the variational BNN posterior predictive converges to the prior predictive.

We assume in our statements that the prior is $\Normal(\mathbf{0}, \mI)$; we additionally assume the network has no bias after the final hidden layer; an analogous result holds in the case with a final bias.
 
\begin{theorem}[\emph{Convergence in distribution to the prior, simplified}] \label{thm:main}
	Assume a Gaussian likelihood and an odd, Lipschitz activation function.
	Then, for any fixed dataset, as the width tends to infinity,
	any finite-dimensional distribution of any optimal mean-field variational posterior predictive distribution of a BNN of any depth converges to the corresponding finite-dimensional distribution of the NNGP prior predictive distribution.
\end{theorem}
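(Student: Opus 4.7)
The strategy is to exploit that the prior $P$ itself lies in the mean-field variational family, which upper-bounds $\KL(Q^*,P)$ by a quantity that is uniform in the width $M$, and then to propagate this bound layer-by-layer to show that the first two moments of $f$ under $Q^*$ match those under $P$ in the $M\to\infty$ limit.

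\emph{Step 1: a width-independent KL bound.} Since the prior $P\in\mathcal{Q}$, optimality gives $\mathrm{ELBO}(Q^*)\ge \mathrm{ELBO}(P)=\mathbb{E}_{\vtheta\sim P}[\log\mathcal{L}(\vtheta)]$. For a homoscedastic Gaussian likelihood the log-likelihood is bounded above by a constant, while the right-hand side converges as $M\to\infty$ to $\mathbb{E}_{f\sim\mathrm{NNGP}}[\log\mathcal{L}(f)]$, which is finite. Rearranging the ELBO inequality yields a constant $C$, independent of $M$, with $\KL(Q^*,P)\le C$. Using the closed form \eqref{eq:kl} and the elementary inequality $r(a)\ge(\sqrt{a}-1)^2$, this gives simultaneously
\begin{equation*}
\|\vmu_{Q^*}\|_2^2 \le 2C, \qquad \|\vsigma_{Q^*}-\vone\|_2^2 \le 2C,
\end{equation*}
uniformly in the width.

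\emph{Step 2: propagating moment estimates through one layer.} Under mean-field, the pre-activations $\{z_{\ell,j}\}_j$ are independent across $j$ (conditionally on the previous layer), and the output layer applies $\tfrac{1}{\sqrt{M}}\mW_{L+1}\phi(\cdot)$. The key consequence of $\phi$ being odd is that for any $z\sim\Normal(\mu,\sigma^2)$ the centred variable $z-\mu$ is symmetric, so $\mathbb{E}[\phi(z-\mu)]=0$, and Lipschitz continuity gives $|\mathbb{E}\phi(z)|\le \Lip(\phi)\,|\mu|$. Applied at the top layer, Cauchy–Schwarz yields
\begin{equation*}
\bigl|\mathbb{E}_{Q^*}f(\vx)\bigr|
\le \tfrac{1}{\sqrt{M}}\,\|\vmu_{\mW_{L+1}}\|_2\Bigl(\sum_{j}\bigl(\mathbb{E}[\phi(z_{L,j})]\bigr)^2\Bigr)^{1/2}
= O\!\bigl(M^{-1/2}\bigr),
\end{equation*}
since the mean squared of each $z_{L,j}$ contributes only to the bounded $\ell^2$-mass of the parameters. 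A similar two-term expansion for the covariance $\mathrm{Cov}_{Q^*}(f(\vx),f(\vx'))$ separates a diagonal term driven by $\mathbb{E}[\phi(z_{L,j})\phi(z'_{L,j})]$—which converges to the NNGP kernel $K_L(\vx,\vx')$ provided $(z_{L,j},z'_{L,j})$ has mean converging to $\vnull$ and covariance converging to the NNGP covariance—from an off-diagonal term that vanishes by the mean calculation above.

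\emph{Step 3: induction on depth.} The previous step reduces the theorem to the statement that the joint moments of the pre-activations $(z_{\ell,j},z'_{\ell,j})$ under $Q^*$ converge to those under $P$. I would induct on $\ell$. The base case $\ell=1$ is direct because $\vz_1$ is affine in the weights and the $\ell^2$-bounds on $(\vmu,\vsigma-\vone)$ give the required closeness. The inductive step uses: (i) independence across $j$ to average the $1/\sqrt{M}$ sums; (ii) oddness plus Lipschitzness of $\phi$ to keep means small; (iii) a total-parameter accounting argument showing that although individual variance parameters can deviate substantially from $1$, the \emph{empirical second moment} of the hidden units remains close to the NNGP prescription because at most $O(1)$ units can have order-$1$ deviations (being an $\ell^2$ bound over $\Omega(M^2)$ parameters). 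This last point is the main technical obstacle: one must show that the few "renegade" units contribute negligibly after the $1/\sqrt{M}$ scaling of the next layer.

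\emph{Step 4: from moments to distributions.} Finally, fixing any finite collection $\{\vx_s\}$, convergence in distribution to the NNGP is obtained by the same central-limit argument used in \citet{matthews_2018,hron_2020}: at each layer, the output is a $1/\sqrt{M}$-scaled sum of $M$ independent (under mean-field) contributions with matched first and second moments, so the Lindeberg condition (verified by Lipschitzness of $\phi$ and the uniform $\ell^2$-control) yields asymptotic Gaussianity with the NNGP covariance. I expect the delicate bookkeeping in Step~3—controlling how per-parameter perturbations aggregate across depth without a multiplicative blow-up—to be the hardest piece; the bounds stated as non-asymptotic in the abstract are presumably what comes out of tracking these constants through the induction.
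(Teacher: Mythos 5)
Your Steps 1 and 2 match the paper's Steps 2 and 1 respectively (the width-independent KL bound and the propagation of moment estimates via oddness plus Lipschitzness), and your worry in Step 3 about how $\ell^2$-control over parameters aggregates across layers without multiplicative blow-up is exactly the right worry — the paper's bounds indeed pick up factors like $c^{L-1}\KL(Q,P)^{(L-1)/2}$ with depth, which is acceptable because they remain $O(M^{-1/2})$ for fixed $L$.

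The genuine gap is Step 4. After matching the first two moments, you invoke a Lindeberg-type CLT to conclude asymptotic Gaussianity of the $\frac{1}{\sqrt{M}}$-scaled sum. But matching moments plus Lipschitzness of $\phi$ and $\ell^2$-control of the parameters does not in any obvious way verify Lindeberg here: under $Q^*$ the $M$ summands $w_{L+1,m}\phi(z_{L,m})$ are independent but \emph{not identically distributed}, and in fact they are not exchangeable, so the standard limit theorems used for the prior (\citealp{matthews_2018,hron_2020}) do not transfer. The paper explicitly flags this as a serious obstruction — establishing such a CLT for the variational posterior is posed as Conjecture~\ref{conj:general-conv-in-dist} and is needed precisely to extend the convergence-in-distribution result to non-Gaussian likelihoods. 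The paper avoids a CLT entirely: for a Gaussian likelihood, $\operatorname{ELBO}(Q)$ depends on $Q$ only through $\KL(Q,P)$ and the first two predictive moments at the $N$ data inputs, so optimality $\operatorname{ELBO}(Q^*)\ge\operatorname{ELBO}(P)$ combined with the moment convergence of your Steps~2--3 forces not merely a bounded KL but $\KL(Q^*,P)\to 0$. Pinsker and the data-processing inequality then give total-variation convergence of every finite-dimensional predictive distribution to that of the prior, and stacking this with the known convergence of the prior's finite-dimensional marginals to the NNGP finishes the proof. So the conclusion you need in Step~4 is actually delivered by a refinement of your Step~1, not by a CLT; as written, Step~4 assumes precisely the piece the paper could not supply and works around.
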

\begin{figure}[t]
	\centering
	\includegraphics[width=.47\textwidth]{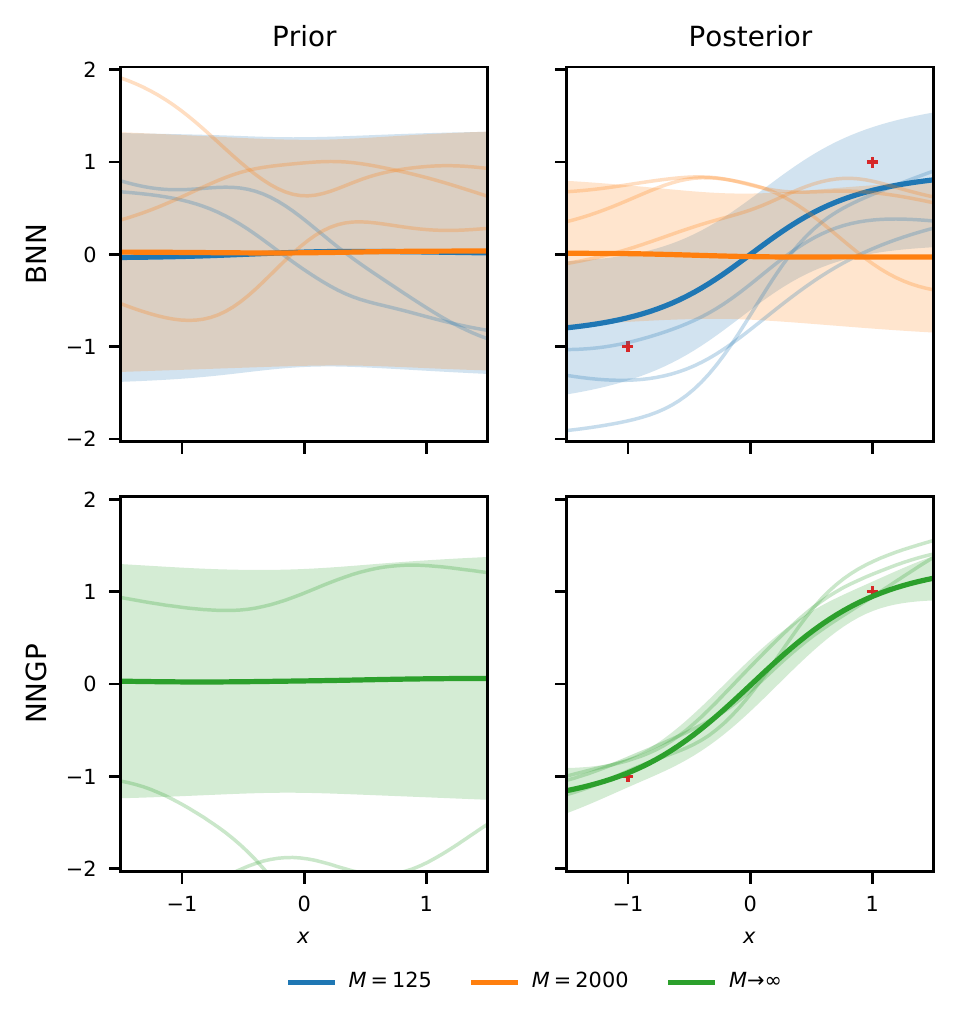}
	\caption{Prior and posterior predictive distributions for single-layer mean-field variational BNNs of different widths compared to the NNGP, to which the true posterior of the BNN converges. For a large width, the mean-field variational BNN ignores the data, unlike the NNGP. The shaded regions constitute $\pm 1$ standard deviation around the means (solid lines). All estimates are based on 1,000 function samples (a few of which are drawn faintly).
	}
	\label{fig:posteriors}
\end{figure}
\cref{fig:posteriors} illustrates our result on a small dataset. In contrast to the \textit{true} BNN posterior predictive, which converges to the NNGP posterior in the limit as the width approaches infinity, the \textit{variational} BNN posterior predictive converges to the NNGP prior, completely ignoring the data. 

A more general version of the theorem, which incorporates the final layer bias and allows for odd functions with a constant offset (e.g., a sigmoid activation), can be found in \cref{app:convergence-in-dist}. The output bias serves only to shift the network by a constant and can sometimes be optimized in closed-form (e.g., in the Gaussian likelihood case it accounts for the overall mean of the observations, $\bar{y}$). \Cref{thm:main} and its generalization apply to several commonly used activation functions, notably tanh, sigmoid and linear.

While a Gaussian likelihood is necessary for our proof of convergence of the entire variational posterior predictive distribution to the prior predictive distribution, we also prove convergence of the first two moments of the variational posterior predictive to the corresponding prior predictive moments for a variety of other likelihoods (logistic, Student's $t$). Additionally, we derive computable bounds on the first two moments of the variational posterior predictive distributions that show that for large, finite widths they must resemble the corresponding prior moments. In contrast, will see in \cref{sec:counterexample} that the oddness assumption in \Cref{thm:main} is necessary for any of these results.

\subsection{General Likelihoods}


\Cref{thm:main} follows from a more general result that holds for a large class of likelihoods. In particular, for a range of likelihoods including Gaussian, Student's $t$, and logistic, we show convergence of the first two moments of the posterior predictive to the corresponding prior predictive moments. The convergence statement has two parts. First, we provide a non-asymptotic bound on the difference between the first two moments of the prior and approximate posterior predictive distributions (\cref{thm:convergence-of-moments}) and goes to $0$ like $O(\tfrac1{\sqrt{M}})$. This aspect is independent of the likelihood and the upper bounds depend on $\KL(\Q,\P)$. Second, we provide an upper bound on $\KL(\Q,\P)$ that depends on the dataset and likelihood, but importantly, is independent of the width of the network (\cref{lem:kl-bound}).



\begin{theorem}[\emph{Bounds on the mean and variance, simplified}]
    \label{thm:convergence-of-moments}
	Under the same conditions as \cref{thm:main} (except for the likelihood assumption), there exist universal constants $c_1,c_2,c_3,c_4>0$ such that
    \begin{align*}
    &\norm{\E_Q[\vf(\vx)] - \E_P[\vf(\vx)]}_2
    \\
    &\;\;\; \le c_1c_2^{L-1} \frac{1+ \tfrac{1}{\sqrt{D\ss{i}}} \|\vx\|_2}{\sqrt{M}} \KL(Q,P) \parens{\KL(Q,P)^{\frac{L-1}{2}} \!\lor 1}, \\
    &\norm{\E_Q[\vf^2(\vx)] - \E_P[\vf^2(\vx)]}_\infty
    \\
    &\;\;\;  \le c_3c_4^{L-1} \frac{1+ \tfrac{1}{D\ss{i}}\|\vx\|_2^2}{\sqrt{M}} \KL(Q,P)^{\tfrac12} \parens{\KL(Q,P)^{L+\tfrac12} \!\lor 1}
    \end{align*}
where $a \lor b=\max(a,b)$.
\end{theorem}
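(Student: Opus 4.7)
The plan is to exploit the oddness of $\phi$ together with the quadratic penalty $\tfrac12\|\vmu_Q\|_2^2$ sitting inside $\KL(Q,P)$. Because $\phi$ is odd and the prior on $\mW_{L+1}$ is centered and symmetric, the prior predictive is centered, $\E_P[\vf(\vx)]=\vnull$, so the first estimate reduces to controlling $\|\E_Q[\vf(\vx)]\|_2$. The KL identity \eqref{eq:kl} immediately yields the layerwise bounds $\|\mM_l\|_F\le\sqrt{2\KL(Q,P)}$ and $\|\vmu_{\vb_l}\|_2\le\sqrt{2\KL(Q,P)}$ for every $l$, together with $\sum_i r(\sigma^2_{Q,i})\le 2\KL(Q,P)$; the latter in turn controls the variance perturbation $\sum_i(\sigma^2_{Q,i}-1)^2$, which drives the second-moment bound.

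For the mean, I would peel off the output layer, $\|\E_Q[\vf(\vx)]\|_2\le\tfrac{1}{\sqrt{M}}\|\mM_{L+1}\|_F\,\|\E_Q[\phi(\vz_L)]\|_2$, and then recurse on $a_l:=\|\E_Q[\phi(\vz_l)]\|_2$. The key observation is that, under $Q$, the conditional distribution of $\vz_l$ given $\vz_{l-1}$ is a coordinate-wise Gaussian shift of $\tfrac{1}{\sqrt{M}}\mM_l\phi(\vz_{l-1})+\vmu_{\vb_l}$; since $\phi$ is odd and $C_\phi$-Lipschitz, the convolution $\mu\mapsto\E[\phi(\mu+W)]$ is itself odd and $C_\phi$-Lipschitz for any centered Gaussian $W$, so coordinate-wise
\[
|\E_Q[\phi(z_{l,i})\mid\vz_{l-1}]|\le C_\phi\,\bigl|\bigl(\tfrac{1}{\sqrt{M}}\mM_l\phi(\vz_{l-1})+\vmu_{\vb_l}\bigr)_i\bigr|.
\]
Taking Euclidean norms, then outer $\E_Q$, and inserting the layerwise KL bounds yields a coupled recursion between $a_l$ and the normalized second moment $s_l^2:=\tfrac{1}{M}\E_Q\|\phi(\vz_l)\|_2^2$, with $s_l$ satisfying its own recursion driven by the KL-controlled weight norms and $a_{l-1}$. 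Iterating from the base case $a_1\le C_\phi\sqrt{2\KL(Q,P)}(1+\|\vx\|_2/\sqrt{D\ss{i}})$ through the $L-1$ interior layers produces the factor $c_2^{L-1}(\KL(Q,P)^{(L-1)/2}\lor 1)$: at each layer one picks whichever of the pure-Lipschitz bound (universal-constant multiplier, tight when $\KL$ is small) or the KL-based bound (multiplier $\propto\sqrt{\KL(Q,P)}$, tight when $\KL$ is large) is sharper, and this choice is exactly what produces the $\cdot\lor 1$ structure after $L-1$ compositions.

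The second-moment bound follows an analogous strategy. Expanding $\E_Q[f_i^2(\vx)]$ using independence of the output weights under $Q$ separates three kinds of corrections to $\E_P[f_i^2(\vx)]$: (i) a variance deviation $\tfrac{1}{M}\sum_j(\sigma^2_{W_{L+1,ij}}-1)\E_P[\phi(z_{L,j})^2]$, controlled by $\sum_i r(\sigma^2_{Q,i})\le 2\KL(Q,P)$; (ii) a squared-mean diagonal term $\tfrac{1}{M}\sum_jM_{L+1,ij}^2\E_Q[\phi(z_{L,j})^2]$ together with its off-diagonal cross terms, controlled by $\|\mM_{L+1}\|_F^2\le 2\KL(Q,P)$; and (iii) the recursive deviation $\E_Q[\phi(z_{L,j})\phi(z_{L,k})]-\E_P[\phi(z_{L,j})\phi(z_{L,k})]$, handled by running the same conditioning-plus-oddness argument one layer deeper on the bivariate activation and iterating. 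That inner propagation is precisely what contributes the extra half-power of $\KL(Q,P)$ and the factor $c_4^{L-1}$ appearing in the stated bound.

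The main obstacle is bookkeeping rather than any single estimate. The $\cdot\lor 1$ structure has to be threaded inductively through both the mean recursion and the coupled second-moment recursion, and producing universal constants $c_1,\dots,c_4$ that do not depend on the data or the architecture requires consistent use of $\|\cdot\|_{\mathrm{op}}\le\|\cdot\|_F$ and careful tracking of which power of $\KL(Q,P)$ dominates at each layer. Controlling the $\vsigma^2_Q-\vone$ perturbation is slightly more subtle than controlling $\vmu_Q$ because it enters additively into the second-moment expansion and interacts nonlinearly with the propagation of activation kernels; the required $\ell^2$-style control relies on the convex lower bounds $r(a)\gtrsim(a-1)^2$ for $a$ bounded away from $0$ and $\infty$ together with $r(a)\gtrsim a$ for large $a$.
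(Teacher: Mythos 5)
Your treatment of the first moment is correct but takes a genuinely different route from the paper. The paper symmetrizes within a layer: it subtracts the same layer evaluated at the centered weights $\mW-\E[\mW]$, $\vb-\E[\vb]$, kills that term's expectation by oddness (\cref{lem:contraction}), and then pays a term $\E\|\mW-\E[\mW]\|_2\,\|\E[\vz_{\ell-1}]\|_2$, which forces it to (a) prove a spectral-norm bound for centered Gaussian matrices via Sudakov--Fernique (\cref{lem:expectation_bound}) and (b) run a second recursion for $\|\E[\vz_\ell]\|_2$ (\cref{lem:norm-expecation}). You instead condition on $\vz_{\ell-1}$ and use that the Gaussian-smoothed activation $\mu\mapsto\E[\phi(\mu+\sqrt{v}\,\xi)]$ is odd and $1$-Lipschitz for every conditional variance $v$, so that $\|\E_Q[\phi(\vz_\ell)\mid\vz_{\ell-1}]\|_2\le\tfrac1{\sqrt M}\|\E_Q[\mW_\ell]\|\ss{F}\,\|\phi(\vz_{\ell-1})\|_2+\|\E_Q[\vb_\ell]\|_2$; only $\tfrac1{\sqrt M}\E_Q\|\phi(\vz_{\ell-1})\|_2$ then needs to be propagated, and its recursion can be run through conditional second moments using just the KL-controlled parameter bounds, so both the random-matrix estimate and the $\|\E[\vz_\ell]\|_2$ recursion disappear. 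Iterating does give the stated $c^{L-1}\,\KL(Q,P)\,(\KL(Q,P)^{(L-1)/2}\lor1)\,M^{-1/2}$ form, so for the mean bound your approach is a valid and arguably cleaner alternative.

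The second-moment half has a genuine gap. In your expansion, the dominant correction is the diagonal term carried with weight $\sigma^2_{W_{L+1,ij}}\approx 1$, namely $\tfrac1M\sum_j\big(\E_Q[\phi^2(z_{L,j})]-\E_P[\phi^2(z_{L,j})]\big)$; the pieces weighted by $(\sigma^2-1)$ or by products of output-weight means are easy, exactly as you say. For this dominant piece ``conditioning-plus-oddness'' gives nothing: $\phi^2$ is even, so there is no cancellation, and the difficulty is not a layerwise parameter bound but a comparison of expectations of a nonlinear functional of the hidden layer under two different laws ($Q$ versus $P$), with the discrepancy accumulating over all $L$ layers through both the mean shifts and the changed variances. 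A recursion in the style of your mean argument does not close: conditional on layer $\ell-1$, $\E[\phi^2(z_{\ell,j})]$ is a nonlinear function of $\tfrac1M\|\phi(\vz_{\ell-1})\|_2^2$ and of the coordinate-wise conditional means, so closeness of the previous layer's scalar second moments under $Q$ and $P$ does not by itself transfer to the next layer; you would need either a concentration argument for $\tfrac1M\|\phi(\vz_{\ell-1})\|_2^2$ under $Q$, or, as the paper does, a shared-noise (reparametrization) coupling of the $Q$- and $P$-networks together with recursions on $\E\|\vz^Q_\ell-\vz^P_\ell\|_2^2$ and $\|\E[\vz_\ell^2]\|_\infty$ (\cref{lem:diag-frob-norm,lem:z-squared-recursion}). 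That coupling is the missing idea; without it (or a substitute), step (iii) of your sketch does not go through, and the $r(a)\gtrsim(a-1)^2$ bookkeeping you invoke at the end only addresses the easy parts of the variance bound.
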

In the special case when $L=1$, our bound on the mean has the simpler form
\begin{equation}\label{eqn:1hl-good-constant}
    \!\!\!\norm{\E_Q[\vf(\vx)] \!-\! \E_P[\vf(\vx)]}_2 \!\le\!  \frac{2}{3}\smash{\parens*{\!\frac{1\!+\!\tfrac{1}{D\ss{i}}\|\vx\|_2^2}{M}\!}^{\!\!\smash{\frac12}} \!\!\KL(Q,\!P)}.\!\!
\end{equation}

While a similar result to  \cref{eqn:1hl-good-constant} can be derived as a special case of \cref{thm:convergence-of-moments}, we derive this result specifically for the case $L=1$ to improve the constant factors; see \cref{app:1hl-good-constants}.
    
Given the bounds in \cref{thm:convergence-of-moments}, we can immediately obtain convergence of the variational predictive mean and variance to the prior as $M\to\infty$ by bounding $\KL(Q^*,P)$ by a constant. 

\begin{lemma}[\emph{Bounds on the KL, simplified}]\label{lem:kl-bound}
For Gaussian, Student's $t$, and logistic likelihood functions, and for an optimal mean-field variational posterior $\Q^*$, $\KL(Q^*,P)$ is bounded by a constant that does not depend on the network width $M$.
\end{lemma}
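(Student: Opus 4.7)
The plan is to compare $Q^*$ to the trivial candidate $Q = P$. Since $P = \Normal(\vnull, \mI)$ is itself a factorized Gaussian and so lies in $\mathcal{Q}$, optimality of $Q^*$ gives $\operatorname{ELBO}(Q^*) \geq \operatorname{ELBO}(P)$. Rearranging \cref{eq:elbo} and using $\KL(P, P) = 0$, this yields
\begin{equation*}
\KL(\Q^*, \P) \leq \mathbb{E}_{\vtheta \sim \Q^*}[\log \mathcal{L}(\vtheta)] - \mathbb{E}_{\vtheta \sim \P}[\log \mathcal{L}(\vtheta)],
\end{equation*}
so it suffices to upper bound the first term and lower bound the second term by constants independent of the width $M$.

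The upper bound is immediate in each case, because each likelihood density is uniformly bounded above in the network output: Gaussian by $(2\pi\sigma^2)^{-D\ss{o}/2}$, Student's $t$ by its value at the mode, and Bernoulli/logistic by $1$. Hence $\mathbb{E}_{\Q^*}[\log \mathcal{L}(\vtheta)]$ is at most a constant depending only on $N$ and the likelihood parameters. For the lower bound on $\mathbb{E}_\P[\log \mathcal{L}(\vtheta)]$, I would, for each of the three likelihoods, reduce the problem to controlling $\mathbb{E}_\P\|\vf(\vx_n)\|_2^2$. For Gaussian, $\log \mathcal{L}(\vtheta) \ge -\tfrac{1}{\sigma^2}\sum_n(\|\vy_n\|_2^2 + \|\vf(\vx_n)\|_2^2) - c$ via $(a-b)^2 \le 2a^2 + 2b^2$; for Student's $t$, $\log(1+u) \le u$ yields an analogous lower bound that is affine in $(\vy_n - \vf(\vx_n))^2$; and for logistic, $\log \sigma(yf) \ge -|f| - \log 2$ combined with Jensen's inequality controls everything by the second moment of $\vf$.

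The principal remaining step, which I expect to be the main technical obstacle, is to show that $\mathbb{E}_\P\|\vf(\vx)\|_2^2$ is bounded independently of $M$. This follows from a layerwise induction exploiting the $\tfrac{1}{\sqrt{M}}$ scaling of the weights. Conditional on $\vz_{l-1}$, each coordinate $(\vz_l)_j$ is a mean-zero Gaussian with variance $\tfrac1M\|\phi(\vz_{l-1})\|_2^2 + 1$; by permutation symmetry $v_l := \mathbb{E}_\P[(\vz_l)_j^2]$ does not depend on $j$, so $\tfrac1M \mathbb{E}_\P\|\phi(\vz_{l-1})\|_2^2 = \mathbb{E}_\P[\phi((\vz_{l-1})_1)^2]$. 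Applying the crude Lipschitz bound $\phi(z)^2 \le 2\phi(0)^2 + 2\Lip(\phi)^2 z^2$ yields the recursion $v_l \le 1 + 2\phi(0)^2 + 2\Lip(\phi)^2 v_{l-1}$ with base case $v_1 = 1 + \|\vx\|_2^2 / D\ss{i}$. Hence $v_l$, together with the analogously computed $\mathbb{E}_\P\|\vf(\vx)\|_2^2 \le D\ss{o}(1 + 2\phi(0)^2 + 2\Lip(\phi)^2 v_L)$, is bounded by a constant depending only on $L$, $\Lip(\phi)$, $\phi(0)$, $\|\vx\|_2$, and $D\ss{i}$, not on $M$. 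Combining with the earlier upper bound completes the proof.
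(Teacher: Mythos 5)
Your proof is correct and follows essentially the same route as the paper: compare $Q^*$ to $P$ via $\operatorname{ELBO}(Q^*)\ge\operatorname{ELBO}(P)$, use boundedness of the log-likelihood above and a quadratic (or quasi-quadratic) lower bound in $f$, then control $\E_P\|\vf(\vx)\|_2^2$ by an $M$-independent layerwise recursion exploiting the $1/\sqrt M$ scaling. The only cosmetic differences are that the paper invokes oddness to set $\phi(0)=0$ and get the cleaner recursion $v_\ell\le v_{\ell-1}+1$, and it lower bounds the logistic log-likelihood by a genuine quadratic via a Taylor argument rather than your $-|f|-\log 2$ plus Jensen; both yield the same conclusion.
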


\cref{fig:convergence} illustrates the upper bound given by \cref{eqn:1hl-good-constant} and \cref{lem:kl-bound} for the optimal posterior, $Q^*$. Empirically, the observed distance of the optimal posterior predictive mean to the prior predictive mean is well below the upper bound, which may be due to our bound of $\KL(Q^*, P)$. See Step 2 of \cref{sec:proof-sketch} for further discussion of this bound. For example, above a width of $10^3$, we observe the distance to the prior predictive within approximately $10^{-2}$, which is well below scale of the $y$ observations ($-1$ and $+1$) and the corresponding distance for the NNGP. 

\cref{fig:many_datasets_mean_tanh} confirms that convergence to the prior leads to a poor fit of the data. We see that across datasets, the RMSE between the posterior mean and the test data increases with the network width (right panel). For comparison, we show the RMSE between the posterior and the prior mean (left panel), which decreases as expected. The datasets ``concrete'' and ``slump'' are from the UCI Machine Learning Repository and the rest are synthetic. The ``2 points'' dataset is the same as in Figures \ref{fig:posteriors} and \cref{fig:convergence}. See \cref{app:exp-setup} for details and an analogous plot of the posterior variance.

\begin{figure}[t]
	\centering
	\includegraphics[width=.47\textwidth]{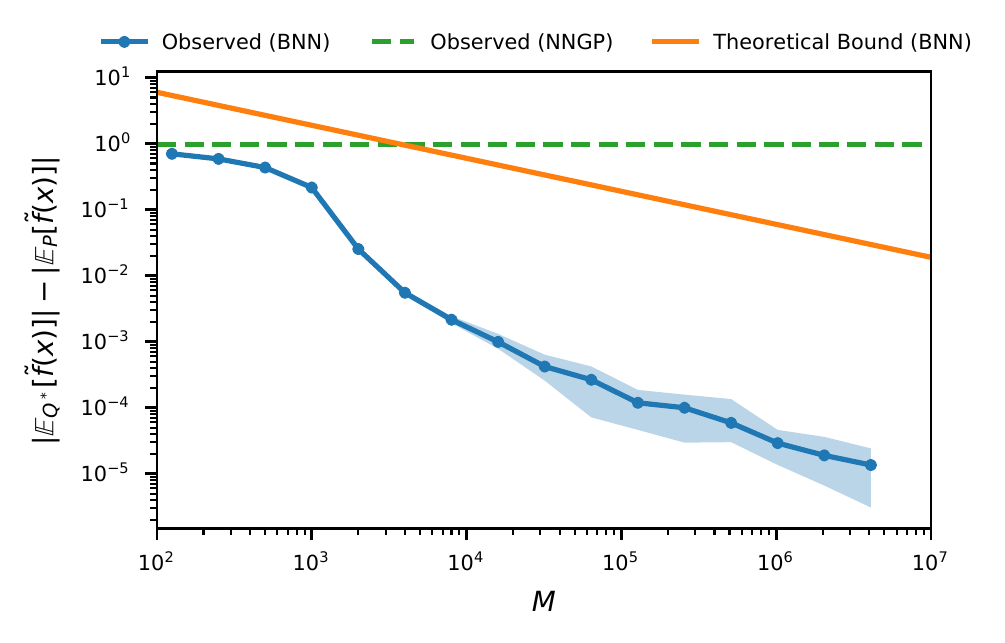}
	\vspace*{-1em}
	\caption{Maximum observed distance of the optimal posterior predictive mean to the prior predictive mean over a grid of points in $[-1,1]$ compared to the theoretical $O(M^{-1/2})$ upper bound given by \cref{thm:convergence-of-moments}. For each $M$ we train 10 single-layer networks on the same two observations shown in \cref{fig:posteriors}. The shaded region shows the range of estimates over the 10 random initializations. We also show the analogous distance for the NNGP.}
	\label{fig:convergence}
\end{figure}

\begin{figure}[t]
	\centering
	\includegraphics[width=.47\textwidth]{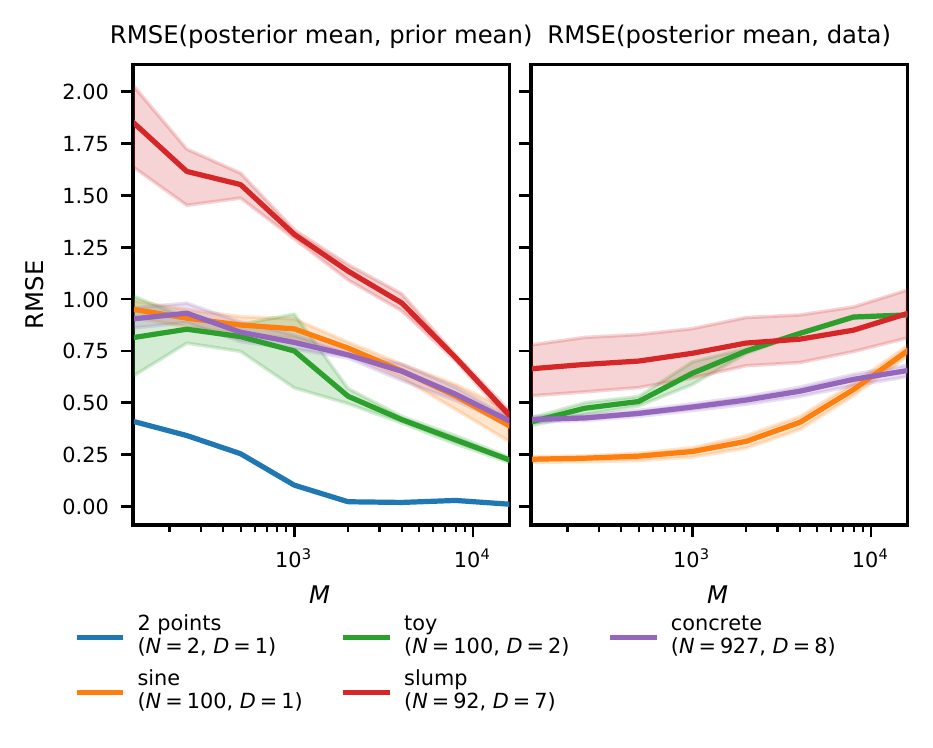}
	\caption{Root mean squared error (RMSE) of the posterior mean to the prior mean (i.e., $\E_P[f(x)]=0$), and the data, $y$, for a few real and synthetic datasets. We use a tanh activation function. The shaded regions are 95\% confidence intervals that reflect 5 train/test splits. The datasets ``concrete'' and ``slump'' are from the UCI Machine Learning Repository, while ``2 points'' is the same dataset from the previous figures. The posterior approaches the prior as the width ($M$) increases, as expected by \cref{thm:main}, resulting in a poor fit of the data. }
	\label{fig:many_datasets_mean_tanh}
\end{figure}

\subsection{Influence of Depth and Number of Observations}

In practice, the upper bounds given by \cref{thm:convergence-of-moments} can be large, limiting their immediate use to practitioners. Furthermore, we see that greater depth increases the bound, since the dependence on $\KL(Q,P)$ grows with $L$ and $c_2, c_4 >1$. It is therefore important to investigate whether a faster rate of convergence can be achieved. 

The case of linear networks (i.e., $\phi(z) = z$) provides for a relevant discussion. We show in \cref{app:convergence-linear-nets} that
\begin{equation}
    \norm{\E_Q[\vf(\vx)] \!-\! \E_\Q[\vf(\vnull)]}_2
    = \Theta(
        M^{-\frac{L}{2}}
        \KL(Q,P)^{\frac{L+1}{2}}
    ).
\end{equation}
Thus, \cref{thm:convergence-of-moments} correctly captures the dependence on the KL divergence, but not the dependence on the width $M$, which is much faster for the linear case: $M^{-\frac{L}{2}}$ versus $M^{-\frac12}$. This raises the question of whether the dependence on $M$ can be improved in case of nonlinear activations. Unfortunately, the answer in general is no. \Cref{app:convergence-lower-bound} shows an example where the dependence is $M^{-\frac12}$.
Although \cref{thm:convergence-of-moments} cannot be generally improved for a generic $Q$, $Q^*$ maximizes the ELBO, which introduces additional structure.
In particular, in the Gaussian case, we know that $\KL(Q^*,P)$ tends to $0$ with $M$ (see Step 3 in \cref{sec:proof-sketch}), which could potentially be used to derive faster rates.
%

It is also important to consider the dependence of \cref{thm:convergence-of-moments} on the number of observations, $N$, which influences the bound through $\KL(Q^*,P)$. If $\E[\norm{\vy}_2^2] = O(N)$, we show in \cref{app:kl-bounds} that $\KL(Q^*,P) \le CN$ for a constant $C>0$. Therefore, the first two moments of the optimal variational posterior predictive approach their respective values under the prior if $\lim_{N,M \to \infty} \frac{N^{L + 1}}{M} = 0$.
Hence, for our results to be non-vacuous for deep networks, $M$ needs to be larger than for shallow networks.

\subsection{Proof Sketch}\label{sec:proof-sketch}
The proof of \cref{thm:main} proceeds in three steps:
\begin{enumerate}[noitemsep,nolistsep]
    \item[]
        \textbf{Step 1.} Establish \cref{thm:convergence-of-moments}, which bounds the posterior predictive mean and variance at any $\vx$ in terms of $\KL(Q, P)$. \\[-0.5em]
    \item[]
        \textbf{Step 2.} Establish \cref{lem:kl-bound}.
        Combined with \cref{thm:convergence-of-moments}, it follows that, in the limit $M \to \infty$, the first and second moments of the approximate posterior predictive and the prior predictive agree. \\[-0.5em]
    \item[]
        \textbf{Step 3.} For a Gaussian likelihood, observe that the ELBO depends only on the first and second moments of the variational posterior predictive distribution at each datapoint and $\KL(Q, P)$. Since (i) $\operatorname{ELBO}(Q) \ge \operatorname{ELBO}(P)$ and (ii) the first and second variational predictive moment converge to the prior predictive moments, it follows that $\KL(\Q,\P) \to 0$.
\end{enumerate}
The complete proof of step 1 can be found in \cref{app:mean-convergence} for the first moment and \cref{app:convergence-of-variance} for the second moment. A more complete version of step 2 that can be made quantitative is given in \cref{app:kl-bounds}. A version of step 3 incorporating the final bias can be found in \cref{app:convergence-in-dist}. Below we expand on each step to give insight into how it is achieved. 
\paragraph{Step 1: Bounding the Moments.}
Here we prove the result for convergence of the mean for a network with $L=1$ and sub-optimal constants. The variance argument follows a generally similar --- though more involved --- argument, and the $L>1$ case is achieved by inductively applying a variant of the argument used in the $L=1$ case.

We have
\begin{align}
    &\|\E[\vf(\vx)]\|_2 \overset{\text{(i)}}{=} \tfrac{1}{\sqrt{M}}\|\E[\mW_2]\E[\phi(\tfrac{1}{\sqrt{D\ss{i}}}\mW_1\vx+\vb_1)]\|_2 \\
    & \qquad \overset{\smash{\text{(ii)}}}{\le} \tfrac{1}{\sqrt{M}}\|\E[\mW_2]\|\ss{F}\|\E[\phi(\tfrac{1}{\sqrt{D\ss{i}}}\mW_1x+\vb_1)]\|_2 \label{eqn:pf-sketch-eqn-top}
\end{align}
where in (i) we use independence and in (ii) we use that $\norm{\vardot}_2 \le \norm{\vardot}\ss{F}$.

Define $\mW' = \mW_1 - \E[\mW_1]$ and $\vb' = \vb_1 - \E[\vb_1]$. Note that $(\mW', \vb') \smash{\disteq} (-\mW, -\vb)$ as these random variables are mean-centered and jointly Gaussian, hence symmetric about $0$. Then, 
\begin{align}
    \E[\phi(\mW'\vx+\vb')] &\overset{\text{(i)}}{=} \E[\phi(-\mW'\vx-\vb')]\\ &\overset{\smash{\text{(ii)}}}{=}-\E[\phi(\mW'\vx+\vb')], 
\end{align}
where (i) follows from the equality in distribution and (ii) by oddness of $\phi$.
From this, we conclude $\E[\phi(\mW'\vx+\vb')]=0$.
We then make the following calculation:
\begin{align}
    &\|\E[\phi(\tfrac{1}{\sqrt{D\ss{i}}}\mW_1\vx+\vb_1)]\|_2  \\ & \; = \!\|\E[\phi(\tfrac{1}{\sqrt{D\ss{i}}}\mW_1\vx+\vb_1) \!-\!\phi(\tfrac{1}{\sqrt{D\ss{i}}}\mW'\vx+\vb')] \|_2 \\
    &\; \overset{\smash{\text{(i)}}}{\le} \E\|\phi(\tfrac{1}{\sqrt{D\ss{i}}}\mW_1\vx+\vb_1)] -\phi(\tfrac{1}{\sqrt{D\ss{i}}}\mW'\vx+\vb')\|_2 \\
    & \; \overset{\smash{\text{(ii)}}}{\le} \E\|\tfrac{1}{\sqrt{D\ss{i}}}(\mW_1-\mW')\vx+(\vb_1-\vb')\|_2\\
    & \; \overset{\smash{\text{(iii)}}}{\le} \|\E[\mW_1]\|\ss{F}\tfrac{1}{\sqrt{D\ss{i}}}\|\vx\|_2 + \|\E[\vb_1]\|_2 \label{eqn:pf-sketch-eqn-bottom}
\end{align}
where (i) uses convexity of norm and Jensen's inequality, (ii) uses that $\phi$ is $1$-Lipschitz, and (iii) combines the triangle inequality and $\norm{\vardot}_2 \le \norm{\vardot}\ss{F}$.

Combining \cref{eqn:pf-sketch-eqn-top} and \cref{eqn:pf-sketch-eqn-bottom} gives
\begin{align}
&\|\E[\vf(\vx)]\|_2 \nonumber \\ & \,\leq\! \tfrac{1}{\sqrt{M}}\|\E[\mW_2]\|\ss{F}\parens*{\!\|\E[\mW_1]\|\ss{F}\tfrac{\|\vx\|_2}{\sqrt{D\ss{i}}}+ \|\E[\vb]\|_2\!}.
\end{align}
We now note that the Frobenius norm $\|\E[\mW_2]\|\ss{F}$ is the $\ell^2$-norm of the mean parameters of weights in the second layer, and similar conditions apply to $\|\E[\mW_1]\|\ss{F}, \|\E[\vb]\|_2$. Recalling \cref{eq:kl},
\begin{align*}
    \|\E[\mW_1]\|\ss{F},\|\E[\mW_2]\|\ss{F}, \|\E[\vb]\|_2 \leq \sqrt{2\KL(\Q,\P)},
\end{align*}
so
\begin{align}
&\|\E[\vf(\vx)]\|_2 \le \tfrac{1}{\sqrt{M}}2(1+\tfrac{1}{\sqrt{D\ss{i}}} \|\vx\|_2)\KL(\Q,\P).
\end{align}
This is of the same form as the bound in \cref{thm:convergence-of-moments}. 

\paragraph{Step 2: Bounding $\KL(Q^*, P)$.}

In order for \cref{thm:convergence-of-moments} to be useful, we need to understand how large $\KL(\Q^*,\P)$ could be. We make the following three assumptions when doing this:
\begin{enumerate}[label=(\roman*)]\itemsep0pt
   \item The likelihood factorizes over data points, i.e.~$\log \mathcal{L}(\vtheta) = \sum_{n=1}^N \log p(\vy_n | \vf_{\vtheta}(\vx_n))$, for some function $p$;
    \item there exists a $C$ such that $\log p(\vy_n | \vf_{\vtheta}(\vx_n)) \leq C$;
    \item for any fixed $\vy_n$, $\log p(\vy_n | \vf_{\vtheta}(\vx_n))$ can be lower bounded by a quadratic function in $\vf_{\vtheta}(\vx_n)$.
\end{enumerate}

By the optimality of $\Q^*$, we have
\begin{align}
	0 &\leq \text{ELBO}(Q^*) -  \text{ELBO}(P) \\
	&= \mathbb{E}_{\vtheta\sim Q^*}[\log \mathcal{L}(\vtheta)]- \KL(Q^*, P) \nonumber\\
	&\qquad\qquad- \mathbb{E}_{\vtheta\sim P}[\log \mathcal{L}(\vtheta)].     
\end{align}
Rearranging and using the assumptions on $\log \mathcal{L}(\vtheta)$,
\begin{align}
    \!\!\!\KL(Q^*, P) &\leq \mathbb{E}_{\vtheta\sim Q^*}[\log \mathcal{L}(\vtheta)] -\mathbb{E}_{\vtheta\sim P}[\log \mathcal{L}(\vtheta)]\\
    & \leq \!CN \!-\! \mathbb{E}_{\vtheta\sim P}[\textstyle\sum_{n=1}^N \!\log p(\vy_n | \vf_{\vtheta}(\vx_n))] \\
    & \leq \! CN \!-\! \E_{\vtheta \sim \P}[\textstyle\sum_{n=1}^N h_n(\vf_{\vtheta}(\vx_n))]
\end{align} 
where $h_n$ is quadratic. Since $h_n$ is quadratic, $\E_{\P}[h_n(\vf_{\vtheta}(\vx_n))]$ is a linear combination of the first and second moments of $\vf_{\vtheta}(\vx_n)$. As we know the moments of $\vf_{\vtheta}(\vx_n)$ converge to those of the corresponding NNGP \citep{matthews_2018}, and since any convergent sequence is bounded, this gives an upper bound on $\KL(\Q^*,\P)$ that is independent of width. 

\paragraph{Step 3: Convergence in Distribution.}
For Gaussian likelihoods, we can go one step further and prove \cref{thm:main} using the optimality of $\Q^*$ yet again. In particular, by the same argument as in the previous paragraph, we have 
\begin{align}
\!\KL(Q^*, P) &\leq\! \mathbb{E}_{\vtheta\sim Q^*}[\log \mathcal{L}(\vtheta)] -\mathbb{E}_{\vtheta\sim P}[\log \mathcal{L}(\vtheta)].
\end{align}
For simplicity, assume $\vy_n=y_n \in \R$ and a homoscedastic likelihood with variance parameter $\sigma^2$ is used. Then, using
\begin{equation}
   \log \mathcal{L}(\vtheta)= -\frac{N}{2}\log 2 \pi\sigma^2-  \frac{1}{2\sigma^2}\sum_{n=1}^N (y_n-f(\vx_n))^2
\end{equation} 
in combination with $|(a - b)^2 - (a - c)^2| \le 2|a||b - c| + |a^2 - b^2|$, we find that
\begin{align}
\KL(Q^*, P) &\leq \sum_{n=1}^N \Big[ 2|y_n| |\E_{\Q^*}[f(\vx_n)] - \E_{\P}[f(\vx_n)]| \nonumber \\ &\qquad + |\E_{\Q^*}[f(\vx_n)^2] - \E_{\P}[f(\vx_n)^2]| \Big].
\end{align}
By \cref{thm:convergence-of-moments}, we conclude $\lim_{M \to \infty} \KL(\Q^*,\P) = 0$. Since the KL divergence between any finite dimensional distribution of the predictive of $\Q^*$ and $\P$ is upper bounded by this KL divergence, we conclude that a similar statement holds for finite-dimensional distributions. Finally, convergence in this sense implies weak convergence, so convergence of finite dimensional distributions of the posterior predictive of $\Q^*$ to the NNGP follows.

\section{NON-ODD ACTIVATIONS} \label{sec:counterexample}

In \cref{sec:main_analysis}, our theorems  assume odd activation functions. The following theorem shows that this assumption is necessary.
\begin{theorem}[\emph{non-odd counterexample, simplified}] \label{thm:counterexample}
Given any non-odd, $1$-Lipschitz activation function $\phi$ (e.g., ReLU), we can construct a homoscedastic Gaussian likelihood and a dataset where the optimal mean-field variational mean is bounded away from the prior mean as the width tends to infinity\footnote{The theorem has additional technical conditions, but applies to all non-odd activation functions used in practice.}.
\end{theorem}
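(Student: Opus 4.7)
The plan is to exploit that for non-odd $\phi$, a mean-field $\Q$ can shift the predictive mean at some input $x^*$ away from zero while paying only $O(1)$ in KL divergence, independent of the width $M$. By choosing the dataset and likelihood variance so that this $\Q$ strictly beats the ELBO of every mean-field $\Q'$ whose predictive mean at $x^*$ is near zero, optimality of $\Q^*$ forces $|\E_{\Q^*}[f(x^*)]|$ to remain bounded below by a positive constant that does not depend on $M$. I work with the shallow case $L=1$, scalar output ($D\ss{o}=1$), and the one-point dataset $\{(x^*, Y)\}$; this suffices to refute the general statement.

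\textbf{Step 1: a non-vanishing feature expectation.} Decompose $\phi = \phi_o + \phi_e$ into odd and even parts; non-oddness gives $\phi_e \not\equiv 0$. Define
\[
g(x) := \E_{\P}\sbrac{\phi\parens{Wx/\sqrt{D\ss{i}} + B}} = \E\sbrac{\phi_e\parens{\sqrt{x^2/D\ss{i} + 1}\,\xi}},
\]
where $W, B, \xi \sim \Normal(0, 1)$ and the second equality uses the symmetry of $Wx + B$, which kills $\phi_o$. I claim that $g(x^*) \ne 0$ for some $x^*$. If not, $\E[\phi_e(\sigma\xi)] = 0$ for every $\sigma \ge 1$; the substitution $u = z^2$ rewrites this as a constant times the Laplace transform of $u \mapsto \phi_e(\sqrt u)/\sqrt u$ at $\lambda = 1/(2\sigma^2) \in (0, 1/2]$. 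The $1$-Lipschitz bound $|\phi_e(\sqrt u)/\sqrt u| \le |\phi(0)|/\sqrt u + 1$ makes this Laplace transform well-defined and analytic on $\lrset{\Re(\lambda) > 0}$; vanishing on $(0, 1/2]$ forces it to vanish identically, so $\phi_e \equiv 0$ a.e., a contradiction.

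\textbf{Steps 2--3: construction of $\Q$ and ELBO comparison.} With $\sigma^2 > 0$ and $Y > 0$ to be chosen, let $\Q$ keep all first-layer parameters (and the output bias, if present) at the prior, and set $w^{(2)}_m \sim \Normal(c/\sqrt M, 1)$ for each of the $M$ output weights, with $c := Y/g(x^*)$. Independence yields $\E_{\Q}[f(x^*)] = c\, g(x^*) = Y$, $\KL(\Q,\P) = c^2/2 = Y^2/(2 g(x^*)^2)$ (independent of $M$), and $\Var_{\Q}[f(x^*)] = K + O(1/M)$ for a finite constant $K$ depending only on $x^*$ and $\phi$. Therefore
\[
\operatorname{ELBO}(\Q) = -\tfrac12 \log(2\pi\sigma^2) - \tfrac{K}{2\sigma^2} - \tfrac{Y^2}{2 g(x^*)^2} + O(1/M).
\]
For any mean-field $\Q'$ with $|\E_{\Q'}[f(x^*)]| \le \epsilon < Y$, the bounds $\E_{\Q'}[(Y - f(x^*))^2] \ge (Y - \epsilon)^2$ and $\KL(\Q',\P) \ge 0$ give $\operatorname{ELBO}(\Q') \le -\tfrac12 \log(2\pi\sigma^2) - (Y - \epsilon)^2/(2\sigma^2)$. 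Hence $\operatorname{ELBO}(\Q) > \operatorname{ELBO}(\Q')$ whenever $\sigma^2 < g(x^*)^2\sbrac{(Y - \epsilon)^2 - K}/Y^2$ (up to $o_M(1)$). The right-hand side tends to $g(x^*)^2$ as $Y \to \infty$, so for any fixed $\epsilon > 0$, choosing $Y$ large enough and, e.g., $\sigma^2 = g(x^*)^2/4$ secures the strict inequality for all sufficiently large $M$. By optimality of $\Q^*$, $|\E_{\Q^*}[f(x^*)]| > \epsilon$ for all such $M$, and since $\E_{\P}[f(x^*)] = 0$, this is the desired separation from the prior mean.

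\textbf{Main obstacle.} The delicate step is Step 1: excluding non-odd $\phi$ whose even parts happen to be orthogonal to every Gaussian density the architecture produces. The Laplace-transform/analyticity argument together with $1$-Lipschitzness handles this cleanly at depth $L=1$. For arbitrary depth one would need a non-vanishing analogue $g_L(x^*) = \E_{\P}[\phi(z_L(x^*))] \ne 0$; the pre-activation $z_L(x^*)$ is still symmetric about $0$ (so $\phi_o$ still drops out) but at finite width is not Gaussian, so the non-vanishing argument must be adapted layer-by-layer --- presumably the source of the ``additional technical conditions'' noted in the footnote. The remaining algebra consists of routine first- and second-moment computations.
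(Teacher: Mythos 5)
The construction has a genuine gap that makes the conclusion vacuous as stated. You work with a one-point dataset $\{(x^*,Y)\}$ and conclude $|\E_{Q^*}[f(x^*)]| > \epsilon$. But the network has an output bias $b_{L+1}$, and any mean-field $Q'$ that simply shifts the bias mean to $\approx Y$ (paying $O(Y^2)$ in KL, independently of activation function) already has $|\E_{Q'}[f(x^*)]| \approx Y > \epsilon$ and an ELBO competitive with your $Q$. Hence $|\E_{Q^*}[f(x^*)]| > \epsilon$ holds trivially for any activation, odd or not, and does not express the intended failure mode. The paper (cf.\ its \cref{thm:app-counterexample} and the parenthetical in the main text, ``recall that we mean without the output bias'') avoids exactly this by working with a two-point dataset $((\vx,y),(\vx',y'))$ chosen so that $\lambda(\vx)\neq\lambda(\vx')$ with $y_n=\sqrt{C}\lambda(\vx_n)$, and bounding the \emph{difference} $|\E_{Q^*}[f(\vx)]-\E_{Q^*}[f(\vx')]|$, which is invariant under any constant shift and hence under the bias. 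Any correct proof must either use a bias-invariant quantity like this difference, or explicitly excise the bias contribution, as in the paper's notation $\widetilde{\vf}_\theta$. To fix your argument within your framework you would need $g$ to be \emph{non-constant} (not merely nonzero somewhere) and a two-point dataset aligned with the variation in $g$; this is where the paper's Propositions~\ref{prop:kernel-diagonal} and~\ref{lem:odd-fn1} come in.

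A second, smaller gap: you only argue $g(x^*)\neq 0$ for some $x^*$. Non-oddness of $\phi$ ($\phi_e\not\equiv 0$) does not preclude $\phi_e$ being a nonzero constant, in which case $g$ \emph{is} constant (equal to that constant), the two-point repair above fails, and in fact the meaningful conclusion is false because $\phi_e=c$ just adds a bias-absorbable constant to every layer. This is precisely why the paper's technical condition requires $\phi_e$ to be non-constant. Your Laplace-transform/analytic-continuation argument can actually be pushed to exclude this case (if $\E[\phi_e(\sigma Z)]$ were constant on $\sigma\ge 1$, the same argument applied to $\phi_e - \text{const}$ forces $\phi_e$ to be constant), but you would still need to go through the depth-$\ell$ kernel recursion to get two inputs at which $\lambda$ differs, as the paper does.

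Two aspects of your proposal are sound and close in spirit to the paper: the comparison distribution with $\mW_{L+1}\sim\Normal(\tfrac{c}{\sqrt M}\vone,\mI)$ paying width-independent KL, and the analyticity argument ruling out that $\phi_e$ integrates to zero against all relevant Gaussian densities (the paper proves the analogous fact via density of polynomials in $L^2$ with a Gaussian weight; both are valid routes). With the two fixes above --- two-point dataset with $\lambda(\vx)\neq\lambda(\vx')$, and ruling out constant $\phi_e$ --- your approach would recover the paper's result for $L=1$.
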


\cref{fig:counterexample} illustrates this counterexample dataset along with the resulting mean-field posterior predictive distributions of networks with ReLU and erf activations (left panel). We also train on the same two observations as in Figures \ref{fig:posteriors} and \ref{fig:convergence} (right panel). In the erf activation case, the posterior predictive converges to the prior predictive on both datasets, as expected by \cref{thm:main}, and in the ReLU activation case the posterior predictive does not converge to the prior predictive on the counterexample dataset, as expected by \cref{thm:counterexample} (recall that we mean without the output bias, which can generally differ from the prior). Interestingly, in the ReLU case, on the dataset that is not constructed as a counterexample, the approximate posterior closely resembles the prior. However, an examination of additional datasets in \cref{fig:many_datasets_mean_relu} reveals the story is generally less clear. For some datasets, the wider networks are closer to the prior than the narrower networks, while for other datasets the opposite is true. See \cref{sec:discussion} for further discussion.

Our proof strategy starts by finding a sequence of variational distributions (indexed by $M$) with a mean function that does not tend to a constant. We then show that there exists a dataset for which the sequence of ELBOs defined by this sequence of variational distribution converges a number that exceeds the ELBOs of any sequence of variational distributions that have a mean function that does tend to a constant. 

The key observation to the counterexample is that in the odd activation case, the expected value of the final layer post-activations, $\E_{\P}[\phi(\vz_L)]$, is zero, whereas in the non-odd activation case this expectation will generally depend on $\vx$. 

To construct the counterexample, we define $Q_M$ as the mean-field variational distribution that is equivalent to the prior except in the last layer, where $\vw_{L+1}\sim \Normal(\tfrac{1}{\sqrt{M}}\mathbf{1},\mI)$. Notice that under $Q_M$ the predictive mean is equal to $\E_{\P}[\phi(\vz_L)]$. $Q_M$ will serve as a candidate set of distributions with non-constant means and ``good'' ELBOs. 
We select the $Y$ values in our dataset to fall very near the mean predictor for $Q_M$, or more precisely, to coincide with the mean predictor as $M \to \infty$. This will ensure $Q_M$ has small error. On the other hand, we can ensure that the error of any predictor that gives a constant prediction is large. The left panel of \cref{fig:counterexample} confirms that the posterior predictive under the odd activation (erf) converges to the prior, whereas the posterior predictive under the non-odd activation (ReLU) is able to model the data. These networks are very wide ($M \approx 4 \times 10^6$).

\begin{figure}[h]
	\centering
	\includegraphics[width=.47\textwidth]{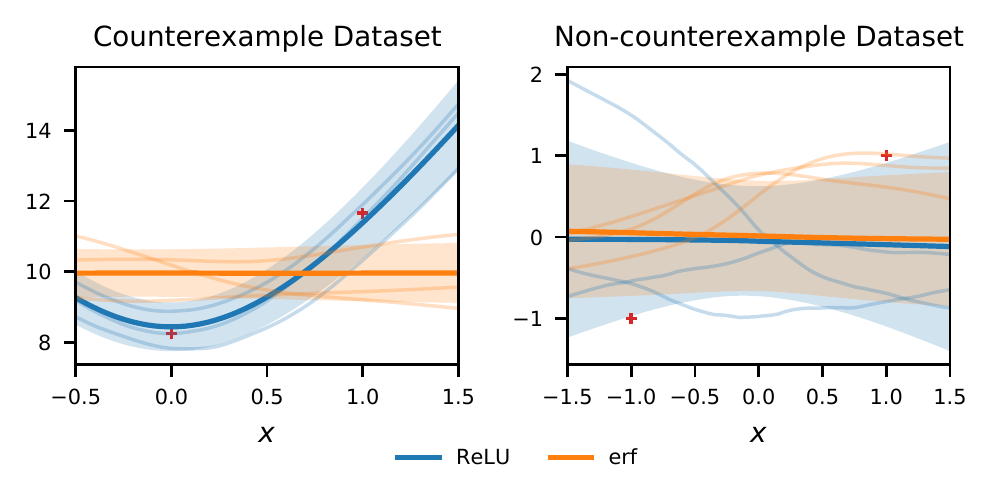}
	\vspace*{-1em}
	\caption{Mean-field posterior predictive distributions for very wide networks with odd and non-odd activations (erf and ReLU, respectively) trained on one of two datasets --- the counterexample we construct (left panel) and the same dataset as in Figures \ref{fig:posteriors} and \ref{fig:convergence}, which does not meet the conditions of the counterexample (right panel). We observe convergence of the posterior predictive to the prior predictive in all cases except for the ReLU network trained on the counterexample dataset.}
	\label{fig:counterexample}
\end{figure}

\begin{figure}[t]
	\centering
	\includegraphics[width=.47\textwidth]{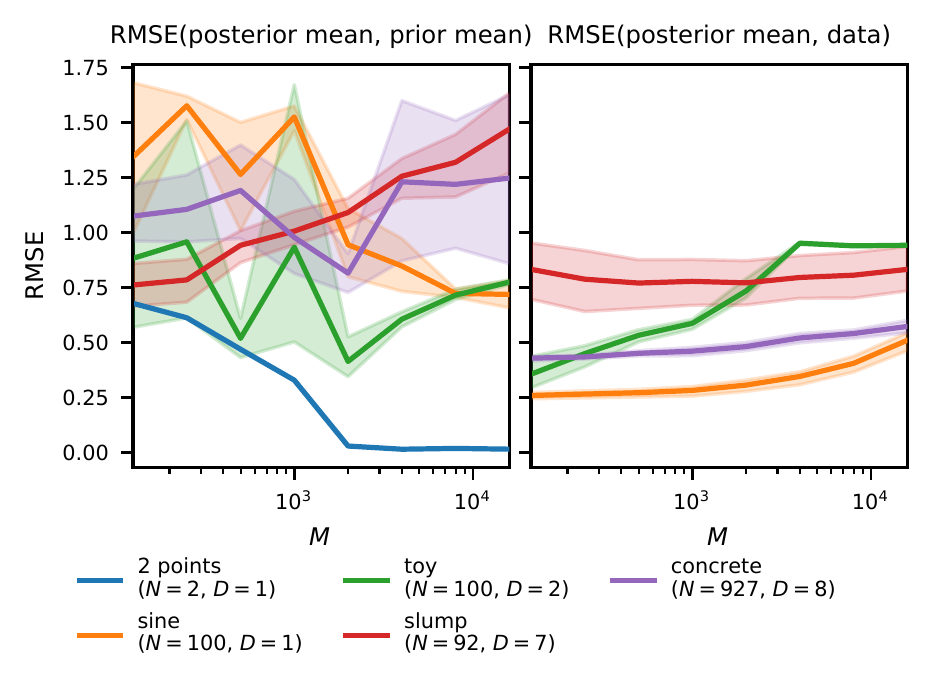}
	\vspace*{-1.2em}
	\caption{Analogous to \cref{fig:many_datasets_mean_tanh} but using a ReLU activation. Wider networks tend not to fit the data as well but it is unclear if this is due to convergence to the prior.}
	\label{fig:many_datasets_mean_relu}
\end{figure}

\section{DISCUSSION}\label{sec:discussion}

\paragraph{What Do Our Results Mean for Bayesian Methods in Over-Parameterized Models?}
The successes of modern deep learning have given strong evidence to the claim that over-parameterized models can lead to better empirical performance.  As the flexibility of the model relative to the amount of data increases, it has been suggested that Bayesian methods have more to offer in terms of promoting generalization \citep{NEURIPS2020_322f6246}. However, in cases when inaccurate inference is combined with very large models, our results prove a crippling and previously unknown limitation to this approach. This highlights the need for accurate inference methods in Bayesian neural networks, as well as robust techniques for monitoring and diagnosing inference quality.


\paragraph{Does Using ReLU Solve All of the Problems with MFVI?}
Another question raised by our results concerns the use of odd activation functions, which are necessary for our results to hold. Can the issues we raised be avoided by simply using a non-odd activation such as ReLU? 
While our counterexample shows there exists a dataset for which the approximate posterior mean under a non-odd activation does not converge to the prior, it is still possible the approximate posterior will converge to the prior on other datasets. Moreover, even without exact convergence to the prior, the approximate posterior could still be a poor model of the data. For the dataset in \cref{fig:counterexample} that does not meet the conditions of the counterexample, this is exactly what we see: a poor model of the data and a close resemblance of the approximate posterior to the prior. Figures \ref{fig:many_datasets_mean_tanh} and \ref{fig:many_datasets_mean_relu} investigate these two attributes --- distance to the data and distance to the prior --- across a variety of datasets. In the tanh case, we see convergence to the prior as expected. However, in the ReLU case the behavior is unclear. For some datasets it is possible the approximate posterior is converging to the prior, whereas for other datasets there is little indication of this. Yet, we emphasize that in all cases (datasets and activations), we see an increasingly poor fit of the data as the network width increases.
Our work frames the characterization of the optimal mean-field posterior under ReLU activations as an important area of future work.

\paragraph{Should MFVI be Abandoned Entirely in BNNs?}
Our results raise an important question for practitioners --- should mean-field posteriors be thrown out, since asymptotically the optimal one converges to something degenerate, or should practitioners merely be careful about the relative scaling of the width, depth, and dataset size? By providing non-asymptotic upper bounds on how much the predictive mean and variance can differ from the prior, our results provide a regime where mean-field variational inference is guaranteed to fail. For example, for a given depth we can provide a width above which the optimal posterior predictive mean and variance are within a given threshold of their values under the prior. Yet, we emphasize our results are upper bounds, with constants that could likely be further optimized. There is possibly a smaller width that would give the same behavior, and this is what we observed in our experiments. These results suggest serious shortcomings of mean-field variational inference, and we would recommend practitioners take great care in applying MFVI, even with networks with narrower widths where our bounds do not provably show the approximate posterior will revert to the prior.

%
%
\section{OPEN PROBLEMS}\label{sec:open-problems}

We believe our analysis leads to several interesting generalizations, which we leave as open problems. The first question we pose is whether \cref{thm:main} can be generalized to all  likelihoods where \cref{thm:convergence-of-moments} applies:

\begin{conjecture}\label{conj:general-conv-in-dist}
For any likelihood satisfying the assumptions needed for \cref{lem:kl-bound}, the optimal variational posterior predictive (excluding the final bias) converges in distribution to the corresponding NNGP. 
\end{conjecture}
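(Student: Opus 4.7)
The plan is to reduce \cref{conj:general-conv-in-dist} to two ingredients: (a) convergence of all joint first and second moments of $\vf$ under $\Q^*$ to those of the NNGP, and (b) asymptotic joint Gaussianity of $(\vf(\vx_1),\ldots,\vf(\vx_S))$ under $\Q^*$ at any finite collection of inputs $\{\vx_s\}_{s=1}^S$. Together these force the limiting finite-dimensional distributions to be those of the NNGP, since a Gaussian is determined by its first two moments. For (a), the inductive argument behind \cref{thm:convergence-of-moments} should extend with routine bookkeeping to the cross-moments $\E_{\Q^*}[\vf(\vx_s)\vf(\vx_{s'})^\top]$; combining this extension with \cref{lem:kl-bound} and the classical convergence of prior-predictive moments to the NNGP yields (a). The real content lies in (b).

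For (b), I would mirror the layer-wise CLT argument used in standard NNGP convergence proofs (e.g., \citet{matthews_2018}), but with independent, non-identically distributed variational weights in place of i.i.d.\ prior weights. The key enabling observation is the KL bound from \cref{lem:kl-bound}: since $\sum (\mu^2 + r(\sigma^2)) = O(1)$ across $\Theta(LM^2)$ weights, only an $O(M^{-1})$ fraction can deviate appreciably from the prior, and each individual variance $\sigma_i^2$ is pinned inside a fixed compact interval $[a,b]\subset(0,\infty)$ determined by the level set of $r$. This is precisely the uniform control needed to check a Lindeberg-Feller condition at each layer: no single weight dominates any $\tfrac{1}{\sqrt{M}}$-scaled sum, and truncated second moments can match those of the NNGP.

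The induction runs on depth. For the base case $L=1$, $\vz_1$ is itself exactly Gaussian under mean-field (with independent but non-identically distributed entries); conditioning on $\vz_1$, the output is a sum of $M$ independent Gaussians with variational parameters, and applying a conditional Lindeberg-Feller argument together with a law of large numbers for the empirical post-activation kernel $\tfrac{1}{M}\sum_i \phi(\vz_{1,i}(\vx))\phi(\vz_{1,i}(\vx'))$ yields joint Gaussianity of $(\vf(\vx_1),\ldots,\vf(\vx_S))$ with covariance matching the NNGP. For $L>1$, the inductive step conditions on the preceding hidden representation, applies Lindeberg-Feller at the final aggregation $\vf(\vx)=\tfrac{1}{\sqrt{M}}\mW_{L+1}\phi(\vz_L)+\vb_{L+1}$, and identifies the limiting conditional covariance via a law of large numbers for the empirical post-activation kernel at layer $L$, combined with the joint moment convergence from (a).

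The principal obstacle is precisely this last step: a uniform-in-inputs law of large numbers for the empirical post-activation kernel under $\Q^*$. Under the prior, neurons are exchangeable and the LLN follows directly from the inductive hypothesis. Under $\Q^*$, however, the mean-field assumption acts on individual weights rather than on hidden activations, so the neurons $\{\phi(\vz_{L,i})\}_{i=1}^M$ need not be exchangeable --- the variational parameters can single out particular neurons and break the averaging. The KL bound limits the aggregate deviation from the prior but does not obviously preclude this, and converting the KL budget into a quantitative, uniform-in-inputs law of large numbers for the empirical kernel appears to be the central technical difficulty. A natural intermediate milestone would be resolving the $L=1$ case of the conjecture cleanly, where no propagation through nonlinearities is needed and the Lindeberg check can proceed directly from the bounded variational parameters.
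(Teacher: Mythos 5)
This statement is posed in the paper as an open conjecture, not a proved theorem; the paper offers only a suggested avenue (establish a CLT for one-dimensional marginals of the predictive under $\Q^*$ using the bounded KL divergence, then finish via the argument of Step~3 in \cref{sec:proof-sketch}), together with the warning that the final-layer post-activations are not exchangeable under $\Q^*$. Your plan follows essentially this same route: you isolate (a) convergence of first and second (cross) moments and (b) asymptotic joint Gaussianity carried by a layer-wise Lindeberg--Feller argument, and you correctly identify the load-bearing obstacle as a uniform-in-inputs law of large numbers for the empirical post-activation kernel $\tfrac1M\sum_i\phi(z_{L,i}(\vx))\phi(z_{L,i}(\vx'))$ under $\Q^*$ --- the same non-exchangeability issue the paper flags. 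Two points worth adding for completeness. First, part (a) is only partially available in the paper: \cref{thm:convergence-of-moments} bounds moments at a single input, so cross-moments $\E_{\Q^*}[f(\vx)f(\vx')]$ at distinct inputs would require extending the two-stream coupling in \cref{lem:diag-frob-norm}, which is plausible but not ``routine bookkeeping'' since the off-diagonal cancellations there rely on sign symmetry that must be re-verified with two separate input streams. Second, the Gaussian-likelihood proof in \cref{app:convergence-in-dist} crucially uses that $\KL(\Q^*_{\vtheta_{1:L}},\P_{\vtheta_{1:L}})\to 0$, which pins the hidden layers to the prior and makes the LLN for the empirical kernel essentially inherit from the prior case; that decay is derived from the special quadratic structure of the Gaussian log-likelihood. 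For a general likelihood you only have a \emph{bounded} KL budget on the hidden layers, not a vanishing one, which is exactly where the non-exchangeability bites and where your Lindeberg intuition (only $O(M^{-1})$ of weights can deviate appreciably) would need to be turned into a quantitative concentration statement. Your proposal is therefore a correctly scoped sketch of the path the paper suggests, with the open gap accurately located, but it does not resolve the conjecture, consistent with its status in the paper.
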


A potential avenue for proving \cref{conj:general-conv-in-dist} would be to establish a central limit theorem for any one-dimensional predictive distribution under $\Q$, using that $\KL(\Q,\P)$ is bounded. While the post-activations in the final hidden layer are not exchangeable, they are in some sense close to an exchangeable sequence. Given a central limit theorem \cref{conj:general-conv-in-dist} can be established following the same argument sketched in \cref{sec:proof-sketch}, step 3. Establishing \cref{conj:general-conv-in-dist} would make the consequences of the behavior we analyze to classification much clearer.

Another fascinating open question is the precise non-asymptotic dependence of \cref{thm:convergence-of-moments} on the depth of the network. Our current bounds suggest that deeper networks may need to be wider before the optimal MFVI posterior converges to the prior. However, it is difficult to determine how much of this effect is due to the analysis becoming more complicated, leading to sub-optimal constants. We therefore pose the following, somewhat imprecise open problem,
\begin{openproblem}
Can the dependence of \cref{thm:convergence-of-moments} on $L$ be improved? In particular, should we expect the optimal MFVI posterior in deeper networks to converge more or less quickly to the prior as width increases?
\end{openproblem}

We believe both of these questions are interesting theoretical questions with concrete ramifications for practitioners that may be challenging, but seem to be approachable problems for future research.
\paragraph{Acknowledgements}
The authors would like to thank Andrew Y.K.~Foong for useful discussion and comments. Additionally, the authors would like to thank Richard E.~Turner for helping to facilitate this collaboration. DRB acknowledges funding from the Qualcomm Innovation Fellowship.
Wessel P. Bruinsma was supported by the Engineering and Physical Research Council (studentship number 10436152). 

\bibliography{aistats}

\begin{thebibliography}{}

\bibitem[Billingsley, 2008]{billingsley2008probability}
Billingsley, P. (2008).
\newblock {\em Probability and measure}.
\newblock John Wiley \& Sons.

\bibitem[Blundell et~al., 2015]{blundell2015weight}
Blundell, C., Cornebise, J., Kavukcuoglu, K., and Wierstra, D. (2015).
\newblock Weight uncertainty in neural networks.
\newblock In {\em Proceedings of the 32nd International Conference on Machine
  Learning (ICML)}.

\bibitem[Chen et~al., 2020]{chen_2020}
Chen, Z., Cao, Y., Gu, Q., and Zhang, T. (2020).
\newblock A generalized neural tangent kernel analysis for two-layer neural
  networks.
\newblock In {\em Advances In Neural Information Processing Systems}.

\bibitem[Dusenberry et~al., 2020]{dusenberry2020efficient}
Dusenberry, M., Jerfel, G., Wen, Y., Ma, Y., Snoek, J., Heller, K.,
  Lakshminarayanan, B., and Tran, D. (2020).
\newblock Efficient and scalable {B}ayesian neural nets with rank-1 factors.
\newblock In {\em International conference on machine learning (ICML)}, pages
  2782--2792. PMLR.

\bibitem[Farquhar et~al., 2020]{farquhar_2020}
Farquhar, S., Smith, L., and Gal, Y. (2020).
\newblock Liberty or depth: Deep {B}ayesian neural nets do not need complex
  weight posterior approximations.
\newblock In {\em Advances In Neural Information Processing Systems}.

\bibitem[Foong et~al., 2020]{Foong:2019:On_the_Expressiveness_of_Approximate}
Foong, A. Y.~K., Burt, D.~R., Li, Y., and Turner, R.~E. (2020).
\newblock On the expressiveness of approximate inference in {Bayesian} neural
  networks.
\newblock In {\em Neural Information Processing Systems (NeurIPS)}.

\bibitem[He et~al., 2020]{he_2020}
He, B., Lakshminarayanan, B., and Teh, Y.~W. (2020).
\newblock Bayesian deep ensembles via the neural tangent kernel.

\bibitem[Hron et~al., 2020]{hron_2020}
Hron, J., Bahri, Y., Novak, R., Pennington, J., and Sohl-Dickstein, J. (2020).
\newblock Exact posterior distributions of wide {B}ayesian neural networks.
\newblock In {\em Workshop on Uncertainty and Robustness in Deep Learning
  (ICML)}.

\bibitem[Izmailov et~al., 2021]{izmailov2021bayesian}
Izmailov, P., Vikram, S., Hoffman, M.~D., and Wilson, A.~G. (2021).
\newblock What are {B}ayesian neural network posteriors really like?
\newblock {\em arXiv preprint arXiv:2104.14421}.

\bibitem[Jacot et~al., 2018]{jacot_2020}
Jacot, A., Gabriel, F., and Hongler, C. (2018).
\newblock Neural tangent kernel: Convergence and generalization in neural
  networks.
\newblock In {\em Advances in neural information processing systems}.

\bibitem[Kingma and Welling, 2014]{kingma_2014}
Kingma, D.~P. and Welling, M. (2014).
\newblock Auto-encoding variational {B}ayes.
\newblock In {\em International Conference on Learning Representations (ICLR)}.

\bibitem[Lee et~al., 2017]{Lee:2017:Deep_Neural_Networks_as_Gaussian}
Lee, J., Bahri, Y., Novak, R., Schoenholz, S.~S., Pennington, J., and
  Sohl-Dickstein, J. (2017).
\newblock Deep neural networks as {Gaussian} processes.
\newblock {\em arXiv preprint arXiv:1711.00165}.

\bibitem[Lee et~al., 2019]{lee_2019}
Lee, J., Xiao, L., Schoenholz, S.~S., Bahri, Y., Novak, R., Sohl-Dickstein, J.,
  and Pennington, J. (2019).
\newblock Wide neural networks of any depth evolve as linear models under
  gradient descent.
\newblock In {\em Advances in Neural Information Processing Systems (NeurIPS)}.

\bibitem[Loshchilov and Hutter, 2017]{loshchilov_2017}
Loshchilov, I. and Hutter, F. (2017).
\newblock {SGDR}: Stochastic gradient descent with warm restarts.
\newblock In {\em International Conference on Learning Representations (ICLR)}.

\bibitem[MacKay, 1992]{mackay1992practical}
MacKay, D.~J. (1992).
\newblock A practical {B}ayesian framework for backpropagation networks.
\newblock {\em Neural computation}, 4(3):448--472.

\bibitem[Matthews et~al., 2018]{matthews_2018}
Matthews, A., Rowland, M., Hron, J., Turner, R.~E., and Ghahramani, Z. (2018).
\newblock Gaussian process behaviour in wide deep neural networks.
\newblock In {\em International Conference on Learning Representations (ICLR)}.

\bibitem[{Mykie (https://math.stackexchange.com/users/832/ mykie}),
  2012]{126471}
{Mykie (https://math.stackexchange.com/users/832/ mykie}) (2012).
\newblock Density of polynomials in weighted {$L^2(\mathbb{R},w)$}-space.
\newblock Mathematics Stack Exchange.
\newblock URL:https://math.stackexchange.com/q/126471 (version: 2012-03-31).

\bibitem[Neal, 1996]{neal_1996}
Neal, R. (1996).
\newblock {\em Bayesian Learning for Neural Networks}.
\newblock Springer Verlag.

\bibitem[Osawa et~al., 2019]{Osawa:2019:Practical_Deep_Learning_With_Bayesian}
Osawa, K., Swaroop, S., Jain, A., Eschenhagen, R., Turner, R.~E., Yokota, R.,
  and Khan, M.~E. (2019).
\newblock Practical deep learning with {Bayesian} principles.
\newblock {\em arXiv preprint arXiv:1906.02506}.

\bibitem[themaker~(https://math.stackexchange.com/users/ 114509/themaker),
  2020]{StackExchangeGaussiansDense}
themaker~(https://math.stackexchange.com/users/ 114509/themaker) (2020).
\newblock Gaussian with zero mean dense in {$L^2$}.
\newblock Mathematics Stack Exchange.
\newblock URL:https://math.stackexchange.com/q/3749793 (version: 2020-07-08).

\bibitem[Tomczak et~al., 2021]{tomczak_2021}
Tomczak, M.~B., Swaroop, S., Foong, A. Y.~K., and Turner, R.~E. (2021).
\newblock Collapsed variational bounds for bayesian neural networks.
\newblock In {\em Advances in Neural Information Processing Systems},
  volume~35.

\bibitem[Trippe and Turner, 2017]{trippe_2018}
Trippe, B.~L. and Turner, R.~E. (2017).
\newblock Overpruning in variational {B}ayesian neural networks.
\newblock In {\em Advances in Approximate Bayesian Inference (NeurIPS)}.

\bibitem[Wainwright,
  2019]{Wainwright:2019:High-Dimensional_Statistics_A_Non-Asymptotic_Viewpoint}
Wainwright, M.~J. (2019).
\newblock {\em High-Dimensional Statistics: {A} Non-Asymptotic Viewpoint}.
\newblock Cambridge Series in Statistical and Probabilistic Mathematics.
  Cambridge University Press.

\bibitem[Wenzel et~al., 2020]{wenzel_2020}
Wenzel, F., Roth, K., Veeling, B.~S., Swiatkowski, J., Mandt, L. T.~S., Snoek,
  J., Salimans, T., Jenatton, R., and Nowozin, S. (2020).
\newblock How good is the {B}ayes posterior in deep neural networks really?
\newblock In {\em International Conference on Machine Learning (ICML)}.

\bibitem[Wilson and Izmailov, 2020]{NEURIPS2020_322f6246}
Wilson, A.~G. and Izmailov, P. (2020).
\newblock Bayesian deep learning and a probabilistic perspective of
  generalization.
\newblock In Larochelle, H., Ranzato, M., Hadsell, R., Balcan, M.~F., and Lin,
  H., editors, {\em Advances in Neural Information Processing Systems},
  volume~33, pages 4697--4708.

\bibitem[Yeh, 1998]{uci_concrete}
Yeh, I.-C. (1998).
\newblock Modeling slump flow of concrete using second-order regressions and
  artificial neural networks.
\newblock {\em Modeling of strength of high performance concrete using
  artificial neural networks}, 28(12):1797--1808.

\bibitem[Yeh, 2007]{uci_slump}
Yeh, I.-C. (2007).
\newblock Modeling slump flow of concrete using second-order regressions and
  artificial neural networks.
\newblock {\em Cement and Concrete Composites}, 29(6):474--480.

\end{thebibliography}
\clearpage
\appendix
\onecolumn \makesupplementtitle

\section*{}
\addcontentsline{toc}{section}{Appendix} 
\part{} 
\parttoc 

\newpage
\glsxtrnewsymbol[description={Matrices are bold, capital letter}]{mats}{\ensuremath{\mathbf{W}, \mathbf{V}, \mathbf{U}}}
\glsxtrnewsymbol[description={Vectors are bold, lower case letter}]{vecs}{\ensuremath{\mathbf{w}, \mathbf{v}, \mathbf{u}}}
\glsxtrnewsymbol[description={Number of neurons per hidden layer (width)}]{M}{\ensuremath{M}}
\glsxtrnewsymbol[description={Number of hidden layers (depth)}]{L}{\ensuremath{L}}
\glsxtrnewsymbol[description={Number of observations}]{N}{\ensuremath{N}}
\glsxtrnewsymbol[description={Shorthand for \ensuremath{\sqrt{\KL(Q,P)}}}]{K}{\ensuremath{K}}
\glsxtrnewsymbol[description={All parameters in neural network}]{theta}{\ensuremath{\theta}}
\glsxtrnewsymbol[description={An arbitrary input}]{x}{\ensuremath{\mathbf{x}}}
\glsxtrnewsymbol[description={An arbitrary output}]{y}{\ensuremath{\mathbf{y}}}
\glsxtrnewsymbol[description={Preactivation for neuron $m$ in hidden layer $\ell$}]{zhm}{\ensuremath{z_{\ell, m}}}
\glsxtrnewsymbol[description={Vector of preactivations at layer h}]{zh}{\ensuremath{\mathbf{z_{h}}}}
\glsxtrnewsymbol[description={Dimensionality of input, i.e.~$\gls{x} \in \mathbb{R}^{D\ss{i}}$}]{Din}{\ensuremath{D\ss{i}}}
\glsxtrnewsymbol[description={Dimensionality of output, i.e.~$\gls{y} \in \mathbb{R}^{D\ss{o}}$}]{Dout}{\ensuremath{D\ss{o}}}
\glsxtrnewsymbol[description={Activation function in neural network (non-linearity)}]{phi}{\ensuremath{\phi}}
\glsxtrnewsymbol[description={Even part of activation function, i.e.~$\phi\ss{e}(a)=\frac{\gls{phi}(a)+\gls{phi}(-a)}{2}$}]{phie}{\ensuremath{\phi\ss{e}}}
\glsxtrnewsymbol[description={Odd part of activation function, i.e.~$\phi\ss{o}(a)=\frac{\gls{phi}(a)-\gls{phi}(-a)}{2}$}]{phio}{\ensuremath{\phi\ss{o}}}
\glsxtrnewsymbol[description={Prior distribution, usually $\Normal(\vnull,\mI)$}]{prior}{\ensuremath{P}}
\glsxtrnewsymbol[description={A variational posterior distribution, $\Normal(\vmu_Q, \mathrm{diag}(\vsigma^2_Q))$}]{varposterior}{\ensuremath{Q}}
\glsxtrnewsymbol[description={Expectation, optionally with subscript to clarify the measure to be integrated over}]{exp}{\ensuremath{\mathbb{E}}}
\glsxtrnewsymbol[description={Variance, optionally with subscript to clarify the measure to be integrated over}]{var}{\ensuremath{\mathbb{V}}}
\glsxtrnewsymbol[description={Diagonal of $\V$}]{vardiag}{\ensuremath{\mathbb{V}\ss{d}}}
\glsxtrnewsymbol[description={Kullback-Leibler divergence}]{kl}{\ensuremath{\mathrm{KL}}}
\glsxtrnewsymbol[description={$\mathbf{f_{\theta}}: \mathbb{R}^{\gls{Din}} \to \mathbb{R}^{\gls{Dout}}$ represents the output of the network with parameters $\theta$}]{networkfn}{\ensuremath{\mathbf{f_{\theta}}}}
\glsxtrnewsymbol[description={$\widetilde{\vf}_{\theta}: \mathbb{R}^{\gls{Din}} \to \mathbb{R}^{\gls{Dout}}$ represents the output of the network with parameters $\theta$, excluding the contribution from the final bias}]{networkfnnobias}{\ensuremath{\widetilde{\vf}_{\theta}}}
\glsxtrnewsymbol[description={$a \lor b= \mathrm{max}(a,b)$}]{max}{\ensuremath{\lor}}
\glsxtrnewsymbol[description={Frobenius Norm of a matrix, equal to the $\ell^2$ norm of the singular values, also equal to the sum of squared entries}]{frob}{\ensuremath{\|\cdot\|\ss{F}}}
\glsxtrnewsymbol[description={Spectral Norm of a matrix, equal to the $\ell^\infty$ norm of the singular values, also the matrix norm induced by the $\ell^2$ norm on vectors}]{spec}{\ensuremath{\|\cdot\|\ss{2}}}
\glsxtrnewsymbol[description={$f(x) \lesssim g(x)$ $\Longleftrightarrow$ there exists an irrelevant proportionality constant $C$ such that   $f(x) \leq Cg(x)$. }]{tildelt}{\ensuremath{\lesssim}}
\printunsrtglossary[type=symbols]
\newpage

\section{Map of the Appendix and Sketches of the Results}\label{app:appendix-intro}


Our main results show that for fully-connected networks with odd, Lipschitz continuous activation functions,

\begin{itemize}
    \item  For Gaussian likelihoods, the variational posterior converges in distribution to the prior as the width of the network tends to infinity.
    \item For a wide class of other likelihoods, including the Student's t likelihood, the first two moments of one-dimensional marginals of the variational posterior converge to the prior.
\end{itemize}

 The key idea in both cases is to upper bound the difference between the first two moments of one-dimensional marginals of any mean-field posterior and the corresponding moments of the prior in terms of its KL divergence to the prior. Crucially, we show that we can derive a bound of this form, \emph{that goes to $0$ as $M$ goes to $\infty$}. While we often make asymptotic statements about the width as these have the simplest form, all of the bounds are non-asymptotic and can be explicitly computed for finite widths.

In the following sketch, we ignore the final bias as it must be handled separately and leads to notational clutter and slightly more unwieldy statements. Hence, the following statements will all be true for a network without a final output bias, and some minor modifications of them is true for a normal fully-connected neural network.
\paragraph{Bounding the mean in terms of $\KL(\Q,\P)$}
We use $\widetilde{\vf}_{\theta}$ to denote the network output excluding the final output bias and even part of the activation, i.e., $\widetilde{\vf}_{\theta} = \vf_{\theta} - \vb_{L+1} - \alpha \mW_{L+1}\vone$. Then if we consider the mean, and use the independence structure of $\Q$,
\begin{align}
    \|\E_{\Q}[\widetilde{\vf}_{\theta}(\vx)]\|_2 = \tfrac1{\sqrt{M}} \norm{\E_{Q}[\mW_{L+1}]\E[\phi\ss{o}(\vz_L(\vx))]}_2 \leq \tfrac1{\sqrt{M}} \norm{\E_{Q}[\mW_{L+1}]}_2\norm{\E_{\Q}[\phi\ss{o}(\vz_L(\vx))]}_2. \label{eqn:mean-top-layer}
\end{align}
The first term, $\norm{\E_{Q}[\mW_{L+1}]}_2$, can directly be upper bounded in terms of $\KL(\Q,\P)$. The second term is more difficult. The simplest thing to do would be to push the norm inside the expectation using Jensen's inequality and use that $\phi$ is assumed to be Lipschitz continuous. However, this prevents us from taking advantage of any cancellation due to the oddness of $\phi$, which we will see is essential to the proof (cf.~\cref{thm:app-counterexample}), and the resulting bound need not tend to $0$ with $M$.

We instead setup a recursion to show that for each $\ell$, $\norm{\E[\phi(\vz_\ell(\vx))]}_2$ is upper bounded in terms of an expression that is independent of $M$. This will be the main work done in \cref{app:main-recursion}, and crucially relies on the oddness and Lipschitz continuity of $\phi$.

\paragraph{Bounding the variance in terms of $\KL(\Q,\P)$}
We work with the un-centered second moment, as in combination with a bound on the mean this implies a bound on the marginal variance.  The proof will proceed by splitting the variance into two terms.  The first term, which we term the diagonal, arises from the product of the variance of the weights with the second moment of each activation.  We show that under the optimal variational posterior, this term is close to the same term under the prior. The second term, which we term the off-diagonal arises from the product of the mean of the weights with the second moment of each activation. Under the prior, this term vanishes.

\paragraph{Convergence for Gaussian likelihoods}
Having established that the mean and variance of the variational posterior both converge to the prior, the proof diverges based on the likelihood. In the case of a homoscedastic, Gaussian likelihood, the analysis becomes particularly nice. By noting that the evidence lower bound essentially has three terms, one depending on the mean at each data point, one depending on the variance at each data point and one depending on the KL divergence between the variational posterior and the prior, we can upper bound the KL divergence between \emph{any variational posterior with an ELBO at least as good as the prior} and the prior, by something that will tend to $0$ with width. Combining this with standard inequalities between divergences on probability measures and the results of \citet{matthews_2018} suffices to show weak convergence of finite marginals of the the optimal posterior to the NNGP prior.  

\paragraph{Convergence for more general likelihoods}
In the case of more general likelihoods, the evidence lower bound may depend on quantities besides the first and second moment of outputs of the variational posterior, so the above argument breaks down. However, so long as we can derive upper bounds on the KL divergence between any optimal posterior and the prior that are independent of the width of the network, we can use our earlier results to conclude that the predictive mean and variance of any variational posterior with a better ELBO than the prior converges to the corresponding values under the prior. In order to upper bound the KL divergence, we assume that the log likelihood is bounded above and has a quadratic lower bound.

\paragraph{Counterexample for non-odd activations}
\cref{app:counterexample} examines whether it was essential for the results that the activation function is odd, or whether these results may be extended to other common activation functions such as ReLU. We answer this question in the negative, by exhibiting a dataset and Gaussian likelihood such that the mean of the optimal posterior does not converge to the prior mean. The key distinction in this case is that, where in the odd case we had,  $\norm{\E_{\P}[\phi_L(\vz(\vx))]}_2 =0$, in the non-odd case we have, 
\begin{align}
    \norm{\E_{\P}[\phi(\vz_L(\vx))]}_2 = \Theta(\sqrt{M}). 
\end{align}
This means that our earlier proof technique cannot possibly work. We use this observation to construct a counterexample.

\section{Existence of an Optimal Mean-Field Solution}
\label{app:existence-mf-solution}

\begin{proposition} \label{prop:existence-mf-solution}
    Let $\mathcal{Q}$ be the family of factorized Gaussian distributions.
    Assume the following:
    \begin{enumerate}[label=(\roman*)]
        \item The activation $\phi\colon\R \to\R$ is Lipschitz continuous.
        \item The likelihood factorizes over data points: $\log \mathcal{L}(\vtheta) = \sum_{n=1}^N \log p(\vy_n \cond \vf(\vx_n))$ for some function $p$.
        \item The likelihood is continuous: for all $\vy$, the function $\vf \mapsto p(\vy \cond \vf)$ is continuous.
        \item The likelihood is upper bounded: there exists a $C \in \R$ such that, for all $\vy$ and $\vf$, $\log p(\vy \cond \vf) \leq C$.
        \item The likelihood admits a quadratic lower bound:
        for all $\vy$, the function $\vf \mapsto \log p(\vy \cond \vf)$ can be lower bounded by a quadratic function in $\vf$.
    \end{enumerate}
    Then $\argmax_{Q \in \mathcal{Q}} \operatorname{ELBO}(Q)$ is non-empty:
    an optimal mean-field solution $Q \in \argmax_{Q \in \mathcal{Q}} \operatorname{ELBO}(Q)$ exists.
\end{proposition}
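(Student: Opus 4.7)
The plan is to verify the hypotheses of the Weierstrass extreme value theorem: parametrize $\mathcal{Q}$ by $(\vmu_Q, \vsigma_Q^2) \in \R^d \times (0,\infty)^d$, where $d = \dim(\vtheta)$, show the ELBO is continuous in these parameters, and show that on the superlevel set $S = \{Q \in \mathcal{Q} : \operatorname{ELBO}(Q) \ge \operatorname{ELBO}(P)\}$, which is non-empty since $P \in \mathcal{Q}$, the parameters lie in a compact subset of $\R^d \times (0,\infty)^d$. The maximum then exists and is attained in $S$.

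First I would verify that the ELBO is finite on all of $\mathcal{Q}$. Since $\phi$ is Lipschitz, the map $\vtheta \mapsto \vf_{\vtheta}(\vx_n)$ is polynomial-bounded of degree at most $L+1$, so any polynomial moment of $\vf_{\vtheta}(\vx_n)$ under a Gaussian $Q$ is finite. Combining the upper bound $\log p(\vy_n \cond \vf) \le C$ with the quadratic lower bound $\log p(\vy_n \cond \vf) \ge -a_n\|\vf\|^2 - b_n$ (hypotheses (iv) and (v)) gives $|\log p(\vy_n \cond \vf_{\vtheta}(\vx_n))|$ is dominated by a polynomial in $\vtheta$, hence integrable against any Gaussian; thus $\E_{\vtheta \sim Q}[\log \mathcal{L}(\vtheta)]$ is a finite real number, and the KL term \cref{eq:kl} is also finite, so $\operatorname{ELBO}(Q) \in \R$.

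Next I would establish coercivity and compactness of $S$. Using $\log \mathcal{L}(\vtheta) \le NC$, we get $\operatorname{ELBO}(Q) \le NC - \KL(Q, P)$, hence on $S$, $\KL(Q, P) \le NC - \operatorname{ELBO}(P) =: K_0$. From \cref{eq:kl}, this bounds $\|\vmu_Q\|_2^2 \le 2K_0$ and, componentwise, $r((\vsigma_Q^2)_i) \le 2K_0$. Since $r(a) = a - 1 - \log a$ is continuous on $(0,\infty)$ with $r(a) \to \infty$ as $a \to 0^+$ or $a \to \infty$, the sublevel set $\{a > 0 : r(a) \le 2K_0\}$ is a compact subinterval of $(0,\infty)$. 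Thus the image of $S$ under the parametrization lies in a compact product set $B \subset \R^d \times (0,\infty)^d$.

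Finally I would establish continuity of $\operatorname{ELBO}$ on $B$. The KL term is a continuous function of $(\vmu_Q, \vsigma_Q^2)$ on $(0,\infty)^d$ by \cref{eq:kl}. For the expected log likelihood, I would use the reparameterization $\vtheta = \vmu_Q + \vsigma_Q \odot \vep$ with $\vep \sim \Normal(\vnull, \mI)$ and write
\begin{equation*}
\E_{\vtheta \sim Q}[\log \mathcal{L}(\vtheta)] = \E_{\vep}\!\left[\log \mathcal{L}(\vmu_Q + \vsigma_Q \odot \vep)\right].
\end{equation*}
Pointwise in $\vep$, the integrand is continuous in $(\vmu_Q, \vsigma_Q)$ by hypotheses (i) and (iii). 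On the compact set $B$, the polynomial-growth bound on $\vf_{\vtheta}(\vx_n)$ combined with the hypotheses (iv) and (v) yields a $(\vmu_Q, \vsigma_Q)$-uniform polynomial envelope in $\vep$ that is integrable against the standard Gaussian measure; dominated convergence then gives continuity on $B$. Weierstrass applied to the continuous function $\operatorname{ELBO}$ on the compact set $B \cap S$ yields a maximizer $Q^\star$, which is also a maximizer over all of $\mathcal{Q}$ by construction of $S$.

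The main obstacle is the continuity step: one must produce a single Gaussian-integrable envelope for the log likelihood that is uniform over the compact parameter box $B$. The polynomial growth coming from Lipschitzness of $\phi$ through $L$ layers, together with the quadratic lower bound on $\log p$, is exactly what makes such an envelope exist; without hypothesis (v) or the Lipschitz assumption, the expected log likelihood could fail to be continuous (or even finite), so these assumptions are the crux of the argument.
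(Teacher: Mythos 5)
Your proof is correct and establishes the result, but via a genuinely different route from the paper at the key continuity step. Both proofs share the same compactness argument: the KL term is coercive, bounding $\|\vmu_Q\|_2^2$ and confining each $\sigma_{Q,i}^2$ to a compact subinterval of $(0,\infty)$ whenever $\operatorname{ELBO}(Q) \ge \operatorname{ELBO}(P)$. The divergence is in the continuity step. You prove full sequential continuity of $Q \mapsto \E_Q[\log \mathcal{L}]$ by the reparameterization trick plus dominated convergence, constructing an explicit integrable envelope on the compact box $B$ from the polynomial growth of $\vf_\vtheta$ (Lipschitz $\phi$) combined with the two-sided bounds on $\log p$ (hypotheses (iv) and (v)). The paper instead proves only \emph{upper} semi-continuity: it writes the integrand as $C - (C - \log p)$, notes the bracketed term is nonnegative by (iv), and applies Fatou's lemma. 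This is lighter-weight — it needs no dominating function at all, and hence hypothesis (v) is not invoked in the continuity step (it is used only to bound the KL divergence via step 2 of the proof sketch). Upper semi-continuity suffices because a maximizing sequence's subsequential limit then has ELBO at least the supremum. Your version buys a stronger conclusion (continuity, not just u.s.c.) at the cost of a slightly more delicate argument — you have to verify the envelope is uniform over $B$ — but it is a clean and correct alternative. One small remark for precision: $\vtheta \mapsto \vf_\vtheta(\vx_n)$ is not itself a polynomial when $\phi$ is a general Lipschitz nonlinearity, but it is bounded in norm by a polynomial of degree $L+1$ in $\|\vtheta\|$, which is all your dominated-convergence argument requires.
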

\begin{proof}
Let $I$ denote the total number of parameters in the network.
Then the variational optimization problem can be phrased as 
\newcommand{\ELBO}{\operatorname{ELBO}}
\begin{align}
    \sup_{\Q \in \mathcal{Q}} \operatorname{ELBO}(\Q) \quad \text{where}
    \quad \ELBO \colon \mathcal{Q} \to \R,
    \quad \mathcal{Q} = \set{
        \Normal(\vmu, \diag(\vsigma^2)) : \vmu \in \R^I, \, \vsigma^2 \in (0, \infty)^I
    }.
\end{align}
Call a sequence $(\Normal(\vmu_k, \diag(\vsigma_k^2)))_{k \ge 1} \sub \mathcal{Q}$ \emph{parameter convergent} [to $\Normal(\vmu, \diag(\vsigma^2)) \in \mathcal{Q}$] if $(\vmu_k, \vsigma_i^2)_{k \ge 1} \sub \R^I \times (0, \infty)^I$ is convergent [to $(\vmu, \vsigma) \in \R^I \times (0, \infty)^I$].
Using the definition of the supremum, extract a sequence $(Q_k)_{k \ge 1} \sub \mathcal{Q}$ such that $\ELBO(Q_k) \to \sup_{\Q \in \mathcal{Q}} \operatorname{ELBO}(\Q)$.
The argument now consists of two parts.
First, we show that there exists a subsequence $(Q_{n_k})_{k \ge 1} \sub (Q_k)_{k \ge 1}$ which is parameter convergent to some limit $Q^* \in \mathcal{Q}$ (compactness).
Second, we show that $\ELBO$ is upper semi-continuous with respect to parameter convergence (continuity).
Assuming the two parts,
\begin{align}
    \sup_{\Q \in \mathcal{Q}} \operatorname{ELBO}(\Q)
    = \lim_{k \to \infty} \operatorname{ELBO}(\Q_k)
    = \limsup_{k \to \infty}\, \operatorname{ELBO}(\Q_{n_k})
    \overset{\text{(i)}}{\le}  \operatorname{ELBO}(\Q^*),
\end{align}
where we use in (i) that $\ELBO$ is upper semi-continuous with respect to parameter convergence. 
Therefore, $\Q^* \in \argmax_{Q \in \mathcal{Q}} \operatorname{ELBO}(Q)$, which concludes the proof.

\paragraph{Compactness.}
We show that there exists a subsequence $(Q_{n_k})_{k \ge 1} \sub (Q_k)_{k \ge 1}$ which is parameter convergent to some limit $Q^* \in \mathcal{Q}$.
Let $P = \Normal(\vnull, \mI) \in \mathcal{Q}$ be the prior.
Assume that $\ELBO(P) < \sup_{\Q \in \mathcal{Q}} \operatorname{ELBO}(\Q)$; for if equality holds, an optimal mean-field solution certainly exists.
Since $\ELBO(Q_k) \to \sup_{\Q \in \mathcal{Q}} \operatorname{ELBO}(\Q)$, it follows that, for large enough $k$, $\ELBO(P) < \operatorname{ELBO}(\Q_k)$.
Therefore, by step 2 from \cref{sec:proof-sketch}, it follows that there exists a $C > 0$ such that, for large enough $k$, $\KL(Q_k, P) < C$.
Consequently, denoting $Q_k = \Normal(\vmu_k, \vsigma^2_k)$, by \cref{app:parameter-bounds} and the observation that $r(a) < c$ for $c \ge 0$ implies that $a \in [R^{-1}, R]$ for some $R \in [1, \infty)$, it follows that there exists an $R \in [1, \infty)$ such that, for large enough $k$ and all $i \in [I]$, $\abs{\mu_{k,i}} \le R$ and $R^{-1} \le \abs{\sigma^2_{k,i}} \le R$.
Hence, by Bolzano--Weierstrass, there exists a subsequence $(Q_{n_k})_{k \ge 1} \sub (Q_k)_{k \ge 1}$ which is parameter convergent to some limit $Q^* \in \mathcal{Q}$.



\paragraph{Continuity.}
Let $(Q_k)_{k \ge 1} \sub \mathcal{Q}$ be parameter convergent to some $Q \in \mathcal{Q}$.
To conclude the proof, we show that
$
    \limsup_{k \to \infty}\, \ELBO(Q_k) \le \ELBO(Q).
$
Decompose the ELBO as follows:
\begin{equation} \textstyle
    \operatorname{ELBO}(Q_k)
    = \sum_{n=1}^{N}\E_{Q_k}[\log p(\vy_n \cond \vf(\vx_n))] - \KL(Q_k, P).
\end{equation}
From \cref{app:parameter-bounds}, $Q \mapsto {-\KL}(Q, P)$ is clearly continuous with respect to parameter convergence, so it is also upper semi-continuous with respect to parameter convergence.
Hence, it remains to show that
\begin{equation}
    \limsup_{k \to \infty}\,\E_{Q_k}[\log p(\vy_n \cond \vf(\vx_n))]
    \le \E_{Q}[\log p(\vy_n \cond \vf(\vx_n))].
\end{equation}
To show this, we use the reparametrization trick:
rewrite $\E_{Q_k}[\log p(\vy_n \cond \vf(\vx_n))] = \E[\log p(\vy_n \cond \vf^{Q_k}(\vx_n))]$ where
\begin{align}
    \vf^{Q_k}(\vx) &= \tfrac1{\sqrt{D\ss{o}}}(\mS^{Q_k}_{L+1} \circ \mathbfcal{E}_{L+1} + \mM^{Q_k}_{L+1}) \phi(\vz^{Q_k}_{L}) + (\vs^{Q_k}_{L+1} \circ \vep_{L+1} + \vm^{Q_k}_{L+1}), \\
    \vz^{Q_k}_\ell
    &= \tfrac1{\sqrt{M}}(\mS^{Q_k}_\ell \circ \mathbfcal{E}_\ell + \mM^{Q_k}_\ell) \phi(\vz^{Q_k}_{\ell-1}) + (\vs^{Q_k}_\ell \circ \vep_\ell + \vm^{Q_k}_\ell),
    \qquad\qquad \ell = L, \ldots, 2, \\
    \vz^{Q_k}_1
    &= \tfrac1{\sqrt{D\ss{i}}}(\mS^{Q_k}_1 \circ \mathbfcal{E}_1 + \mM^{Q_k}_1) \vx + (\vs^{Q_k}_1 \circ \vep_1 + \vm^{Q_k}_1).
\end{align}
where $(\mathbfcal{E}_\ell)_{\ell=1}^{L+1}$ are matrices of i.i.d.~standard Gaussian random variables, $(\vep_\ell)_{\ell=1}^{L+1}$ are vectors of i.i.d.~standard Gaussian random variables,
$(\mM^{Q_k}_\ell)_{\ell=1}^{L+1}$ are matrices consisting of the means of each weight in each layer under $\Q_k$,
$(\vm^{Q_k}_\ell)_{\ell=1}^{L+1}$ are vectors consisting of the means of each bias in each layer under $\Q_k$, 
$(\mS^{Q_k}_\ell)_{\ell=1}^{L+1}$ are matrices consisting of the standard deviations of each weight in each layer under $\Q_k$, and
$(\vs^{Q_k}_\ell)_{\ell=1}^{L+1}$ are vectors consisting of the standard deviations of each bias in each layer under $\Q_k$.
Since $\phi$ is Lipschitz, it is continuous, so clearly $\vf^{Q_k}(\vx) \to \vf^{Q}(\vx)$.
Let $C$ be the upper bound on the likelihood.
Then
\begin{align*}
    \limsup_{k \to \infty}\,\E_{Q_k}[\log p(\vy_n \cond \vf(\vx_n))]
    &= C + \limsup_{k \to \infty}\,\E[-C + \log p(\vy_n \cond \vf(\vx_n))] \\
    &= C - \liminf_{k \to \infty}\,\E[C - \log p(\vy_n \cond \vf^{Q_k}(\vx_n))] \\
    &\overset{\smash{\text{(i)}}}{\le} C - \E[\liminf_{k \to \infty}\,(C - \log p(\vy_n \cond \vf^{Q_k}(\vx_n)))] \\
    &\overset{\smash{\text{(ii)}}}{=} \vphantom{\liminf_{k \to \infty}} C - \E[C - \log p(\vy_n \cond \vf^{Q}(\vx_n))] \\
    &=\vphantom{\liminf_{k \to \infty}} \E[\log p(\vy_n \cond \vf^{Q}(\vx_n))] \\
    &= \E_{Q}[\log p(\vy_n \cond \vf(\vx_n))],
\end{align*}
where in (i) we use Fatou's lemma in combination with that $C - \log p(\vy_n \cond \vf^{Q_k}(\vx_n)) \ge 0$ by definition of $C$ and in (ii) we use (ii.a) continuity of $\vf \mapsto p(\vy_n \cond \vf)$ and (ii.b) $\vf^{Q_k}(\vx_n) \to \vf^{Q}(\vx_n)$.
%
%
%
\end{proof}
\section{Bounds on Parameters in Terms of the Kullback--Leibler Divergence}\label{app:parameter-bounds}
Fundamentally, if $\KL(\Q,\P)$ is small, we know that $\Q$ and $\P$ are `close' in some sense. We want to translate this notion of `close' to a notion directly related to the moments of the predictive distributions implied by the networks. In order to do this, we desire statements about how close the parameters of $\Q$ and $\P$ are, according to some norm. In this section, we show how to upper bound various norms of the parameters of $\Q$ in terms of $\KL(\Q,\P)$. These bounds will be a key ingredient in proofs of \cref{thm:app-convergence-of-mean,thm:app-convergence-of-variance}.

\begin{lemma}[Kullback-Leibler Divergence between diagonal multivariate Gaussian distributions]\label{lem:gaussian-kl}
If $P=\Normal(\vnull,\mI)$ and $Q = \Normal(\vmu_Q, \mathrm{diag}(\vsigma^2_Q))$ It holds that
    \begin{equation}
        \KL(\Q, \P)
        =
            \tfrac{1}{2}
            (\norm{\vmu_Q}_2^2 + \norm{r(\vsigma^2_Q)}_1)
    \end{equation}
    where $r\colon (0, \infty) \to [0, \infty)$, $r(a) = a - 1 - \log(a)$ is applied element-wise.
\end{lemma}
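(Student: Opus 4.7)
The plan is to exploit factorization: since both $P$ and $Q$ are diagonal Gaussians on $\mathbb{R}^I$ (where $I$ is the total parameter count), both densities factor as products over coordinates, and the KL divergence of a product measure against a product measure decomposes as the sum of the coordinate-wise KL divergences. This reduces the lemma to a purely one-dimensional computation.

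Concretely, first I would write $\KL(Q,P) = \sum_{i=1}^I \KL(\mathcal{N}(\mu_i, \sigma_i^2), \mathcal{N}(0,1))$ using the product structure of the densities and the fact that $\log(q/p)$ splits into a sum of one-dimensional log-ratios, each of whose expectation under $Q$ depends only on the corresponding marginal. Next, I would compute the one-dimensional KL divergence directly: for $X \sim \mathcal{N}(\mu, \sigma^2)$,
\begin{equation*}
\KL(\mathcal{N}(\mu,\sigma^2), \mathcal{N}(0,1)) = \E\bigl[\tfrac{1}{2}X^2 - \tfrac{1}{2\sigma^2}(X-\mu)^2 - \tfrac{1}{2}\log\sigma^2\bigr],
\end{equation*}
using $\log \tfrac{q(x)}{p(x)} = \tfrac{1}{2}x^2 - \tfrac{1}{2\sigma^2}(x-\mu)^2 - \tfrac{1}{2}\log\sigma^2$. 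Plugging in $\E[X^2] = \mu^2 + \sigma^2$ and $\E[(X-\mu)^2] = \sigma^2$ gives $\tfrac{1}{2}(\mu^2 + \sigma^2 - 1 - \log \sigma^2) = \tfrac{1}{2}(\mu^2 + r(\sigma^2))$.

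Finally, I would sum over coordinates to obtain $\KL(Q,P) = \tfrac{1}{2}(\|\vmu_Q\|_2^2 + \|r(\vsigma_Q^2)\|_1)$, matching the claim. Since this is a completely standard closed-form identity, there is no real obstacle; the only thing that requires minor care is writing out the log-ratio and checking the sign conventions so that the function $r(a) = a - 1 - \log a$ appears with the correct orientation (it is nonnegative on $(0,\infty)$ with minimum at $a=1$, which matches the nonnegativity of the KL divergence).
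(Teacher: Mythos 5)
Your proof is correct and is the standard derivation. The paper does not actually supply a proof of this lemma—it states it as a known closed-form fact and immediately moves on to the parameter bounds that follow from it—so there is no paper proof to compare against, but your factorization-plus-one-dimensional-computation is exactly the argument one would write; the coordinate-wise calculation and the final reassembly into $\tfrac12(\|\vmu_Q\|_2^2 + \|r(\vsigma_Q^2)\|_1)$ both check out.
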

We note that $r$ is a convex function, with a minimum at $r(1)=0$. We now turn to proving bounds on the parameters of $\Q$, which amounts to various methods of rearranging \cref{lem:gaussian-kl}.

\begin{lemma}[Bounds on parameters in term of KL divergence] \label{lem:parameter_bound}
    The following inequalities are true:
    \begin{enumerate}
        \item[(i)]
            $\norm{\vmu_\Q}^2_2 \le 2 \KL(\Q, \P)$,
        \item[(ii)]
            $\norm{\vsigma_\Q - \vone}_2^2 \le 2 \KL(\Q, \P)$,
        \item[(iii)]
            $\sigma\ss{max}
             \le 1 + \sqrt{2 \KL(\Q, \P)}
             \le 2 \sqrt{2\KL(\Q, \P) \lor 1}$, and
        \item[(iv)]
            $\norm{\vsigma_\Q^2 - \vone}_2^2
             \le (\sigma\ss{max} + 1)^2 \norm{\vsigma_\Q - 1}_2^2
             \le (2 + \sqrt{2 \KL(\Q, \P)})^2 (2 \KL(\Q, \P))$.
    \end{enumerate}
\end{lemma}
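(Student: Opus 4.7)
\textbf{Proof plan for \cref{lem:parameter_bound}.} The plan is to reduce each inequality to an elementary scalar fact about the function $r(a) = a - 1 - \log a$ and then sum (or take the max) over coordinates. The starting point throughout is \cref{lem:gaussian-kl}, which gives $2\KL(Q,P) = \|\vmu_Q\|_2^2 + \|r(\vsigma_Q^2)\|_1$, and the basic observation that $r \ge 0$ on $(0,\infty)$ (with equality iff $a=1$).

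Inequality (i) is immediate: since $\|r(\vsigma_Q^2)\|_1 \ge 0$, dropping this term from the KL formula yields $\|\vmu_Q\|_2^2 \le 2\KL(Q,P)$. For (ii) I would prove the pointwise estimate
\begin{equation}
(\sigma - 1)^2 \;\le\; r(\sigma^2) \qquad \text{for all } \sigma > 0, \label{eq:rbound}
\end{equation}
which after rearrangement is equivalent to $\log \sigma \le \sigma - 1$, a standard consequence of the concavity of $\log$. Summing \eqref{eq:rbound} coordinate-wise over the entries of $\vsigma_Q$ gives $\|\vsigma_Q - \vone\|_2^2 \le \|r(\vsigma_Q^2)\|_1 \le 2\KL(Q,P)$.

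For (iii), let $i^*$ achieve $\sigma\ss{max}$. Then $\sigma\ss{max} - 1 \le |\sigma_{Q,i^*} - 1| \le \|\vsigma_Q - \vone\|_2 \le \sqrt{2\KL(Q,P)}$ by (ii), which gives the first bound. The second bound $1 + \sqrt{2\KL(Q,P)} \le 2\sqrt{2\KL(Q,P) \lor 1}$ is a case split on whether $2\KL(Q,P)$ is at least or at most $1$: in the former case it reduces to $1 \le \sqrt{2\KL(Q,P)}$ and in the latter to $\sqrt{2\KL(Q,P)} \le 1$, both of which are immediate from the case assumption.

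Finally, for (iv), I would use the factorization $\sigma_{Q,i}^2 - 1 = (\sigma_{Q,i} - 1)(\sigma_{Q,i} + 1)$ entrywise, together with $\sigma_{Q,i} + 1 \le \sigma\ss{max} + 1$. Squaring and summing yields the first inequality $\|\vsigma_Q^2 - \vone\|_2^2 \le (\sigma\ss{max} + 1)^2 \|\vsigma_Q - \vone\|_2^2$, and substituting the bounds from (ii) and (iii) gives the stated final form. None of the steps is a serious obstacle; the only nontrivial ingredient is the scalar inequality \eqref{eq:rbound}, and even that is a one-line consequence of $\log \sigma \le \sigma - 1$.
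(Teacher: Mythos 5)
Your proposal is correct and follows essentially the same route as the paper: (i) by dropping the nonnegative $r$-term, (ii) via the scalar bound $(\sigma-1)^2 \le r(\sigma^2)$ (the paper derives it by substituting $\sqrt{a}$ into $\log a' \le a'-1$, which is exactly your $\log\sigma \le \sigma-1$), (iii) by bounding the maximal entry with the $\ell^2$-norm, and (iv) by the difference-of-squares factorization. No gaps.
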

\begin{proof}
We prove each bound in turn.
\paragraph{(i):}
    This follows directly from \cref{lem:gaussian-kl} and non-negativity of norms.
    
\paragraph{(ii):}
    To prove (ii), we recall the identity $\log (a') \leq a'-1$ for all $a'>0.$ Define $a'=\sqrt{a}$. Then $\log(a) = 2(\sqrt{a} - 1)$. 
Rearranging, $\log(a) \le 2 \sqrt{a} - 2$ for all $a > 0$. Then compute
    \begin{equation}
        r(a) = a - 1 - \log(a)
        \ge a - 1 - (2 \sqrt{a} - 2)
        =  a - 2 \sqrt{a} + 1
        = (\sqrt{a} - 1)^2.
    \end{equation}
    
\paragraph{(iii):}    
    For (iii), estimate
    \begin{equation}
        \sigma\ss{max} - 1
        \le \abs{\sigma\ss{max} - 1}
        \le \norm{\vsigma_\Q - \vone}_2
        \le \sqrt{2 \KL(\Q, \P)},
    \end{equation}
where the last inequality uses (ii).

\paragraph{(iv):}     
    For (iv), factoring the difference of two squares, 
	%
	\begin{align}
		\norm{\vsigma_\Q^2 - \vone}_2^2
		&\textstyle= \sum_{i=1}^I (\sigma^2_i -1)^2 
		\\
		&\textstyle =  \sum_{i=1}^I (\sigma_i +1)^2 (\sigma_i -1)^2 \\
 		&\textstyle\le \sum_{i=1}^I (\sigma\ss{max} + 1)^2 (\sigma_i-1)^2
		\\
		&\textstyle= (\sigma\ss{max} + 1)^2 \sum_{i=1}^I (\sigma_i-1)^2
		\\
		&\textstyle= (\sigma\ss{max} + 1)^2 \norm{\vsigma_\Q - \vone}_2^2
		\\
		&\le (2 + \sqrt{2 \KL(\Q, \P)})^2 (2 \KL(\Q, \P))
	\end{align}
where in the final inequality we have used (ii) and (iii).
\end{proof}

\begin{proposition} \label{prop:optimised_param_bound}
    Let $\mW$ be an arbitrary weight matrix and let $\vb$ be an arbitrary bias vector.
    Then
    \begin{equation}
        \norm{\V\ss{d}[\operatorname{vec}(\mW)]}_\infty
        + \norm{\V\ss{d}[\vb]}_\infty
        + \norm{\E[\vb^2]}_\infty
        \le (\sqrt{2} + \sqrt{2\KL(Q, P)})^2.
    \end{equation}
\end{proposition}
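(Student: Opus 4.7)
The plan is to bound each of the three $\ell^\infty$ norms on the left-hand side separately, using Lemma~\ref{lem:parameter_bound} to convert per-parameter statements into quantities controlled by $\KL(Q, P)$.

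Under the mean-field Gaussian $Q$, each entry of $\mW$ and of $\vb$ is independently Gaussian with its own $(\mu, \sigma^2)$, so the three objects $\V\ss{d}[\operatorname{vec}(\mW)]$, $\V\ss{d}[\vb]$ and $\E[\vb^2]$ are simply vectors of entrywise variances or second moments. Consequently, $\norm{\V\ss{d}[\operatorname{vec}(\mW)]}_\infty = \max_{ij} \sigma^2_{W_{ij}}$, $\norm{\V\ss{d}[\vb]}_\infty = \max_i \sigma^2_{b_i}$, and $\norm{\E[\vb^2]}_\infty = \max_i (\mu_{b_i}^2 + \sigma^2_{b_i})$, where all three maxima range over scalar parameters from the global pool governed by the mean-field $Q$.

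For the first two terms, I would apply Lemma~\ref{lem:parameter_bound}(iii) directly: each maximum is at most $\sigma\ss{max}^2 \le (1 + \sqrt{2\KL(Q,P)})^2$. For the third term, I would avoid discarding the coupling between $\mu$ and $\sigma^2$ at an individual parameter. Since the contribution of $b_i$ to the KL is $\tfrac12(\mu_{b_i}^2 + r(\sigma_{b_i}^2))$, and the remaining contributions are non-negative, one has $\mu_{b_i}^2 + r(\sigma_{b_i}^2) \le 2\KL(Q,P)$ for every $i$. Combining this per-parameter budget with the inequality $\sqrt{\sigma^2} \le 1 + \sqrt{r(\sigma^2)}$ extracted from the proof of Lemma~\ref{lem:parameter_bound}(iii), and optimizing in $\mu$ at fixed budget (the optimum is at $\mu=0$), yields the sharp estimate $\mu_{b_i}^2 + \sigma_{b_i}^2 \le (1 + \sqrt{2\KL(Q,P)})^2$, and hence $\norm{\E[\vb^2]}_\infty \le (1 + \sqrt{2\KL(Q,P)})^2$.

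Putting the three bounds together and using the elementary inequality $1 \le \sqrt{2}$ to rewrite $1 + \sqrt{2\KL(Q,P)}$ inside the form $\sqrt{2} + \sqrt{2\KL(Q,P)}$ then delivers the claimed estimate. The main technical point is the sharp per-parameter bound for the $\E[\vb^2]$ term: the naive route that separately controls $\max_i \mu_{b_i}^2$ via Lemma~\ref{lem:parameter_bound}(i) and $\max_i \sigma_{b_i}^2$ via Lemma~\ref{lem:parameter_bound}(iii) ignores the joint KL budget between the mean and the variance at the extremal parameter, which would produce a strictly worse constant than is needed.
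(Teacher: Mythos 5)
Your three per-term bounds are individually fine, but the final step --- ``putting the three bounds together and using $1\le\sqrt2$'' --- is where the argument breaks. Each term is bounded by $(1+K)^2$ with $K=\sqrt{2\KL(Q,P)}$, so summing them gives $3(1+K)^2$, and $3(1+K)^2\le(\sqrt2+K)^2$ is false for every $K\ge0$ (already $3>2$ at $K=0$); even a two-term sum $2(1+K)^2$ exceeds $(\sqrt2+K)^2$ for all $K>0$. The missing idea is that the extremal weight variance, extremal bias variance and extremal bias mean all draw on one and the same global budget $K^2$ from \cref{lem:gaussian-kl}, so they cannot simultaneously sit at their individual worst cases; you exploited this coupling \emph{within} a single bias coordinate (between $\mu_{b_i}$ and $\sigma_{b_i}^2$) but not \emph{across} the three blocks, and it is precisely the cross-block sharing that produces the constant $(\sqrt2+K)^2$. (Relatedly, no term-by-term argument can be repaired to give the display exactly as printed: at $Q=P$ its left-hand side equals $1+1+1=3$ while the right-hand side is $2$. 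The paper's own proof in fact controls the quantity in which the bias means enter once, i.e.\ $\norm{\V\ss{d}[\operatorname{vec}(\mW)]}_\infty+\norm{\E[\vb]}_\infty^2+\norm{\V\ss{d}[\vb]}_\infty$, which implies $\norm{\V\ss{d}[\operatorname{vec}(\mW)]}_\infty+\norm{\E[\vb^2]}_\infty\le(\sqrt2+K)^2$, and that is the form used later, e.g.\ in the proof of \cref{lem:inf}.)

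What the paper does instead is a single joint maximization: maximize $\norm{\vsigma_1^2}_\infty+\norm{\vmu_2^2}_\infty+\norm{\vsigma_2^2}_\infty$ (weight variances, bias means, bias variances) subject to the one constraint $\norm{r(\vsigma_1^2)}_1+\norm{\vmu_2}_2^2+\norm{r(\vsigma_2^2)}_1\le K^2$. One may assume all variances are at least $1$, and, because the objective consists of $\infty$-norms, that only one coordinate per block is active (the rest set to $\sigma^2=1$, $\mu=0$). Since $r'(a)=1-\tfrac1a<1$ on $[1,\infty)$, a unit of budget buys strictly more objective when spent on a variance than on the mean, so at the optimum the mean coordinate is $0$ and the two active variances are equal, say $\sigma^2$; the constraint then reads $2r(\sigma^2)\le K^2$, and part (ii) of \cref{lem:parameter_bound} gives $(\sigma-1)^2\le r(\sigma^2)$, hence $\sigma\le1+K/\sqrt2$ and the objective is $2\sigma^2\le2(1+K/\sqrt2)^2=(\sqrt2+K)^2$. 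To salvage your write-up you would need to replace the three separate appeals to \cref{lem:parameter_bound} by this single constrained optimization (or an equivalent explicit accounting of how the budget is split between the two extremal variances), since any per-term argument necessarily pays the additive constant once per term.
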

\begin{proof}
    Denote $K = \sqrt{2\KL(Q, P)}$.
    By \cref{lem:gaussian-kl}, we have the constraint
    \begin{equation}
        \norm{r(\V\ss{d}[\operatorname{vec}(\mW)])}_1
        + \norm{\E[\vb]}_2^2
        + \norm{r(\V\ss{d}[\vb])}_1
        \le K^2.
    \end{equation}
    We argue that this constraint implies that $
        \norm{\V\ss{d}[\operatorname{vec}(\mW)]}_\infty
        + \norm{\E[\vb^2]}_\infty
    \le (\sqrt{2} + K)^2$.
    To argue this, consider optimising $\norm{\vsigma^2_1}_\infty + \norm{\vmu_2^2}_\infty + \norm{\vsigma_2^2}_\infty$ over $(\vsigma_1^2, \vmu_2, \vsigma_2^2)$ such that $\norm{r(\vsigma_1^2)}_1 + \norm{\vmu_2}_2^2 + \norm{r(\vsigma_2^2)}_1 \le K^2$.
    Without loss of generality, assume that $\vsigma_1^2 \succeq 1$ and $\vsigma_2^2 \succeq 1$.
    Then, without loss of generality, by the observation that the objective comprises $\infty$-norms, for all $i \ge 2$, assume that $\sigma_{1,i}^2=\sigma_{2,i}^2=1$ and $\mu_{i} = 0$.
    Since, on $[1, \infty)$, $r'(a) = 1 - \smash{\tfrac1a} < 1$ and $r'$ is strictly increasing, it is clear that, at the maximum, $\sigma_{1,1}^2 = \sigma_{2,1}^2 = \sigma^2$ and $\mu_1 = 0$.
    From \cref{lem:parameter_bound} and the constraint, we have that $2 (\sigma - 1)^2 \le 2 r(\sigma^2) \le K^2$, so $\sigma \le 1 + \smash{\tfrac1{\sqrt{2}}} K$.
    Therefore, at the maximum, $\norm{\vsigma^2_1}_\infty + \norm{\vmu_2^2}_\infty + \norm{\vsigma_2^2}_\infty = 2 \sigma^2 \le 2 (1 + \smash{\tfrac1{\sqrt{2}}} K)^2 = (\sqrt{2} + K)^2$.
\end{proof}

\section{Proof of Convergence of the Mean of the Variational Posterior for Odd Activation Functions}\label{app:mean-convergence}

The main result in this section we prove will be the following,
\begin{theorem}[Convergence of mean prediction] \label{thm:app-convergence-of-mean}
    Let $Q$ be a mean-field variational posterior and $P=\Normal(\vnull,\mI)$ denote the prior over a neural network with $L$ hidden layers and $M$ neurons per hidden layer.
    Suppose $\phi\ss{e}=\alpha$ for some $\alpha \in \R$ and $\phi\colon \R \to \R$ is $1$-Lipschitz.
    Let $\vx \in \R^{D\ss{i}}$ and let $\widetilde{\vf}_{\theta}(\vx) = \vf_\theta(\vx) - \tfrac\alpha{\sqrt{M}}\mW_{L+1}\vone - \vb_{L+1}$ denote the network output excluding final bias and even part of the final activation.
    Then there exist universal constants $c_1 \leq 4$ and $c_2 \leq 6$ such that 
    \begin{equation}
         \norm{\E_Q[\widetilde{\vf}_{\theta}(\vx)]}_2
        \le c_1c_2^{L-1} L \frac{\abs{\alpha} + 1+ \|\vx\|_2/\sqrt{D\ss{i}}}{\sqrt{M}} \KL(Q, P) \parens{(2\KL(\Q, \P))^\frac{L-1}{2} \lor 1}.
    \end{equation}
\end{theorem}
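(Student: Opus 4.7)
The plan is to combine the outer-layer peel from the $L=1$ sketch with a two-level recursion over depth. Since $\widetilde{\vf}_\theta(\vx) = \tfrac{1}{\sqrt{M}}\mW_{L+1}\phi\ss{o}(\vz_L)$, independence under $Q$ together with $\norm{\vardot}_2 \le \norm{\vardot}\ss{F}$ and the Frobenius bound $\norm{\E_Q[\mW_{L+1}]}\ss{F} \le \sqrt{2K}$ from \cref{lem:parameter_bound} (with $K := \KL(Q,P)$) reduce the problem to controlling $A_L := \norm{\E_Q[\phi\ss{o}(\vz_L)]}_2$, up to a factor of $\sqrt{2K}/\sqrt{M}$.

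For $A_\ell$, I would set up a recursion via the layer-wise symmetrization trick from the $L=1$ sketch. Decomposing $\mW_\ell = \E[\mW_\ell] + \mW_\ell'$ and $\vb_\ell = \E[\vb_\ell] + \vb_\ell'$ into mean and centered parts, and defining $\vz_\ell^{\circ} := \tfrac{1}{\sqrt{M}}\mW_\ell'\phi(\vz_{\ell-1}) + \vb_\ell'$, the centered parts $(\mW_\ell',\vb_\ell')$ are mean-zero Gaussian and independent of $\vz_{\ell-1}$, so $\vz_\ell^{\circ}$ conditioned on $\vz_{\ell-1}$ is symmetric about $\vnull$ and oddness of $\phi\ss{o}$ gives $\E[\phi\ss{o}(\vz_\ell^{\circ})] = \vnull$. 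Subtracting and using Jensen together with $1$-Lipschitzness of $\phi\ss{o}$ yields, for $\ell \ge 2$,
\[
A_\ell \;\le\; \tfrac{\sqrt{2K}}{\sqrt{M}}\E\norm{\phi(\vz_{\ell-1})}_2 + \sqrt{2K} \;\le\; \tfrac{\sqrt{2K}}{\sqrt{M}}(B_{\ell-1} + |\alpha|\sqrt{M}) + \sqrt{2K},
\]
where $B_{\ell-1} := \E\norm{\phi\ss{o}(\vz_{\ell-1})}_2$ and I have used $\phi = \phi\ss{o} + \alpha$. For $\ell=1$, applying the same argument directly to $\vz_1 = \tfrac{1}{\sqrt{D\ss{i}}}\mW_1\vx + \vb_1$ gives $A_1 \le \sqrt{2K}(1 + \norm{\vx}_2/\sqrt{D\ss{i}})$.

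To close the loop, I would bound $B_\ell$ via an auxiliary second-moment recursion. Since $\phi\ss{o}(0)=0$ and $\phi\ss{o}$ is $1$-Lipschitz, $B_\ell \le \sqrt{\E\norm{\vz_\ell}_2^2}$; expanding $\E\norm{\vz_\ell}_2^2$ using independence of $(\mW_\ell,\vb_\ell)$ from earlier layers together with \cref{prop:optimised_param_bound} to bound the entrywise second moments of the layer parameters gives a recursion of the form
\[
\tfrac{1}{M}\E\norm{\vz_\ell}_2^2 \;\le\; (\sqrt{2}+\sqrt{2K})^2\parens*{\tfrac{1}{M}\E\norm{\phi(\vz_{\ell-1})}_2^2 + 1},
\]
which iterates to an $M$-independent bound of the form $B_\ell/\sqrt{M} \le c_2^{\ell-1}(1+|\alpha|+\norm{\vx}_2/\sqrt{D\ss{i}})(2K \lor 1)^{\ell/2}$ for a universal $c_2$.

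Chaining the two recursions and multiplying by the outer factor $\sqrt{2K}/\sqrt{M}$ then produces the claimed form: each interior layer contributes one $\sqrt{2K}$ from $\norm{\E[\mW_\ell]}\ss{F}$ and one $c_2(\sqrt{2K} \lor 1)$ from the $B$-recursion, with the linear-in-$L$ prefactor arising from the geometric telescoping of the $A_\ell$-recursion. The main obstacle is careful constant bookkeeping: the $\lor 1$ operation must be applied at each layer to handle the small-$K$ regime uniformly with the large-$K$ regime, the exponents of $K$ must be tracked so that the final expression matches the required $K \cdot (K^{(L-1)/2} \lor 1)$ structure, and the absolute constants $c_1 \le 4$ and $c_2 \le 6$ fall out of an explicit induction over $L$ with the $L=1$ base case verified against the sketch in the main body.
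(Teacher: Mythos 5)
Your proposal is correct in its essentials, but it takes a genuinely different and in fact \emph{cleaner} route than the paper, and a piece of your own exposition contradicts the bound you actually derive.

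The key divergence is in the symmetrization coupling. You couple $\vz_\ell$ with $\vz_\ell^\circ := \tfrac1{\sqrt{M}}\mW_\ell'\phi(\vz_{\ell-1})+\vb_\ell'$, centering only $(\mW_\ell,\vb_\ell)$ while leaving the input $\vz_{\ell-1}$ unchanged. The paper's \cref{lem:contraction} additionally centers $\vz_{\ell-1}$, so its Lipschitz decomposition produces the extra cross term $\E\|\mW-\E[\mW]\|_2\cdot\|\E[\vz]\|_2$. That cross term is the sole reason the paper needs the Gaussian matrix operator-norm bound \cref{lem:expectation_bound} and the separate inductive control of $\|\E[\vz_\ell]\|_2$ in \cref{lem:norm-expecation}, and that induction is where the linear-in-$L$ prefactor enters the theorem. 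Your coupling makes the cross term vanish identically, so what you call the ``$A_\ell$-recursion'' is in fact a \emph{single} application: $A_\ell \le \tfrac{\sqrt{2K}}{\sqrt{M}}\E\|\phi(\vz_{\ell-1})\|_2 + \sqrt{2K}$ depends on $B_{\ell-1}$ but not on $A_{\ell-1}$. You never need an induction on $A$ at all.

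This means your final paragraph, which attributes the $L$ prefactor to ``geometric telescoping of the $A_\ell$-recursion'', describes something your argument does not (and should not) contain. Your bound is $L$-prefactor-free: you get $\|\E[\widetilde{\vf}]\|_2 \lesssim c_2^{L-1}\tfrac{1+|\alpha|+\|\vx\|_2/\sqrt{D\ss{i}}}{\sqrt{M}}\KL(Q,P)\bigl((2\KL)^{(L-1)/2}\lor 1\bigr)$, which is strictly sharper than and implies the stated theorem (since $L\ge 1$). What your approach buys is the elimination of \cref{lem:expectation_bound}, \cref{lem:norm-expecation}, and the $L$ factor; what it trades is replacing the paper's first-absolute-moment recursion (\cref{lem:bound_norm}, \cref{lem:expected-norm-bound}) on $\E\|\vz_\ell\|_2$ with a second-moment recursion on $\E\|\vz_\ell\|_2^2$, where the off-diagonal weight-mean contribution gives a subdominant $O(\KL/M)$ term you need to absorb, and where the resulting constant base $\bigl(2(\sqrt{2}+\sqrt{2\KL})^2 + \text{extras}\bigr)^{1/2}$ is comparable to but not obviously below the paper's $c_2 = 2 + 2c = 6$ without the careful bookkeeping you flag. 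The claim $c_1 \le 4$, $c_2 \le 6$ therefore needs to be verified explicitly rather than asserted, but this is a constants issue, not a structural one.
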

For the proof we assume the Lipschitz constant of the activation function is 1, but we note that any Lipschitz function can be scaled to have a Lipschitz constant of 1. 

\begin{corollary}\label{cor:difference-of-means}
    With the same notation and assumptions as in \cref{thm:app-convergence-of-mean}, for $\vx, \vx' \in \R^{D\ss{i}}$ we have 
    \begin{align}
        \norm{\E_Q[\vf_{\theta}(\vx)] -\E_Q[\vf_{\theta}(\vx')]}_2 \leq c_1c_2^{L-1} L \frac{2\abs{\alpha} + 2+( \|\vx\|_2+\|\vx'\|_2)/\sqrt{D\ss{i}}}{\sqrt{M}} \KL(Q, P) \parens{(2\KL(\Q, \P))^\frac{L-1}{2} \lor 1}
    \end{align}
\end{corollary}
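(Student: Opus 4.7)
The plan is to reduce the corollary directly to \cref{thm:app-convergence-of-mean} by exploiting the fact that the two terms in $\vf_\theta(\vx) - \vf_\theta(\vx')$ that are independent of the input (namely the final bias $\vb_{L+1}$ and the constant contribution $\tfrac{\alpha}{\sqrt{M}}\mW_{L+1}\vone$ from the even part of the activation) cancel in the difference. More precisely, since
\begin{equation}
    \vf_\theta(\vx) - \vf_\theta(\vx') = \widetilde{\vf}_\theta(\vx) - \widetilde{\vf}_\theta(\vx'),
\end{equation}
taking expectations under $Q$ and using linearity yields
\begin{equation}
    \E_Q[\vf_\theta(\vx)] - \E_Q[\vf_\theta(\vx')] = \E_Q[\widetilde{\vf}_\theta(\vx)] - \E_Q[\widetilde{\vf}_\theta(\vx')].
\end{equation}

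Next I would apply the triangle inequality for $\norm{\vardot}_2$ to obtain
\begin{equation}
    \norm{\E_Q[\vf_\theta(\vx)] - \E_Q[\vf_\theta(\vx')]}_2
    \le \norm{\E_Q[\widetilde{\vf}_\theta(\vx)]}_2 + \norm{\E_Q[\widetilde{\vf}_\theta(\vx')]}_2,
\end{equation}
and then invoke \cref{thm:app-convergence-of-mean} on each of the two terms on the right-hand side, with inputs $\vx$ and $\vx'$ respectively. Summing the two bounds, the factor
\begin{equation}
    c_1 c_2^{L-1} L \frac{1}{\sqrt{M}} \KL(Q, P) \parens{(2\KL(Q, P))^{\frac{L-1}{2}} \lor 1}
\end{equation}
is common to both, and the input-dependent prefactor $\abs{\alpha} + 1 + \norm{\vx}_2/\sqrt{D\ss{i}}$ from each application adds to give $2\abs{\alpha} + 2 + (\norm{\vx}_2 + \norm{\vx'}_2)/\sqrt{D\ss{i}}$, matching the claim exactly.

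There is no real obstacle here: the work is purely a mechanical combination of cancellation of input-independent terms, the triangle inequality, and two invocations of the already-established \cref{thm:app-convergence-of-mean}. The only thing to double-check is the algebraic identity $\vf_\theta(\vx) - \vf_\theta(\vx') = \widetilde{\vf}_\theta(\vx) - \widetilde{\vf}_\theta(\vx')$, which is immediate from the definition $\widetilde{\vf}_\theta(\vx) = \vf_\theta(\vx) - \tfrac{\alpha}{\sqrt{M}} \mW_{L+1} \vone - \vb_{L+1}$ since the subtracted terms do not depend on the input.
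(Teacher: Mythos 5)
Your proof is correct and matches the paper's own argument: both observe that the input-independent terms cancel so that $\E_Q[\vf_\theta(\vx)] - \E_Q[\vf_\theta(\vx')] = \E_Q[\widetilde{\vf}_\theta(\vx)] - \E_Q[\widetilde{\vf}_\theta(\vx')]$, then apply the triangle inequality and two invocations of \cref{thm:app-convergence-of-mean}.
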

\Cref{cor:difference-of-means} follows from \cref{thm:app-convergence-of-mean} by noting that $\E_Q[\vf_{\theta}(\vx)] -\E_Q[\vf_{\theta}(\vx')] = \E_Q[\widetilde{\vf}_{\theta}(\vx)] -\E_Q[\widetilde{\vf}_{\theta}(\vx')]$, then applying triangle inequality.

 
The crucial technical result for the proof of \cref{thm:app-convergence-of-mean} will be the following lemma, which upper bounds the norm of the expected value of the final layer of hidden units. We defer the proof of this lemma, which essentially inducts on the number of hidden layers, to \cref{app:main-recursion}.
\begin{lemma}\label{lem:last-layer-bound}
 Suppose $\Q$ is mean-field Gaussian, $\P=\Normal(\vnull, \mI)$, $\phi_{e}=\alpha$, with $\alpha \in \R$ and $\phi$ is $1$-Lipschitz. Then,
\begin{align}
    \norm{\E[\phi\ss{o}(\vz_L(\vx))]}_2
    &\le 2L(2 + \abs{\alpha} + \tfrac{1}{\sqrt{D\ss{i}}}\norm{\vx}_2)(2 + 2c)^{L - 1}\sqrt{2\KL(Q,P)}(\sqrt{2\KL(Q,P)} \lor 1)^{L - 1},
\end{align}
where $c>0$ is a universal constant.
\end{lemma}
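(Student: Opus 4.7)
The plan is to couple $\vz_\ell$ with a symmetrized version $\widetilde{\vz}_\ell$ for which $\E[\phi\ss{o}(\widetilde{\vz}_\ell)] = \vnull$, and then use Lipschitz continuity of $\phi\ss{o}$ to control $\norm{\E[\phi\ss{o}(\vz_L)]}_2$ by $a_L := \E\norm{\vz_L - \widetilde{\vz}_L}_2$. Writing the reparameterizations $\mW_\ell = \mM_\ell + \mS_\ell \circ \mathbfcal{E}_\ell$ and $\vb_\ell = \vm_\ell + \vs_\ell \circ \vep_\ell$ under the mean-field $Q$ (with $\mathbfcal{E}_\ell, \vep_\ell$ independent standard Gaussians, independent across layers), I would define $\widetilde{\vz}_\ell$ via the same recursion as $\vz_\ell$ but with every mean $\mM_\ell, \vm_\ell$ set to zero, using the same $(\mathbfcal{E}_\ell, \vep_\ell)$. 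Conditional on $\widetilde{\vz}_{\ell-1}$, the vector $\widetilde{\vz}_\ell$ is a mean-zero Gaussian (a linear combination of $\mathbfcal{E}_\ell, \vep_\ell$ with no constant term), hence symmetric about $\vnull$; combined with oddness of $\phi\ss{o}$ this gives $\E[\phi\ss{o}(\widetilde{\vz}_\ell) \mid \widetilde{\vz}_{\ell-1}] = \vnull$ and, marginalizing, $\E[\phi\ss{o}(\widetilde{\vz}_\ell)] = \vnull$. Jensen's inequality applied to the $1$-Lipschitz function $\phi\ss{o}$ (inherited from $\phi$ via $\phi\ss{o}(z) = \tfrac12(\phi(z) - \phi(-z))$) then yields $\norm{\E[\phi\ss{o}(\vz_L)]}_2 \le a_L$.

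Next I would derive a recursion for $a_\ell$ starting from
\begin{equation*}
    \vz_\ell - \widetilde{\vz}_\ell = \tfrac1{\sqrt{M}}\mM_\ell\, \phi(\vz_{\ell-1}) + \vm_\ell + \tfrac1{\sqrt{M}}(\mS_\ell \circ \mathbfcal{E}_\ell)(\phi(\vz_{\ell-1}) - \phi(\widetilde{\vz}_{\ell-1})).
\end{equation*}
\Cref{lem:parameter_bound} bounds $\norm{\mM_\ell}\ss{F}, \norm{\vm_\ell}_2 \le K := \sqrt{2\KL(Q,P)}$ and $\sigma\ss{max} \le 1 + K$. For the last term, conditioning on everything other than $\mathbfcal{E}_\ell$ (which, by the mean-field assumption, is independent of $\vz_{\ell-1}$ and $\widetilde{\vz}_{\ell-1}$) and using $\E[\norm{(\mS_\ell\circ\mathbfcal{E}_\ell)\vu}_2^2 \mid \vu] \le \sigma\ss{max}^2 M \norm{\vu}_2^2$ with Jensen and Lipschitz of $\phi$ produces
\begin{equation*}
    a_\ell \le K + \tfrac{K}{\sqrt{M}} \E\norm{\phi(\vz_{\ell-1})}_2 + \sigma\ss{max}\, a_{\ell-1}.
\end{equation*}

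To close this, I would use $|\phi(z)| \le \abs{\alpha} + |z|$ to write $\E\norm{\phi(\vz_{\ell-1})}_2 \le \abs{\alpha}\sqrt{M} + a_{\ell-1} + b_{\ell-1}$, where $b_\ell := \sqrt{\E\norm{\widetilde{\vz}_\ell}_2^2}$. Since $\widetilde{\vz}_\ell$ is conditionally symmetric, expanding $\phi(\widetilde{z}_{\ell,i})^2 = \alpha^2 + 2\alpha \phi\ss{o}(\widetilde{z}_{\ell,i}) + \phi\ss{o}(\widetilde{z}_{\ell,i})^2$ kills the cross term in expectation and gives $\E[\phi(\widetilde{z}_{\ell,i})^2] \le \alpha^2 + \E[\widetilde{z}_{\ell,i}^2]$; this yields an auxiliary recursion $\E\norm{\widetilde{\vz}_\ell}_2^2 \le M\sigma\ss{max}^2(\alpha^2 + 1) + \sigma\ss{max}^2 \E\norm{\widetilde{\vz}_{\ell-1}}_2^2$ with base case $\E\norm{\widetilde{\vz}_1}_2^2 \le M\sigma\ss{max}^2(1 + \tfrac{\norm{\vx}_2^2}{D\ss{i}})$. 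Unrolling gives $b_\ell \lesssim \sqrt{\ell M}\, \sigma\ss{max}^\ell (\abs{\alpha} + 1 + \tfrac{\norm{\vx}_2}{\sqrt{D\ss{i}}})$, so the $\tfrac{K}{\sqrt{M}} b_{\ell-1}$ contribution is bounded independently of $M$. Plugging this back, using $\sigma\ss{max} \le 1 + K \le 2(K \lor 1)$, and starting from $a_1 \le K(1 + \tfrac{\norm{\vx}_2}{\sqrt{D\ss{i}}})$, the affine recursion unrolls to the stated bound, with $c$ absorbing the universal constants from the $b_\ell$ sub-recursion.

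The main obstacle will be bookkeeping: the coupled recursions for $(a_\ell, b_\ell)$ must be handled so that factors of $\sqrt{M}$ cancel exactly, the factors $\sigma\ss{max}^{\ell-1}$ accumulate only to $(1+K)^{L-1} \le 2^{L-1}(K \lor 1)^{L-1}$, and the telescoping sum stays linear in $L$. The key structural feature that makes this work is that $b_\ell$ grows only like $\sqrt{M}$, so dividing by $\sqrt{M}$ produces a width-independent quantity, while $a_\ell$ is already width-independent and each application of the $K$-bound on mean parameters contributes at most one factor of $K$ per layer. A subtler concern is that the naive global sign-flip symmetry fails when $\alpha \ne 0$ (since $\phi$ itself is not odd), so the symmetry argument must be phrased conditionally on $\widetilde{\vz}_{\ell-1}$ layer by layer rather than jointly across layers.
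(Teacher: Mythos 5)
Your proposal is correct, and it takes a genuinely different route from the paper's.

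The paper's proof (\cref{lem:contraction} through \cref{lem:norm-expecation}) works one layer at a time: at each layer it centers $\mW$, $\vb$, and $\vz$ (the current-layer pre-activation only), uses the symmetry of the centered $(\mW', \vb')$ around zero to kill the expectation, and then Lipschitz continuity to compare with the uncentered process. The recursion is carried on $\norm{\E_Q[\vz_\ell]}_2$ together with $\tfrac{1}{\sqrt{M}}\E_Q[\norm{\vz_\ell}_2]$, and the key technical input is a bound on $\E[\norm{\mW - \E[\mW]}_2]$ for a Gaussian matrix, established via the Sudakov--Fernique comparison theorem (\cref{lem:expectation_bound}). You instead construct a single \emph{globally} coupled process $\widetilde{\vz}_\ell$ by zeroing every mean parameter while re-using the identical noise, and track the coupling distance $a_\ell = \E\norm{\vz_\ell - \widetilde{\vz}_\ell}_2$ together with the auxiliary $b_\ell = \sqrt{\E\norm{\widetilde{\vz}_\ell}_2^2}$. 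The cancellation $\E[\phi\ss{o}(\widetilde{\vz}_\ell)] = \vnull$ is obtained by conditioning on $\widetilde{\vz}_{\ell-1}$ rather than by the two-sided sign-flip argument, which as you correctly note is forced upon you when $\alpha \neq 0$ (the unconditioned process is not globally symmetric). Your route has one concrete advantage: the Sudakov--Fernique operator-norm bound is replaced by the elementary conditional second-moment computation $\E[\norm{(\mS_\ell \circ \mathbfcal{E}_\ell)\vu}_2^2 \cond \vu] = \sum_{i,j} S_{\ell,ij}^2 u_j^2 \le M \sigma\ss{max}^2 \norm{\vu}_2^2$, so no Gaussian concentration machinery is needed. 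The unrolling works as intended: $\rho^{L-j}$ contributes $(K \lor 1)^{L-j}$ and $q_j$ (dominated by $\tfrac{K}{\sqrt{M}} b_{j-1}$) contributes $(K \lor 1)^{j-1}$, which together give exactly the $(K\lor1)^{L-1}$ in the statement, and the ratio-test on $\sum_j \rho^{L-j} \sigma\ss{max}^{j-1} \sqrt{j-1}$ sums geometrically so no worse-than-linear $L$-dependence appears. The universal constant $c$ will differ from the paper's (which inherits the Sudakov--Fernique constant), but this is immaterial since the lemma leaves $c$ unspecified.
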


We now turn to the proof of \cref{thm:app-convergence-of-mean}, which is relatively direct once \cref{lem:last-layer-bound} has been established.
\begin{proof}[Proof of \cref{thm:app-convergence-of-mean}]
Let every expectation be under $\Q$. Define $K=\sqrt{2\KL(Q,P)}$. By a slight abuse of notation, let $\vz_L = \vz_L(\vx)$.

To begin with, note that
\begin{align}
    \norm{\E[\widetilde{\vf}_{\theta}(\vx)]}_2 
    &= \tfrac1{\sqrt{M}} \norm{\E[\mW_{L+1}]\E[(\phi\ss{o}(\vz_L)]}_2.
\end{align}
Using \cref{lem:parameter_bound}, $\norm{\E[\mW_{L+1}]}\ss{F} \le K$. We then apply \cref{lem:last-layer-bound},
\begin{align}
    \norm{\E[\widetilde{\vf}_{\theta}(\vx)]}_2 
    &\le \frac{K}{\sqrt{M}} 
        2L\parens*{2 + \abs{\alpha}+\frac{1}{\sqrt{D\ss{i}}}\|\vx\|_2}(2 + 2c)^{L - 1} K(K \lor 1)^{L - 1} \\
    &= \frac{2L}{\sqrt{M}} \parens*{2 + \abs{\alpha}+\frac{1}{\sqrt{D\ss{i}}}\|\vx\|_2} (2 + 2c)^{L-1}\KL(\Q, \P)\parens{(2\KL(\Q, \P))^\frac{L-1}{2} \lor 1}.
\end{align}
\end{proof}

\begin{figure}
	\begin{centering}
		\begin{tikzpicture}
			
			\node[state] (main) at (0,0) {Theorem 23};
			
			\node[state] (lem13) [above  = of main, yshift=.75cm] {Lemma 13};
			
			\path (lem13) edge[] (main);
			
			\node[state] (lem20) [above  = of lem13, yshift=.75cm] {Proposition 20};
			
			\node[state] (lem21) [left  = of lem20, xshift=.75cm] {Lemma 21};
			\node[state] (lem22) [left  = of lem21, xshift=.75cm] {Lemma 22};
			
			\node[state] (lem19) [right  = of lem20, xshift=-.75cm] {Lemma 19};
			\node[state] (lem18) [right  = of lem19, xshift=-.75cm] {Lemma 18};
			
			\path (lem18) edge[bend left=5] (lem13);
			\path (lem21) edge[bend right=5] (lem13);
			\path (lem22) edge[bend right=5] (lem13);
			
			\path (lem18) edge[bend right=25] (lem22);
			\path (lem19) edge[bend right=35] (lem21);
			\path (lem20) edge[bend right=15] (lem21);
			
			\path (lem21) edge[bend right=15] (lem22);
			
			
			\node[state] (lem15) [above left  = of lem20, xshift=1.45cm, yshift=.75cm] {Lemma 15};
			\node[state] (lem16) [above right  = of lem20, xshift=-1.45cm, yshift=.75cm] {Remark 16};
			
			\node[state] (lem14) [left  = of lem15, xshift=.75cm] {Lemma 14};
			\node[state] (lem17) [right  = of lem16, xshift=-.75cm] {Lemma 17};

			\path (lem14) edge[bend left=45] (lem15);
			\path (lem15) edge[bend left=45] (lem16);
			
			\path (lem16) edge[bend left=5] (lem18);
			\path (lem17) edge[bend left=5] (lem18);
			\path (lem17) edge[bend left=5] (lem19);
			
		\end{tikzpicture}
		\caption{Dependency structure of the results in \cref{app:mean-convergence}.}
		\label{fig:dag-mean}
	\end{centering}
\end{figure}
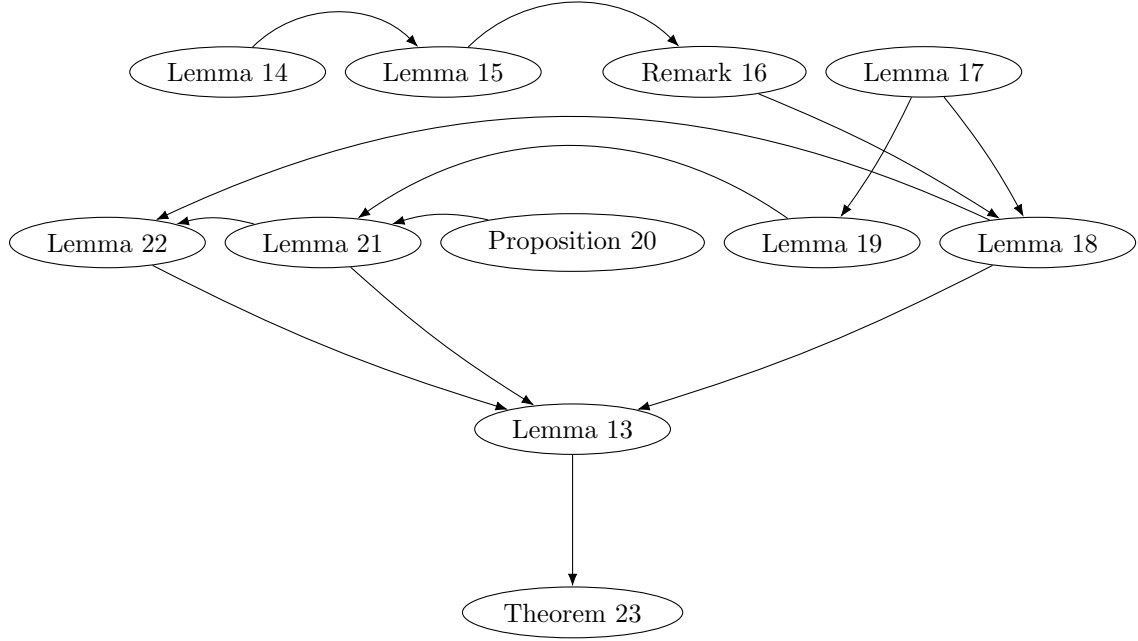

\subsection{Main Recursion: Proof of \texorpdfstring{\cref{lem:last-layer-bound}}{TEXT}}\label{app:main-recursion}

The main purpose of this section will be the proof of \cref{lem:last-layer-bound}. We begin by proving several results that build up to bounds on the norm of the expected value and the expected value of the norm of one layer in terms of the previous layer (\cref{lem:bound,lem:bound_norm}). Once we have established these bounds, the proof of \Cref{lem:last-layer-bound} follows by recursive application of these bounds. See \cref{fig:dag-mean} for a diagram of the dependencies between the results. 

For a matrix $\mW$, recall that $\norm{\mW}_2 \le \norm{\mW}\ss{F}$.
Call a random variable $a$ \emph{symmetric around its mean} if $a - \E[a] \disteq -(a - \E[a])$

\begin{lemma} \label{lem:lipschitz}
    Let $\phi_1, \phi_2 \colon \R \to \R$ be $1$-Lipschitz and odd.
    Then
    \begin{align}
        &\norm{\phi_1(\mW \phi_2(\vz) + \vb) - \phi_1(\mW' \phi_2(\vz') + \vb')}_2 \nonumber \\
        &\qquad\le
            \norm{\vz}_2 \norm{\mW - \mW'}\ss{F}
            + \norm{\mW'}_2 \norm{\vz - \vz'}_2
            + \norm{\vb - \vb'}_2.
    \end{align}
\end{lemma}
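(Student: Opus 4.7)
The plan is to combine the $1$-Lipschitz property of $\phi_1$ (applied componentwise and hence in $\ell^2$-norm) with a standard add-and-subtract trick for bilinear differences, then use the oddness of $\phi_2$ to control $\norm{\phi_2(\vz)}_2$ by $\norm{\vz}_2$.

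First, since $\phi_1$ is $1$-Lipschitz elementwise, it is $1$-Lipschitz as a map $(\R^M,\norm{\cdot}_2)\to(\R^M,\norm{\cdot}_2)$, so
\begin{equation*}
    \norm{\phi_1(\mW\phi_2(\vz)+\vb)-\phi_1(\mW'\phi_2(\vz')+\vb')}_2
    \le \norm{\mW\phi_2(\vz)-\mW'\phi_2(\vz')}_2 + \norm{\vb-\vb'}_2.
\end{equation*}
Next I would add and subtract $\mW'\phi_2(\vz)$ inside the first norm and apply the triangle inequality, yielding
\begin{equation*}
    \norm{\mW\phi_2(\vz)-\mW'\phi_2(\vz')}_2
    \le \norm{(\mW-\mW')\phi_2(\vz)}_2 + \norm{\mW'(\phi_2(\vz)-\phi_2(\vz'))}_2.
\end{equation*}

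For the two matrix-vector terms I would choose different operator norms tuned to how the lemma will be invoked later: the Frobenius norm on the increment $\mW-\mW'$ (so it can eventually be bounded by $\KL(Q,P)$ via \cref{lem:parameter_bound}) and the spectral norm on $\mW'$ (which is tight on vectors). Thus
\begin{equation*}
    \norm{(\mW-\mW')\phi_2(\vz)}_2 \le \norm{\mW-\mW'}\ss{F}\,\norm{\phi_2(\vz)}_2,
    \qquad
    \norm{\mW'(\phi_2(\vz)-\phi_2(\vz'))}_2 \le \norm{\mW'}_2\,\norm{\phi_2(\vz)-\phi_2(\vz')}_2.
\end{equation*}
The second factor of the second term is at most $\norm{\vz-\vz'}_2$ by the $1$-Lipschitz property of $\phi_2$ applied componentwise.

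The only remaining piece — and the one that really uses oddness — is the bound $\norm{\phi_2(\vz)}_2 \le \norm{\vz}_2$. Here I would observe that every odd function $\phi_2$ satisfies $\phi_2(0)=0$, so by $1$-Lipschitzness $|\phi_2(z_i)| = |\phi_2(z_i)-\phi_2(0)| \le |z_i|$ for each coordinate, and squaring and summing gives $\norm{\phi_2(\vz)}_2\le\norm{\vz}_2$. Putting the pieces together yields the stated bound. There is no real obstacle here; the only subtlety worth flagging is the deliberate asymmetry between the Frobenius norm on $\mW-\mW'$ and the spectral norm on $\mW'$, which is what makes the lemma usable in the recursive argument of \cref{lem:last-layer-bound} without blowing up in $M$.
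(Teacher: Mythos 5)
Your proof is correct and follows essentially the same route as the paper's: apply $\ell^2$-Lipschitzness of $\phi_1$, add-and-subtract $\mW'\phi_2(\vz)$, use the triangle inequality, bound the matrix--vector products by operator norms (Frobenius on the increment, spectral on $\mW'$), and finally use oddness of $\phi_2$ to get $\phi_2(\vnull)=\vnull$ and hence $\norm{\phi_2(\vz)}_2\le\norm{\vz}_2$. The only cosmetic difference is that you split the triangle inequality into two applications where the paper does it in one, and your remark on the deliberate Frobenius/spectral asymmetry correctly anticipates how the lemma is later combined with \cref{lem:parameter_bound}.
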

\begin{proof}
    When applied element-wise, $\phi_1$ and $\phi_2$ are also $1$-Lipschitz as functions $(\R^n, \norm{\vardot}_2) \to (\R^n, \norm{\vardot}_2)$.
    The result then follows from an application of $1$-Lipschitzness and the triangle inequality:
    \begin{align}
        &\norm{\phi_1(\mW \phi_2(\vz) + \vb) - \phi_1(\mW' \phi_2(\vz') + \vb')}_2 \nonumber \\
        &\qquad\le \norm{\mW \phi_2(\vz) + \vb - (\mW' \phi_2(\vz') + \vb')}_2 \\
        &\qquad= \norm{(\mW - \mW')\phi_2(\vz) + \mW'(\phi_2(\vz) - \phi_2(\vz')) + (\vb - \vb')}_2 \\
        &\qquad\le  \norm{\mW - \mW'}_2\norm{\phi_2(\vz)}_2 + \norm{\mW'}_2 \norm{\phi_2(\vz) - \phi_2(\vz')}_2 + \norm{\vb - \vb'}_2 \\
        &\qquad\le \norm{\mW - \mW'}\ss{F}\norm{\vz}_2  + \norm{\mW'}_2 \norm{\vz - \vz'}_2 + \norm{\vb - \vb'}_2
    \end{align}
    where in the last inequality we use that since $\phi_2$ is odd $\phi_2(\vnull) = \vnull$ so $\norm{\phi_2(\vz)}_2 = \norm{\phi_2(\vz) - \phi_2(\vnull)}_2 \le \norm{\vz}_2$.
\end{proof} 

\begin{lemma} \label{lem:contraction}
    Let $\phi_1,\phi_2\colon \R \to \R$ be $1$-Lipschitz and odd.
    Let the triple $(\mW, \vz, \vb)$ be  (possibly dependent) random variables such that
    \begin{equation}
        (\mW - \E[\mW], \vz, \vb - \E[\vb])
        \disteq
        (-(\mW - \E[\mW]), \vz, -(\vb - \E[\vb]))
    \end{equation}
    Then
    \begin{align}
        &\norm{\E[\phi_1(\mW \phi_2(\vz) + \vb)]}_2 
    \le \norm{\E[\mW]}\ss{F}\E[\norm{\vz}_2] + \E[\norm{\mW - \E[\mW]}_2] \norm{\E[\vz]}_2 + \norm{\E[\vb]}_2
    \end{align}
\end{lemma}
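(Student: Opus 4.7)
The plan is to exploit the symmetry hypothesis to construct an auxiliary random variable whose image under $\phi_1$ has mean zero, and then compare it against $\phi_1(\mW\phi_2(\vz)+\vb)$ using the Lipschitz bound of Lemma 21.

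First I would center the parameters: set $\mW' = \mW - \E[\mW]$ and $\vb' = \vb - \E[\vb]$. The hypothesis $(\mW',\vz,\vb')\disteq(-\mW',\vz,-\vb')$ is preserved under the deterministic map $(\mA,\vc,\vd)\mapsto \mA\phi_2(\vc)+\vd$, so
\begin{equation*}
\mW'\phi_2(\vz) + \vb' \disteq -\bigl(\mW'\phi_2(\vz) + \vb'\bigr).
\end{equation*}
Applying the odd function $\phi_1$ componentwise and taking expectations therefore yields $\E[\phi_1(\mW'\phi_2(\vz)+\vb')] = \vnull$.

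Next I would write
\begin{equation*}
\E[\phi_1(\mW\phi_2(\vz)+\vb)] = \E\bigl[\phi_1(\mW\phi_2(\vz)+\vb) - \phi_1(\mW'\phi_2(\vz)+\vb')\bigr],
\end{equation*}
push $\norm{\cdot}_2$ inside by Jensen's inequality, and invoke Lemma 21 with $\vz'=\vz$. Because $\vz-\vz' = \vnull$ the middle $\norm{\mW'}_2\norm{\vz-\vz'}_2$ term drops out, leaving $\norm{\mW-\mW'}\ss{F}\norm{\vz}_2 + \norm{\vb-\vb'}_2 = \norm{\E[\mW]}\ss{F}\norm{\vz}_2 + \norm{\E[\vb]}_2$. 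Taking expectations gives
\begin{equation*}
\norm{\E[\phi_1(\mW\phi_2(\vz)+\vb)]}_2 \le \norm{\E[\mW]}\ss{F}\,\E\norm{\vz}_2 + \norm{\E[\vb]}_2,
\end{equation*}
which is already stronger than the stated bound, since the additional term $\E\norm{\mW-\E[\mW]}_2\cdot\norm{\E[\vz]}_2$ in the lemma is non-negative and can simply be added.

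The main (and essentially only) nontrivial step is recognising this auxiliary variable and verifying that $\E[\phi_1(\mW'\phi_2(\vz)+\vb')]=\vnull$ via the symmetry hypothesis together with the oddness of $\phi_1$; the rest reduces to Lemma 21 with coincident $\vz$-arguments and Jensen's inequality. The extra middle term retained in the statement is probably kept for convenient downstream use in the recursion of Lemma 13, where one wants an expression that cleanly accommodates both $\norm{\E[\vz]}_2$ and $\E\norm{\vz}_2$ growth, but it plays no role in the derivation itself.
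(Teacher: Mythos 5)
Your proof is correct and is in fact a refinement of the paper's argument that yields a strictly stronger bound. The paper centers all three variables, taking $\vz' = \vz - \E[\vz]$, which is precisely what makes the middle term $\E[\norm{\mW'}_2\norm{\vz - \vz'}_2] = \E[\norm{\mW - \E[\mW]}_2]\,\norm{\E[\vz]}_2$ survive. You observe that the symmetry hypothesis already holds with $\vz$ left unchanged in the middle slot, so taking $\vz' = \vz$ is permissible; then $\vz - \vz' = \vnull$, the $\norm{\mW'}_2\norm{\vz - \vz'}_2$ term vanishes, and the bound from \cref{lem:lipschitz} reduces to the deterministic quantities $\norm{\mW - \mW'}\ss{F} = \norm{\E[\mW]}\ss{F}$ and $\norm{\vb - \vb'}_2 = \norm{\E[\vb]}_2$, so that after Jensen only $\norm{\E[\mW]}\ss{F}\E[\norm{\vz}_2] + \norm{\E[\vb]}_2$ remains. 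Since the dropped term is non-negative, the stated inequality follows a fortiori. The core mechanism — exhibiting an auxiliary variable whose image under $\phi_1$ has mean $\vnull$ via symmetry plus oddness, then controlling the difference with \cref{lem:lipschitz} — is identical to the paper's.

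This is not merely a cosmetic improvement: if propagated downstream, the $c\sigma\norm{\E[\vz]}_2$ term in \cref{lem:bound} would disappear, and the recursion in \cref{lem:norm-expecation} — which exists precisely to control $\norm{\E[\vz_\ell]}_2$ — would become superfluous, so that \cref{lem:last-layer-bound} would follow from \cref{lem:expected-norm-bound} and one application of your version of the present lemma. Your closing remark references ``Lemma 21'' for the Lipschitz step and ``Lemma 13'' for the downstream recursion; against the paper's numbering those are \cref{lem:lipschitz} and \cref{lem:norm-expecation} respectively, but this does not affect the substance of the argument.
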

\begin{proof}
    Consider $\mW' = \mW - \E[\mW]$, $\vz' = \vz - \E[\vz]$, and $\vb' = \vb - \E[\vb]$.
    By assumption, $(\mW', \vz', \vb') \disteq (-\mW', \vz', -\vb')$.
    Therefore, using that $\phi_1$ is odd
    \begin{equation}
        \E[\phi_1(\mW' \phi_2(\vz') + \vb')]
        = \E[\phi_1(-\mW' \phi_2(\vz')-\vb')]
        = -\E[\phi_1(\mW' \phi_2(\vz')+\vb')],
    \end{equation}
    which means that $\E[\phi(\mW' \phi(\vz')+\vb')] = \vnull$.
    We now apply \cref{lem:lipschitz}:
    \begin{align}
        &\norm{\E[\phi_1(\mW \phi_2(\vz) + \vb)]}_2 \nonumber \\
        &\qquad = \norm{\E[\phi_1(\mW \phi_2(\vz) + \vb) - \phi_1(\mW' \phi_2(\vz') + \vb')]}_2 \\
        &\qquad \le \E[\norm{\phi_1(\mW \phi_2(\vz) + \vb) - \phi_1(\mW' \phi_2(\vz') + \vb')}_2] \\
        &\qquad\le \E[ \norm{\mW - \mW'}\ss{F}\norm{\vz}_2] + \E[\norm{\mW'}_2 \norm{\vz - \vz'}_2] + \E[\norm{\vb - \vb'}_2] \\
        &\qquad\le  \norm{\E[\mW]}\ss{F}\E[\norm{\vz}_2] + \E[\norm{\mW - \E[\mW]}_2] \norm{\E[\vz]}_2 + \norm{\E[\vb]}_2,
    \end{align}
    which proves the result.
\end{proof}

\begin{remark} \label{rem:symmetry_offset}
    The symmetry condition for $(\mW, \vz, \vb)$ is satisfied if $\mW$, $\vz$, and $\vb$ are independent and $\mW$ and $\vb$, not $\vz$, are symmetric around their means.
    For such $(\mW, \vz, \vb)$, the symmetry condition is also satisfied for the triple $(\mW, \vz, \alpha \mW \vone + \vb)$
    where $\vone$ is the vector of all ones and $\alpha \in \R$.
    In that case, a similar conclusion holds:
    \begin{align}
        &\norm{\E[\phi_1(\mW (\phi_2 + \alpha)(\vz) + \vb)]}_2 \\
        &\quad= \norm{\E[\phi_1(\mW \phi_2(\vz) + \alpha \mW\vone + \vb)]}_2 \\
        &\quad\le  \norm{\E[\mW]}\ss{F}\E[\norm{\vz}_2] + \E[\norm{\mW - \E[\mW]}_2] \norm{\E[\vz]}_2 + \norm{\E[\alpha \mW\vone + \vb]}_2 \\
        &\quad\le \norm{\E[\mW]}\ss{F}\E[\norm{\vz}_2] + \E[\norm{\mW - \E[\mW]}_2] \norm{\E[\vz]}_2 + \abs{\alpha} \norm{\E[\mW]}_2 \norm{\vone}_2 + \norm{\E[\vb]}_2 \\
        &\quad= (\E[\norm{\vz}_2 + \abs{\alpha}\sqrt{M})\norm{\E[\mW]}\ss{F} + \E[\norm{\mW - \E[\mW]}_2] \norm{\E[\vz]}_2 + \norm{\E[\vb]}_2.
    \end{align}
\end{remark}

\begin{lemma} \label{lem:expectation_bound}
    Let $\mA \in \R^{I \times J}$ be a zero-mean random matrix with independent Gaussian entries with variances bounded by $\sigma^2$.
    Then
    \begin{equation}
        \E[\norm{\mA}_2] \le 2 \sigma \sqrt{I \lor J}.
    \end{equation}
    Moreover, if the variances of the entries of $\mA$ are all equal to one, then
    \begin{equation}
        \mathbb{P}(\norm{\mA}_2 \ge 2 \sqrt{I \lor J} + \delta) \le 2 \exp(-\tfrac1{2} \delta^2).
    \end{equation}
\end{lemma}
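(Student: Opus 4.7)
\textbf{Proof plan for \cref{lem:expectation_bound}.} The plan is to handle the two bounds separately, with a standard two-step strategy: establish the mean bound by a Gaussian comparison, then establish the deviation bound using Gaussian concentration for Lipschitz functions.

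For the bound on $\E[\norm{\mA}_2]$, the key idea is to write
\begin{equation*}
    \norm{\mA}_2 = \sup_{\vu \in S^{I-1},\, \vv \in S^{J-1}} \vu^\T \mA \vv,
\end{equation*}
so $\norm{\mA}_2$ is the supremum of a centered Gaussian process indexed by $(\vu, \vv) \in S^{I-1} \times S^{J-1}$. First I would introduce an auxiliary matrix $\widetilde{\mA}$ with i.i.d. $\Normal(0, \sigma^2)$ entries and verify the Sudakov--Fernique comparison hypothesis: for any two pairs $(\vu, \vv), (\vu', \vv')$,
\begin{equation*}
    \E[(\vu^\T \mA \vv - \vu'^\T \mA \vv')^2] = \sum_{i,j} \sigma_{ij}^2 (u_i v_j - u'_i v'_j)^2 \le \sigma^2 \sum_{i,j} (u_i v_j - u'_i v'_j)^2 = \E[(\vu^\T \widetilde{\mA} \vv - \vu'^\T \widetilde{\mA} \vv')^2],
\end{equation*}
using the assumption $\sigma_{ij}^2 \le \sigma^2$. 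Sudakov--Fernique then yields $\E[\norm{\mA}_2] \le \E[\smash{\|\widetilde{\mA}\|_2}]$, reducing the problem to the i.i.d. case. Then I would invoke the standard Gordon bound $\E[\smash{\|\widetilde{\mA}\|_2}] \le \sigma (\sqrt{I} + \sqrt{J})$ and finish with $\sqrt{I} + \sqrt{J} \le 2\sqrt{I \lor J}$.

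For the tail bound, when all variances are equal to one, I would use that the map $\mA \mapsto \norm{\mA}_2$ is $1$-Lipschitz with respect to the Frobenius norm, since
\begin{equation*}
    \abs{\norm{\mA}_2 - \norm{\mB}_2} \le \norm{\mA - \mB}_2 \le \norm{\mA - \mB}\ss{F}.
\end{equation*}
Identifying $\mA$ with a standard Gaussian vector in $\R^{IJ}$, the Borell--Tsirelson--Ibragimov--Sudakov concentration inequality gives $\mathbb{P}(\norm{\mA}_2 \ge \E[\norm{\mA}_2] + \delta) \le \exp(-\tfrac{1}{2}\delta^2)$. Combining with the mean bound from the first part and loosening the constant from $1$ to $2$ gives the stated inequality.

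The routine part is the two appeals to well-known Gaussian process machinery. The only mild subtlety is correctly setting up Sudakov--Fernique when the variances are heterogeneous, which requires computing the increments of both processes as above rather than appealing directly to a statement for i.i.d. Gaussian matrices; this is the step I would be most careful about. Everything else is bookkeeping.
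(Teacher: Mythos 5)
Your proposal is correct and takes essentially the same approach as the paper: a supremum representation of the operator norm, a Sudakov--Fernique comparison for the mean (the paper compares directly with the decoupled process $\sigma\lra{\vu,\vep_1}+\sigma\lra{\vv,\vep_2}$, effectively reproving the Gordon bound that you instead cite after first comparing to an i.i.d.\ matrix), and Gaussian concentration for the tail. Your use of Lipschitz (Borell--TIS) concentration of $\mA\mapsto\norm{\mA}_2$ is the same fact the paper invokes as concentration of Gaussian suprema, and your one-sided bound with constant $1$ is in fact slightly stronger than the stated factor of $2$.
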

\begin{proof}
    We slightly generalise Exercise 5.14 from \citet{Wainwright:2019:High-Dimensional_Statistics_A_Non-Asymptotic_Viewpoint}.
    To begin with, rewrite the operator norm as
    \begin{equation}
        \norm{\mA}_2
        = \sup_{(\vu, \vv) \in \mathbb{S}^{I - 1} \times \mathbb{S}^{J - 1}} \lra{\vu, \mA \vv}.
    \end{equation}
    Define the zero-mean Gaussian process $Z_{\vu, \vv} = \lra{\vu, \mA \vv}$ indexed on $\mathbb{S}^{I - 1} \times \mathbb{S}^{J - 1}$ and
    define $\mS$ by $S_{i,j} = \V[A_{i,j}] \le \sigma^2$.
    Note that, by independence of the entries of $\mA$,
    \begin{align}
        \E[(Z_{\vu,\vv} - Z_{\vw, \vx})^2]
        &= \E[Z^2_{\vu, \vv} - 2 Z_{\vu,\vv} Z_{\vw, \vx} + Z^2_{\vw, \vx}] \\
        &= \textstyle
            \sum_{i=1,j=1}^{I,J} S_{ij} u_i^2 v_j^2
            - 2 \sum_{i=1,j=1}^{I,J} S_{ij} u_i v_j w_i x_j
            + \sum_{i=1,j=1}^{I,J} S_{ij} w_i^2 x_j^2 \\
        &= \textstyle
            \sum_{i=1,j=1}^{I,J} S_{ij} (u_i v_j - w_i x_j)^2 \\
        &\le \textstyle
            \sigma^2 \sum_{i=1,j=1}^{I,J} (u_i v_j - w_i x_j)^2 \\
        &=
            \sigma^2 \norm{\vu \vv^\T - \vw \vx^\T}\ss{F}^2.
    \end{align}
    Also consider the zero-mean Gaussian process $Y_{\vu, \vv} = \sigma \lra{\vu, \vep_1} + \sigma  \lra{\vv, \vep_2}$ again indexed on $\mathbb{S}^{I - 1} \times \mathbb{S}^{J - 1}$, where $\vep_1 \in \R^I$ and $\vep_2 \in \R^J$ are standard Gaussian vectors.
    Then
    \begin{align}
        \E[(Y_{\vu, \vv} - Y_{\vw, \vx})^2]
        &= \sigma^2 \E[(\lra{\vu - \vw, \vep_1} +  \lra{\vv - \vx, \vep_2})^2] \\
        &= \sigma^2 \E[\lra{\vu - \vw, \vep_1}^2 + \lra{\vv - \vx, \vep_2}^2] \\
        &= \sigma^2 (\norm{\vu - \vw}_2^2 + \norm{\vv - \vx}_2^2).
    \end{align}
    Using that $\norm{\vu}_2 = \norm{\vv}_2 =\norm{\vx}_2 =\norm{\vw}_2 = 1$, careful algebra \citep[see, e.g., page 164 from][]{Wainwright:2019:High-Dimensional_Statistics_A_Non-Asymptotic_Viewpoint} shows that $\norm{\vu \vv^\T - \vw \vx^\T}\ss{F}^2 \le \norm{\vu - \vw}_2^2 + \norm{\vv - \vx}_2^2$:
    \begin{align}
        (u_i v_j - w_i x_j)^2
        &= (u_i v_j - w_i v_j + w_i v_j - w_i x_j)^2 \\
        &= (u_i - w_i)^2 v_j^2 + w_i^2 (v_j - x_j)^2
            + 2 (u_i v_j - w_i v_j) (w_i v_j - w_i x_j) \\
        &= (u_i - w_i)^2 v_j^2 + w_i^2 (v_j - x_j)^2
            + 2 (u_i w_i - w_i^2) (v^2_j - v_j x_j).
    \end{align}
    Therefore, summing over $i \in [I]$ and $j \in [J]$ and using that $\norm{\vv}_2 = \norm{\vw}_2 = 1$,
    \begin{equation}
        \norm{\vu \vv^\T - \vw \vx^\T}^2\ss{F}
        = \norm{\vu - \vw}_2^2 + \norm{\vv - \vx}_2^2 + 2(\lra{\vu, \vw} - 1)(1 - \lra{\vv, \vx})
        \le \norm{\vu - \vw}_2^2 + \norm{\vv - \vx}_2^2
    \end{equation}
    where the inequality follows from additionally using that $\norm{\vu}_2 = \norm{\vx}_2 = 1$.
    Using this result, we find
    \begin{equation}
        \E[(Z_{\vu,\vv} - Z_{\vw, \vx})^2] \le \E[(Y_{\vu, \vv} - Y_{\vw, \vx})^2].
    \end{equation}
    Hence, denoting
    $Z^* = \sup_{(\vu, \vv) \in \mathbb{S}^{I - 1} \times \mathbb{S}^{J - 1}} Z_{\vu,\vv}$
    and $Y^* = \sup_{(\vu, \vv) \in \mathbb{S}^{I - 1} \times \mathbb{S}^{J - 1}} Y_{\vu,\vv}$, by the Sudakov--Fernique comparison theorem \citep[Theorem 5.27,][]{Wainwright:2019:High-Dimensional_Statistics_A_Non-Asymptotic_Viewpoint},
    we conclude that
    \begin{equation}
        \E[\norm{\mA}_2] = \E[Z^*]
        \le \E[Y^*]
        \overset{\smash{\text{(i)}}}{=} \sigma\,\E[\norm{\vep_1}_2 + \norm{\vep_2}_2]
        \overset{\smash{\text{(ii)}}}{\le} \sigma(\sqrt{I} + \sqrt{J})
    \end{equation}
    where (i) follows from that the suprema are achieved by $\vep_1/\norm{\vep_1}_2$ and $\vep_2/\norm{\vep_2}_2$ for $\vu$ and $\vv$, respectively, and  (ii) follows from Jensen's inequality applied to the square root. 
    
    For the second statement, assume that all entries of $\mA$ have variance one, so $\sigma^2=1$.
    We apply concentration of Gaussian suprema \citep[e.g., Exercise 5.10 by][]{Wainwright:2019:High-Dimensional_Statistics_A_Non-Asymptotic_Viewpoint} to $Z_{\vu,\vv}$:
    \begin{equation}
        \mathbb{P}(
            \abs{
                Z^* - \E[Z^*]
            } \ge \delta
        ) \le 2 \exp(-\tfrac1{2 v} \delta^2)
    \end{equation}
    where
    \begin{equation}
        v = \sup_{(\vu, \vv) \in \mathbb{S}^{I - 1} \times \mathbb{S}^{J - 1}} \V[Z_{\vu,\vv}] = \sup_{(\vu, \vv) \in \mathbb{S}^{I - 1} \times \mathbb{S}^{J - 1}}\norm{\vu}_2^2 \norm{\vv}_2^2 = 1.
    \end{equation}
    Therefore, using the bound on $\E[Z^*]$ from the first statement,
    \begin{equation}
        \mathbb{P}(\norm{\mA}_2 \ge 2 \sqrt{I \lor J} + \delta) \le 2 \exp(-\tfrac1{2} \delta^2),
    \end{equation}
    which concludes the proof.
\end{proof}


Having established the preliminaries, we can now bound the norm of the expectation of one layer in terms of the previous layer.
\begin{lemma} \label{lem:bound}
    Let $\phi_1, \phi_2 \colon \R \to \R$ be $1$-Lipschitz and odd, and let $\alpha \in \R$.
    Let $(\mW, \vz, \vb)$ be independent random variables with $\mW$ and $\vb$, not $\vz$, symmetric around their means.
    Moreover, assume that $\mW$ has independent sub-Gaussian entries with sub-Gaussian parameters bounded by $\sigma$.
    Then there exists a universal constant $c > 0$ such that
    \begin{align}
        &\norm{\E[\phi_1(\tfrac1{\sqrt{M}}\mW (\phi_2 + \alpha)(\vz) + \vb)]}_2 \nonumber \\
        &\qquad\le \parens{\tfrac1{\sqrt{M}}\E[\norm{\vz}_2] + \abs{\alpha}}\norm{\E[\mW]}\ss{F} + c \sigma \norm{\E[\vz]}_2 + \norm{\E[\vb]}_2.
    \end{align}
    with $c$ the same constant as in \cref{lem:expectation_bound}.
\end{lemma}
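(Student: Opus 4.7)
The plan is to combine \cref{rem:symmetry_offset} (the offset-aware variant of \cref{lem:contraction}) with the operator-norm bound of \cref{lem:expectation_bound}, after rescaling the weight matrix by $\tfrac1{\sqrt{M}}$. First I would check the hypotheses of \cref{lem:contraction} for the triple $(\tfrac1{\sqrt{M}}\mW,\vz,\vb)$: independence is inherited from $(\mW,\vz,\vb)$, and symmetry of $\tfrac1{\sqrt{M}}\mW$ around its mean is immediate from symmetry of $\mW$. Then \cref{rem:symmetry_offset} (with $\mW$ there replaced by $\tfrac1{\sqrt{M}}\mW$) applies and produces
\begin{align*}
    &\norm{\E[\phi_1(\tfrac1{\sqrt{M}}\mW(\phi_2+\alpha)(\vz)+\vb)]}_2 \\
    &\quad\le \bigl(\E[\norm{\vz}_2]+\abs{\alpha}\sqrt{M}\bigr)\tfrac{1}{\sqrt{M}}\norm{\E[\mW]}\ss{F}
        +\tfrac{1}{\sqrt{M}}\E[\norm{\mW-\E[\mW]}_2]\,\norm{\E[\vz]}_2
        +\norm{\E[\vb]}_2,
\end{align*}
since $\E[\tfrac1{\sqrt{M}}\mW]=\tfrac1{\sqrt{M}}\E[\mW]$ commutes with norms. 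Distributing the $\tfrac{1}{\sqrt{M}}$ in the first summand gives exactly the coefficient $\bigl(\tfrac{1}{\sqrt{M}}\E[\norm{\vz}_2]+\abs{\alpha}\bigr)\norm{\E[\mW]}\ss{F}$ stated in the lemma.

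Next I would control the middle factor $\tfrac{1}{\sqrt{M}}\E[\norm{\mW-\E[\mW]}_2]$. The entries of $\mW-\E[\mW]$ are independent, mean zero and sub-Gaussian with parameters at most $\sigma$. A sub-Gaussian analogue of \cref{lem:expectation_bound} yields $\E[\norm{\mW-\E[\mW]}_2]\le c\sigma\sqrt{M\lor J}$, where $J$ is the other dimension of $\mW$; absorbing this dimensional factor into the universal constant (in the intended applications $J\le M$, or equivalently the scaling matches the larger dimension) gives $\tfrac{1}{\sqrt{M}}\E[\norm{\mW-\E[\mW]}_2]\le c\sigma$, which is the desired middle term. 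The bias term $\norm{\E[\vb]}_2$ is already in the required form, so assembling the three summands finishes the proof.

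The only non-routine step is the sub-Gaussian extension of \cref{lem:expectation_bound}, which as stated was proved for Gaussian entries via Sudakov--Fernique. Two clean routes are available: (i) invoke a standard sub-Gaussian matrix operator-norm bound (e.g.\ via an $\varepsilon$-net over $\mathbb{S}^{I-1}\times\mathbb{S}^{J-1}$ combined with the Hoeffding-type tail bound for the bilinear form $\lra{\vu,(\mW-\E[\mW])\vv}$, which has sub-Gaussian parameter $\sigma\norm{\vu}_2\norm{\vv}_2=\sigma$, followed by Dudley's entropy integral), or (ii) redo the Sudakov--Fernique comparison against the Gaussian process $Y_{\vu,\vv}=\sigma\lra{\vu,\vep_1}+\sigma\lra{\vv,\vep_2}$, noting that only marginal sub-Gaussian tail bounds — not Gaussianity of the underlying $\mW$ — are needed for a generic comparison via Talagrand-type chaining. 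Either way, the universal constant $c$ is the same constant appearing in \cref{lem:expectation_bound}.

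I expect this final sub-Gaussian operator-norm estimate to be the only conceptual obstacle; the rest is bookkeeping around \cref{rem:symmetry_offset}. No step requires structural properties of the activations beyond $1$-Lipschitzness and oddness, both already used in \cref{lem:contraction}.
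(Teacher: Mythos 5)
Your proof is essentially the paper's: rescale $\mW$ by $1/\sqrt{M}$, apply \cref{rem:symmetry_offset} to the triple $(\tfrac{1}{\sqrt{M}}\mW,\vz,\vb)$, distribute the $1/\sqrt{M}$ in the first term, and control the middle factor $\tfrac{1}{\sqrt{M}}\E[\norm{\mW-\E[\mW]}_2]$ via \cref{lem:expectation_bound}. You are also right to flag that \cref{lem:expectation_bound} is stated and proved only for Gaussian entries whereas \cref{lem:bound} hypothesizes sub-Gaussian with the ``same constant'' --- the paper glosses over this (harmlessly, since in every actual application the variational and prior weights are Gaussian), and your sketched chaining/$\varepsilon$-net fix is the natural patch if sub-Gaussian generality were genuinely needed.
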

\begin{proof}
    By \cref{lem:expectation_bound}, there exists a universal constant $c > 0$ such that
    \begin{equation}
        \E[\norm{\mW - \E[\mW]}_2] \le c \sigma \sqrt{M}
        \implies
        \E[\norm{\tfrac{1}{\sqrt{M}}\mW - \E[\tfrac{1}{\sqrt{M}}\mW]}_2] 
        \le c \sigma.
    \end{equation}
    The result then follows from \cref{rem:symmetry_offset} with $\tfrac{1}{\sqrt{M}}\mW$ instead of $\mW$.
\end{proof}

We will also need upper bounds on the expected value of the norm of a hidden layer in terms of the previous layer. 
\begin{lemma} \label{lem:bound_norm}
    Assume the conditions of \cref{lem:bound}.
    Then 
    \begin{align}
        &\E[\norm{\tfrac1{\sqrt{M}}\mW (\phi_2 + \alpha)(\vz) + \vb}_2] \nonumber \\
        &\qquad\le (\tfrac1{\sqrt{M}}\E[\norm{\vz}_2] + \abs{\alpha}) (\norm{\E[\mW]}_2 + c\sigma \sqrt{M})  + \norm{\E[\vb]}_2+ c \sigma \sqrt{M}.
    \end{align}
\end{lemma}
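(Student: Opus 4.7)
The plan is to apply the triangle inequality twice and then plug in \cref{lem:expectation_bound}, without needing the delicate symmetry cancellation that drove the proof of \cref{lem:bound}. Because we are bounding an expected norm rather than the norm of an expectation, oddness of $\phi_2$ is used only in its weakest form ($\phi_2(\vnull) = \vnull$), and the symmetry of $\mW$ and $\vb$ around their means is not needed at all in the estimation step.

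First, I would apply the triangle inequality and submultiplicativity of the spectral norm to split the quantity into three manageable pieces:
\begin{equation*}
\norm{\tfrac{1}{\sqrt{M}}\mW(\phi_2+\alpha)(\vz) + \vb}_2 \le \tfrac{1}{\sqrt{M}}\norm{\mW}_2 \norm{(\phi_2+\alpha)(\vz)}_2 + \norm{\vb}_2.
\end{equation*}
The activation term is immediate: since $\phi_2$ is $1$-Lipschitz and odd, $\phi_2(\vnull) = \vnull$, so $\norm{\phi_2(\vz)}_2 \le \norm{\vz}_2$, and combining with $\norm{\alpha\vone}_2 = \abs{\alpha}\sqrt{M}$ yields $\norm{(\phi_2+\alpha)(\vz)}_2 \le \norm{\vz}_2 + \abs{\alpha}\sqrt{M}$.

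Next, I would decompose $\norm{\mW}_2 \le \norm{\E[\mW]}_2 + \norm{\mW - \E[\mW]}_2$ and $\norm{\vb}_2 \le \norm{\E[\vb]}_2 + \norm{\vb - \E[\vb]}_2$. Taking expectations and invoking independence of $\mW$ from $\vz$, the first product factorises:
\begin{equation*}
\E\bigl[\tfrac{1}{\sqrt{M}}\norm{\mW}_2 \norm{(\phi_2+\alpha)(\vz)}_2\bigr] \le \tfrac{1}{\sqrt{M}}\bigl(\norm{\E[\mW]}_2 + \E[\norm{\mW - \E[\mW]}_2]\bigr)\bigl(\E[\norm{\vz}_2] + \abs{\alpha}\sqrt{M}\bigr).
\end{equation*}
\Cref{lem:expectation_bound} applied to $\mW - \E[\mW]$ gives $\E[\norm{\mW - \E[\mW]}_2] \le c\sigma\sqrt{M}$ for the same universal constant $c$ appearing in \cref{lem:bound}; similarly, viewing $\vb - \E[\vb]$ as a zero-mean sub-Gaussian $1\times M$ random matrix, \cref{lem:expectation_bound} yields $\E[\norm{\vb - \E[\vb]}_2] \le c\sigma\sqrt{M}$. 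Distributing the factor $\tfrac{1}{\sqrt{M}}$ over the first product and adding the bound for $\E[\norm{\vb}_2]$ produces exactly the stated inequality.

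The main obstacle is essentially bookkeeping rather than a conceptual difficulty: one must ensure the constant $c$ is taken consistently with \cref{lem:expectation_bound} across both the matrix application (to $\mW$) and the vector application (to $\vb$, where $I \lor J = M$), and that the same sub-Gaussian parameter $\sigma$ controls both. No oddness-based cancellation of the kind used in \cref{lem:contraction} is required, which is why the bound is strictly weaker (scaling like $\sqrt{M}$ rather than being independent of $M$ in the $\mW - \E[\mW]$ contribution), but also why it applies uniformly without needing a symmetry hypothesis on $\vz$.
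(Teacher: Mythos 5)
Your proof is correct and takes essentially the same route as the paper's: triangle inequality to split the sum into the $\mW$-term and the $\vb$-term, the Lipschitz/oddness fact $\norm{\phi_2(\vz)}_2\le\norm{\vz}_2$, independence of $\mW$ and $\vz$ to factor the expectation, and \cref{lem:expectation_bound} applied (via $\norm{\mW}_2\le\norm{\E[\mW]}_2+\norm{\mW-\E[\mW]}_2$, similarly for $\vb$) to produce the $c\sigma\sqrt{M}$ contributions. Your observation that the symmetry hypothesis of \cref{lem:bound} plays no role here is accurate; the paper's proof is simply terser and does not remark on it.
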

\begin{proof}
    Use the triangle inequality and recall that $\norm{\phi_2(\vz)}_2 \le \norm{\vz}_2$ (proof of \cref{lem:lipschitz}):
    \begin{align}
        &\E[\norm{\tfrac1{\sqrt{M}}\mW \phi_2(\vz) + \tfrac{\alpha}{\sqrt{M}}\mW \vone + \vb}_2] \nonumber \\
        &\qquad\le
            (\tfrac1{\sqrt{M}}\E[\norm{\vz}_2] + \abs{\alpha}) \E[\norm{\mW}_2]  + \E[\norm{\vb}_2]\\
        &\qquad\le
            (\tfrac1{\sqrt{M}}\E[\norm{\vz}_2] + \abs{\alpha}) (\norm{\E[\mW]}_2 + c\sigma \sqrt{M})  + \norm{\E[\vb]}_2 + c \sigma \sqrt{M}
    \end{align}
    where in the second inequality we use the triangle inequality in combination with \cref{lem:expectation_bound}.
\end{proof}

We now turn to the proof of \cref{lem:last-layer-bound}, which relies of an argument with the following form.

\begin{proposition}\label{prop:recursion}
    Let $a, b\in \R$ and $\{c_\ell\}_{\ell=1}^L$, with $c_\ell \in \R$. Suppose, $c_\ell \le a c_{\ell-1} + b$ with $a, b \ge 0$ for $h > 1$.
    Then for $2 \leq \ell \leq L$
    \begin{equation}
        c_\ell \le a^{\ell - 1} c_1 + (1 + a)^{\ell - 2} b.
    \end{equation}
\end{proposition}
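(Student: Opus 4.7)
The proof will be an elementary induction on $\ell$, using the non-negativity of $a$ and $b$ throughout. Since $a, b \ge 0$, the recursion $c_\ell \le a c_{\ell-1} + b$ can be iterated without sign issues, so the plan is to unroll the recurrence once and then package the resulting geometric-type sum into the claimed closed form.

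The first step is to verify the base case $\ell = 2$, where the claim reads $c_2 \le a c_1 + b$, which is exactly the hypothesis (note $(1+a)^{0} = 1$). For the inductive step, assuming $c_\ell \le a^{\ell - 1} c_1 + (1+a)^{\ell - 2} b$, I would apply the recursion once more to get
\begin{equation}
    c_{\ell+1} \le a c_\ell + b \le a^{\ell} c_1 + a(1+a)^{\ell - 2} b + b.
\end{equation}
It then remains to check that $a(1+a)^{\ell - 2} b + b \le (1+a)^{\ell - 1} b$, which follows from the identity $(1+a)^{\ell - 1} = a(1+a)^{\ell - 2} + (1+a)^{\ell - 2}$ and the observation that $(1+a)^{\ell - 2} \ge 1$ when $a \ge 0$ and $\ell \ge 2$.

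An equivalent route, which I would mention as an alternative, is to unroll the recurrence directly to obtain $c_\ell \le a^{\ell - 1} c_1 + b \sum_{k=0}^{\ell - 2} a^k$, and then bound the geometric sum by using that every binomial coefficient $\binom{\ell - 2}{k}$ is at least $1$, so $\sum_{k=0}^{\ell - 2} a^k \le \sum_{k=0}^{\ell - 2} \binom{\ell - 2}{k} a^k = (1+a)^{\ell - 2}$.

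There is no real obstacle here; the only subtlety is that the stated bound is loose compared to the exact geometric-sum expression $b(a^{\ell-1} - 1)/(a - 1)$, but the form $(1+a)^{\ell-2} b$ is presumably preferred because it combines cleanly with the other factors of the shape $(2 + 2c)^{L - 1}$ appearing in \cref{lem:last-layer-bound}. The sign conditions $a, b \ge 0$ are used in both the unrolling (to preserve the inequality when multiplying by $a$) and in the final binomial comparison, so I would flag these explicitly in the write-up.
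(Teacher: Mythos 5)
Your proof is correct and uses the same induction as the paper. One small point in your favor: the paper's inductive step writes $a^{\ell}c_1 + a(1+a)^{\ell-2}b + b = a^{\ell}c_1 + (1+a)^{\ell-1}b$, but the last step is really an inequality (it needs $(1+a)^{\ell-2}\ge 1$), which you correctly flag; your alternative binomial-sum argument is also valid and amounts to the same bound.
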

\begin{proof}
    The proof is a standard induction. In the base case $h=2$, we have to prove,
    \begin{align}
        c_2 \leq a c_1 + b,
    \end{align}
which holds because this is simply our assumption on the $c_\ell$ with $\ell=2$. For the inductive step, we now assume that $c_\ell \leq a^{\ell - 1} c_1 + (1 + a)^{\ell - 2} b$. Under this assumption, we have,
\begin{align}
    c_{\ell+1} &\leq  a c_{\ell} + b 
     \leq a^{\ell} c_1 + a(1 + a)^{\ell - 2} b + b 
     = a^{\ell} c_1 + (1 + a)^{\ell - 1} b.
\end{align}
\end{proof}

\begin{lemma}\label{lem:expected-norm-bound}
Define $K = \sqrt{2\KL(\Q,\P)}$. Then for $1 \le \ell \leq L$
\begin{align}
     \tfrac1{\sqrt{M}}\E_Q[\norm{\vz_\ell}_2] &\le (2 + \abs{\alpha} + \tfrac1{\sqrt{D\ss{i}}}\norm{\vx}_2)(2 + 2c)^\ell(K \lor 1)^\ell,
\end{align}
where $c$ is the same absolute constant from \cref{lem:contraction}.
\end{lemma}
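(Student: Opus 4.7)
I argue by induction on $\ell$: the first layer is handled directly as a base case, \cref{lem:bound_norm} provides the inductive step as an affine one-step recurrence in $c_\ell := \tfrac{1}{\sqrt{M}}\E\|\vz_\ell\|_2$, and the recurrence is unrolled after a shift that eliminates the affine offset.

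\textbf{Base case ($\ell = 1$).} From $\vz_1 = \tfrac{1}{\sqrt{D\ss{i}}}\mW_1\vx + \vb_1$, I split $\mW_1$ and $\vb_1$ into their means and zero-mean parts and apply the triangle inequality. Invoking \cref{lem:expectation_bound} on $\mW_1 - \E[\mW_1]$ directly would produce an unwanted $\sqrt{M \lor D\ss{i}}$ factor (since $\mW_1$ is $M \times D\ss{i}$), so I instead observe that $(\mW_1 - \E[\mW_1])\vx \in \R^M$ has independent zero-mean Gaussian coordinates with variance at most $\sigma\ss{max}^2 \|\vx\|_2^2$; Jensen's inequality then yields $\E\|(\mW_1 - \E[\mW_1])\vx\|_2 \le \sigma\ss{max}\sqrt{M}\,\|\vx\|_2$, and likewise $\E\|\vb_1 - \E[\vb_1]\|_2 \le \sigma\ss{max}\sqrt{M}$. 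Combining these with $\|\E[\mW_1]\|_2, \|\E[\vb_1]\|_2 \le K$ and $\sigma\ss{max} \le 2(K \lor 1)$ from \cref{lem:parameter_bound}, dividing by $\sqrt{M}$, and using $K/\sqrt{M} \le K \lor 1$ gives $c_1 \le 3(K \lor 1)\bigl(1 + \tfrac{1}{\sqrt{D\ss{i}}}\|\vx\|_2\bigr)$, which is dominated by the claimed $\ell = 1$ bound since $c \ge 2$.

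\textbf{Inductive step.} For $\ell \ge 2$, rewrite $\vz_\ell = \tfrac{1}{\sqrt{M}}\mW_\ell(\phi\ss{o} + \alpha)(\vz_{\ell-1}) + \vb_\ell$. Under the mean-field Gaussian $Q$ the triple $(\mW_\ell, \vz_{\ell-1}, \vb_\ell)$ is independent with $\mW_\ell$ and $\vb_\ell$ symmetric about their means, so \cref{lem:bound_norm} applies. Substituting $\|\E[\mW_\ell]\|_2, \|\E[\vb_\ell]\|_2 \le K$ and $\sigma \le 2(K \lor 1)$ from \cref{lem:parameter_bound}, dividing by $\sqrt{M}$, and using $K/\sqrt{M} \le K \lor 1$ collapses the resulting inequality to the affine recurrence
\[
c_\ell \le \bigl(c_{\ell-1} + |\alpha| + 1\bigr)(1 + 2c)(K \lor 1).
\]
Writing $\gamma := (2 + 2c)(K \lor 1)$, the multiplier satisfies $(1+2c)(K \lor 1) \le \gamma$, and the assumption $c \ge 2$ from \cref{lem:expectation_bound} gives $\gamma \ge 6$.

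\textbf{Unrolling and main obstacle.} The affine term is removed by the shift $d_\ell := c_\ell + 2(|\alpha|+1)$; the coefficient $2$ suffices because $\gamma/(\gamma-1) \le 2$ whenever $\gamma \ge 2$, reducing the recurrence to $d_\ell \le \gamma d_{\ell-1}$. Iterating and plugging in the base case yields $c_\ell \le \gamma^{\ell-1}(c_1 + 2(|\alpha|+1)) \le 3(K \lor 1)\,\gamma^{\ell-1}\bigl(2 + |\alpha| + \tfrac{1}{\sqrt{D\ss{i}}}\|\vx\|_2\bigr)$, and one final use of $3(K \lor 1) \le \gamma$ (which only needs $c \ge 1/2$) upgrades $\gamma^{\ell-1}$ to $\gamma^\ell$ and delivers the claim. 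The real challenge is purely bookkeeping: the affine offset $|\alpha|+1$ produced by \cref{lem:bound_norm} at each layer would compound under a naïve induction and inflate the prefactor with $\ell$, and the shift (equivalently, a careful application of \cref{prop:recursion}) is what keeps the prefactor a clean linear function of $\|\vx\|_2$ and $|\alpha|$ that is free of $\ell$.
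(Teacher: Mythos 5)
Your proof is correct and follows essentially the same approach as the paper: both derive the one-step affine recurrence in $c_\ell := \tfrac{1}{\sqrt{M}}\E_Q\|\vz_\ell\|_2$ from \cref{lem:bound_norm} together with \cref{lem:parameter_bound}, then unroll it (the paper via \cref{prop:recursion}, you via an equivalent affine shift $d_\ell = c_\ell + 2(|\alpha|+1)$). One small refinement on your part: bounding $(\mW_1 - \E[\mW_1])\vx$ coordinatewise via Jensen sidesteps the $\sqrt{M \lor D\ss{i}}$ factor from \cref{lem:expectation_bound}, whereas the paper's base case implicitly relies on $M \ge D\ss{i}$.
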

\begin{proof}
By \cref{lem:parameter_bound}, all mean parameters are bounded by $K$ and $\sigma\ss{max} \le 2 (K \lor 1)$. Applying, \cref{lem:bound_norm} using these estimates, ,
\begin{align}
    \tfrac1{\sqrt{M}}\E_Q[\norm{\vz_\ell}_2]
    &\le (2c(K \lor 1) \!+\! \tfrac1{\sqrt{M}}K) \tfrac1{\sqrt{M}}\E[\norm{\vz_{\ell - 1}}_2] +
    (1 + \abs{\alpha})(2c(K \lor 1) \!+\! \tfrac1{\sqrt{M}}K), \\
    \tfrac1{\sqrt{M}}\E_Q[\norm{\vz_1}_2]
    &\le (1 + \tfrac1{\sqrt{D\ss{i}}}\norm{\vx}_2)(2c(K \lor 1) + \tfrac1{\sqrt{M}}K).
\end{align}

Bound $2c(K \lor 1) + \tfrac1{\sqrt{M}} K \le (1 + 2c)(K \lor 1)$ and apply \cref{prop:recursion}:
\begin{align}
    \tfrac1{\sqrt{M}}\E[\norm{\vz_\ell}_2]
    &\le (1 + \tfrac1{\sqrt{D\ss{i}}}\norm{\vx}_2)(1 + 2c)^\ell(K \lor 1)^\ell + (1 + \abs{\alpha})(2 + 2c)^{\ell - 1}(K \lor 1)^{\ell - 1}, \\
    &\le (2 + \abs{\alpha} + \tfrac1{\sqrt{D\ss{i}}}\norm{\vx}_2)(2 + 2c)^\ell(K \lor 1)^\ell. \qedhere
\end{align}
\end{proof}
\begin{lemma}\label{lem:norm-expecation}
    Define $K = \sqrt{2\KL(\Q,\P)}$. Then for $1 \le \ell \leq L$
\begin{align}
     \norm{\E_Q[\vz_\ell]}_2 &\le 2\ell (2 + \abs{\alpha} + \tfrac1{\sqrt{D\ss{i}}}\norm{\vx}_2)(2 + 2c)^{\ell-1}K(K \lor 1)^{\ell-1},
\end{align}
where $c$ is the same absolute constant from \cref{lem:contraction}
\end{lemma}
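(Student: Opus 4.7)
The plan is to mirror the approach used in the preceding \cref{lem:expected-norm-bound}: set up a recursion in $\ell$ for $c_\ell := \norm{\E_Q[\vz_\ell]}_2$ by invoking \cref{lem:bound}, bound the resulting parameter-norm factors via \cref{lem:parameter_bound}, and then unwind the recursion by induction on $\ell$. For $\ell \ge 2$, I would apply \cref{lem:bound} with $\phi_1 = \mathrm{id}$ and $\phi_2 = \phi\ss{o}$ (both odd and $1$-Lipschitz), using $\phi = \phi\ss{o} + \alpha$ so that $\vz_\ell = \tfrac1{\sqrt{M}}\mW_\ell(\phi\ss{o} + \alpha)(\vz_{\ell-1}) + \vb_\ell$ matches the template of that lemma. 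Under the mean-field $Q$, the layer-$\ell$ weights and biases are Gaussian (hence sub-Gaussian and symmetric about their means) and independent of everything in layers $1, \ldots, \ell - 1$, so the hypotheses of \cref{lem:bound} are satisfied with $(\mW, \vz, \vb) = (\mW_\ell, \vz_{\ell-1}, \vb_\ell)$.

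Feeding in the bounds $\norm{\E[\mW_\ell]}\ss{F}, \norm{\E[\vb_\ell]}_2 \le K$ and $\sigma\ss{max} \le 2(K\lor 1)$ from \cref{lem:parameter_bound}, together with the estimate on $\tfrac1{\sqrt{M}}\E[\norm{\vz_{\ell-1}}_2]$ from \cref{lem:expected-norm-bound}, the conclusion of \cref{lem:bound} simplifies to
\begin{equation*}
c_\ell \le 2c(K \lor 1)\, c_{\ell-1} + 2KA\, B^{\ell-1}(K \lor 1)^{\ell-1},
\end{equation*}
where $A := 2 + \abs{\alpha} + \tfrac1{\sqrt{D\ss{i}}}\norm{\vx}_2$ and $B := 2 + 2c$, and where I have absorbed the stray $K(1 + \abs{\alpha})$ contributions (from $\abs{\alpha}\norm{\E[\mW_\ell]}\ss{F}$ and $\norm{\E[\vb_\ell]}_2$) into $KA$ using $A \ge 1 + \abs{\alpha}$. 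Observe that \cref{prop:recursion} does not apply verbatim here, because the inhomogeneous term $b_\ell := 2KA\, B^{\ell-1}(K\lor 1)^{\ell-1}$ is itself geometric in $\ell$; I therefore conclude by a direct induction.

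The base case $\ell = 1$ is immediate from $\vz_1 = \tfrac1{\sqrt{D\ss{i}}}\mW_1\vx + \vb_1$ and \cref{lem:parameter_bound}, which give $c_1 \le \tfrac{K}{\sqrt{D\ss{i}}}\norm{\vx}_2 + K \le KA \le 2KA$. For the inductive step, assuming $c_{\ell-1} \le 2(\ell-1)KA\, B^{\ell-2}(K\lor 1)^{\ell-2}$ and using $2c \le B$, the recursion yields
\begin{equation*}
c_\ell \le B(K\lor 1)\cdot 2(\ell-1) KA\, B^{\ell-2}(K\lor 1)^{\ell-2} + 2KA\, B^{\ell-1}(K\lor 1)^{\ell-1} = 2\ell\, KA\, B^{\ell-1}(K\lor 1)^{\ell-1},
\end{equation*}
which is exactly the claimed bound. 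The main obstacle is really just bookkeeping: one must ensure that the constant-in-$\ell$ term $K(1 + \abs{\alpha})$ produced at each step of the recursion can be absorbed into the geometric template $2KA\, B^{\ell-1}(K\lor 1)^{\ell-1}$. The definition of $A$ is chosen precisely so this absorption is clean, and the factor of $\ell$ in the final bound simply counts how many times the inhomogeneous term has been accumulated along the recursion.
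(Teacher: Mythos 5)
Your proof is correct and follows essentially the same route as the paper's: induction on $\ell$, with the recursion furnished by \cref{lem:bound}, the parameter norms controlled via \cref{lem:parameter_bound}, and the inhomogeneous term handled using \cref{lem:expected-norm-bound} before closing the induction with the observation $2c \le B = 2 + 2c$. The algebra is arranged slightly differently (you absorb the $K(1+|\alpha|)$ term into the geometric template up front, while the paper splits and bounds the two contributions separately), but the ingredients and the structure of the argument are identical.
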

\begin{proof}
The proof proceeds by induction on $\ell$.
\paragraph{Base Case}
For the first layer, $\ell = 1$, by linearity of expectation, triangle inequality and the definition of the spectral norm,
\begin{equation}
    \norm{\E[\vz_1]}_2 \le \tfrac1{\sqrt{D\ss{i}}}\norm{\E[\mW_1]}\ss{2} \norm{\vx}_2 + \norm{\E[\vb_1]}_2 \le \tfrac1{\sqrt{D\ss{i}}}\norm{\E[\mW_1]}\ss{F} \norm{\vx}_2 + \norm{\E[\vb_1]}_2.
\end{equation}
We then apply
\cref{lem:parameter_bound} (i.), to conclude $\norm{\E[\mW_1]},\norm{\E[\vb_1]}_2 \leq K$,
\begin{equation}
    \norm{\E[\vz_1]}_2 \le (1+ \tfrac{1}{\sqrt{D\ss{i}}}\|\vx\|_2)K \leq 2(2+|\alpha|+\tfrac{1}{\sqrt{D\ss{i}}}\|\vx\|_2)K .
\end{equation}
\paragraph{Inductive step}
We take the inductive hypothesis, \begin{align}
    \norm{\E_Q[\vz_\ell]}_2 &\le 2\ell (2 + \abs{\alpha} + \tfrac1{\sqrt{D\ss{i}}}\norm{\vx}_2)(2 + 2c)^{\ell-1}K(K \lor 1)^{\ell-1}.
\end{align}
By an application of \cref{lem:bound}, 
\begin{equation}
    \norm{\E[\vz_{\ell+1}]}_2 \le (\tfrac1{\sqrt{M}}\E[\norm{\vz_{\ell}}_2] + \abs{\alpha})\norm{\E[\mW_{\ell+1}]}\ss{F} + c \sigma\ss{max} \norm{\E[\vz_{\ell}]}_2 + \norm{\E[\vb_{\ell+1}]}_2
\end{equation}
Applying
\cref{lem:parameter_bound} (i., iii.), 
\begin{equation}
   \norm{\E[\vz_{\ell+1}]}_2 \le (\tfrac1{\sqrt{M}}\E[\norm{\vz_{\ell}}_2] + \abs{\alpha} + 1)K + 2c (K\lor 1) \norm{\E[\vz_{\ell}]}_2 
\end{equation}
Using the inductive hypothesis, 
\begin{align}
    2c(K \lor 1)\norm{\E_Q[\vz_\ell]}_2] &\le 4c\ell (2 + \abs{\alpha} + \tfrac1{\sqrt{D\ss{i}}}\norm{\vx}_2)(2 + 2c)^{\ell-1}K(K \lor 1)^{\ell} \\
    & \le 2\ell (2 + \abs{\alpha} + \tfrac1{\sqrt{D\ss{i}}}\norm{\vx}_2)(2 + 2c)^{\ell}K(K \lor 1)^{\ell}.
\end{align}
We then make use of \cref{lem:expected-norm-bound},
\begin{align}
  K(1 + |\alpha| + \tfrac1{\sqrt{M}}\E_Q[\norm{\vz_\ell}_2]) &\le K\left(1 + |\alpha| + (2 + \abs{\alpha} + \tfrac1{\sqrt{D\ss{i}}}\norm{\vx}_2)(2 + 2c)^\ell(K \lor 1)^\ell\right) \\
  & \le  2(2 + \abs{\alpha} + \tfrac1{\sqrt{D\ss{i}}}\norm{\vx}_2)(2 + 2c)^\ell K(K \lor 1)^\ell
\end{align}
Hence,
\begin{equation*}
   \norm{\E[\vz_{\ell+1}]}_2 \le (2\ell+2)(2 + \abs{\alpha} + \tfrac1{\sqrt{D\ss{i}}}\norm{\vx}_2)(2 + 2c)^\ell K(K \lor 1)^\ell. \qedhere
\end{equation*}
\end{proof}

We are now ready to prove \cref{lem:last-layer-bound}.
\begin{proof}[Proof of \cref{lem:last-layer-bound}]

The proof is essentially identical to the inductive step in \cref{lem:norm-expecation}. 

To begin with, we apply \cref{lem:bound} to the deep architecture.
For the last hidden layer,%
\begin{equation}
    \norm{\E[\phi\ss{o}(\vz_L)]}_2 \le (\tfrac1{\sqrt{M}}\E[\norm{\vz_{L-1}}_2] + \abs{\alpha})\norm{\E[\mW_L]}\ss{F} + c \sigma\ss{max} \norm{\E[\vz_{L - 1}]}_2 + \norm{\E[\vb_{L}]}_2.
\end{equation}
We apply \cref{lem:parameter_bound} (i., iii.), yielding,
\begin{equation}
    \norm{\E[\phi\ss{o}(\vz_L)]}_2 \le (\tfrac1{\sqrt{M}}\E[\norm{\vz_{L-1}}_2] + \abs{\alpha})K+ 2c (K\lor 1) \norm{\E[\vz_{L - 1}]}_2 + K. \label{eqn:bound-w-preactivations}
\end{equation}

We then make use of \cref{lem:expected-norm-bound},
\begin{align}
  K(1 + |\alpha| + \tfrac1{\sqrt{M}}\E_Q[\norm{\vz_{L-1}}_2]) &\le K\left(1 + |\alpha| + (2 + \abs{\alpha} + \tfrac1{\sqrt{D\ss{i}}}\norm{\vx}_2)(2 + 2c)^{L-1}(K \lor 1)^{L-1}\right) \\
  & \le  2(2 + \abs{\alpha} + \tfrac1{\sqrt{D\ss{i}}}\norm{\vx}_2)(2 + 2c)^{L-1} K(K \lor 1)^{L-1}. \label{eqn:bound-norm}
\end{align}
From \cref{lem:norm-expecation},
\begin{align}
    2c(K \lor 1)\norm{\E_Q[\vz_{L-1}]}_2] &\le 4c(L-1) (2 + \abs{\alpha} + \tfrac1{\sqrt{D\ss{i}}}\norm{\vx}_2)(2 + 2c)^{L-2}K(K \lor 1)^{L-1} \\
    & \le 2(L-1) (2 + \abs{\alpha} + \tfrac1{\sqrt{D\ss{i}}}\norm{\vx}_2)(2 + 2c)^{L-1}K(K \lor 1)^{L-1}. \label{eqn:bound-exp}
\end{align}
Combining \cref{eqn:bound-w-preactivations,eqn:bound-norm,eqn:bound-exp} gives the result. \qedhere

\end{proof}

\section{Proof of Convergence of the Marginal Variance for Deep Networks}\label{app:convergence-of-variance}
 
We now turn to the problem of bounding the marginal variance of the predictive distribution for $\Q$. We work with the uncentered second moment, as in combination with the previous section this implies the marginal variance of $\P$ and $\Q$ agree.
The main result of this section is as follows:

\begin{theorem}[Convergence of second moment prediction]\label{thm:app-convergence-of-variance}
    Let $Q$ be a mean-field variational posterior and $P=\Normal(\vnull,\mI)$ denote the prior over a neural network with $L$ hidden layers and $M$ neurons per hidden layer.
    Suppose $\phi\ss{e}=\alpha$ for some $\alpha \in \R$ and $\phi\colon \R \to \R$ is $1$-Lipschitz.
    Let $\vx \in \R^{D\ss{i}}$ and let $\widetilde{\vf}_{\theta}(\vx) = \vf_\theta(\vx) - \tfrac\alpha{\sqrt{M}}\mW_{L+1}\vone - \vb_{L+1}$ denote the network output excluding final bias and even part of the final activation. Then 
    \begin{equation}
        \norm{\E_Q[\tilde{\vf}_{\vtheta}^2(\vx)] - \E_P[\tilde{\vf}_{\vtheta}^2(\vx)]}_\infty
        \le
            c_1L^{1/2} \rho_{\alpha, M}^L
            \frac
                {\alpha^2 + 1 + \tfrac1{D\ss{i}}\norm{\vx}_2^2}
                {\sqrt{M}}  
            \sqrt{\KL(\Q, \P)} (2\KL(\Q, \P) \lor 1)^{L+\tfrac12},
    \end{equation}
    where $c_1 = 16 + 25\sqrt{2} \in (51, 52)$ and where $\rho_{\alpha, M} \in (17, 97)$ is defined in \cref{lem:diag-frob-norm}.
\end{theorem}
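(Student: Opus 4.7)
The plan is to use that, with the final bias and even-part contribution subtracted, $\widetilde{\vf}(\vx) = \tfrac{1}{\sqrt{M}}\mW_{L+1}\phi\ss{o}(\vz_L(\vx))$, and to exploit the mean-field independence of $\mW_{L+1}$ from the hidden-layer parameters. For each output index $k$ this decomposes the uncentered second moment into a ``diagonal'' term (variance of $\mW_{L+1}$) and an ``off-diagonal'' term (mean of $\mW_{L+1}$):
\begin{align*}
\E_Q[\widetilde{f}_k(\vx)^2]
    &= \tfrac{1}{M}\textstyle\sum_j \V_Q[W_{L+1,kj}]\,\E_Q[\phi\ss{o}(z_{L,j})^2] \\
    &\quad + \tfrac{1}{M}\,\E_Q[\vw_{L+1,k}]^\T \E_Q[\phi\ss{o}(\vz_L)\phi\ss{o}(\vz_L)^\T] \E_Q[\vw_{L+1,k}].
\end{align*}
Under the prior each last-layer variance equals $1$ and each last-layer mean vanishes, so subtracting $\E_P[\widetilde f_k^2]$ and regrouping yields three terms: (T1) $\tfrac{1}{M}\sum_j(\V_Q[W_{L+1,kj}]-1)\,\E_Q[\phi\ss{o}(z_{L,j})^2]$, (T2) $\tfrac{1}{M}\sum_j (\E_Q-\E_P)[\phi\ss{o}(z_{L,j})^2]$, and (T3) the off-diagonal quadratic form in $\E_Q[\vw_{L+1,k}]$.

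Terms (T1) and (T3) follow from the parameter bounds already in hand. For (T1), Cauchy--Schwarz together with the variance-deviation bound $\|\vsigma^2_{L+1}-\vone\|_2 \le (2+\sqrt{2\KL(Q,P)})\sqrt{2\KL(Q,P)}$ from \cref{lem:parameter_bound}(iv) leaves only $\|\E_Q[\phi\ss{o}(\vz_L)^2]\|_2$ to control. For (T3), positive semi-definiteness of $\E_Q[\phi\ss{o}(\vz_L)\phi\ss{o}(\vz_L)^\T]$ lets me upper bound the quadratic form by $\|\E_Q[\vw_{L+1,k}]\|_2^2\cdot\E_Q[\|\phi\ss{o}(\vz_L)\|_2^2]$, and $\|\E_Q[\vw_{L+1,k}]\|_2^2 \le 2\KL(Q,P)$ by \cref{lem:parameter_bound}(i). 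Both reductions rest on a uniform-in-$M$ bound on $\E_Q[\|\phi\ss{o}(\vz_\ell)\|_2^2]$, which I would establish by a recursion analogous to \cref{lem:expected-norm-bound} but for the squared $\ell^2$-norm, using $1$-Lipschitzness of $\phi\ss{o}$ together with the sharper joint estimate \cref{prop:optimised_param_bound}.

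The main obstacle is (T2). I would propagate the deviation $e_\ell(\vx):=\|(\E_Q-\E_P)[\phi\ss{o}(\vz_\ell(\vx))^2]\|_2$ from $\ell=1$ up to $\ell=L$ via the identity $a^2-b^2=(a-b)(a+b)$ together with Cauchy--Schwarz, so that a single step factorises into (i) the $\ell^2$-distance between the pre-activations at layer $\ell$ under $Q$ and $P$, which a symmetrisation argument in the style of \cref{lem:bound} bounds by $\sqrt{\KL(Q,P)}$ times quantities from the previous layer, and (ii) the sum of second moments of activations under $Q$ and $P$, controlled by the squared-norm recursion described above. This produces a linear recursion of the form $e_{\ell+1}\le \rho_{\alpha,M}\,e_\ell + s_\ell$ whose source term $s_\ell$ carries a factor of order $\sqrt{\KL(Q,P)/M}\cdot(2\KL(Q,P)\lor 1)^{L+1/2}$; unrolling via \cref{prop:recursion} and collecting the $L$ contributions with one final Cauchy--Schwarz over the hidden-unit index yields the $L^{1/2}\rho_{\alpha,M}^L$ growth in the theorem statement. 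The hardest part will be keeping the per-layer constant inside the narrow range $(17,97)$ claimed for $\rho_{\alpha,M}$, which forces me to use the joint bound of \cref{prop:optimised_param_bound} in place of separate applications of \cref{lem:parameter_bound}(iii, iv) and to carefully merge variance, mean, and bias contributions into the single constant $\rho_{\alpha,M}$ at each layer.
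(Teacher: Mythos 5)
Your decomposition into diagonal and off-diagonal terms is the same as the paper's, and your treatment of (T1) and (T2) is essentially the paper's approach: for (T1), H\"older/Cauchy--Schwarz against the bound on $\|\vsigma^2-\vone\|$ plus a width-independent bound on the post-activation second moments, and for (T2) a coupled recursion exploiting the factorisation $a^2-b^2=(a+b)(a-b)$. (The paper couples $\vz^Q_\ell$ and $\vz^P_\ell$ via the reparametrisation trick and shared noise in \cref{lem:diag-frob-norm} rather than invoking symmetrisation at that step, but these are cosmetic differences.)

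Your handling of (T3) has a genuine gap. You bound the quadratic form
\begin{equation}
    \tfrac1M\,\E_Q[\vw_{L+1,k}]^\T\,\E_Q[\phi\ss{o}(\vz_L)\phi\ss{o}(\vz_L)^\T]\,\E_Q[\vw_{L+1,k}]
\end{equation}
by $\tfrac1M\|\E_Q[\vw_{L+1,k}]\|_2^2\cdot\E_Q[\|\phi\ss{o}(\vz_L)\|_2^2]$. This uses $\|\mA\|_2\le\tr(\mA)$ for the PSD matrix $\mA=\E_Q[\phi\ss{o}(\vz_L)\phi\ss{o}(\vz_L)^\T]$, but $\tr(\mA)=\E_Q[\|\phi\ss{o}(\vz_L)\|_2^2]$ is of order $M$ (each of the $M$ diagonal entries is $\Theta(1)$ by \cref{lem:z-squared-recursion}), while $\|\E_Q[\vw_{L+1,k}]\|_2^2=O(1)$ by \cref{lem:parameter_bound}. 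The product is therefore $O(M)$, which after the $1/M$ prefactor leaves $O(1)$: the off-diagonal contribution would not vanish with width, and you cannot prove the theorem.

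The crucial saving that the trace bound throws away is that, by oddness of $\phi\ss{o}$, the matrix $\mA$ is nearly diagonal. Precisely, if you centre the $L$-th layer inputs to define $\vz'_L$ as in the proof of \cref{lem:outer}, the symmetrisation argument makes $\E[\phi\ss{o}(\vz'_L)\phi\ss{o}(\vz'_L)^\T]$ exactly diagonal, so its operator norm equals $\|\E[\phi\ss{o}^2(\vz'_L)]\|_\infty=O(1)$ rather than $O(M)$. The difference between $\mA$ and this diagonal matrix is controlled by Cauchy--Schwarz and the KL parameter bounds, contributing an extra $O(\sqrt{M})$. Together this yields $\|\mA\|_2=O(\sqrt{M})$ (\cref{rem:outer_offset}, \cref{lem:offdiag-operator-norm}), which is precisely the strength you need for (T3) to decay as $O(1/\sqrt{M})$ after multiplying by $\|\E_Q[\vw_{L+1,k}]\|_2^2/M$. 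Without this argument --- which is where the oddness assumption enters the variance proof in an essential way --- your scheme cannot close.
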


The proof of \cref{thm:app-convergence-of-variance} will proceed by splitting the variance into two terms. The first term, which we name the \textit{diagonal}, arises from the product of the variance of the weights with the second moment of each activation. We show that under the optimal variational posterior, this term is close to the same term under the prior. Precisely, we have the following lemma:
\begin{lemma}[Diagonal Variance terms]\label{lem:app-diagonal-variance}
Define $\mSigma_{Q} =\E_{\Q}[\vw_{L+1}\vw_{L+1}^\T] -  \E_{\Q}[\vw_{L+1}]\E_{\Q}[\vw_{L+1}]^\T$, which is a diagonal matrix with the variances of $\vw_{L+1}$ as entries. Let 
\begin{align}
    D^i_\P(\vx) &= \tfrac{1}{M}\tr(\E_{\P}[\vw_{L+1}\vw_{L+1}^\T]\E_{\P}[\phi\ss{o}(\vz_{L})\phi\ss{o}(\vz_{L})^\T])\\
        D^i_\Q(\vx) &=\tfrac{1}{M}\tr(\mSigma_{Q}\E_{\Q}[\phi\ss{o}(\vz_{L})\phi\ss{o}(\vz_{L})^\T])
\end{align}
Then,
\begin{align}
    |D^{i}_{\Q}(\vx) - D^{i}_{\P}(\vx)| 
    &\leq  
    \left(16 + \sqrt{2}\right) L^{1/2}
    \rho_{\alpha, M}^{L}\frac{\alpha^2 + 1\lor \tfrac1{D\ss{i}}\|\vx\|^2_2}{\sqrt{M}}  \sqrt{\KL(\Q,\P)} (2 \KL(\Q,\P) \lor 1)^{L+\tfrac12}
\end{align}
where $\rho_{\alpha, M} \in (17, 97)$ is defined in \cref{lem:diag-frob-norm}.
\end{lemma}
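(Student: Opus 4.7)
The plan is to split $D^i_\Q(\vx) - D^i_\P(\vx)$ into a piece capturing the change in the final-layer weight covariance and a piece capturing the change in the second moments of the penultimate activations. Since under the prior the final-layer weights are i.i.d.\ standard Gaussian, $\E_\P[\vw_{L+1}\vw_{L+1}^\T] = \mI$, and writing $\mSigma_\Q = (\mSigma_\Q - \mI) + \mI$ yields
\begin{align}
D^i_\Q(\vx) - D^i_\P(\vx)
&= \underbrace{\tfrac{1}{M}\tr\!\bigl((\mSigma_\Q - \mI)\,\E_\Q[\phi\ss{o}(\vz_L)\phi\ss{o}(\vz_L)^\T]\bigr)}_{=:\,A} \nonumber \\
&\quad + \underbrace{\tfrac{1}{M}\bigl(\E_\Q[\|\phi\ss{o}(\vz_L)\|_2^2] - \E_\P[\|\phi\ss{o}(\vz_L)\|_2^2]\bigr)}_{=:\,B}.
\end{align}

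\textbf{Bounding $A$.} Using $|\tr(\mA\mB)|\le\|\mA\|\ss{F}\|\mB\|\ss{F}$ together with $\|\vv\vv^\T\|\ss{F} = \|\vv\|_2^2$, Jensen's inequality, and the $1$-Lipschitzness of $\phi\ss{o}$ (vanishing at $0$), one obtains $|A|\le \tfrac{1}{M}\|\mSigma_\Q - \mI\|\ss{F}\,\E_\Q[\|\vz_L\|_2^2]$. The first factor equals $\|\vsigma^2_{\vw_{L+1}} - \vone\|_2$ and is bounded by \cref{lem:parameter_bound}(iv) as $(2+\sqrt{2\KL(\Q,\P)})\sqrt{2\KL(\Q,\P)}$, contributing one $\sqrt{\KL(\Q,\P)}$ and one $\sqrt{\KL(\Q,\P)\lor 1}$. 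The second factor, $\tfrac{1}{M}\E_\Q[\|\vz_L\|_2^2]$, is exactly the type of quantity controlled by the already-cited \cref{lem:diag-frob-norm}, which contributes the $\rho_{\alpha,M}^L(2\KL(\Q,\P)\lor 1)^L$ growth and the $(\alpha^2 + 1 + \tfrac{1}{D\ss{i}}\|\vx\|_2^2)$-dependence in the stated bound.

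\textbf{Bounding $B$ by a layerwise recursion.} For $B$ I would set up a recursion on $\ell$ for the normalized differences
\begin{equation}
\Delta_\ell \;:=\; \tfrac{1}{M}\bigl|\E_\Q[\|\phi(\vz_\ell)\|_2^2] - \E_\P[\|\phi(\vz_\ell)\|_2^2]\bigr|.
\end{equation}
Expanding $\E[\|\vz_{\ell+1}\|_2^2]$ from $\vz_{\ell+1} = \tfrac{1}{\sqrt{M}}\mW_{\ell+1}\phi(\vz_\ell) + \vb_{\ell+1}$ and using independence of $(\mW_{\ell+1},\vb_{\ell+1})$ from $\vz_\ell$, the expression decomposes into three kinds of contributions: a diagonal ``variance'' term bilinear in the per-entry variances of $\mW_{\ell+1}$ and $\tfrac{1}{M}\E[\|\phi(\vz_\ell)\|_2^2]$, a ``mean-weight'' cross term bilinear in $\E[\mW_{\ell+1}]^\T\E[\mW_{\ell+1}]$ and $\E[\phi(\vz_\ell)\phi(\vz_\ell)^\T]$, and a bias term. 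Subtracting the corresponding expression under $\P$, where the mean parameters vanish and the variances collapse to one, produces a recursion of the form $\Delta_{\ell+1}\le \rho_{\alpha,M}\,\Delta_\ell + \kappa_\ell$, where each $\kappa_\ell$ inherits a $\sqrt{\KL(\Q,\P)}$ factor from \cref{lem:parameter_bound}(i) or (iv) and is uniformly controlled in $M$ by \cref{lem:diag-frob-norm}. Iterating via \cref{prop:recursion} and passing from $\phi$ to $\phi\ss{o} = \phi - \alpha$ at the top layer by a single Lipschitz step gives a sum of $L$ layerwise $\sqrt{\KL}$-contributions, whose Cauchy--Schwarz aggregation produces the $L^{1/2}$ prefactor appearing in the statement.

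\textbf{Main obstacle.} The principal difficulty is the recursion for $B$: in contrast to the mean proof in \cref{app:main-recursion}, the oddness of $\phi$ does not kill the dominant term, since $\E[\phi(\vz_\ell)\phi(\vz_\ell)^\T]$ is generically nonzero under both $\Q$ and $\P$. One must instead track \emph{differences} of second moments layer by layer, carefully isolating which cross terms inherit a $\sqrt{\KL}$ factor from the mean parameters (via \cref{lem:parameter_bound}(i)) versus from the variance parameters (via (iv)), while keeping the per-layer propagation constant $\rho_{\alpha,M}$ independent of $M$. Once the recursion is closed, combining the bounds on $A$ and $B$ via the triangle inequality delivers the stated $c_1 = 16 + \sqrt{2}$ by accumulating the numerical constants appearing in \cref{lem:parameter_bound}(iv) together with the layerwise Lipschitz factors absorbed into the $L^{1/2}$-term.
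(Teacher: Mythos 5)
Your decomposition is the same as the paper's (split off the $(\mSigma_\Q-\mI)$ term and the $\Q$-versus-$\P$ second-moment term), but your bound on the first term is too lossy to give the lemma. Estimating $\abs{\tr((\mSigma_\Q-\mI)\,\E_\Q[\phi\ss{o}(\vz_L)\phi\ss{o}(\vz_L)^\T])}$ by $\norm{\mSigma_\Q-\mI}\ss{F}\,\E_\Q[\norm{\vz_L}_2^2]$ and dividing by $M$ leaves $\norm{\vsigma_\Q^2-\vone}_2\cdot\tfrac1M\E_\Q[\norm{\vz_L}_2^2]$, and both factors are $\Theta(1)$ in $M$ for a fixed KL budget ($\norm{\vsigma_\Q^2-\vone}_2\le(2+K)K$ by \cref{lem:parameter_bound}, while $\tfrac1M\E_\Q[\norm{\vz_L}_2^2]=O(1)$ by \cref{lem:z-squared-recursion} — not by \cref{lem:diag-frob-norm}, which you cite for it). So your bound on $A$ does not vanish as $M\to\infty$, whereas the statement requires an overall $M^{-1/2}$. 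The paper gets the decay by exploiting that $\mSigma_\Q-\mI$ is diagonal, so the trace is $\lra{\vsigma_\Q^2-\vone,\E_\Q[\phi\ss{o}^2(\vz_L)]}$; H\"older with the pair $(1,\infty)$ keeps the $O(1)$ quantity $\norm{\E_\Q[\phi\ss{o}^2(\vz_L)]}_\infty$ rather than the $O(M)$ quantity $\E_\Q[\norm{\vz_L}_2^2]$, and $\norm{\vsigma_\Q^2-\vone}_1\le\sqrt{M}\norm{\vsigma_\Q^2-\vone}_2$ then yields exactly one factor of $\sqrt{M}$, i.e.\ $M^{-1/2}$ after normalization.

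For the second term, your proposed recursion on $\Delta_\ell=\tfrac1M\abs{\E_\Q[\norm{\phi(\vz_\ell)}_2^2]-\E_\P[\norm{\phi(\vz_\ell)}_2^2]}$ does not close as stated: when you expand $\E[\norm{\vz_{\ell+1}}_2^2]$ under $\Q$ and $\P$ and subtract, the propagated term is controlled by a norm of the \emph{coordinatewise} difference $\E_\Q[\phi^2(z_{\ell,m})]-\E_\P[\phi^2(z_{\ell,m})]$ (an $\ell^1$ or $\ell^2$ norm over $m$), not by the scalar difference of traces, so the quantity you iterate is not the quantity you need at the next layer. Moreover, comparing $\E_\Q[\phi^2(z_{\ell+1,m})]$ with $\E_\P[\phi^2(z_{\ell+1,m})]$ coordinate by coordinate means comparing expectations of a nonlinear function under two different laws, and some mechanism is needed to do this — in the paper this is precisely the content of \cref{lem:diag-frob-norm}, which couples the $\Q$- and $\P$-networks through shared reparameterization noise, uses $1$-Lipschitzness to reduce to $\E[\norm{\vz_L^Q-\vz_L^P}_2^2]$, and runs the recursion on that coupled discrepancy. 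You flag this recursion as the main obstacle, but the missing ingredient is exactly this coupling (or an equivalent comparison argument); without it neither the $M^{-1/2}$ rate nor the $L^{1/2}$ factor (which in the paper comes from the square root of an $L$-term sum in the coupled recursion, not from a Cauchy--Schwarz aggregation of per-layer $\sqrt{\KL}$ contributions) is obtained.
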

The proof of \cref{lem:app-diagonal-variance} will be the main topic of \cref{app:variance-diagonal}.

The second term, which we term the  \textit{off-diagonal} arises from the product of the mean of the weights with the second moment of each activation. Under the prior, this term vanishes. Hence, we show that this term is small for the optimal approximate posterior. This leads to the following lemma,
\begin{lemma}[Off-Diagonal Variance terms]\label{lem:app-off-diagonal-variance}
Define $
O^i_{\Q}(\vx) = \frac{1}{M}\E_{\Q}[\vw_{L+1}]^\T\E_{\Q}[\phi\ss{o}(\vz_{L})\phi\ss{o}(\vz_{L})^\T]\E_{\Q}[\vw_{L+1}]$.
Then,
\begin{equation}
    |O^i_{\Q}(\vx)| \leq 48\gamma_\alpha^{L}\frac{\alpha^2 + 1\lor \tfrac1{D\ss{i}}\|\vx\|^2_2}{\sqrt{M}}
    \KL(\Q,\P)(2\KL(\Q,\P) \lor 1)^{L},
\end{equation}
where $\gamma_\alpha = 9 + \sqrt{83} \in (18,19)$ if $\alpha=0$ and $\gamma_\alpha = 55$ if $\alpha\neq0$.
\end{lemma}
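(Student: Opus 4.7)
The plan is to bound $|O^i_Q(\vx)|$ directly, using that the prior-side quantity vanishes since $\E_P[\vw_{L+1}] = \vnull$. Setting $\vu := \E_Q[\vw_{L+1}]$, the quadratic form decomposes via $\E[X^2] = (\E X)^2 + \V[X]$ into
\begin{equation*}
O^i_Q(\vx) = \underbrace{\tfrac{1}{M}\bigl(\vu^\T \E_Q[\phi\ss{o}(\vz_L)]\bigr)^{2}}_{(\mathrm{I})} \;+\; \underbrace{\tfrac{1}{M}\,\vu^\T\, \mathrm{Cov}_Q[\phi\ss{o}(\vz_L)]\,\vu}_{(\mathrm{II})}.
\end{equation*}
Both pieces inherit the prefactor $\|\vu\|_2^2 \le 2\KL(Q,P)$ from \cref{lem:parameter_bound}(i), which supplies the linear $\KL(Q,P)$ factor appearing in the target bound.

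For term $(\mathrm{I})$, I would apply Cauchy--Schwarz, $(\mathrm{I}) \le \tfrac{1}{M}\|\vu\|_2^2 \|\E_Q[\phi\ss{o}(\vz_L)]\|_2^2$, and then insert the bound on $\|\E_Q[\phi\ss{o}(\vz_L)]\|_2$ from \cref{lem:last-layer-bound}. Squaring that bound and multiplying through yields an estimate of the form $(\mathrm{I}) = O\bigl(\tfrac{1}{M}(\alpha^2 + 1 + \tfrac1{D\ss{i}}\|\vx\|_2^2)\, \KL(Q,P)^{2}(2\KL(Q,P)\lor 1)^{L-1}\bigr)$. Using $\KL(Q,P)\cdot (2\KL(Q,P)\lor 1)^{L-1} \le (2\KL(Q,P)\lor 1)^L$ together with $1/M \le 1/\sqrt{M}$ for $M \ge 1$, term $(\mathrm{I})$ already fits into the stated form, with the $L$-dependent constants absorbed into the factor $\gamma_\alpha^L$.

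Term $(\mathrm{II})$ is the crux, because the $1/\sqrt{M}$ rate must genuinely emerge here rather than from the inequality $1/M \le 1/\sqrt{M}$. My plan is to exploit the mean-field structure: conditional on $\vz_{L-1}$, the entries of $\vz_L$ are independent Gaussians under $Q$, so $\mathrm{Cov}_Q[\phi\ss{o}(\vz_L)\mid \vz_{L-1}]$ is diagonal. The law of total covariance splits $(\mathrm{II})$ into a conditional-variance piece and a variance-of-conditional-mean piece. The first collapses, via $1$-Lipschitzness of $\phi\ss{o}$ and \cref{prop:optimised_param_bound}, into an expression of the form $\tfrac{\|\vu\|_2^2 \sigma_{\max}^2}{M}\bigl(\tfrac{1}{M}\E_Q[\|\phi(\vz_{L-1})\|_2^2]+1\bigr)$, in which the inner second moment is controlled recursively using the template of \cref{lem:expected-norm-bound} transported to $L^2$. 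The second piece is handled by iterating the symmetrization-plus-Lipschitz argument of \cref{lem:contraction} on the conditional mean, which telescopes through $L$ layers exactly as in \cref{lem:last-layer-bound} and produces the final $\tfrac1{\sqrt{M}}\|\E[\mW_L]\|\ss{F}$ factor that supplies the missing $1/\sqrt{M}$.

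The main obstacle I anticipate is the careful layer-by-layer bookkeeping in $(\mathrm{II})$, where unlike in the mean bound the oddness of $\phi$ enters only indirectly. Propagating the operator-norm bound on $\mathrm{Cov}_Q[\phi\ss{o}(\vz_\ell)]$ across all $L$ layers while keeping the accumulated constants inside $\gamma_\alpha^L$, with $\gamma_\alpha = 9+\sqrt{83}$ for $\alpha = 0$ and $\gamma_\alpha = 55$ otherwise, will require a recursion closely parallel to, but slightly simpler than, the proof of \cref{lem:app-diagonal-variance}: on this side the prior value is exactly zero, so no $Q$-vs-$P$ cancellations need to be tracked and the analysis reduces to controlling a single object, $\mathrm{Cov}_Q[\phi\ss{o}(\vz_L)]$, against the weighting $\vu$.
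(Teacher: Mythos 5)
Your route is genuinely different from the paper's. The paper's proof is three lines: it writes $|O^i_Q| \le \tfrac{1}{M}\|\E_Q[\vw_{L+1}]\|_2^2\,\|\E_Q[\phi\ss{o}(\vz_L)\phi\ss{o}(\vz_L)^\T]\|_2$ by the definition of the operator norm, bounds the first factor by $2\KL(Q,P)$ with \cref{lem:parameter_bound}, and plugs in the $\le 24\sqrt{M}\,(\gamma_{\alpha,M}(K^2\lor 1))^L(\cdots)$ estimate of \cref{lem:offdiag-operator-norm} for the second. You bypass \cref{lem:offdiag-operator-norm} entirely, splitting the second-moment matrix into the rank-one mean product plus the covariance and then splitting the covariance via the law of total covariance over $\vz_{L-1}$, using mean-field conditional independence to make the conditional covariance diagonal. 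That decomposition is sound and in fact buys you a strictly tighter $M$-rate: term $(\mathrm{I})$ is exactly $(\E_Q[\widetilde{f}(\vx)])^2$, hence $O(K^4/M)$ by \cref{thm:app-convergence-of-mean}; the conditional-covariance piece is diagonal with $O((K\lor 1)^2\cdot\gamma^{L-1}(\cdots))$ entries and contributes $O(K^2(K^2\lor 1)^L/M)$; and for the variance-of-conditional-mean piece, writing $\vu^\T\mathrm{Cov}[\vg]\vu = \V[\vu^\T\vg]$ with $\vg(\vz_{L-1}) = \E_Q[\phi\ss{o}(\vz_L)\mid\vz_{L-1}]$, the pointwise symmetrization bound $|g_m|\le \tfrac1{\sqrt{M}}\|\E_Q[(\mW_L)_{m,\cdot}]\|_2\,\|\phi(\vz_{L-1})\|_2 + |\E_Q[(\vb_L)_m]|$ plus Cauchy--Schwarz across $m$ gives $|\vu^\T\vg| \lesssim K^2(\|\phi(\vz_{L-1})\|_2/\sqrt{M}+1)$, so $\tfrac1M\V[\vu^\T\vg] = O(K^4/M)$ as well. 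Your remark that ``the $1/\sqrt{M}$ rate must genuinely emerge'' in $(\mathrm{II})$ is therefore slightly off: all three pieces come out $O(1/M)$, and the paper's $1/\sqrt{M}$ is an artifact of bounding $\vu^\T\mA\vu$ by $\|\vu\|_2^2\|\mA\|_2$, where $\|\mA\|_2$ really is $\Theta(\sqrt{M})$ in general but the KL-constrained test vector $\vu = \E_Q[\vw_{L+1}]$ cannot realize it. What the paper's route buys is brevity and reuse of the already-built operator-norm machinery; your route needs a fresh $L$-layer recursion for $\tfrac1M\E_Q[\|\phi(\vz_{L-1})\|_2^2]$ (essentially re-deriving \cref{lem:z-squared-recursion}), and as written squaring \cref{lem:last-layer-bound} in $(\mathrm{I})$ produces a per-layer base of $(2+2c)^2 = 36$ rather than the stated $\gamma_\alpha$, so hitting the advertised constants requires a separate optimization even though the $M$-rate ends up better.
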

The proof of \cref{lem:app-off-diagonal-variance} will be the main topic of \cref{app:variance-off-diagonal}.

These two lemmas taken together allow us to prove \cref{thm:app-convergence-of-variance}. The entire structure of the proof is depicted in \cref{fig:dag-variance}.

\begin{proof}[Proof of \cref{thm:app-convergence-of-variance}]
We will repeatedly use cyclic property and linearity of trace. Hence, we briefly recall that for conformable matrix $\mA, \mB$, we have $\tr(\mA\mB)=\tr(\mB\mA)$ and $\E[\tr(\mA\mB)] = \tr(\E[\mA\mB])$. We also observe that the trace of a $1\times 1$ matrix is simply the corresponding scalar.

It suffices to consider an arbitrary output component. Let $\widetilde{f}(\vx) = \widetilde{\vf}(\vx)_i$ denote an arbitrary output component $i$ and let $\vw_{L+1}$ denote row $i$ of $\mW_{L+1}$, so that $\widetilde{f}(\vx) = \tfrac{1}{M} \vw_{L+1}^T\phi\ss{o}(\vz_L)$. 
\begin{align}
M\widetilde{f}(\vx)^2 &= \vw_{L+1}^\T\phi\ss{o}(\vz_{L})\phi\ss{o}(\vz_{L})^\T \vw_{L+1} \\
& =\tr\left(\vw_{L+1}^\T\phi\ss{o}(\vz_{L})\phi\ss{o}(\vz_{L})^\T \vw_{L+1}\right) \\
& = \tr\left(\vw_{L+1}\vw_{L+1}^\T\phi\ss{o}(\vz_{L})\phi\ss{o}(\vz_{L})^\T \right).
\end{align}
Recall $\mSigma_{Q} =\E_{\Q}[\vw_{L+1}\vw_{L+1}^\T] -  \E_{\Q}[\vw_{L+1}]\E_{\Q}[\vw_{L+1}]^\T$, which is a diagonal matrix (by indpendence) with the variances of $\vw_{L+1}$ as entries. Then, adding and subtracting $\E_{\Q}[\vw_{L+1}]\E_{\Q}[\vw_{L+1}]^\T$, and using that $\vw_{L+1}$ is independent of $\vz_{L}$ by the independence assumption
\begin{align}
    \E_{\Q}[\widetilde{f}(\vx)^2] &=
    \tfrac{1}{M}\tr((\E_{\Q}[\vw_{L+1}]\E_{\Q}[\vw_{L+1}]^\T + \mSigma_{Q}) \E_{\Q}[\phi\ss{o}(\vz_{L})\phi\ss{o}(\vz_{L})^\T] ) \\
    & = \tfrac{1}{M}\tr(\E_{\Q}[\vw_{L+1}]\E_{\Q}[\vw_{L+1}]^\T\E_{\Q}[\phi\ss{o}(\vz_{L})\phi\ss{o}(\vz_{L})^\T] ) + \tfrac{1}{M}\tr(\mSigma_{Q}\E_{\Q}[\phi\ss{o}(\vz_{L})\phi\ss{o}(\vz_{L})^\T]) \\
    & = \tfrac{1}{M}\E_{\Q}[\vw_{L+1}]^\T\E_{\Q}[\phi\ss{o}(\vz_{L})\phi\ss{o}(\vz_{L})^\T]\E_{\Q}[\vw_{L+1}] + D^{i}_{\Q} \\
    &= O^{i}_{\Q}(\vx) + D^{i}_{\Q}(\vx).
\end{align}
Also, $\E_{\P}[\widetilde{f}(\vx)^2] =D^{i}_{\P}(\vx)$ since $\E_{\P}[\vw_{L+1}]=0$. So,
\begin{align}
    |\E_{\Q}[\widetilde{f}(\vx)^2] - \E_{\P}[\widetilde{f}(\vx)^2]| &= |O^{i}_{\Q}(\vx) + D^{i}_{\Q}(\vx) - D^{i}_{\P}(\vx)| \\
    & \leq |O^{i}_{\Q}(\vx)|  + |D^{i}_{\Q}(\vx) - D^{i}_{\P}(\vx)|.
\end{align}
The final result is obtained by plugging in the expressions from Lemmas \ref{lem:app-diagonal-variance} and \ref{lem:app-off-diagonal-variance}, noting that $\gamma_{\alpha} \le \gamma_{\alpha, M}$.
\end{proof}

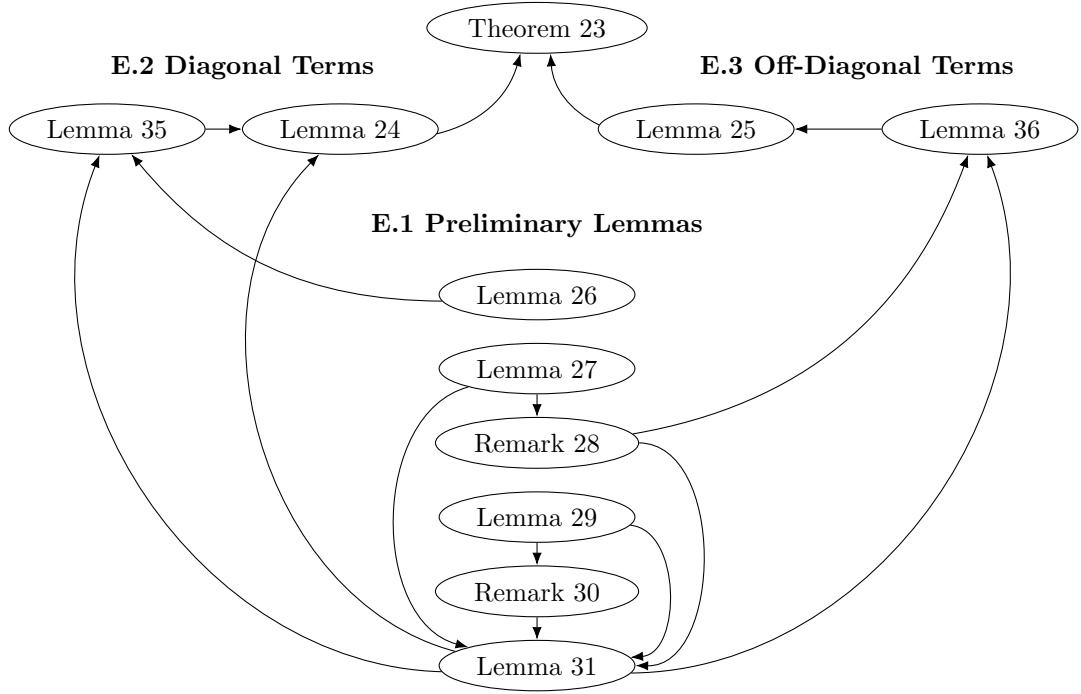
\begin{figure}[t]
\centering
	\begin{tikzpicture}
		\node[state] (main) at (0,0) {\cref{thm:app-convergence-of-variance}};
		
		
		\node[] (E2) [below  left = of main, yshift=1.0cm] {\textbf{E.2   Diagonal Terms}};
		\node[state] (E2lem1) [below  left = of E2, xshift=2.0cm, yshift=0.7cm] {\cref{lem:diag-frob-norm}}; 
		\node[state] (E2lem2) [below right  = of E2, xshift=-2.5cm, yshift=0.7cm] {\cref{lem:app-diagonal-variance}}; 
		
		\path (E2lem2) edge[bend right=30] (main);
		\path (E2lem1) edge (E2lem2);
		
		
		\node[] (E3) [below  right = of main, yshift=1.0cm] {\textbf{E.3   Off-Diagonal Terms}};
		\node[state] (E3lem1) [below  left = of E3, xshift=2.0cm, yshift=0.7cm] {\cref{lem:app-off-diagonal-variance}}; 
		\node[state] (E3lem2) [below right  = of E3, xshift=-2.5cm, yshift=0.7cm] {\cref{lem:offdiag-operator-norm}}; 

		\path (E3lem1) edge[bend left=30] (main);
		\path (E3lem2) edge (E3lem1);
		
		\node[] (E1) [below = of main, yshift=-1.0cm] {\textbf{E.1   Preliminary Lemmas}};
		\node[state] (E1lem1) [below  = of E1, yshift=0.7cm] {\cref{lem:expectation_squared_bound_update}}; 
		\node[state] (E1lem2) [below  = of E1lem1, yshift=0.7cm] {\cref{lem:outer}}; 
		\node[state] (E1lem3) [below  = of E1lem2, yshift=0.7cm] {\cref{rem:outer_offset}}; 
		\node[state] (E1lem4) [below  = of E1lem3, yshift=0.7cm] {\cref{lem:inf}}; 
		\node[state] (E1lem5) [below  = of E1lem4, yshift=0.7cm] {\cref{rem:inf_offset}}; 
		\node[state] (E1lem6) [below  = of E1lem5, yshift=0.7cm] {\cref{lem:z-squared-recursion}}; 
		
		\path (E1lem2) edge (E1lem3);
		\path (E1lem4) edge (E1lem5);
		\path (E1lem5) edge (E1lem6);
		
		\path (E1lem2) edge[bend right=75] (E1lem6);
		\path (E1lem4) edge[bend left=85] (E1lem6);
		
		\path (E1lem3) edge[bend left=90] (E1lem6);
		
		\path (E1lem3) edge[bend right=30] (E3lem2);
		\path (E1lem6) edge[bend right=55] (E3lem2);
=
		\path (E1lem1) edge[bend left=25] (E2lem1);
		\path (E1lem6) edge[bend left=60] (E2lem2);
		\path (E1lem6) edge[bend left=55] (E2lem1);
	\end{tikzpicture}
	\caption{Dependency structure of the results leading to  \cref{thm:app-convergence-of-variance}. Lemmas \ref{lem:cov}, \ref{lem:second_moment_conditional_prior}, and \ref{lem:inner_prod} are proved in this section but only used in \cref{app:convergence-in-dist}.}
	\label{fig:dag-variance}
\end{figure}

\subsection{Preliminary Lemmas}

In order to prove both \cref{lem:app-diagonal-variance} and \cref{lem:app-off-diagonal-variance} we first establish several preliminary results. The first lemma will be useful in bounding the diagonal terms in \cref{lem:diag-frob-norm}. We construct the bound $\eta_{I \lor J}$ to be convenient when collecting terms in \cref{lem:diag-frob-norm}.

\begin{lemma} \label{lem:expectation_squared_bound_update}
    Let $\mA \in \R^{I \times J}$ have independent standard Gaussian entries.
    Then
    \begin{equation}
        \E[\norm{\mA}^2_2]
        \le \eta_{I \lor J} (I \lor J),
    \end{equation}
    where $\eta_{I \lor J}$ takes the value $\tfrac19(37+6\sqrt{2\pi}) \in (5,6)$ if $I\lor J\ge 36$ and $4(2+\sqrt{2\pi}) \in (18,19)$ otherwise.
\end{lemma}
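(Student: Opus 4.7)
The plan is to bound $\E[\|\mA\|_2^2]$ by combining the expectation bound $\E[\|\mA\|_2]\le 2\sqrt{I\lor J}$ and the sub-Gaussian tail bound $\mathbb{P}(\|\mA\|_2\ge 2\sqrt{I\lor J}+\delta)\le 2e^{-\delta^2/2}$ from \cref{lem:expectation_bound}. The natural tool is the layer-cake representation
\begin{equation}
    \E[\|\mA\|_2^2] \;=\; \int_0^\infty 2u\, \mathbb{P}(\|\mA\|_2 > u)\isd{u}.
\end{equation}

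\textbf{Key steps.} First I would split the integral at the threshold $u_0 = 2\sqrt{I\lor J}$: trivially bound the probability by one on $[0,u_0]$ to pick up a leading term of $u_0^2 = 4(I\lor J)$, and apply the concentration tail of \cref{lem:expectation_bound} on $[u_0,\infty)$ after the substitution $\delta=u-u_0$. The tail contribution becomes
\begin{equation}
    \int_0^\infty 2(\delta+u_0)\cdot 2e^{-\delta^2/2}\isd{\delta} \;=\; 4\int_0^\infty \delta\, e^{-\delta^2/2}\isd{\delta} + 4u_0 \int_0^\infty 2 e^{-\delta^2/2}\isd{\delta} \;=\; 4 + 4\sqrt{2\pi}\,\sqrt{I\lor J},
\end{equation}
using the standard Gaussian integrals $\int_0^\infty \delta e^{-\delta^2/2}d\delta = 1$ and $\int_0^\infty e^{-\delta^2/2}d\delta = \sqrt{\pi/2}$. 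Combining gives the master inequality
\begin{equation}
    \E[\|\mA\|_2^2] \;\le\; 4(I\lor J) + 4\sqrt{2\pi}\,\sqrt{I\lor J} + 4.
\end{equation}

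The remaining step is casework to hit the stated constants. In the regime $I\lor J\ge 36$, I would use $\sqrt{I\lor J}\le (I\lor J)/6$ and $1\le (I\lor J)/36$, so the right-hand side is bounded by $(I\lor J)\bigl(4 + \tfrac{2\sqrt{2\pi}}{3} + \tfrac{1}{9}\bigr) = (I\lor J)\cdot\tfrac{1}{9}(37+6\sqrt{2\pi})$, matching $\eta_{I\lor J}(I\lor J)$. In the regime $I\lor J < 36$ (in particular $I\lor J\ge 1$, since otherwise $\mA$ is trivial), the target inequality
\begin{equation}
    4(I\lor J) + 4\sqrt{2\pi}\sqrt{I\lor J} + 4 \;\le\; 4(2+\sqrt{2\pi})(I\lor J)
\end{equation}
rearranges to $4\le 4(I\lor J) + 4\sqrt{2\pi}\bigl((I\lor J)-\sqrt{I\lor J}\bigr)$, which holds because $(I\lor J)\ge 1$ implies both $4(I\lor J)\ge 4$ and $(I\lor J)-\sqrt{I\lor J}\ge 0$.

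\textbf{Expected main obstacle.} The proof is mechanical once the right representation is chosen; the only real care is in the constants. The threshold $36$ is dictated precisely by the requirement $\sqrt{I\lor J}\le (I\lor J)/6$, which is what lets the linear-in-$\sqrt{I\lor J}$ term be absorbed into the leading $(I\lor J)$ term with a small multiplier. A secondary subtlety is that the master bound is tight at $I\lor J=1$, which is why the small-dimension constant $4(2+\sqrt{2\pi})$ cannot be improved by this method; fortunately it is exactly the value targeted by the lemma statement.
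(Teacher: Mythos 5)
Your proof is correct and takes essentially the same route as the paper's: integrate the Gaussian tail bound from \cref{lem:expectation_bound} by layer cake, split at $u_0 = 2\sqrt{I\lor J}$, obtain the master inequality $\E[\|\mA\|_2^2] \le 4(I\lor J) + 4\sqrt{2\pi}\,\sqrt{I\lor J} + 4$, and then absorb the lower-order terms by casework on $I\lor J$. Two small notes: the paper phrases the layer cake as $\int_0^\infty \mathbb{P}(\|\mA\|_2^2 > u)\,\mathrm{d}u$ followed by the substitution $4(I\lor J)+u = (2\sqrt{I\lor J}+v)^2$, which is equivalent to your $\int_0^\infty 2u\,\mathbb{P}(\|\mA\|_2 > u)\,\mathrm{d}u$; and your middle display carries a stray factor of $2$ in the second tail integral (it should read $4u_0\int_0^\infty e^{-\delta^2/2}\,\mathrm{d}\delta$), but your final constants are right and your explicit casework supplies what the paper leaves as ``checking cases.''
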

\begin{proof}
    We integrate the tail bound from \cref{lem:expectation_bound}:
    \begin{equation}
        \mathbb{P}(\norm{\mA}_2 \ge 2 \sqrt{I \lor J} + \delta) \le 2 \exp(-\tfrac1{2} \delta^2).
    \end{equation}
     To integrate the tail bound, use the layer cake trick:
    \begin{align}
        \E[\norm{\mA}^2_2]
        &= \int_0^\infty \P(\norm{\mA}^2_2 > u) \isd u \\
        &\le 4(I \lor J) + \int_0^\infty \P(\norm{\mA}^2_2 > 4(I \lor J) + u) \isd u.
    \end{align}
    Consider the change of variables defined by
    \begin{equation}
        4(I \lor J) + u = (2 \sqrt{I \lor J} + v)^2.
    \end{equation}
    Then
    $\sd u = 2( 2\sqrt{I \lor J} + v) \isd v$,
    so
    \begin{align}
        \E[\norm{\mA}^2_2]
        &\le
            4(I \lor J)
            + 2\int_0^\infty \P(\norm{\mA}_2 > 2 \sqrt{I \lor J} + v) (2 \sqrt{I \lor J} + v) \isd v \\
        &\le\vphantom{\int_0^\infty}
            4(I \lor J)
            + 4\int_0^\infty e^{-\tfrac12 v^2} (2 \sqrt{I \lor J} + v) \isd v \\
        &=\vphantom{\int_0^\infty}
            4(I \lor J)
            + 4 \sqrt{2\pi} \sqrt{I \lor J} + 4.
    \end{align}
The claimed bound then follows by checking cases.
\end{proof}

\begin{lemma} \label{lem:outer}
    Let $\phi_1,\phi_2\colon\R \to \R$ be $1$-Lipschitz and odd.
    Let the triple $(\mW, \vz_2, \vb) \in \R^{M_1\times M_2} \times \R^{M_2} \times \R^{M_1}$ be (possibly dependent) random variables such that, for all Rademacher vectors $\vep \in \set{-1, 1}^{M_1}$,
    \begin{equation}
        (\mW - \E[\mW], \vz_2, \vb - \E[\vb])
        \disteq
        (\diag(\vep)(\mW - \E[\mW]), \vz_2, \vep \had (\vb - \E[\vb])).
    \end{equation}
    Consider $\vz_1 = \tfrac1{\sqrt{M_2}}\mW \phi_2(\vz_2) + \vb$.
    Then
    \begin{align}
        \norm{\E[\phi_1(\vz_1) \phi_1(\vz_1)^\T]}_2
        &\le
            4\sqrt{M_1}(
                \norm{\E[\vz_1^2]}_\infty
                + \norm{\E[\mW]}_2^2\, \norm{\E[\vz_{2}^2]}_\infty
                + \norm{\E[\vb]}_2^2
            )\\
        &\le
            4\sqrt{M_1}(
                \norm{\E[\vz_1^2]}_\infty
                + K^2\, (\norm{\E[\vz_{2}^2]}_\infty \lor 1)
            )
    \end{align}
    provided that $M_1 \ge 3$. 
    If $M_1 < 3$, then the inequality holds with the slightly worse constant $6$.
\end{lemma}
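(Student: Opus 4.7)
The strategy is to split $\phi_1(\vz_1)$ into a mean-driven part and a symmetric fluctuation, exploit the Rademacher symmetry to kill the off-diagonal of the symmetric outer product, and then bound the remaining contributions via Cauchy--Schwarz together with Lipschitz continuity. Write $\vz_1 = \bar{\vz}_1 + \tilde{\vz}_1$ where
\begin{equation*}
    \bar{\vz}_1 = \tfrac{1}{\sqrt{M_2}}\E[\mW]\,\phi_2(\vz_2) + \E[\vb],
    \qquad
    \tilde{\vz}_1 = \tfrac{1}{\sqrt{M_2}}(\mW - \E[\mW])\phi_2(\vz_2) + (\vb - \E[\vb]),
\end{equation*}
and set $\vv = \phi_1(\tilde{\vz}_1)$ and $\vu = \phi_1(\vz_1) - \vv$, so that $\phi_1(\vz_1) = \vu + \vv$ and
\begin{equation*}
    \E[\phi_1(\vz_1)\phi_1(\vz_1)^\T]
    = \E[\vv\vv^\T] + \E[\vu\vu^\T] + \E[\vu\vv^\T] + \E[\vv\vu^\T].
\end{equation*}

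The central step is showing that $\E[\vv\vv^\T]$ is diagonal. Applying the hypothesis with $\vep$ equal to $-1$ in coordinate $i$ and $+1$ elsewhere, one sees that the transformation sends $\tilde{z}_{1,k}$ to $\epsilon_k \tilde{z}_{1,k}$ for each $k$; hence $(\tilde{\vz}_1, \vz_2) \disteq (\vep \had \tilde{\vz}_1, \vz_2)$. Oddness of $\phi_1$ then gives $\E[\phi_1(\tilde{z}_{1,i})\phi_1(\tilde{z}_{1,j})] = -\E[\phi_1(\tilde{z}_{1,i})\phi_1(\tilde{z}_{1,j})] = 0$ for $i \ne j$. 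The same sign-flip also yields $\E[\tilde{z}_{1,i}\bar{z}_{1,i}] = 0$, because $\bar{z}_{1,i}$ is a function of $\vz_2$ alone, so $\E[\tilde{z}_{1,i}^2] = \E[z_{1,i}^2] - \E[\bar{z}_{1,i}^2] \le \norm{\E[\vz_1^2]}_\infty$. Writing $A := \norm{\E[\vz_1^2]}_\infty$, this gives $\norm{\E[\vv\vv^\T]}_2 = \max_i \E[\phi_1(\tilde{z}_{1,i})^2] \le A$.

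For the remaining terms, $1$-Lipschitzness of $\phi_1$ yields the pointwise bound $|u_i| \le |\bar{z}_{1,i}|$. Combined with $\norm{\phi_2(\vz_2)}_2 \le \norm{\vz_2}_2$ (from oddness and Lipschitzness of $\phi_2$) and $(a+b)^2 \le 2a^2 + 2b^2$, this gives
\begin{equation*}
    \E[\norm{\vu}_2^2] \le \E[\norm{\bar{\vz}_1}_2^2]
    \le 2\norm{\E[\mW]}_2^2\,\norm{\E[\vz_2^2]}_\infty + 2\norm{\E[\vb]}_2^2 =: B,
\end{equation*}
using $\E[\norm{\vz_2}_2^2] \le M_2 \norm{\E[\vz_2^2]}_\infty$. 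Since $\vu\vu^\T$ is rank one, $\norm{\E[\vu\vu^\T]}_2 \le \E[\norm{\vu}_2^2] \le B$, and Cauchy--Schwarz gives $\norm{\E[\vu\vv^\T]}_2 \le \sqrt{\E[\norm{\vu}_2^2]\,\E[\norm{\vv}_2^2]} \le \sqrt{M_1 A B}$, using $\E[\norm{\vv}_2^2] \le M_1 A$ from the diagonal bound above.

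Summing, $\norm{\E[\phi_1(\vz_1)\phi_1(\vz_1)^\T]}_2 \le A + B + 2\sqrt{M_1 A B}$. A weighted AM--GM ($2\sqrt{M_1 AB} \le \sqrt{M_1}(A+B)$) then yields the upper bound $(1+\sqrt{M_1})(A+B)$; an elementary check shows that for $M_1 \ge 3$ this is at most $4\sqrt{M_1}A + 2\sqrt{M_1}B = 4\sqrt{M_1}(A + B/2)$, matching the first stated inequality, while for $M_1 < 3$ one absorbs the small slack into the constant $6$. The second stated inequality is immediate from $\norm{\E[\mW]}_2 \le \norm{\E[\mW]}\ss{F} \le K$ and $\norm{\E[\vb]}_2 \le K$ via \cref{lem:parameter_bound}. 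The main obstacle is the Rademacher sign-flip argument that makes $\E[\vv\vv^\T]$ diagonal; everything else is routine Cauchy--Schwarz bookkeeping.
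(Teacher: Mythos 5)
Your proof is correct and shares the paper's central idea (use the Rademacher symmetry to kill the off-diagonal of the fluctuation outer product, then control the remainder via Lipschitzness and Cauchy--Schwarz), but the execution differs in two ways that make it cleaner. First, you expand $(\vu+\vv)(\vu+\vv)^\T$ directly into four terms and bound each, whereas the paper iteratively peels cross terms off $\E[\phi_1(\vz_1)\phi_1(\vz_1)^\T]$. Second, and more substantively, you exploit that the same sign-flip killing $\E[\phi_1(\tilde z_{1,i})\phi_1(\tilde z_{1,j})]$ for $i\neq j$ also kills the cross moment $\E[\tilde z_{1,i}\,\bar z_{1,i}]$ (since $\bar z_{1,i}$ depends only on $\vz_2$), giving the Pythagorean identity $\E[z_{1,i}^2]=\E[\tilde z_{1,i}^2]+\E[\bar z_{1,i}^2]$ and hence the tight estimate $\E[\phi_1(\tilde z_{1,i})^2]\le\E[\tilde z_{1,i}^2]\le\norm{\E[\vz_1^2]}_\infty$. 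The paper does not use this orthogonality and instead reaches $\norm{\E[\phi_1^2(\vz_1')]}_\infty \le 3\norm{\E[\vz_1^2]}_\infty + 3c^2$ by repeated applications of $(a+b)^2\le 2a^2+2b^2$, which inflates the constants. A small consequence: your final bound $A + B + 2\sqrt{M_1 AB}\le(1+\sqrt{M_1})(A+B)$ already implies the constant-$4$ form of the lemma for every $M_1\ge 1$, so the $M_1<3$ fallback with constant $6$ becomes unnecessary under your argument.
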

\begin{proof}
    Set
    \begin{align}
        \vz_1' &= \vz_1 - \tfrac1{\sqrt{M_2}}\E[\mW] \phi_2(\vz_2) - \E[\vb] = \tfrac1{\sqrt{M_2}}(\mW - \E[\mW])\phi_2(\vz_2) + (\vb - \E[\vb]). \label{eqn:z1'}
    \end{align}
    Consider
    \begin{align}
        \norm{\E[\phi_1(\vz_1) \phi_1(\vz_1)^\T]}_2
        &\le \norm{\E[(\phi_1(\vz_1) - \phi_1(\vz'_1)) \phi_1(\vz_1)^\T]}_2
        + \norm{\E[\phi_1(\vz'_1) \phi_1(\vz_1)^\T]}_2 \\
        &\overset{\smash{\text{(i)}}}{\le} \E[\norm{(\phi_1(\vz_1) - \phi_1(\vz'_1)) \phi_1(\vz_1)^\T}_2]
        + \norm{\E[\phi_1(\vz'_1) \phi_1(\vz_1)^\T]}_2 \\
        &\overset{\smash{\text{(ii)}}}{=} \E[\norm{\phi_1(\vz_1) - \phi_1(\vz'_1)}_2 \norm{\phi_1(\vz_1)}_2]
        + \norm{\E[\phi_1(\vz'_1) \phi_1(\vz_1)^\T]}_2 \\
        &\overset{\smash{\text{(iii)}}}{\le} \E[\norm{\vz_1 - \vz'_1}_2 \norm{\vz_1}_2] 
        + \norm{\E[\phi_1(\vz'_1) \phi_1(\vz_1)^\T]}_2 \\
        &\overset{\smash{\text{(iv)}}}{\le} (\E[\norm{\vz_1 - \vz'_1}^2_2] \,\E[\norm{\vz_1}^2_2])^{1/2}
        + \norm{\E[\phi_1(\vz'_1) \phi_1(\vz_1)^\T]}_2
    \end{align}
    where in (i) we use Jensen's Inequality, in (ii) we compute the $2$-norm, in (iii) we use 1-Lipschitzness and oddness of $\phi_1$, and in (iv) we use Cauchy--Schwarz.
    In a similar way, we can simplify the last term from above:%
    \begin{equation}
        \norm{\E[\phi_1(\vz'_1) \phi_1(\vz_1)^\T]}_2
        \le (\E[\norm{\vz'_1}^2_2]\,\E[\norm{\vz_1 - \vz'_1}^2_2])^{1/2} + \norm{\E[\phi_1(\vz'_1) \phi_1(\vz'_1)^\T]}_2.
    \end{equation}
    Therefore,
    \begin{align}
        \norm{\E[\phi_1(\vz_1) \phi_1(\vz_1)^\T]}_2 
        \le
            \E[\norm{\vz_1 - \vz'_1}^2_2]^{1/2}
            (\E[\norm{\vz_1}^2_2]^{1/2} + \E[\norm{\vz'_1}^2_2]^{1/2})
            + \norm{\E[\phi_1(\vz'_1) \phi_1(\vz'_1)^\T]}_2.\label{eqn:phi1_z1_phi1_z1}
    \end{align}
    Unfortunately, the square roots cannot be pushed inside: Jensen's inequality is the other way around. Instead, we bound each of the three terms involving $\vz'$ in \cref{eqn:phi1_z1_phi1_z1} separately, starting with the first term. Applying the inequality $(a + b)^2 \le 2a^2 + 2b^2$ to \cref{eqn:z1'}, we have
    \begin{align}
        \E[\norm{\vz_1 - \vz'_1}^2_2]
        &\le 2\norm{\E[\mW]}_2^2 \tfrac1{M_2} \E[\norm{\phi_2(\vz_{2})}^2_2] + 2\norm{\E[\vb]}_2^2 \\
        &\le 2\norm{\E[\mW]}_2^2 \tfrac1{M_2} \E[\norm{\vz_{2}}^2_2] + 2\norm{\E[\vb]}_2^2 \label{eqn:use_lipschitz} \\
        &\le 2\norm{\E[\mW]}_2^2 \norm{\E[\vz_2^2]}_\infty + 2\norm{\E[\vb]}_2^2.
    \end{align}
    Above, the second inequality follows from the oddness and 1-Lipschitzness of $\phi_2$ and that $\norm{\vone}_2^2 = M_2$
    while the third inequality follows from converting the 2-norm to an $\infty$-norm:
    \begin{equation}
    \E[\norm{\vz_2}_2^2] \le \E\left[\sum_{i=1}^{M_2} z_{2,i}^2\right] = \sum_{i=1}^{M_2} \E[z_{2,i}^2] \le M_2 \norm{\E[\vz_2^2]}_\infty. \label{eqn:convert_2_infty}
    \end{equation}
    Similarly, 
    \begin{equation}
        \E[\norm{\vz'_1}^2_2]
        \le 3\E[\norm{\vz_1}^2_2] + 3\norm{\E[\mW]}_2^2\, \norm{\E[\vz_{2}^2]}_\infty + 3\norm{\E[\vb]}_2^2.
    \end{equation}
    For convenience, define the expression
    \begin{equation}
    c = (\norm{\E[\mW]}_2^2 \norm{\E[\vz_{2}^2]}_\infty + \norm{\E[\vb]}_2^2)^{1/2}.
    \end{equation}
    Using the subadditivity of the square root, we can write the previous two expressions as
    \begin{equation}
        \E[\norm{\vz_1 - \vz'_1}^2_2]^{1/2} \le \sqrt{2}c \label{eqn:z1_minutes_z1'}
    \end{equation}
    and
    \begin{equation}
        \E[\norm{\vz'_1}^2_2]^{1/2} \le \sqrt{3}\E[\norm{\vz_1}^2_2]^{1/2} + \sqrt{3}c, \label{eqn:z1-2}
    \end{equation}
    respectively. To bound the last term in \cref{eqn:phi1_z1_phi1_z1}, 
    let $\vw_m$ denote the $m$\textsuperscript{th} row of $\mW$.
    Also let $\vw_m' = \vw_m - \E[\vw_m]$ and $\vb' = \vb - \E[\vb]$.
    Let $m, m' \in [M_1]$, $m \neq m'$.
    By the assumed symmetry condition,
    \begin{equation}
        ((\vw_m, \vw_{m'}), \vz_2, (b_m, b_{m'}))
        \disteq
        ((-\vw_m, \vw_{m'}), \vz_2, (-b_m, b_{m'})).
    \end{equation}
    Therefore, using oddness,
    \begin{align}
        -\phi_1(z_{1,m}') \phi_1(z_{1,m'}')
        &=
            -\phi_1(\lra{\tfrac1{M_2}\vw_m', \phi_2(\vz_2)} + b_m') 
            \phi_1(\lra{\tfrac1{M_2}\vw_{m'}', \phi_2(\vz_2)} + b_{m'}') \\
        &=
            \phi_1(\lra{-\tfrac1{M_2}\vw_m', \phi_2(\vz_2)} - b_m') 
            \phi_1(\lra{\tfrac1{M_2}\vw_{m'}', \phi_2(\vz_2)} + b_{m'}') \\
        &\disteq
            \phi_1(\lra{\tfrac1{M_2}\vw_m', \phi_2(\vz_2)} + b_m') 
            \phi_1(\lra{\tfrac1{M_2}\vw_{m'}', \phi_2(\vz_2)} + b_{m'}') \\
        &=\phi_1(z_{1,m}') \phi_1(z_{1,m'}'),
    \end{align}
    which means that $\E[\phi_1(z_{1,m}') \phi_1(z_{1,m'}')] = 0$. Consequently, the matrix $\E[\phi_1(\vz_1') \phi_1(\vz_1')^\T]$ is diagonal, so
    \begin{equation}
        \norm{\E[\phi_1(\vz'_1) \phi_1(\vz'_1)^\T]}_2
        = \max_{m \in [M_1]}\, \E[\phi_1^2(z'_{1,m})]
        = \norm{\E[\phi_1^2(\vz'_{1})]}_\infty.
    \end{equation}
    To bound $\norm{\E[\phi_1^2(\vz'_{1})]}_\infty$, consider that
    \begin{align}
        \phi_1^2(\vz'_{1})
        &\preceq (\vz'_{1})^2 \\
        &\preceq 3 \vz^2_{1} + 3 \tfrac1{M_2}(\E[\mW] \phi_2(\vz_{2}))^2 + 3\,\E[\vb]^2 \\
        &\preceq 3 \vz^2_{1} + 3 \norm{\E[\mW]}_2^2 \tfrac1{M_2}\norm{\phi_2(\vz_{2})}_2^2 \vone + 3\,\E[\vb]^2 \\
        &\preceq 3 \vz^2_{1} + 3 \norm{\E[\mW]}_2^2 \tfrac1{M_2}\norm{\vz_{2}}_2^2 \vone + 3\,\E[\vb]^2
    \end{align}
    where the squares and inequalities are element-wise. We use the oddness and 1-Lipschitzness of $\phi_1$ and $\phi_2$ in the first and last inequalities, respectively, and we use the manipulation $\mA\vv \preceq \norm{\mA\vv}_\infty \vone \preceq \norm{\mA\vv}_2 \vone \preceq \norm{\mA}_2\norm{\vv}_2 \vone$ in the third inequality. Therefore, using that $\norm{\E[\vb]}_\infty \le \norm{\E[\vb]}_2$, we have
    \begin{align}
        \norm{\E[\phi_1^2(\vz'_{1})]}_\infty
        &\le
            3 \norm{\E[\vz_1^2]}_\infty
            + 3 \norm{\E[\mW]}_2^2 \tfrac1{M_2}\E[\norm{\vz_{2}}_2^2]
            + 3 \norm{\E[\vb]}_\infty^2 \\
        &=
            3 \norm{\E[\vz_1^2]}_\infty + 3 c^2 \label{eqn:phi2z1'}
    \end{align}
    We can now plug the bounds on the $\vz_1'$ terms, given by Equations (\ref{eqn:z1_minutes_z1'}), (\ref{eqn:z1'}), and (\ref{eqn:phi2z1'}), into \cref{eqn:phi1_z1_phi1_z1}:
    \begin{align}
        \norm{\E[\phi_1(\vz_1) \phi_1(\vz_1)^\T]}_2
        &\le
            \sqrt{2}c \left((1+\sqrt{3})\E[\norm{\vz_1}^2_2]^{1/2} + \sqrt{3}c\right) + 3 \norm{\E[\vz_1^2]}_\infty + 3 c^2 \\
        &\le
            \sqrt{2}(1+\sqrt{3})\E[\norm{\vz_1}^2_2]^{1/2} c
            + (3+\sqrt{6})( \norm{\E[\vz_1^2]}_\infty + c^2) \\
        &\le
            \sqrt{2}(1+\sqrt{3})\sqrt{M_1}\norm{\E[\vz_1^2]}_\infty^{1/2} c + (3+\sqrt{6})( \norm{\E[\vz_1^2]}_\infty + c^2),
    \end{align}
    where we add an extra $2\sqrt{3}\norm{\E[\vz_1^2]}_\infty$ term to more simply group the terms in second inequality and we covert the 2-norm of $\vz_2$ to an $\infty$-norm in the third inequality, as in \cref{eqn:convert_2_infty}. Next, since $2ab + a^2 + b^2 = (a + b)^2 \le 2a^2 + 2b^2 \implies ab \le \tfrac12(a^2 + b^2)$, we can simplify the following expression in the first term above as
    \begin{equation}
        \norm{\E[\vz_1^2]}_\infty^{1/2} c \le \tfrac{1}{2}( \norm{\E[\vz_1^2]}_\infty + c^2).
    \end{equation}
    Therefore, 
    \begin{align}
        \norm{\E[\phi_1(\vz_1) \phi_1(\vz_1)^\T]}_2
        &\le
            \tfrac12\sqrt{2}(1+\sqrt{3})\sqrt{M_1}( \norm{\E[\vz_1^2]}_\infty + c^2)
            + (3+\sqrt{6})(\norm{\E[\vz_1^2]}_\infty + c^2) \\
        &\le
            \left(\tfrac12\sqrt{2}(1+\sqrt{3})\sqrt{M_1} + (3+\sqrt{6}) \right)
            (\tfrac12\norm{\E[\vz_1^2]}_\infty + c^2) 
    \end{align}
    If $M_1 \ge 3$, then $(3+\sqrt{6}) \le \tfrac12\sqrt{2}(1+\sqrt{3}) \sqrt{M_1}$, so we have
    \begin{align}
        \norm{\E[\phi_1(\vz_1) \phi_1(\vz_1)^\T]}_2
        &\le
            \sqrt{2}(1+\sqrt{3})\sqrt{M_1}( \norm{\E[\vz_1^2]}_\infty + c^2 \\
        &\le
            4\sqrt{M_1} ( \norm{\E[\vz_1^2]}_\infty^{1/2} + \norm{\E[\mW]}_2^2 \norm{\E[\vz_{2}^2]}_\infty + \norm{\E[\vb]}_2^2).
    \end{align}
    Otherwise, if $M_1 < 3$, then the inequality holds with the slightly worse constant $\sqrt{2}(1+\sqrt{3}) +  (3+\sqrt{6}) \approx 7.4 \le 8$:
    \begin{align}
        \norm{\E[\phi_1(\vz_1) \phi_1(\vz_1)^\T]}_2
        &\le
            8\sqrt{M_1} ( \norm{\E[\vz_1^2]}_\infty^{1/2} + \norm{\E[\mW]}_2^2 \norm{\E[\vz_{2}^2]}_\infty + \norm{\E[\vb]}_2^2).
    \end{align}
\end{proof}

\begin{remark} \label{rem:outer_offset}
    The symmetry condition for $(\mW, \vz_2, \vb)$ is satisfied if $\mW$ and $\vb$, not $\vz_2$, are element-wise independent and symmetric around their means.
    For such $(\mW, \vz_2, \vb)$, the symmetry condition is also satisfied for the triple $(\mW, \vz_2, \tfrac{\alpha}{\sqrt{M_2}} \mW \vone + \vb)$
    where $\vone$ is the vector of all ones and $\alpha \in \R$.
    In that case, a similar conclusion holds:
    if instead
    \begin{equation}
        \vz_1
            = \tfrac1{\sqrt{M_2}}\mW (\phi_2 + \alpha)(\vz_2) + \vb
            = \tfrac1{\sqrt{M_2}}\mW \phi_2 (\vz_2) + (\tfrac{\alpha}{\sqrt{M_2}} \mW \vone + \vb),
    \end{equation}
    then, if $M_1\ge 3$, by \cref{lem:outer} 
    \begin{align}
        \norm{\E[\phi_1(\vz_1) \phi_1(\vz_1)^\T]}_2
        &\le
            4\sqrt{M_1}(
                \norm{\E[\vz_1^2]}_\infty
                + \norm{\E[\mW]}_2^2\, \norm{\E[\vz_{2}^2]}_\infty
                + \norm{\E[\tfrac{\alpha}{\sqrt{M_2}} \mW \vone + \vb]}_2^2
            )\\
        &\le
            4\sqrt{M_1}(
                \norm{\E[\vz_1^2]}_\infty
                + \norm{\E[\mW]}_2^2\, \norm{\E[\vz_{2}^2]}_\infty
                + \tfrac{2\alpha^2}{M_2} \norm{\E[\mW]}_2^2 \norm{\vone}_2^2 + 2\norm{\E[\vb]}_2^2
            )\\
        &\overset{\smash{\text{(i)}}}{\le}
            4\sqrt{M_1}(
                \norm{\E[\vz_1^2]}_\infty
                + \norm{\E[\mW]}_2^2 (2 \alpha^2 + \norm{\E[\vz_{2}^2]}_\infty) + 2\norm{\E[\vb]}_2^2
            )\\
        &\le
            8\sqrt{M_1}(
                \norm{\E[\vz_1^2]}_\infty
                + \norm{\E[\mW]}_2^2 (\alpha^2 + \norm{\E[\vz_{2}^2]}_\infty) + \norm{\E[\vb]}_2^2
            )\\
        &\overset{\smash{\text{(ii)}}}{\le}
            8\sqrt{M_1}(
                \norm{\E[\vz_1^2]}_\infty
                + K^2\, ((\alpha^2 +  \norm{\E[\vz_{2}^2]}_\infty) \lor 1)
            ) \\
        &\le
            8\sqrt{M_1}(
                \norm{\E[\vz_1^2]}_\infty
                + K^2\, (\alpha^2 + 1 \lor \norm{\E[\vz_{2}^2]}_\infty)
            )
    \end{align}
    where in (i) we use that $\norm{\vone}_2^2 = M_2$ and in (ii) we use that $\norm{\E[\mW]}_2^2 + \norm{\E[\vb]}_\infty^2 \le K^2$. If $M_1<3$, then the inequality holds with the slightly worse constant of 12. 
\end{remark}

Towards developing a recursion, we now express $\norm{\E[\vz_{1}^2]}_\infty$ in terms of $\norm{\E[\vz_{2}^2]}_\infty$.

\begin{lemma} \label{lem:inf}     Assume the conditions of the \cref{lem:outer}.
    Then
    \begin{align}
        \norm{\E[\vz_1^2]}_\infty 
        &\le \tfrac2{M_2} \norm{\V\ss{d}[\operatorname{vec}(\mW)]}_\infty \norm{\E[\vz_{2}^2]}_1
        + \tfrac2{M_2} \norm{\E[\mW]}\ss{F}^2 \norm{\E[\phi_2(\vz_{2})\phi_2(\vz_{2})^\T]}_2
        + 2 \norm{\E[\vb^2]}_\infty \label{eqn:lem-outer-part1}\\
        &\le
            2(\sqrt{2}+K)^2 (1 \lor \tfrac1{M_2} \norm{\E[\vz_{2}^2]}_1)
            + \tfrac2{M_2} K^2 \norm{\E[\phi_2(\vz_{2})\phi_2(\vz_{2})^\T]}_2
    \end{align}
    and
    \begin{align}
        \norm{\E[\vz_1^2]}_\infty 
        &\le 2 \norm{\V\ss{d}[\operatorname{vec}(\mW)]}_\infty \norm{\E[\vz_{2}^2]}_\infty
        + \tfrac2{M_2} \norm{\E[\mW]}\ss{F}^2 \norm{\E[\phi_2(\vz_{2})\phi_2(\vz_{2})^\T]}_2
        + 2 \norm{\E[\vb^2]}_\infty \label{eqn:lem-outer-part2} \\
        &\le
            2(\sqrt{2}+K)^2 (1 \lor \norm{\E[\vz_{2}^2]}_\infty)
            + \tfrac2{M_2} K^2 \norm{\E[\phi_2(\vz_{2})\phi_2(\vz_{2})^\T]}_2.
    \end{align}
\end{lemma}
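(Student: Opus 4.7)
The plan is to expand $z_{1,m}^2$ for each output coordinate $m\in[M_1]$, peel off the bias via $(a+b)^2\le 2a^2+2b^2$, and exploit the independence structure of $(\mW,\vz_2,\vb)$ to collapse the resulting double sum over weight entries. Writing
\begin{equation*}
z_{1,m}=\tfrac1{\sqrt{M_2}}\sum_{j=1}^{M_2} W_{m,j}\,\phi_2(z_{2,j}) + b_m
\end{equation*}
and using independence of $\mW$ from $(\vz_2,\vb)$ together with the element-wise independence of the entries of a single row $\vw_m$ (part of the MFVI setup implicit in the conditions of \cref{lem:outer}), I would use $\E[W_{m,i}W_{m,j}]=\ind[i=j]\,\V[W_{m,j}]+\E[W_{m,i}]\E[W_{m,j}]$ to split the cross-term into a diagonal variance piece plus a rank-one mean piece:
\begin{equation*}
\E[z_{1,m}^2]\le \tfrac{2}{M_2}\sum_{j} \V[W_{m,j}]\,\E[\phi_2(z_{2,j})^2] + \tfrac{2}{M_2}\,\E[\vw_m]^\T\E[\phi_2(\vz_2)\phi_2(\vz_2)^\T]\E[\vw_m] + 2\,\E[b_m^2].
\end{equation*}

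Next I would bound each piece uniformly in $m$. For the diagonal term, $\V[W_{m,j}]\le \norm{\V\ss{d}[\vect(\mW)]}_\infty$ and, from $\phi_2$ being $1$-Lipschitz and odd (so $|\phi_2(z)|\le|z|$), $\E[\phi_2(z_{2,j})^2]\le \E[z_{2,j}^2]$; summing over $j$ produces either $\norm{\E[\vz_2^2]}_1$ (keeping the sum, giving \eqref{eqn:lem-outer-part1}) or $M_2\norm{\E[\vz_2^2]}_\infty$ (pulling out the max, giving \eqref{eqn:lem-outer-part2}). For the rank-one term I would apply $\vv^\T\mA\vv\le\norm{\vv}_2^2\norm{\mA}_2$ and then the crude bound $\max_m\norm{\E[\vw_m]}_2^2\le \norm{\E[\mW]}\ss{F}^2$ after taking the $\infty$-norm over $m$. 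The bias term is trivially at most $\norm{\E[\vb^2]}_\infty$. This establishes the first line of each of the two displays.

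To reach the cleaner second line of each display I would invoke the parameter-level bounds already proved earlier in the appendix: \cref{lem:parameter_bound}(i) gives $\norm{\E[\mW]}\ss{F}^2\le K^2$, while \cref{prop:optimised_param_bound} (discarding the non-negative $\norm{\V\ss{d}[\vb]}_\infty$ summand) gives $\norm{\V\ss{d}[\vect(\mW)]}_\infty + \norm{\E[\vb^2]}_\infty\le(\sqrt{2}+K)^2$. To merge the weight-variance and bias contributions into the single $(1\lor\cdot)$ factor appearing in the statement, I would bound each separately by the common factor $(1\lor\tfrac1{M_2}\norm{\E[\vz_2^2]}_1)$ (respectively, $(1\lor\norm{\E[\vz_2^2]}_\infty)$) and then add the coefficients.

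The only subtle point is the use of element-wise weight independence to eliminate every off-diagonal, non-rank-one contribution in the double sum. Without this, the coupling to $\E[\phi_2(\vz_2)\phi_2(\vz_2)^\T]$ would have to absorb a full operator-norm squared of $\mW$ rather than the much tamer $\norm{\E[\mW]}\ss{F}^2$, and the recursion driving \cref{thm:app-convergence-of-variance} would fail to close. Everything else is routine algebra.
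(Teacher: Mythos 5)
Your proposal is correct and follows essentially the same route as the paper: expand $z_{1,m}^2$, apply $(a+b)^2\le 2a^2+2b^2$, split the cross-term into a diagonal variance piece plus the rank-one mean piece, and bound via H\"older (for the $\V\ss{d}$ part), Cauchy--Schwarz/operator norm (for the rank-one part), $1$-Lipschitzness and oddness of $\phi_2$, and \cref{lem:parameter_bound} and \cref{prop:optimised_param_bound}. Your explicit covariance identity $\E[W_{m,i}W_{m,j}]=\ind[i=j]\,\V[W_{m,j}]+\E[W_{m,i}]\E[W_{m,j}]$ is just a cleaner articulation of the paper's implicit ``off-diagonal terms $C$'' bookkeeping --- both rely on the same zero-covariance structure between weight entries within a row.
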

\begin{proof}
    Let $m \in [M_2]$, and let $\vw_m$ denote the $m$\textsuperscript{th} row of $\mW$, so that $z_{1,m}=\tfrac{1}{\sqrt{M_2}}\vw_m^T\phi_2(\vz_2) + b_m$. Using $(a + b)^2\le 2a^2 + 2b^2$,
    \begin{align}
        \E[z_{1,m}^2]
        &\le \tfrac2{M_2}\,\E[(\vw_m^T \phi_2(\vz_2))^2] + 2\,\E[b_{m}^2] \\
        &\le \tfrac2{M_2} \lra{\E[\vw_m^2], \E[\phi_2^2(\vz_{2})]} + C + 2\,\E[b_{m}^2] \\
        &= \tfrac2{M_2} \lra{\V\ss{d}[\vw_m], \E[\phi_2^2(\vz_{2})]} + C + \tfrac2{M_2} \lra{\E^2[\vw_m], \E[\phi_2^2(\vz_{2})]}  + 2\,\E[b_{m}^2],
    \end{align}
    where $C$ is the sum of the off-diagonal terms of $\tfrac2{M_2}\E[(\vw_m^T \phi_2(\vz_2))^2]$. 
    Hence, by the triangle inequality,
    \begin{align}
        \norm{\E[\vz_1^2]}_\infty
        &\le \tfrac2{M_2} \max_m \{\abs{\lra{\V\ss{d}[\vw_m], \E[\phi_2^2(\vz_{2})]}} \} + \max_m \{\abs{C + \tfrac2{M_2} \lra{\E^2[\vw_m], \E[\phi_2^2(\vz_{2})]}}\} + 2\, \norm{\E[\vb^2]}_\infty. \label{eqn:z1_infty}
    \end{align}
    Consider the middle term in \cref{eqn:z1_infty}. We have
    \begin{align}
        C + \tfrac2{M_2} \lra{\E^2[\vw_m], \E[\phi_2^2(\vz_{2})]}
        &= \tfrac2{M_2} \lra{\E[\vw_m], \E[\phi_2(\vz_{2})\phi_2(\vz_{2})^\T] \E[\vw_m]} \\
        &\le \tfrac2{M_2} \norm{\E[\vw_m]}_2^2 \norm{\E[\phi_2(\vz_{2})\phi_2(\vz_{2})^\T]}_2,
    \end{align}
    where the second inequality follows from the Cauchy-Schwarz inequality and the definition of the operator norm. Therefore,
    \begin{align}
        \max_m \{ \abs{C + \tfrac2{M_2} \lra{\E^2[\vw_m], \E[\phi_2^2(\vz_{2})]} } \}
        &\le \tfrac2{M_2} \max_m \{ \norm{\E[\vw_m]}_2^2 \} \norm{\E[\phi_2(\vz_{2})\phi_2(\vz_{2})^\T]}_2 \\
        &\le \tfrac2{M_2} \norm{\E[\mW]}\ss{F}^2 \norm{\E[\phi_2(\vz_{2})\phi_2(\vz_{2})^\T]}_2.
    \end{align}
    To bound the first term in \cref{eqn:z1_infty}, use the H{\"o}lder inequality with the conjugate pair $(\infty, 1)$:
    \begin{align}
        \tfrac2{M_2} \max_m \{\abs{\lra{\V\ss{d}[\vw_m], \E[\phi_2^2(\vz_{2})]}} \} 
        &\le \tfrac2{M_2} \max_m \{ \norm{\V\ss{d}[\vw_m]}_\infty \norm{\E[\phi_2^2(\vz_{2})]}_1 \} \\
        &= \tfrac2{M_2} \max_m \{ \norm{\V\ss{d}[\vw_m]}_\infty \} \norm{\E[\phi_2^2(\vz_{2})]}_1 \\
        &= \tfrac2{M_2} \norm{\V\ss{d}[\operatorname{vec}(\mW)]}_\infty \norm{\E[\phi_2^2(\vz_{2})]}_1.
    \end{align}
    Notice that since $\phi_2$ is 1-Lipschitz and odd, we can write
    \begin{align}
        \norm{\E[\phi_2^2(\vz_{2})]}_1
        = \norm{\E[\abs{\phi_2(\vz_{2}) - \phi_2(\vnull) }^2]}_1
        \le \norm{\E[\abs{\vz_{2} - \vnull}^2]}_1
        = \norm{\E[\vz_{2}^2]}_1.
    \end{align}
    Plugging into \cref{eqn:z1_infty},  we have
    \begin{align}
        \norm{\E[\vz_1^2]}_\infty 
        &\le \tfrac2{M_2} \norm{\V\ss{d}[\operatorname{vec}(\mW)]}_\infty \norm{\E[\vz_{2}^2]}_1
        + \tfrac2{M_2} \norm{\E[\mW]}\ss{F}^2 \norm{\E[\phi_2(\vz_{2})\phi_2(\vz_{2})^\T]}_2
        + 2 \norm{\E[\vb^2]}_\infty \\
        &\le
            2(
            \norm{\V\ss{d}[\operatorname{vec}(\mW)]}_\infty
                + \norm{\E[\vb^2]}_\infty
            ) (1 \lor \tfrac1{M_2} \norm{\E[\vz_{2}^2]}_1)
            + \tfrac2{M_2} \norm{\E[\mW]}\ss{F}^2 \norm{\E[\phi_2(\vz_{2})\phi_2(\vz_{2})^\T]}_2.
    \end{align}
    The result follows by applying \cref{lem:parameter_bound} and \cref{prop:optimised_param_bound}.
    Similarly, using $\norm{\E[\vz_{2}^2]}_1 \le M_2 \norm{\E[\vz_{2}^2]}_\infty$, we can again plug into \cref{eqn:z1_infty} to obtain
    \begin{align}
        \norm{\E[\vz_1^2]}_\infty 
        &\le 2 \norm{\V\ss{d}[\operatorname{vec}(\mW)]}_\infty \norm{\E[\vz_{2}^2]}_\infty
        + \tfrac2{M_2} \norm{\E[\mW]}\ss{F}^2 \norm{\E[\phi_2(\vz_{2})\phi_2(\vz_{2})^\T]}_2
        + 2 \norm{\E[\vb^2]}_\infty \\
        &\le
            2(
            \norm{\V\ss{d}[\operatorname{vec}(\mW)]}_\infty
                + \norm{\E[\vb^2]}_\infty
            ) (1 \lor \norm{\E[\vz_{2}^2]}_\infty)
            + \tfrac2{M_2} \norm{\E[\mW]}\ss{F}^2 \norm{\E[\phi_2(\vz_{2})\phi_2(\vz_{2})^\T]}_2.
    \end{align}
\end{proof}

\begin{remark} \label{rem:inf_offset}
    Like in \cref{rem:outer_offset}, apply \cref{lem:inf} to the triple $(\mW, \vz_2, \tfrac{\alpha}{\sqrt{M_2}} \mW \vone + \vb)$ where $\mW$ and $\vb$, not $\vz_2$, are element-wise independent and symmetric around their means.
    To write down the result, we need to estimate $\norm{\E[(\tfrac{\alpha}{\sqrt{M_2}} \mW \vone + \vb)^2]}_\infty$.
    To begin with, note that
    \begin{align}
        (\tfrac{\alpha}{\sqrt{M_2}} \mW \vone + \vb)^2
        &= (\tfrac{\alpha}{\sqrt{M_2}} (\mW - \E[\mW]) \vone + \tfrac{\alpha}{\sqrt{M_2}} \E[\mW] \vone + \vb)^2 \\
        &\preceq 3\tfrac{\alpha^2}{M_2} ((\mW - \E[\mW]) \vone)^2 + 3\tfrac{\alpha^2}{M_2} (\E[\mW] \vone)^2 + 3 \vb^2 \\
        &\preceq 3\tfrac{\alpha^2}{M_2} ((\mW - \E[\mW]) \vone)^2 + 3\tfrac{\alpha^2}{M_2} \norm{\E[\mW]}_\infty^2 \norm{\vone}_\infty^2 \vone + 3 \vb^2 \\
        &\overset{\smash{\text{(i)}}}{\preceq} 3\tfrac{\alpha^2}{M_2} ((\mW - \E[\mW]) \vone)^2 + 3\alpha^2 \norm{\E[\mW]}\ss{F}^2 \vone + 3 \vb^2
    \end{align}
    where the squares and inequalities are element-wise and in (i) we use that $\norm{\E[\mW]}_\infty \le \sqrt{M_2} \norm{\E[\mW]}\ss{F}$.
    For the first term, consider the $m$\textsuperscript{th} element and use independence of the elements of $\mW$:
    \begin{align}
        \E[\tfrac{1}{M_2}((\mW - \E[\mW]) \vone)^2]_m
        &= \textstyle
            \tfrac{1}{M_2} \E\big[
                \big(\sum_{m'=1}^{M_2}(W_{m,m'} - \E[W_{m,m'}])\big)^2
            \big] \\
        &= \textstyle
            \tfrac{1}{M_2}
                \sum_{m'=1}^{M_2} \E[(W_{m,m'} - \E[W_{m,m'}])^2] \\
        &\le \norm{\V\ss{d}[\operatorname{vec}(\mW)]}_{\infty}.
    \end{align}
    Therefore, 
    \begin{equation}
        \norm{\E[(\tfrac{\alpha}{\sqrt{M_2}} \mW \vone + \vb)^2]}_\infty
        \le 3 \alpha^2 \norm{\V\ss{d}[\operatorname{vec}(\mW)]}_{\infty} + 3 \alpha^2 \norm{\E[\mW]}\ss{F}^2 + 3 \norm{\E[\vb^2]}_\infty,
    \end{equation}
    so, by \cref{lem:outer} (specifically \cref{eqn:lem-outer-part1}),
    \begin{align}
        &\norm{\E[\vz_{1}^2]}_\infty \nonumber \\
        &\quad\le
            2 \norm{\V\ss{d}[\operatorname{vec}(\mW)]}_\infty 
            \norm{\E[\vz_2^2]}_\infty 
            + \tfrac2{M_2} \norm{\E[\mW]}\ss{F}^2 
            \norm{\E[\phi_2(\vz_{2})\phi_2(\vz_{2})^\T]}_2
            + 2\norm{\E[(\tfrac{\alpha}{\sqrt{M_2}} \mW \vone + \vb)^2]}_\infty, \\
        &\quad\le
            2 \norm{\V\ss{d}[\operatorname{vec}(\mW)]}_\infty 
            (3 \alpha^2 + \norm{\E[\vz_2^2]}_\infty) 
            + 2\norm{\E[\mW]}\ss{F}^2 
            (3\alpha^2 + \tfrac1{M_2} \norm{\E[\phi_2(\vz_{2})\phi_2(\vz_{2})^\T]}_2)
            + 6 \norm{\E[\vb^2]}_\infty \\
        &\quad\le
            6 (\sqrt{2} + K)^2(\alpha^2 + 1\lor \norm{\E[\vz_2^2]}_\infty)
            + 6K^2 
            (\alpha^2 + \tfrac1{M_2} \norm{\E[\phi_2(\vz_{2})\phi_2(\vz_{2})^\T]}_2).
    \end{align}
    Similarly, by \cref{eqn:lem-outer-part2}, we can write this inequality in terms of $\tfrac1{M_2}\norm{\E[\vz_2^2]}_1$ instead of $\norm{\E[\vz_2^2]}_\infty$:
    \begin{align}
        &\norm{\E[\vz_{1}^2]}_\infty
            \le
            6 (\sqrt{2} + K)^2(\alpha^2 + 1\lor \tfrac1{M_2}\norm{\E[\vz_2^2]}_1)
            + 6K^2 
            (\alpha^2 + \tfrac1{M_2} \norm{\E[\phi_2(\vz_{2})\phi_2(\vz_{2})^\T]}_2).
    \end{align}
    
\end{remark}

\begin{lemma}\label{lem:z-squared-recursion}
    For $\alpha=0$ and $1 \leq \ell \leq L$, we have
    \begin{equation}
        \norm{\E[\vz_\ell^2]}_\infty \leq (\gamma_{\alpha, M} (1 \lor K^2))^{\ell} (1\lor \tfrac1{D\ss{i}}\|\vx\|^2_2),
    \end{equation}
    with $\gamma_{\alpha, M}$ taking the value $\tfrac23(13+2\sqrt{43} \in (17,18)$ if $M\ge 36$ and $2(6+\sqrt{38}) \in (24,25)$ if $1\le M < 36$.
    On the other hand, for $\alpha \neq 0$ and $1 \leq \ell \le L$, we have
    \begin{equation}
        \alpha^2 + \norm{\E[\vz_\ell^2]}_\infty \leq (\gamma_{\alpha, M} (1 \lor K^2))^{\ell} (\alpha^2 + 1\lor \tfrac1{D\ss{i}} \|\vx\|^2_2),
    \end{equation}
    with $\gamma_{\alpha, M}$ taking the value $28 + \sqrt{793} \in (56,57)$ if $M\ge 36$ and $48 + \sqrt{2353} \in (96,97)$ if $1\le M < 36$.
\end{lemma}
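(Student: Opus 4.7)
The plan is to prove both bounds by induction on $\ell \ge 1$, with the one-step recursion of \cref{lem:inf} (for $\alpha = 0$) and \cref{rem:inf_offset} (for $\alpha \ne 0$) as the engine. Both recursions bound $\norm{\E[\vz_{\ell+1}^2]}_\infty$ in terms of $\norm{\E[\vz_\ell^2]}_\infty$ and the extra quantity $\tfrac{1}{M}\norm{\E[\phi(\vz_\ell)\phi(\vz_\ell)^\T]}_2$, so the induction essentially reduces to controlling this trailing term at each step.

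For the base case $\ell = 1$, the layer $\vz_1 = \tfrac{1}{\sqrt{D\ss{i}}}\mW_1\vx + \vb_1$ is explicit, and expanding the square together with independence of the entries of $\mW_1$ gives
\begin{equation}
    \norm{\E[\vz_1^2]}_\infty \le 2\bigl(\norm{\V\ss{d}[\operatorname{vec}(\mW_1)]}_\infty + \norm{\E[\mW_1]}\ss{F}^2\bigr)\tfrac{1}{D\ss{i}}\|\vx\|_2^2 + 2\norm{\E[\vb_1^2]}_\infty.
\end{equation}
Applying \cref{prop:optimised_param_bound} and \cref{lem:parameter_bound} to replace each parameter norm by its bound in $K$, and then grouping terms by whether or not they carry the factor $\tfrac{1}{D\ss{i}}\|\vx\|_2^2$, yields a bound of the target shape $\gamma_{\alpha,M}(1 \lor K^2)(1 \lor \tfrac{1}{D\ss{i}}\|\vx\|_2^2)$, with an extra $\alpha^2$ added to both sides in the $\alpha \ne 0$ case.

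For the inductive step, the recursion has the shape $\norm{\E[\vz_{\ell+1}^2]}_\infty \le A(K)(\alpha^2 + 1 \lor \norm{\E[\vz_\ell^2]}_\infty) + B(K)(\alpha^2 + \tfrac{1}{M}\norm{\E[\phi(\vz_\ell)\phi(\vz_\ell)^\T]}_2)$ for polynomials $A, B$ in $K$. I would control the trailing term in two complementary ways: by the trivial trace bound $\tfrac{1}{M}\norm{\E[\phi(\vz_\ell)\phi(\vz_\ell)^\T]}_2 \le \tfrac{1}{M}\tr\E[\phi(\vz_\ell)\phi(\vz_\ell)^\T] \le \norm{\E[\vz_\ell^2]}_\infty$ (using that $\phi$ is odd and $1$-Lipschitz, so $\phi(z)^2 \le z^2$), and by \cref{lem:outer}, which is sharper for large $M$ (supplying a factor $1/\sqrt{M}$) but also involves $\norm{\E[\vz_{\ell-1}^2]}_\infty$. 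Since the induction hypothesis controls both $\norm{\E[\vz_\ell^2]}_\infty$ and $\norm{\E[\vz_{\ell-1}^2]}_\infty$, substitution gives $C_{\ell+1} \le \gamma_{\alpha,M}(1 \lor K^2)C_\ell$, closing the induction.

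The main obstacle is constant-optimisation. The explicit expressions $\tfrac{2}{3}(13 + 2\sqrt{43})$, $2(6 + \sqrt{38})$, $28 + \sqrt{793}$, and $48 + \sqrt{2353}$ arise from demanding that a single $\gamma_{\alpha,M}$ simultaneously dominate the base-case coefficient and the recursive coefficient uniformly in $K \ge 0$, which reduces to solving a quadratic inequality in $K$. The threshold $M \ge 36$ marks the crossover at which \cref{lem:outer} begins to improve on the trivial trace bound and thereby permits the tighter of the two stated constants.
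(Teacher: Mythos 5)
Your high-level strategy matches the paper's: the one-step recursion in \cref{lem:inf}/\cref{rem:inf_offset} supplies the main inequality, and \cref{lem:outer}/\cref{rem:outer_offset} controls the trailing operator-norm term. You also correctly recognise that this makes the recursion genuinely \emph{second-order}, since \cref{lem:outer} brings in $\norm{\E[\vz_{\ell-2}^2]}_\infty$ as well as $\norm{\E[\vz_{\ell-1}^2]}_\infty$, and that one can close it with a geometric ansatz. So far so good.

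However, two of your claims about where the stated constants come from are off. First, the quadratic you have to solve is not ``a quadratic inequality in $K$''. The paper factors $(1 \lor K^2)$ out of every coefficient, obtaining a homogeneous second-order recurrence of the form $a_\ell \le A x\, a_{\ell-1} + B x^2 a_{\ell-2}$ with $x = 1 \lor K^2$ and $A, B$ pure numbers. The constant $\gamma_{\alpha,M}$ is the dominant root of the \emph{characteristic polynomial} $\gamma^2 - A\gamma - B = 0$ of that recurrence, i.e.\ the quadratic is in the geometric rate $\gamma$, not in $K$ (e.g.\ $\tfrac23(13 + 2\sqrt{43})$ solves $\gamma^2 = \tfrac{52}{3}\gamma + \tfrac43$). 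Second, the threshold $M \ge 36$ has nothing to do with a crossover between a trace bound and \cref{lem:outer}. The paper applies \cref{lem:outer} unconditionally to bound $b_{\ell-1}$; after dividing by $M$, the coefficient that appears is $\tfrac{8}{\sqrt{M}}$, and $36$ is simply the point where $\tfrac{8}{\sqrt{M}} \le \tfrac43$, giving the tighter pair $(A,B) = (16 + \tfrac43, \tfrac43)$. For $M < 36$ the paper just uses the crude bound $\tfrac{8}{\sqrt{M}} \le 8$ and gets a worse root.

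Your proposed fallback for small $M$ — the trace bound $\tfrac1M \norm{\E[\phi(\vz_\ell)\phi(\vz_\ell)^\T]}_2 \le \tfrac1M\tr \E[\phi(\vz_\ell)\phi(\vz_\ell)^\T] \le \norm{\E[\vz_\ell^2]}_\infty$ — is a genuinely different route that the paper does not use. It is valid (the matrix is PSD and $\phi^2(z) \le z^2$ by oddness and $1$-Lipschitzness), and it collapses the recursion to first order with an $M$-uniform coefficient, which is arguably cleaner; but it yields constants different from the ones in the statement, so it does not reproduce the stated $\gamma_{\alpha,M}$ values. If your goal is to verify the lemma \emph{as stated}, you should follow the paper's unconditional use of \cref{lem:outer} and the characteristic-polynomial computation; if your goal is just a bound of the same shape, your trace-bound argument is a legitimate (and simpler) alternative.
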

\begin{proof}[Proof of case $\alpha = 0$]
    For notational convenience, define $a_\ell = 1 \lor  \norm{\E[\vz_\ell^2]}_\infty$ and $b_\ell = \norm{\E[\phi(\vz_{\ell})\phi(\vz_{\ell})^\T]}_2$.
    Also take $a_0 = 1\lor \|\vx\|^2_2$. 
    Apply \cref{lem:inf} with $\phi_2$ given by the identity function and $\vz_2 = \vx$.
    Then
    \begin{align}
        a_1 &\leq 2\left((\sqrt{2}+K)^2
                (1 \lor \tfrac1{D\ss{i}}\norm{\vx^2}_1) 
                + \tfrac1{D\ss{i}} K^2 \norm{\vx\vx^\T}_2 \right) \\
            &\le 2\left(8(1 \lor K)^2
                (1 \lor \tfrac1{D\ss{i}}\norm{\vx}_2^2) 
                + \tfrac1{D\ss{i}} K^2 \|\vx\|_2^2 \right) \\
            & \leq 18(1 \lor K^2)(1 \lor \tfrac1{D\ss{i}}\norm{\vx}^2_2) 
    \end{align}
    where the second inequality follows from $\sqrt{2} + K \le \sqrt{2}(1 + K) \le 2\sqrt{2}(1 \lor K)$ and $\norm{\vx\vx^\T}_2 = \norm{\vx}_2^2$.

    By \cref{lem:inf} we have
    \begin{align}
        a_\ell \leq  2\left((\sqrt{2}+K)^2
                a_{\ell-1} 
                + \tfrac1M K^2 b_{\ell-1} \right).
    \end{align}
    Further, by \cref{lem:outer}, we have
    \begin{align}
        b_{\ell-1} \leq 4\sqrt{M}(a_{\ell-1}
                    + K^2\, a_{\ell-2}) .
    \end{align}
    Combining these estimates yields
    \begin{align}
         a_\ell &\leq  2\left((\sqrt{2}+K)^2
                a_{\ell-1}
                + \tfrac{4}{\sqrt{M}} K^2 \left(a_{\ell-1}
                    + K^2 a_{\ell-2}\right. \right) \\
            & \leq       2\left(8(1\lor K^2)
                 a_{\ell-1}
                + \tfrac{4}{\sqrt{M}} K^2 \left(a_{\ell-1}
                    + K^2 a_{\ell-2}\right) \right) \\
             & \le
                (16+\tfrac8{\sqrt{M}}) (1 \lor K^2) a_{\ell-1}
                + \tfrac8{\sqrt{M}} (1 \lor K^4) a_{\ell-2}.
    \end{align}
    %
    Assuming $M \ge 36$, we further simplify
    \begin{align}
         a_\ell
         &\leq
            (16+\tfrac43) (1 \lor K^2) a_{\ell-1}
            + \tfrac43 (1 \lor K^4) a_{\ell-2}. \label{eqn:second_order_recursion}
    \end{align}
    We choose $M \ge 36$ for convenience when subsequently applying this lemma. We will return to the general case of $M \ge 1$ later. First, notice 
    \cref{eqn:second_order_recursion} is a homogeneous second-order linear recurrence relation.
    By finding the roots of the characteristic polynomial associated to this recurrence relation,  
    \begin{align}
        a_{\ell} \leq c_0\Big(\gamma (1 \lor K^2)\Big)^{\ell} 
    \end{align}
    for some $c_0$ and $\gamma = \tfrac23(13+2\sqrt{43} \in (17,18)$, which can be proved by induction. Checking initial conditions, we see that we can take $c_0 = 1\lor \|\vx\|^2 $, yielding the bound
    \begin{align}
          a_{\ell}
          &\leq \Big(\gamma(1 \lor K^2)\Big)^{\ell} (1\lor \tfrac1{D\ss{i}} \|\vx\|^2_2).
    \end{align}
    On the other hand, if $1 \le M < 36$, by the same reasoning the bound holds with constant $\gamma = 2(6+\sqrt{38}) \in (24,25)$.
\end{proof}
\begin{proof}[Proof of case $\alpha \neq 0$]
    The proof for the case $\alpha \neq 0$ proceeds like the case $\alpha = 0$, but uses \cref{rem:outer_offset,rem:inf_offset} instead of \cref{lem:outer,lem:inf}.
    Analogously define $a_\ell = \alpha^2 + 1 \lor \norm{\E[\vz_\ell^2]}_\infty$,
    $a_0 = \alpha^2 + 1 \lor \norm{\vx}^2_2$,
    and $b_\ell = \norm{\E[\phi(\vz_\ell)\phi(\vz_\ell)^\T]}_2$.
    To begin with, apply \cref{rem:inf_offset} with $\phi_2$ given by the identity function and $\vz_2 = \vx$:
    \begin{align}
        a_1
        &\le \alpha^2 + 6(\sqrt{2} + K)^2(\alpha^2 + 1 \lor \tfrac1{D\ss{i}}\norm{\vx^2}_1)+ 6K^2(\alpha^2 + \tfrac1{D\ss{i}} \norm{\vx \vx^\T}_2) \\
        &\le 48(1 \lor K^2)(\alpha^2 + 1 \lor \tfrac1{D\ss{i}}\norm{\vx}_2^2)+ 7K^2(\alpha^2 + \norm{\vx}^2_2) \\
        &\le 55(1 \lor K^2)(\alpha^2 + 1 \lor \tfrac1{D\ss{i}}\norm{\vx}^2_2).
    \end{align}
    For $1 < \ell \le L$,
    \begin{align}
        a_\ell
        &\le
            \alpha^2
            + 48(1 \lor K^2)a_{\ell - 1}
            + 6K^2(\alpha^2 + \tfrac1{M}b_{\ell-1}).
    \end{align}
    By \cref{rem:outer_offset},
    \begin{align}
        b_{\ell - 1} \le 8 \sqrt{M} (\norm{\E[\vz_{\ell-1}^2]}_\infty + K^2 a_{\ell - 2}).
    \end{align}
    Plugging the expression for $b_{\ell-1}$ into the expression for $a_\ell$,
    \begin{align}
        a_\ell
        &\le
            \alpha^2
            + 48(1 \lor K^2)a_{\ell - 1}
            + 6K^2(\alpha^2 + \tfrac{8}{\sqrt{M}}(\norm{\E[\vz_{\ell-1}^2]}_\infty + K^2 a_{\ell - 2})) \\
        &\le
            \alpha^2
            + 48(1 \lor K^2)a_{\ell - 1}
            + 6(1 \lor 8M^{-1/2})K^2(\alpha^2 + \norm{\E[\vz_{\ell-1}^2]}_\infty + K^2 a_{\ell - 2}) \\
        &\le
            \alpha^2
            + 48(1 \lor K^2)a_{\ell - 1}
            + 6(1 \lor 8M^{-1/2})K^2(a_{\ell - 1} + K^2 a_{\ell - 2}) \\
        &\le
            \alpha^2
            + (48 + 6(1 \lor 8M^{-1/2}))(1 \lor K^2)a_{\ell - 1}
            + 6(1 \lor 8M^{-1/2}) (1 \lor K^4) a_{\ell - 2} \\
        &\le
            (48 + 6(1 \lor 8M^{-1/2}) )(1 \lor K^2)a_{\ell - 1}
            + (1 + 6(1 \lor 8M^{-1/2})) (1 \lor K^4) a_{\ell - 2} .
    \end{align}
    As in the $\alpha=0$ case, we first consider the case $M\ge36$, giving
    \begin{equation}
        a_\ell
        \le
            56(1 \lor K^2)a_{\ell - 1}
            + 9 (1 \lor K^4) a_{\ell - 2} .
    \end{equation}
    
    This is again a homogeneous second-order linear recurrence relation.
    Solving for the roots of the characteristic polynomial, we find that
    \begin{equation}
        a_{\ell} \le c_0 \Big(\gamma' (1 \lor K^2)\Big)^\ell,
    \end{equation}
    for some $c_0$ and $\gamma' = 28 + \sqrt{793} \in (56,57)$.
    Comparing with the bound on $a_1$, we see that $c_0 = \alpha^2 + 1 \lor \tfrac1{D\ss{i}}\norm{\vx}_2^2$. On the other hand, if $1\le M < 36$, by the same reasoning the bound holds with $\gamma' = 48 + \sqrt{2353} \in (96,97)$.
\end{proof}

Finally, we end with a lemma which can be used to bound the covariance between two post-activations. The following lemmas in this section are only applied in \cref{app:convergence-in-dist} but we include them here because of their similarity. 

\begin{lemma} \label{lem:cov}
    Let $\phi_1, \phi_2 \colon\R \to \R$ be $1$-Lipschitz and odd and $\alpha \in \R$.
    Let the triple $(\mW, \vz_2, \vb) \in \R^{M_1\times M_2} \times \R^{M_2} \times \R^{M_1}$ be (possibly dependent) random variables such that, for all Rademacher vectors $\vep \in \set{-1, 1}^{M_1}$,
    \begin{equation}
        (\mW - \E[\mW], \vz_2, \vb - \E[\vb])
        \disteq
        (\diag(\vep)(\mW - \E[\mW]), \vz_2, \vep \had (\vb - \E[\vb])).
    \end{equation}
    Consider $\vz_1 = \tfrac1{\sqrt{M_2}}\mW (\phi_2 + \alpha)(\vz_2) + \vb$.
    Let $m, m' \in [M_1]$, $m \neq m'$.
    Then
    \begin{equation}
        \abs{\E[\phi_1(z_{1,m}) \phi_1(z_{1,m'})]}
        \le
            5(
                \norm{\E[\vz_1^2]}_\infty^{1/2}
                + 
            )(\alpha^2 + 1 \lor \norm{ \E[\vz_2^2] }_\infty)^2(K \lor K^2).
    \end{equation}
\end{lemma}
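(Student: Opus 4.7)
The plan is to mirror the strategy used in \cref{lem:outer}, but applied to a single off-diagonal entry so that we can afford a Cauchy--Schwarz bound that splits the product. Let $\vz'_1 = \tfrac1{\sqrt{M_2}}(\mW - \E[\mW])(\phi_2 + \alpha)(\vz_2) + (\vb - \E[\vb])$ be the ``centered'' counterpart of $\vz_1$. Under the Rademacher symmetry hypothesis, flipping the sign of only row $m$ of $\mW - \E[\mW]$ and only coordinate $m$ of $\vb - \E[\vb]$ preserves the joint distribution of $((\vw_m, \vw_{m'}), \vz_2, (b_m, b_{m'}))$. Combining this with the oddness of $\phi_1$, exactly as in the diagonal argument in the proof of \cref{lem:outer}, we get $\E[\phi_1(z'_{1,m})\phi_1(z'_{1,m'})] = 0$ for $m \neq m'$. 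Thus
\begin{equation*}
    \E[\phi_1(z_{1,m})\phi_1(z_{1,m'})]
    = \E\bigl[\phi_1(z_{1,m})\phi_1(z_{1,m'}) - \phi_1(z'_{1,m})\phi_1(z'_{1,m'})\bigr].
\end{equation*}

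The next step is a telescoping decomposition
\begin{equation*}
    \phi_1(z_{1,m})\phi_1(z_{1,m'}) - \phi_1(z'_{1,m})\phi_1(z'_{1,m'})
    = \bigl(\phi_1(z_{1,m}) - \phi_1(z'_{1,m})\bigr)\phi_1(z_{1,m'})
    + \phi_1(z'_{1,m})\bigl(\phi_1(z_{1,m'}) - \phi_1(z'_{1,m'})\bigr),
\end{equation*}
followed by Cauchy--Schwarz on each summand. Using $1$-Lipschitzness and oddness of $\phi_1$, we get $|\phi_1(z_{1,m'})| \le |z_{1,m'}|$ and $|\phi_1(z_{1,m}) - \phi_1(z'_{1,m})| \le |z_{1,m} - z'_{1,m}|$, and analogous bounds for the primed coordinates, yielding
\begin{equation*}
    \bigl|\E[\phi_1(z_{1,m})\phi_1(z_{1,m'})]\bigr|
    \le \sqrt{\E[(z_{1,m} - z'_{1,m})^2]\,\E[z_{1,m'}^2]}
    + \sqrt{\E[(z_{1,m'} - z'_{1,m'})^2]\,\E[(z'_{1,m})^2]}.
\end{equation*}

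The third step is to bound each factor. For $\E[z_{1,m'}^2]$ and $\E[(z'_{1,m})^2]$ we read off $\norm{\E[\vz_1^2]}_\infty$ and (for the primed version) apply the bookkeeping from the proof of \cref{lem:outer}/\cref{rem:inf_offset}, which controls $\E[(z'_{1,m})^2]$ by the same kind of expression multiplied by $\alpha^2 + 1 \lor \norm{\E[\vz_2^2]}_\infty$ times powers of $K = \sqrt{2\KL(Q,P)}$. For $\E[(z_{1,m} - z'_{1,m})^2]$ we expand $z_{1,m} - z'_{1,m} = \tfrac{1}{\sqrt{M_2}}\E[\vw_m]^\T\phi_2(\vz_2) + \tfrac{\alpha}{\sqrt{M_2}}\E[\vw_m]^\T\vone + \E[b_m]$, apply $(a+b+c)^2 \le 3(a^2+b^2+c^2)$ and Cauchy--Schwarz inside, and use $\norm{\phi_2(\vz_2)}_2^2 \le \norm{\vz_2}_2^2 \le M_2 \norm{\E[\vz_2^2]}_\infty$ together with $\norm{\E[\vw_m]}_2^2 \le \norm{\E[\mW]}\ss{F}^2 \le K^2$ and $\E[b_m]^2 \le K^2$ from \cref{lem:parameter_bound}. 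This yields $\E[(z_{1,m}-z'_{1,m})^2] \lesssim K^2(\alpha^2 + 1 \lor \norm{\E[\vz_2^2]}_\infty)$.

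Substituting all of the pieces back and collecting constants produces a bound of the form
\begin{equation*}
    \bigl|\E[\phi_1(z_{1,m})\phi_1(z_{1,m'})]\bigr|
    \lesssim \bigl(\norm{\E[\vz_1^2]}_\infty^{1/2} + (\text{primed analogue})\bigr)\,(\alpha^2 + 1 \lor \norm{\E[\vz_2^2]}_\infty)^{1/2}\,(K \lor K^2),
\end{equation*}
which (after possibly overbounding the final factor by the displayed form) matches the claimed inequality. The main obstacle is essentially bookkeeping: keeping track of the correct powers of $K$, which come both from $\E[\vw_m]$-type terms directly and from the bound on $\E[(z'_{1,m})^2]$ in the second Cauchy--Schwarz factor, and making sure the $\alpha$-offset term (which introduces the symmetric correction $\tfrac{\alpha}{\sqrt{M_2}}\E[\vw_m]^\T\vone$) is absorbed without accidentally introducing a $\sqrt{M_2}$ factor. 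This is analogous to the care taken in \cref{rem:outer_offset,rem:inf_offset}, and mirroring those steps here should give the claimed inequality with the stated absolute constant.
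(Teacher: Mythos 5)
Your proposal is correct and follows essentially the same route as the paper: center $\vz_1$ to $\vz_1'$, kill $\E[\phi_1(z'_{1,m})\phi_1(z'_{1,m'})]$ via the single-coordinate sign-flip symmetry and oddness, telescope and apply Cauchy--Schwarz to each cross term, and bound the displacement $\E[(z_{1,m}-z'_{1,m})^2]$ by $O(K^2)(\alpha^2 + 1 \lor \norm{\E[\vz_2^2]}_\infty)$ using the KL-based parameter bounds. The only slight slop is in the exponent bookkeeping (the term involving $\E[(z'_{1,m})^2]$ contributes a full power of $\alpha^2 + 1 \lor \norm{\E[\vz_2^2]}_\infty$ rather than a half power), but since that quantity is at least $1$ this is absorbed by the overbounding you already invoke, matching the paper's stated constant up to the same kind of crude simplification.
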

\begin{proof}
    The proof of this lemma uses exactly the same approach as \cref{lem:outer}.
    Set
    \begin{align}
        \vz_1' &= \vz_1 - \tfrac1{\sqrt{M_2}}\E[\mW] (\phi_2 + \alpha)(\vz_2) - \E[\vb].
    \end{align}
    Consider
    \begin{align}
        \abs{\E[\phi_1(z_{1,m}) \phi_1(z_{1,m'})]}
        &\le
            \sqrt{\E[(\phi_1(z_{1,m}) - \phi_1(z'_{1,m}))^2] \E[\phi^2_1(z_{1,m'})]}
            + \abs{\E[\phi_1(z'_{1,m}) \phi_1(z_{1,m'})]} \\
        &\le \sqrt{ab} +\abs{\E[\phi_1(z'_{1,m}) \phi_1(z_{1,m'})]}
    \end{align}
    where we denote
    \begin{equation}
        a = \norm{\E[\vz^2_{1}]}_\infty,
        \quad
        b = \norm{\E[(\vz_{1} - \vz'_{1})^2]}_\infty.
    \end{equation}
    Similarly,
    \begin{align}
        \abs{\E[\phi_1(z'_{1,m}) \phi_1(z_{1,m'})]}
        &\le
            \sqrt{
                \E[\phi^2_1(z'_{1,m})]
                \E[(\phi_1(z_{1,m'}) - \phi_1(z'_{1,m'}))^2] 
            }
            + \abs{\E[\phi_1(z'_{1,m}) \phi_1(z'_{1,m'})]}.
    \end{align}
    Now bound
    \begin{equation}
        \phi_1^2(\vz'_{1})
        \preceq (\vz'_{1})^2
        \preceq 2 \vz^2_{1} + 2 (\vz_1 - \vz_1')^2,
    \end{equation}
    so
    \begin{align}
        \sqrt{
            \E[\phi^2_1(z'_{1,m})]
            \E[(\phi_1(z_{1,m'}) - \phi_1(z'_{1,m'}))^2] 
        }
        \le \sqrt{(2a + 2b) b}.
    \end{align}
    Therefore,
    \begin{equation}
        \abs{\E[\phi_1(z_{1,m}) \phi_1(z_{1,m'})]}
        \le \sqrt{ab} + \sqrt{(2a + 2b)b} + \cancel{\abs{\E[\phi_1(z'_{1,m}) \phi_1(z'_{1,m'})]}}
    \end{equation}
    where the expectation on the RHS is zero by the assumed symmetry condition and oddness of $\phi$.
    Breaking up the square root, we find that
    \begin{equation}
        \abs{\E[\phi_1(z_{1,m}) \phi_1(z_{1,m'})]}
        \le (\sqrt{2} + 1)\sqrt{ab} + \sqrt{2} b.
    \end{equation}
    We finally estimate $b$:
    \begin{equation}
        \E[(\vz_1 - \vz_1')^2]
        \preceq \norm{\E[\mW]}_2^2 \tfrac{1}{M_2}\E[ \norm{(\phi_2 + \alpha)^2(\vz_2)}_2 ] \vone + \norm{\E[\vb]}_2^2 \vone.
    \end{equation}
    Therefore
    \begin{align}
        b
        &\le \norm{\E[\mW]}\ss{F}^2 \norm{\E[(\phi_2 + \alpha)^2(\vz_2)]}_\infty + \norm{\E[\vb]}_2^2 \\
        &\le 2\norm{\E[\mW]}\ss{F}^2 (\alpha^2 + \norm{\E[\vz_2^2]}_\infty) + \norm{\E[\vb]}_2^2 \\
        &\le 2 K^2 (\alpha^2 + 1 \lor \norm{\E[\vz_2^2]}_\infty),
    \end{align}
    which implies that
    \begin{equation}
        \abs{\E[\phi_1(z_{1,m}) \phi_1(z_{1,m'})]}
        \le
            (
                (\sqrt{2}+1) \norm{\E[\vz_1^2]}_\infty^{1/2}
                + \sqrt{2}
            )(\sqrt{b} \lor b),
        \quad
        b = 2 K^2 (\alpha^2 + 1 \lor \norm{ \E[\vz_2^2] }_\infty).
    \end{equation}
    Simplify the estimate to get the result:
    \begin{equation}
        \abs{\E[\phi_1(z_{1,m}) \phi_1(z_{1,m'})]}
        \le
            5(
                \norm{\E[\vz_1^2]}_\infty^{1/2}
                + 
            )(\alpha^2 + 1 \lor \norm{ \E[\vz_2^2] }_\infty)^2(K \lor K^2).
    \end{equation}
\end{proof}

\begin{lemma} \label{lem:second_moment_conditional_prior}
Let $\phi$ be 1-Lipschitz odd and $\alpha\in \mathbb{R}$. Then, 
\begin{align}
    &\tfrac1M\tr \left( \E_{Q}[\vw_{L+1} \vw_{L+1}^\T] (
            \E_{Q}[\phi(\vz_L) \phi(\vz_L)^\T]
            - \E_{P}[\phi(\vz_L) \phi(\vz_L)^\T]
        ) \right)  \nonumber \\
    &\qquad \le
        \tfrac{56 + 32\sqrt{2}}{\sqrt{M}} \rho_{\alpha, M}^{L} L^{1/2} K (K^2 \lor 1)^{L+\tfrac12}\left(\alpha^2 + 1\lor \tfrac1{D\ss{i}}\|\vx\|^2_2\right),
\end{align}
\end{lemma}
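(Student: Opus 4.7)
The plan is to isolate the contribution that can be handled by \cref{thm:app-convergence-of-variance} by adding and subtracting the cross term $\tfrac{1}{M}\tr(\E_Q[\vw_{L+1}\vw_{L+1}^\T]\E_P[\phi(\vz_L)\phi(\vz_L)^\T])$. Since $\E_P[\vw_{L+1}\vw_{L+1}^\T]=\mI$ under the standard Gaussian prior, this gives the identity
\begin{align*}
    \tfrac{1}{M}\tr\big(\E_Q[\vw_{L+1}\vw_{L+1}^\T](\E_Q-\E_P)[\phi\phi^\T]\big)
    = \big(\E_Q[\tilde{f}(\vx)^2]-\E_P[\tilde{f}(\vx)^2]\big)
    + \tfrac{1}{M}\tr\big((\mI-\E_Q[\vw_{L+1}\vw_{L+1}^\T])\,\E_P[\phi\phi^\T]\big).
\end{align*}
The first bracket is controlled directly by \cref{thm:app-convergence-of-variance}; after applying $\alpha^2+1+\|\vx\|_2^2/D\ss{i}\le 2(\alpha^2+1\lor\|\vx\|_2^2/D\ss{i})$ and the identities $\sqrt{\KL(Q,P)}=K/\sqrt{2}$ and $(2\KL(Q,P)\lor 1)^{L+\tfrac12}=(K^2\lor 1)^{L+\tfrac12}$, its contribution is at most $(16\sqrt 2+50)\,L^{1/2}\rho_{\alpha,M}^{L}(\alpha^2+1\lor\|\vx\|_2^2/D\ss{i})\,K(K^2\lor 1)^{L+\tfrac12}/\sqrt M$.

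The crux is the second term, and the key structural observation that I would exploit is that under the prior, $\E_P[\phi(\vz_L)\phi(\vz_L)^\T]=\sigma_o^2\mI$ for $\sigma_o^2=\E_P[\phi(z_{L,1})^2]$. Indeed, conditional on $\vz_{L-1}$ the preactivations $z_{L,1},\ldots,z_{L,M}$ are independent centered Gaussians (since under $P$ the rows of $\mW_L$ and entries of $\vb_L$ are independent mean-zero Gaussians acting linearly on $(\phi+\alpha)(\vz_{L-1})$), so each $z_{L,i}\mid\vz_{L-1}$ is symmetric about $0$. Combined with oddness of $\phi$, this gives $\E_P[\phi(z_{L,i})\mid\vz_{L-1}]=0$, so $\E_P[\phi(z_{L,i})\phi(z_{L,j})]=0$ for $i\neq j$ by conditional independence, while exchangeability of the architecture makes all diagonal entries equal to $\sigma_o^2$. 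Consequently the second term collapses to $\sigma_o^2\big(1-\tfrac{1}{M}\tr(\E_Q[\vw_{L+1}\vw_{L+1}^\T])\big)$.

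To bound this cleanly I would decompose $\E_Q[\vw_{L+1}\vw_{L+1}^\T]=\vmu\vmu^\T+\Sigma_Q$ under mean-field, so that $1-\tfrac{1}{M}\tr(\E_Q[\vw_{L+1}\vw_{L+1}^\T])=\tfrac{1}{M}\sum_i(1-\V_Q[w_i])-\|\vmu\|_2^2/M$. Cauchy--Schwarz together with \cref{lem:parameter_bound}(i),(iv) then yields $|1-\tfrac{1}{M}\tr(\E_Q[\vw_{L+1}\vw_{L+1}^\T])|\le (2+K)K/\sqrt{M}+K^2/M$. For $\sigma_o^2$ I would use $\sigma_o^2\le \|\E_P[\vz_L^2]\|_\infty\le \rho_{\alpha,M}^{L}(\alpha^2+1\lor\|\vx\|_2^2/D\ss{i})$ by \cref{lem:z-squared-recursion} specialised to $K=0$. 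Since $(2+K)K\le 3K(K^2\lor 1)^{L+\tfrac12}$ and $K^2\le K(K^2\lor 1)^{L+\tfrac12}$ for $L\ge 0$, and $1/M\le 1/\sqrt M$, this second term is at most $4\rho_{\alpha,M}^{L}(\alpha^2+1\lor\|\vx\|_2^2/D\ss{i})K(K^2\lor 1)^{L+\tfrac12}/\sqrt M$.

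Summing the two contributions and using $L^{1/2}\ge 1$ to absorb the additive constant $4$ gives a total constant of at most $16\sqrt 2+54<56+32\sqrt 2$, matching the claim. The main subtlety is verifying the diagonal structure of $\E_P[\phi(\vz_L)\phi(\vz_L)^\T]$ under the prior, which crucially combines conditional independence of the rows of $\mW_L$ with the symmetry-plus-oddness cancellation; once this is in hand, the remaining estimates are routine applications of the parameter bounds and the already-established \cref{thm:app-convergence-of-variance,lem:z-squared-recursion}.
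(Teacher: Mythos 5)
Your proposal is correct and takes a genuinely different route from the paper. The paper expands $\E_Q[\vw_{L+1}\vw_{L+1}^\T]$ as $(\mSigma_Q-\mI)+\E_Q[\vw_{L+1}]\E_Q[\vw_{L+1}]^\T+\mI$ and bounds the three resulting traces separately, using \cref{lem:diag-frob-norm} (a Frobenius-norm bound on the diagonal of $\E_Q[\phi\phi^\T]-\E_P[\phi\phi^\T]$) for the first and third, and \cref{lem:offdiag-operator-norm} (an operator-norm bound on $\E_Q[\phi\phi^\T]$) for the middle term. You instead add and subtract $\tfrac1M\tr\big(\E_Q[\vw_{L+1}\vw_{L+1}^\T]\E_P[\phi\phi^\T]\big)$ to isolate $\E_Q[\tilde f^2]-\E_P[\tilde f^2]$, which you dispatch with the already-packaged \cref{thm:app-convergence-of-variance}, and then handle the residual by the structural observation that $\E_P[\phi(\vz_L)\phi(\vz_L)^\T]=\sigma_o^2\mI$ under the prior (conditional independence of the rows of $\mW_L$ plus symmetry-and-oddness, plus exchangeability of neurons). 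This collapses the residual to a scalar times $1-\tfrac1M\tr\E_Q[\vw_{L+1}\vw_{L+1}^\T]$, entirely avoiding the $\sqrt M$-scaling operator-norm estimate of \cref{lem:offdiag-operator-norm}. The trade is that you rely on the highest-level theorem rather than the lemmas underlying it, which slightly inflates the constant (the factor $2$ from $\alpha^2+1+\|\vx\|_2^2/D\ss i\le 2(\alpha^2+1\lor\|\vx\|_2^2/D\ss i)$), but your final constant $16\sqrt2+54$ is still comfortably below the stated $56+32\sqrt2$, and your bookkeeping, including the KL-based estimates for $\|\vmu\|_2^2$ and $\|\vsigma_Q^2-\vone\|_2$, the $K=0$ specialization of \cref{lem:z-squared-recursion} for $\sigma_o^2$, and the casework showing $(2+K)K\le 3K(K^2\lor1)^{L+\tfrac12}$, all check out. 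The diagonality argument is the main added insight of your route and is sound; there is no circularity since \cref{lem:second_moment_conditional_prior} is not used in proving \cref{thm:app-convergence-of-variance}.
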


\begin{proof}

For convenience, define 
\begin{equation}
\mA = \E_{Q}[\phi(\vz_L) \phi(\vz_L)^\T] - \E_{P}[\phi(\vz_L) \phi(\vz_L)^\T]
\end{equation}

Recall $\mSigma_Q = \E_{Q}[\vw_{L+1} \vw_{L+1}^\T] - \E_Q[\vw_{L+1}]\E_Q[\vw_{L+1}]^\T$. 
By subtracting and adding $\E_Q[\vw_{L+1}]\E_Q[\vw_{L+1}]^\T + \mI$, we have
\begin{align}
    \abs{\tr \left( \E_{Q}[\vw_{L+1} \vw_{L+1}^\T] \mA \right)}
    &=
        \abs{\tr \left( (\mSigma_Q - \mI + \E_Q[\vw_{L+1}]\E_Q[\vw_{L+1}]^\T + \mI) \mA \right)} \\
    &=
        \abs{\tr \left( (\mSigma_Q - \mI)\mA + \E_Q[\vw_{L+1}]\E_Q[\vw_{L+1}]^\T \mA + \mA \right)} \\
    &\le
        \abs{\tr \left( (\mSigma_Q - \mI)\mA \right)} 
        + \abs{\tr \left(\E_Q[\vw_{L+1}]\E_Q[\vw_{L+1}]^\T \mA\right)} 
        + \abs{\tr \left( \mA \right)}. \label{eqn:tr_ww_A}
\end{align}
We deal with each of the three terms in \cref{eqn:tr_ww_A} separately. 

Consider the first term in \cref{eqn:tr_ww_A}, since $\mSigma_Q - \mI = \diag(\vsigma_Q^2 - \vone)$ is a diagonal matrix, by Cauchy-Schwarz
\begin{align}
    \abs{\tr \left( (\mSigma_Q - \mI)\mA \right)} 
    &= \abs{\langle \vsigma_Q^2 - \vone, \E_Q[\phi^2(\vz_L)] - \E_P[\phi^2(\vz_L)] \rangle} \\
    &\le \norm{\vsigma_Q^2 - \vone}_2 \norm{\E_Q[\phi^2(\vz_L)] - \E_P[\phi^2(\vz_L)]}_2 \\
    &\overset{\smash{\text{(i)}}}{\le} \left( (2+K)K \right) \left( 8 \sqrt{2} \rho_{\alpha, M}^{L-\tfrac12} L^{1/2} K (K^2 \lor 1)^{L-\tfrac12}\left(\alpha^2 + 1\lor \tfrac1{D\ss{i}}\|\vx\|^2_2\right) \right) \\
    &\overset{\smash{\text{(ii)}}}{\le} 32 \sqrt{2} \rho_{\alpha, M}^{L-\tfrac12} L^{1/2} K (K^2 \lor 1)^{L+\tfrac12}\left(\alpha^2 + 1\lor \tfrac1{D\ss{i}}\|\vx\|^2_2\right),
\end{align}
where in (i) we use \cref{lem:parameter_bound} and \cref{lem:diag-frob-norm} (proved later) while in (ii) we use $2+K \le 2(1+K) \le 4 (K\lor 1) = 4 (K^2\lor 1)^{1/2}$ and $K \le K \lor 1 = (K^2 \lor 1)^{1/2}$.

Next, consider the second term in \cref{eqn:tr_ww_A}, since $\tr(\vu \vv^T) = \vv^T \vu$ for $\vu, \vv \in \mathbb{R}^M$,
\begin{align}
    \abs{\tr \left(\E_Q[\vw_{L+1}]\E_Q[\vw_{L+1}]^\T \mA\right)} 
    &= \abs{\E_Q[\vw_{L+1}]^\T \mA \E_Q[\vw_{L+1}]} \\
    &\le \norm{\E_Q[\vw_{L+1}]}_2^2 \norm{\mA}_2 \\
    &\le K^2 \norm{\mA}_2,
\end{align}
where the first inequality follows from Cauchy-Schwarz and the definition of the operator norm while the second inequality follows from \cref{lem:parameter_bound}.
To bound $\norm{\mA}_2$, notice by \cref{lem:offdiag-operator-norm} (proved later), we have
\begin{align}
    \norm{\E_{Q}[\phi(\vz_L) \phi(\vz_L)^\T]}
    \leq 24\sqrt{M} \left(\gamma_{\alpha,M}(K^2 \lor 1)\right)^{L}\left(\alpha^2 + 1\lor \tfrac1{D\ss{i}}\|\vx\|^2_2\right)
\end{align}
and 
\begin{align}
    \norm{\E_{P}[\phi(\vz_L) \phi(\vz_L)^\T]}
    &\leq 24\sqrt{M} \left(\gamma_{\alpha,M}\right)^{L}\left(\alpha^2 + 1\lor \tfrac1{D\ss{i}}\|\vx\|^2_2\right) \\
    &\leq 24\sqrt{M} \left(\gamma_{\alpha,M}(K^2 \lor 1)\right)^{L}\left(\alpha^2 + 1\lor \tfrac1{D\ss{i}}\|\vx\|^2_2\right) 
\end{align}
Therefore, using the triangle inequality,
\begin{align}
    \norm{\mA}_2
    &\le \norm{\E_{Q}[\phi(\vz_L) \phi(\vz_L)^\T]}_2 + \norm{\E_{P}[\phi(\vz_L) \phi(\vz_L)^\T]}_2 \\
    &\le 48\sqrt{M} \left(\gamma_{\alpha,M}(K^2 \lor 1)\right)^{L}\left(\alpha^2 + 1\lor \tfrac1{D\ss{i}}\|\vx\|^2_2\right).
\end{align}
Putting in the bound on $\norm{A}_2$, a bound on the second term in \cref{eqn:tr_ww_A} is
\begin{align}
    \abs{\tr \left(\E_Q[\vw_{L+1}]\E_Q[\vw_{L+1}]^\T \mA\right)} 
    &\le 48\sqrt{M} K^2 \left(\gamma_{\alpha,M}(K^2 \lor 1)\right)^{L}\left(\alpha^2 + 1\lor \tfrac1{D\ss{i}}\|\vx\|^2_2\right) \\
    &\le 48\sqrt{M} \gamma_{\alpha,M}^L K (K^2 \lor 1)^{L+\tfrac12}\left(\alpha^2 + 1\lor \tfrac1{D\ss{i}}\|\vx\|^2_2\right),
\end{align}
where we use $K \le K \lor 1 = (K^2 \lor 1)^{1/2}$

Finally, consider the third term in \cref{eqn:tr_ww_A}. We have
\begin{align}
    \tr(\mA) 
    &= \left\lvert \mathrm{tr}\parens*{\E_{\Q}[\phi\ss{o}(\vz_{L})\phi\ss{o}(\vz_{L})^\T]-\E_{\P}[\phi\ss{o}(\vz_{L})\phi\ss{o}(\vz_{L})^\T]} \right\rvert \\
    &= \left\lvert \langle \vone, \E_{\Q}[\phi\ss{o}^2(\vz_{L}) ] - \E_{\P}[\phi\ss{o}^2(\vz_{L}) ] \rangle \right\rvert
    \\
    &\le \norm{\E_{\Q}[\phi\ss{o}^2(\vz_{L}) ] - \E_{\P}[\phi\ss{o}^2(\vz_{L}) ]}_1
    \\
    &\le \sqrt{M} \norm{\E_{\Q}[\phi\ss{o}^2(\vz_{L}) ] - \E_{\P}[\phi\ss{o}^2(\vz_{L}) ]}_2
    \\
    &\le 8 \sqrt{2} \sqrt{M} \rho_{\alpha, M}^{L-\tfrac12} L^{1/2} K (K^2 \lor 1)^{L-\tfrac12}\left(\alpha^2 + 1\lor \tfrac1{D\ss{i}}\|\vx\|^2_2\right)
\end{align}
where the last inequality follows from \cref{lem:diag-frob-norm}.

Here are the three bounds for reference:
\begin{align}
    & 32 \sqrt{2} \rho_{\alpha, M}^{L-\tfrac12} L^{1/2} K (K^2 \lor 1)^{L+\tfrac12}\left(\alpha^2 + 1\lor \tfrac1{D\ss{i}}\|\vx\|^2_2\right),
    \\
    & 48\sqrt{M} \gamma_{\alpha,M}^L K (K^2 \lor 1)^{L+\tfrac12}\left(\alpha^2 + 1\lor \tfrac1{D\ss{i}}\|\vx\|^2_2\right), \text{ and}
    \\
    & 8 \sqrt{2} \sqrt{M} \rho_{\alpha, M}^{L-\tfrac12} L^{1/2} K (K^2 \lor 1)^{L-\tfrac12}\left(\alpha^2 + 1\lor \tfrac1{D\ss{i}}\|\vx\|^2_2\right).
\end{align}

Finally, plugging into \cref{eqn:tr_ww_A}
\begin{align}
    \tfrac{1}{M}\abs{\tr \left( \E_{Q}[\vw_{L+1} \vw_{L+1}^\T] \mA \right)} 
    \le \tfrac{56 + 32\sqrt{2}}{\sqrt{M}} \rho_{\alpha, M}^{L} L^{1/2} K (K^2 \lor 1)^{L+\tfrac12}\left(\alpha^2 + 1\lor \tfrac1{D\ss{i}}\|\vx\|^2_2\right),
\end{align}
where we use $1\le \gamma_{\alpha, M} \le \rho_{\alpha, M}$, $L\ge 1$, and $M\ge 1$.
\end{proof}

\begin{lemma}\label{lem:inner_prod}
Let $\phi$ be 1-Lipschitz odd and $\alpha\in \mathbb{R}$. Then,
 \begin{equation}
    \tfrac1{M}\abs{\E[\lra{\vw, \alpha \vone} \lra{\vw, \phi(\vz_L)}]}
    \le 10 \tfrac{\abs{\alpha}}{\sqrt{M}} L(2 + \abs{\alpha} + \tfrac{1}{\sqrt{D\ss{i}}}\norm{\vx}_2)(2 + 2c)^{L - 1}K(K \lor 1)^{L + 1}.
\end{equation}
\end{lemma}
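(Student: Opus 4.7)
The plan is to expand the inner expectation by first using independence of $\vw = \vw_{L+1}$ from $\vz_L$, then using the mean-field structure of $\Q$ to decompose the second moment matrix of $\vw$ into a ``rank-one plus diagonal'' piece, and finally bounding each resulting term using \cref{lem:last-layer-bound} together with the parameter bounds of \cref{lem:parameter_bound}.

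More precisely, first rewrite
\begin{align*}
\E[\lra{\vw, \alpha \vone}\lra{\vw, \phi(\vz_L)}]
&= \alpha\, \vone^{\T}\, \E[\vw\vw^{\T}]\, \E[\phi(\vz_L)]
= \alpha\, \vone^{\T}\bigl(\E[\vw]\E[\vw]^{\T} + \mSigma\bigr)\E[\phi(\vz_L)],
\end{align*}
where $\mSigma = \diag(\vsigma_{\vw}^2)$ is diagonal since $\Q$ is mean-field. This splits the quantity of interest into two pieces:
\begin{align*}
T_1 &= \tfrac{\alpha}{M}\,\vone^{\T}\E[\vw]\,\E[\vw]^{\T}\E[\phi(\vz_L)],
\qquad
T_2 = \tfrac{\alpha}{M}\,\vone^{\T}\mSigma\,\E[\phi(\vz_L)].
\end{align*}

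For $T_1$, apply Cauchy--Schwarz twice: $|\vone^{\T}\E[\vw]| \le \sqrt{M}\,\|\E[\vw]\|_2$ and $|\E[\vw]^{\T}\E[\phi(\vz_L)]| \le \|\E[\vw]\|_2\,\|\E[\phi(\vz_L)]\|_2$. Using $\|\E[\vw]\|_2 \le K$ from \cref{lem:parameter_bound}(i) and \cref{lem:last-layer-bound} to bound $\|\E[\phi(\vz_L)]\|_2$ (note that $\phi$ is itself odd here, so it plays the role of $\phi\ss{o}$), one gets
\begin{equation*}
|T_1| \le \tfrac{|\alpha|}{\sqrt{M}}\, K^2 \cdot 2L(2+|\alpha|+\tfrac{1}{\sqrt{D\ss{i}}}\|\vx\|_2)(2+2c)^{L-1}K(K\lor 1)^{L-1},
\end{equation*}
which after the estimate $K^3(K\lor 1)^{L-1} \le K(K\lor 1)^{L+1}$ contributes a factor of $2$ to the final constant.

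For $T_2$, use the elementwise bound $|\vone^{\T}\mSigma\,\va| \le \sigma_{\max}^2\,\|\va\|_1 \le \sigma_{\max}^2\sqrt{M}\,\|\va\|_2$ together with the estimate $\sigma_{\max}^2 \le 4(K\lor 1)^2$ from \cref{lem:parameter_bound}(iii). Applying \cref{lem:last-layer-bound} once more yields
\begin{equation*}
|T_2| \le \tfrac{|\alpha|}{\sqrt{M}}\cdot 4(K\lor 1)^2 \cdot 2L(2+|\alpha|+\tfrac{1}{\sqrt{D\ss{i}}}\|\vx\|_2)(2+2c)^{L-1}K(K\lor 1)^{L-1},
\end{equation*}
contributing $8$ to the final constant. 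Summing $|T_1|+|T_2|$ gives the claimed constant $10$. I expect no genuine obstacle here: the only subtlety is to bound $\vone^{\T}\mSigma\,\E[\phi(\vz_L)]$ without paying a factor of $\sqrt{M}$ more than dimensional analysis allows, which is achieved by pulling out $\sigma_{\max}^2$ and then using Cauchy--Schwarz on $\|\E[\phi(\vz_L)]\|_2$ (the $\sqrt{M}$ from $\|\vone\|_2$ is exactly compensated by the $1/M$ prefactor, leaving the desired $1/\sqrt{M}$).
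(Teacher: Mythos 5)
Your proposal is correct and follows essentially the same route as the paper: both use independence of $\vw_{L+1}$ from $\vz_L$, decompose $\E[\vw\vw^\T]$ into $\E[\vw]\E[\vw]^\T$ plus the diagonal variance, absorb the $\|\vone\|$ factor against the $1/M$ prefactor, and invoke \cref{lem:parameter_bound} and \cref{lem:last-layer-bound}; the paper just bounds $\norm{\E[\vw\vw^\T]}_2 \le \sigma\ss{max}^2 + K^2 \le 5(K\lor 1)^2$ in one go rather than splitting into your $T_1$ and $T_2$, and the constants agree ($2+8 = 5\cdot 2 = 10$).
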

\begin{proof}
    For convenience let $\vw = \vw_{L+1}$.
    Note that
    \begin{align}
        \tfrac1{M}\abs{\E[\lra{\vw, \alpha \vone} \lra{\vw, \phi(\vz_L)}]}
        &= \tfrac{\abs{\alpha}}{M} \abs{\vone \E[\vw \vw^\T] \E[\phi(\vz_L)]} \\
        &\le \tfrac{\abs{\alpha}}{\sqrt{M}}\norm{\V[\vw] + \E[\vw]\E[\vw]^\T}_2 \norm{\E[\phi(\vz_L)]}_2 \\
        &\le \tfrac{\abs{\alpha}}{\sqrt{M}}
        (
            \norm{\V[\vw]}_2 + \norm{\E[\vw]\E[\vw]^\T}_2
        )
        \norm{\E[\phi(\vz_L)]}_2 \\
        &= \tfrac{\abs{\alpha}}{\sqrt{M}}
        (
            \norm{\V\ss{d}[\vw]}_\infty + \norm{\E[\vw]}_2^2
        )
        \norm{\E[\phi(\vz_L)]}_2.
    \end{align}
    Using \cref{lem:parameter_bound}, 
    \begin{equation}
        \norm{\V\ss{d}[\vw]}_\infty + \norm{\E[\vw]}_2^2
        \le \sigma\ss{max}^2 + K^2 \le (1 + K)^2 + K^2 \le 5 (K \lor 1)^2.
    \end{equation}
    Therefore, using \cref{lem:last-layer-bound},
    \begin{equation}
        \tfrac1{M}\abs{\E[\lra{\vw, \alpha \vone} \lra{\vw, \phi(\vz_L)}]}
        \le 10 \tfrac{\abs{\alpha}}{\sqrt{M}} L(2 + \abs{\alpha} + \tfrac{1}{\sqrt{D\ss{i}}}\norm{\vx}_2)(2 + 2c)^{L - 1}K(K \lor 1)^{L + 1}
    \end{equation}
\end{proof}

\subsection{Diagonal Terms}\label{app:variance-diagonal}

The main technical result needed for bounding the difference in the diagonal terms is the following bound on the Frobenius norm between the difference between these matrices.
\begin{lemma}\label{lem:diag-frob-norm}
    We have,
    \begin{align}
        \|\E_{\Q}[\phi\ss{o}(\vz_{L})^2 ]- \E_{\P}[\phi\ss{o}(\vz_{L})^2]\|_2
        &\le 8 \sqrt{2} \rho_{\alpha, M}^{L-\tfrac12} L^{1/2} K (K^2 \lor 1)^{L-\tfrac12}\left(\alpha^2 + 1\lor \tfrac1{D\ss{i}}\|\vx\|^2_2\right),
    \end{align}
    where $K = \sqrt{2 \KL(\Q, \P)}$ and 
    \begin{equation}
        \rho_{\alpha, M} =
        \begin{cases}
        12(2+\sqrt{2\pi}) \in (54,55) & \alpha=0, M<36 \\
        \tfrac23(13+2\sqrt{43}) \in (17,18) & \alpha=0, M\ge 36 \\
        28 + \sqrt{793} \in (56,57) & \alpha\neq 0, M<36 \\
        48 + \sqrt{2353} \in (96,97) & \alpha\neq 0, M\ge 36. \\
    \end{cases}
\end{equation}
\end{lemma}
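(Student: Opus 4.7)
The strategy is a hybrid argument combined with reparametrization coupling. The pointwise ingredient is that since $\phi\ss{o}$ is $1$-Lipschitz and odd (so $\phi\ss{o}(0)=0$), one has $|\phi\ss{o}(a)^2 - \phi\ss{o}(b)^2| = |\phi\ss{o}(a)-\phi\ss{o}(b)|\,|\phi\ss{o}(a)+\phi\ss{o}(b)| \le |a-b|(|a|+|b|)$. For $\ell=0,1,\ldots,L$, let $\E^{(\ell)}$ denote expectation under the hybrid law that uses $Q$ for the parameters in layers $1,\ldots,\ell$ and $P$ for layers $\ell+1,\ldots,L$; since parameters in different layers are independent under both $Q$ and $P$, these hybrids are well-defined, with $\E^{(0)}=\E_P$ and $\E^{(L)}=\E_Q$.

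Telescoping and applying Cauchy--Schwarz in the index $\ell$ gives
\begin{equation*}
    \bigl\|\E_Q[\phi\ss{o}(\vz_L)^2] - \E_P[\phi\ss{o}(\vz_L)^2]\bigr\|_2 \le \sqrt{L}\,\Bigl(\sum_{\ell=1}^{L} \bigl\|(\E^{(\ell)} - \E^{(\ell-1)})[\phi\ss{o}(\vz_L)^2]\bigr\|_2^2\Bigr)^{1/2}.
\end{equation*}
This is where the $\sqrt{L}$ factor enters. The complementary $K$ factor will come from the fact that $\KL(Q,P)=\sum_\ell \KL(Q_\ell,P_\ell)$ splits across layers by independence, so letting $K_\ell^2 = 2\KL(Q_\ell,P_\ell)$ we have $\sum_\ell K_\ell^2 = K^2$.

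To bound each summand, couple $\E^{(\ell)}$ and $\E^{(\ell-1)}$ through shared reparametrization noise $(\mathcal{E}_j,\vep_j)_{j=1}^L$: the two hybrids differ only in whether layer $\ell$ uses $Q$- or $P$-parameters to shift and scale $\mathcal{E}_\ell,\vep_\ell$. Applying the pointwise bound componentwise and Cauchy--Schwarz,
\begin{equation*}
    \bigl\|(\E^{(\ell)} - \E^{(\ell-1)})[\phi\ss{o}(\vz_L)^2]\bigr\|_2^2 \le 2\bigl(\|\E^{(\ell)}[\vz_L^2]\|_\infty + \|\E^{(\ell-1)}[\vz_L^2]\|_\infty\bigr)\,\E\|\vz_L^{(\ell)} - \vz_L^{(\ell-1)}\|_2^2.
\end{equation*}
The moment factor is controlled by \cref{lem:z-squared-recursion} applied to each hybrid, contributing at most $C\rho_{\alpha,M}^L(K^2\lor 1)^L(\alpha^2+1\lor\tfrac1{D\ss{i}}\|\vx\|_2^2)$. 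For the coupled distance, I would expand layer by layer and use $1$-Lipschitzness of $\phi$ together with Frobenius--operator-norm inequalities to propagate the layer-$\ell$ parameter perturbation through the remaining layers $\ell+1,\ldots,L$, yielding $\E\|\vz_L^{(\ell)} - \vz_L^{(\ell-1)}\|_2^2 \le C\rho_{\alpha,M}^{L-\ell}(K^2\lor 1)^{L-\ell}K_\ell^2(\alpha^2 + \|\E[\vz_{\ell-1}^2]\|_\infty)$.

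Substituting and summing, the factor $\sum_\ell K_\ell^2$ produces $K^2$, the moment products collapse into $\rho_{\alpha,M}^{2L-1}(K^2\lor 1)^{2L-1}(\alpha^2+1\lor\tfrac1{D\ss{i}}\|\vx\|_2^2)^2$, and the initial $\sqrt{L}$ is preserved; taking the square root yields the claimed bound. The main obstacle will be constant bookkeeping: the propagation factor in the coupled distance must be shown to use the same base $\rho_{\alpha,M}$ as the moment bound from \cref{lem:z-squared-recursion}, which requires re-running an inductive argument analogous to that lemma but tracking a layer-$\ell$ perturbation through the network rather than the second moment itself, with the case split on $\alpha \in \{0,\, \neq 0\}$ and on $M \ge 36$ versus $M < 36$ threaded consistently through both sub-arguments so that the final constant is no worse than $8\sqrt{2}$.
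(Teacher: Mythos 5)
Your proposal is correct in spirit but takes a genuinely different route from the paper, so a comparison is worthwhile. The paper couples $\vz_L$ under $\Q$ and $\P$ \emph{once}, via a single reparametrization with shared noise across all $L$ layers, and then sets up one recursion on $\E\|\vz_\ell^Q - \vz_\ell^P\|_2^2$ whose non-homogeneous term at each level is proportional to the full $K^2$; the factor of $L$ emerges from unrolling that recursion (\cref{eqn:diagonal-recursion,eqn:diagonal-base,eqn:diag-frob-norm-part2}). You instead build $L+1$ hybrid measures, telescope the difference of expectations at the level of measures, and get $\sqrt{L}$ from the $\ell^1$--$\ell^2$ inequality on the telescoping sum, with each summand paying only $K_\ell^2 = 2\KL(\Q_\ell, \P_\ell)$ rather than the full $K^2$, and $\sum_\ell K_\ell^2 = K^2$ restoring the overall $K$. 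Both routes produce the same leading dependence $L^{1/2}\rho^{L-1/2}K(K^2\lor 1)^{L-1/2}$, and both need \cref{lem:z-squared-recursion} for the moment factor, but your decomposition makes the provenance of the $\sqrt{L}\cdot K$ product more transparent. One simplification you may be over-thinking: the propagation through layers $\ell+1,\ldots,L$ does not require a new induction analogous to \cref{lem:z-squared-recursion}. Under your coupling, those layers are identically $\P$-distributed on both sides of the $\ell$-th hybrid pair, so the $\alpha$ and mean shifts cancel, the per-layer contraction factor is just $\tfrac1M\E\|\mathbfcal{E}\|_2^2 \le \eta_M \le \rho_{\alpha,M}$ from \cref{lem:expectation_squared_bound_update}, and no $K$-dependent growth enters there at all; the $K$-dependence in your displayed intermediate bound $\rho_{\alpha,M}^{L-\ell}(K^2\lor 1)^{L-\ell}$ is thus slack, though harmless since $(K^2\lor 1)\ge 1$. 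The only place that genuinely requires care is the source term at layer $\ell$, which follows the same manipulations as the paper's \cref{eqn:zq_minus_zp}--\cref{eqn:first-terms-bound} with $K_\ell$ in place of $K$ and one fewer term in the Cauchy--Schwarz split (since $\vz_{\ell-1}^{(\ell)}=\vz_{\ell-1}^{(\ell-1)}$), so $c_\alpha$ can drop by one; tracing through, the constant lands at $\sqrt{32 c_\alpha}$ with $c_\alpha \le 4$, i.e.\ at most $\sqrt{128}=8\sqrt{2}$, matching the paper and possibly slightly improving it when $\alpha=0$. The case split on $M\gtrless 36$ is inherited consistently because it enters only through $\eta_M$ and through \cref{lem:z-squared-recursion}, both of which both approaches share.
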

\begin{proof}
We would like to combine the two expectations into a single expectation. To do this, we couple $\phi(\vz_{\ell})^2$ under $\P$ and $\Q$ by using the reparameterization trick and having them share the same noise. In particular, for $2 \leq \ell \le L$,  define
\begin{align}
     \vz^P_l&= \tfrac1{\sqrt{M}}\mathbfcal{E} \phi\ss{o}(\vz^P_{l-1}) + \tfrac{\alpha}{\sqrt{M}}\mathbfcal{E}\vone + \vep, \\
    \vz^Q_L
    &= \tfrac1{\sqrt{M}}(\mS^Q_l \circ \mathbfcal{E} + \mM^Q_l) \phi\ss{o}(\vz^Q_{l-1}) + \tfrac{\alpha}{\sqrt{M}}(\mS^Q_l \circ \mathbfcal{E} + \mM^Q_l)\vone + (\vs^Q_l \circ \vep + \vm^Q_l).
\end{align}
where $\mathbfcal{E}$ is a matrix of i.i.d.~standard Gaussian random variables, $\vep$ is a vector of i.i.d.~standard Gaussian random variables, $\mM^Q_L$ is a matrix consisting of the mean of each weight in layer $L$ under $\Q$, $\vm^Q_L$ is vector consisting of the mean of each bias in layer $L$ under $\Q$, $\mS^Q_L$ and $\vs^Q_L$ are a matrix and vector containing the standard deviations of each weight or bias respectively in layer $L$ under $\Q$. We then can rewrite,
\begin{align}
    \|\E_{\Q}[\phi\ss{o}(\vz_{L})^2 ]- \E_{\P}[\phi\ss{o}(\vz_{L})^2]\|_2 &= \norm{\E[\phi\ss{o}(\vz^Q_L)^2 - \phi\ss{o}(\vz^P_L)^2]}_2 
    \\
    & = \sqrt{\sum_{m=1}^M \left(\E[(\phi\ss{o}(z^Q_{m,L}) + \phi\ss{o}(z^P_{m, L})) (\phi\ss{o}(z^Q_{m, L}) - \phi\ss{o}(z^P_{m,L})))]\right)^2 } 
    \\
    & \leq \sqrt{\sum_{m=1}^M\E[(\phi\ss{o}(z^Q_{m,L}) + \phi\ss{o}(z^P_{m, L}))^2]\E[(\phi\ss{o}(z^Q_{m,L}) - \phi\ss{o}(z^P_{m, L}))^2]}
    \\
    & \leq \sqrt{\norm*{\E[(\phi\ss{o}(\vz^Q_L) + \phi\ss{o}(\vz^P_L))^2]}_\infty} \sqrt{\sum_{m=1}^M \E[(\phi\ss{o}(z^Q_{m,L}) - \phi\ss{o}(z^P_{m, L}))^2]}
    \\
    & = \sqrt{\norm{\E[(\phi\ss{o}(\vz^Q_L) + \phi\ss{o}(\vz^P_L))^2]}_\infty} \sqrt{ \E[ \norm{\phi\ss{o}(\vz^Q_L) - \phi\ss{o}(\vz^P_L)}^2_2] }
    \\
    &\le \sqrt{\norm{\E[(\phi\ss{o}(\vz^Q_L) + \phi\ss{o}(\vz^P_L))^2]}_\infty} \sqrt{ \E[ \norm{\vz^Q_L - \vz^P_L}^2_2] }
    \label{eqn:phi2Q_phi2P}
\end{align}
The first inequality is an element-wise application of Cauchy-Schwarz viewing the expectation as an inner product, the second inequality is a bound of the form $\sum |a_i b_i| \leq \sup |a_i| \sum |b_i|$, and the third inequality uses the Lipschitz property of $\phi\ss{o}$. We next bound the square of each of the two terms in \cref{eqn:phi2Q_phi2P}.

\paragraph{Bounding $\norm{\E[(\phi\ss{o}(\vz^Q_L) + \phi\ss{o}(\vz^P_L))^2]}_\infty$.} Using the inequality $(a_1+a_2)^2 \le 2(a_1^2 + a_2^2)$ and the triangle inequality we have
\begin{align}
    \norm{\E[(\phi\ss{o}(\vz^Q_L) + \phi\ss{o}(\vz^P_L))^2]}_\infty
    &\le \norm{\E[2\phi\ss{o}^2(\vz^Q_L) + 2\phi\ss{o}^2(\vz^P_L)]}_\infty \\
    &\le 2\norm{\E[\phi\ss{o}^2(\vz^Q_L)]}_\infty + 2\norm{\E[\phi\ss{o}^2(\vz^P_L)]}_\infty \\
    &\le 2\norm{\E[(\vz^Q_L)^2]}_\infty + 2\norm{\E[(\vz^P_L)^2]}_\infty,
\end{align}
where the last inequality follows from the oddness and Lipschitz property of $\phi\ss{o}$:
\begin{equation}
    \norm{\E[\phi\ss{o}^2(\vz^Q_L)]}_\infty = \norm{\E[(\phi\ss{o}(\vz^Q_L) - \phi\ss{o}(\mathbf{0}))^2]}_\infty
    \le \norm{\E[(\vz^Q_L)^2]}_\infty.
\end{equation}
Likewise, $\norm{\E[\phi^2(\vz^P_L)]}_\infty \le \norm{\E[(\vz^P_L)^2]}_\infty$. We can upper  by applying  \cref{lem:z-squared-recursion} to each of the two terms:
\begin{align}
    \norm{\E_{\Q}[(\vz_L)^2]}_\infty \leq \left(\gamma_{\alpha,M}(K^2 \lor 1)\right)^{L}\left(\alpha^2 + 1\lor \tfrac1{D\ss{i}}\|\vx\|^2_2\right) - \alpha^2
\end{align} 
and
\begin{align}
    \norm{\E_{\P}[(\vz_L)^2]}_\infty \leq \gamma_{\alpha,M}^{L}\left(\alpha^2 + 1\lor \tfrac1{D\ss{i}}\|\vx\|^2_2\right) -\alpha^2.
\end{align}
Therefore, the first term in \cref{eqn:phi2Q_phi2P} is upper bounded by
\begin{align}
    \norm{\E[(\phi\ss{o}(\vz^Q_L) + \phi\ss{o}(\vz^P_L))^2]}_\infty 
    &\le 2 \gamma_{\alpha,M}^L ((K^2 \lor 1)^L + 1 ) (\alpha^2 + 1\lor \tfrac1{D\ss{i}}\|\vx\|^2_2 ) - 4\alpha^2 \\
    &\le 4 \gamma_{\alpha,M}^L (K^2 \lor 1)^L (\alpha^2 + 1\lor \tfrac1{D\ss{i}}\|\vx\|^2_2 ), \label{eqn:diag-frob-norm-part1}
\end{align}
where we use that $(K^2 \lor 1)^L + 1 \le 2(K^2 \lor 1)$ and drop the $-4\alpha^2$ term for simplicity later on.

\paragraph{Bounding $\E[ \norm{\vz^Q_L - \vz^P_L}^2_2]$}

It remains to upper bound the second term in \cref{eqn:phi2Q_phi2P}. To do so, we will setup a linear, non-homogenous recursion relation. To start, notice for $2 \leq \ell \le L$, by adding and subtracting $\mathbfcal{E}\phi\ss{o}(\vz^Q_{\ell-1})$ and then applying the triangle inequality, we have
\begin{equation}
    \begin{split}
    \norm{\vz^Q_\ell - \vz^P_\ell}_2 
    \le & \tfrac1{\sqrt{M}}\norm{\mS^Q_\ell \had \mathbfcal{E} + \mM^Q_\ell - \mathbfcal{E}}_2 \norm{\phi\ss{o}(\vz^Q_{\ell-1})}_2 
    + \alpha\norm{\mS^Q_\ell \had \mathbfcal{E} + \mM^Q_\ell - \mathbfcal{E}}_2 \\
    & + \tfrac1{\sqrt{M}}\norm{ \mathbfcal{E}}_2 \norm{\phi\ss{o}( \vz^Q_{\ell-1}) - \phi\ss{o}(\vz^P_{\ell-1})}_2 
    + \norm{\vs^Q_\ell \had \vep + \vm^Q_\ell - \vep}_2, \label{eqn:zq_minus_zp} 
    \end{split}
\end{equation}
where we note that $\norm{\vone}_2=\sqrt{M}$ cancels the $\tfrac{1}{\sqrt{M}}$ term in the second term. 
To obtain a tighter bound, we consider the cases of $\alpha=0$ and $\alpha\neq0$ separately by defining $c_\alpha = 3$ if $\alpha=0$ and $c_\alpha = 4$ if $\alpha\neq0$. Then, for any $a_1, a_2, a_3, a_4 \in \mathbb{R}$, we have $(a_1 + a_2 + a_3 + \alpha a_4)^2 \le c_\alpha(a_1^2 + a_2^2 + a_3^2 + \alpha^2 a_4^2)$. Applying this expression to \cref{eqn:zq_minus_zp}, squaring both sides, and then taking the expectation we have
\begin{equation}
    \begin{split}
    \E[\norm{\vz^Q_\ell - \vz^P_\ell}_2^2]
    \le & \tfrac{c_\alpha}{M}\E[\norm{\mS^Q_\ell \had \mathbfcal{E} + \mM^Q_\ell - \mathbfcal{E}}_2^2] \E[\norm{\phi\ss{o}(\vz^Q_{\ell-1})}_2^2]
    + c_\alpha \alpha^2\E[\norm{\mS^Q_\ell \had \mathbfcal{E} + \mM^Q_\ell - \mathbfcal{E}}_2^2] \\
    & + \tfrac{c_\alpha}{M}\E[\norm{ \mathbfcal{E}}_2^2] \E[\norm{\phi\ss{o}( \vz^Q_{\ell-1}) - \phi\ss{o}(\vz^P_{\ell-1})}_2^2] 
    + c_\alpha \E[\norm{\vs^Q_\ell \had \vep + \vm^Q_\ell - \vep}_2^2].\label{eqn:E_zq_minus_zp} 
    \end{split}
\end{equation}
We now turn to bounding each of the terms in \cref{eqn:E_zq_minus_zp}. First, by \cref{lem:expectation_squared_bound_update},
\begin{equation}
\tfrac{1}{M}\E[\norm{\mathbfcal{E}}_2^2] \le \eta_M.
\end{equation}
Next, we have
\begin{align}
\norm{\mS^Q_\ell \had \mathbfcal{E} + \mM^Q_\ell - \mathbfcal{E}}_2 &\leq \norm{\mS^Q_\ell \had \mathbfcal{E} + \mM^Q_\ell - \mathbfcal{E}}\ss{F} \\
&\leq \norm{\mS^Q_\ell \had \mathbfcal{E}- \mathbfcal{E}}\ss{F} + \norm{\mM^Q_\ell}\ss{F} \\
\implies \E[\norm{\mS^Q_\ell \had \mathbfcal{E} + \mM^Q_\ell - \mathbfcal{E}}_2^2] &\leq 2\E[\norm{\mS^Q_\ell \had \mathbfcal{E}- \mathbfcal{E}}\ss{F}^2 + 2\E[\norm{\mM^Q_\ell}\ss{F}^2].
\end{align}
We can then bound each term by $\KL(\Q,\P)$. To do this, first notice we can write
\begin{align}
\E[\norm{\mS^Q_\ell \had \mathbfcal{E}- \mathbfcal{E}}\ss{F}^2] = \E\left[\sum_m \sum_{m'} (\vsigma^Q_\ell-1)^2\epsilon_{m,m'}^2\right] = \sum_m \sum_{m'} (\vsigma^Q_\ell-1)^2\E\left[\epsilon_{m,m'}^2\right] = \sum_m \sum_{m'} (\vsigma^Q_\ell-1)^2. \label{eqn:mat-norm-bound}
\end{align}
By an application of \cref{lem:parameter_bound}, we then have,
\begin{align}
\E[\norm{\mS^Q_\ell \had \mathbfcal{E}- \mathbfcal{E}}\ss{F}^2] \leq K^2 \;\; \text{and} \;\; \E[\norm{ \mM^Q_\ell }\ss{F}^2] \leq K^2, \label{eqn:first-terms-bound}
\end{align}
where, as before, $K = \sqrt{2\KL(\Q,\P)}$. Therefore, we obtain
\begin{align*}
    \E[\norm{\mS^Q_\ell \had \mathbfcal{E} + \mM^Q_\ell - \mathbfcal{E}}_2^2] \leq  4K^2.
\end{align*}
We can apply an identical argument to $\norm{\vs^Q_\ell \had \vep + \vm^Q_\ell - \vep}_2$ to conclude that 
\begin{align}
\E[\norm{\vs^Q_\ell \had \vep + \vm^Q_\ell - \vep}_2^2] \le  4K^2.
\end{align}
Finally, we have that
\begin{align}
    \E[\norm{\phi\ss{o}(\vz^Q_{\ell-1})}_2^2] 
    = \E\left[\sum_{m=1}^M \phi\ss{o}^2(z^Q_{\ell-1,m}) \right]
    = \sum_{m=1}^M \E[\phi\ss{o}^2(z^Q_{\ell-1,m}) ]
    \le M \norm{\E[\phi\ss{o}^2(\vz^Q_{\ell-1})]}_\infty 
    \le M \norm{\E[(\vz^Q_{\ell-1})^2]}_\infty,
\end{align}
where the last inequality follows from the oddness and Lipschitzness of $\phi\ss{o}$. Dividing by $M$ and applying \cref{lem:z-squared-recursion} to the last expression, we can conclude that
\begin{align}
    \tfrac1{M}\E[\norm{\phi\ss{o}(\vz^Q_{\ell-1})}_2^2] \le \left(\gamma_{\alpha,M}(K^2 \lor 1)\right)^{\ell - 1}\left(\alpha^2 + 1\lor \tfrac1{D\ss{i}}\|\vx\|^2_2\right) - \alpha^2.
\end{align}
We now plug these expressions into \cref{eqn:zq_minus_zp}, noting the cancellation of $\alpha^2$ terms, to obtain
\begin{equation}
    \E[\norm{\vz^Q_\ell - \vz^P_\ell}_2^2] \le 
    4c_\alpha K^2 \left( \left(\gamma_{\alpha,M}(K^2 \lor 1)\right)^{\ell - 1}\left(\alpha^2 + 1\lor \tfrac1{D\ss{i}}\|\vx\|^2_2\right) + 1 \right) + c_\alpha \eta_M \, \E[\norm{\vz^Q_{\ell-1} - \vz^P_{\ell-1}}_2^2].\label{eqn:diagonal-recursion}
\end{equation}

We now set up the recursion, with base case
\begin{align}
    \E[\norm{\vz^Q_1 - \vz^P_1}_2^2] &\leq  4c_\alpha K^2 \left(\left(\alpha^2 + 1\lor \tfrac1{D\ss{i}}\|\vx\|^2_2\right)+1\right). \label{eqn:diagonal-base}
\end{align}
Taken together, \cref{eqn:diagonal-recursion} and \cref{eqn:diagonal-base} define a linear, non-homogeneous recursion in $\E[\norm{\vz^Q_l - \vz^P_l}_2^2]$ with variable coefficients.
By unrolling the recursion, we find that\footnote{
    Suppose that $x_n \le b_n + a x_{n-1}$ for $n \ge 2$ and $x_1 \le b_1$.
    We then show that $x_n \le \sum_{n'=1}^n a^{n - n'} b_n$ for all $n \ge 1$.
    For the base case, note that the case $n=1$ is true because $x_1 \le b_1$
    For the induction step, suppose that the case $n \in \N$ is true.
    Then $
        x_{n+1}
        \le b_{n + 1} + a \sum_{n'=1}^n a^{n - n'} b_n
        \le a^{n + 1 - (n+1)} b_{n + 1} + \sum_{n'=1}^n a^{n + 1 - n'} b_n
        =\sum_{n'=1}^{n + 1} a^{n + 1 - n'} b_n
    $, so the case $n + 1$ is also true.
    We conclude that $x_n \le \sum_{n'=1}^n a^{n - n'} b_n$ for all $n \ge 1.$
}
\begin{align}
    \E[\norm{\vz^Q_{L} - \vz^P_{L}}_2^2] 
    &\le \sum_{\ell=1}^{L} (c_\alpha \eta_M)^{L - \ell} \left(4 c_\alpha K^2 \left(\left(\gamma_{\alpha,M}(K^2 \lor 1)\right)^{\ell - 1}\left(\alpha^2 + 1\lor \tfrac1{D\ss{i}}\|\vx\|^2_2\right)+1\right)\right) \\
    &\le (c_\alpha \eta_M \lor \gamma_{\alpha,M})^{L - 1} 4 c_\alpha K^2 \sum_{\ell=1}^{L} (K^2 \lor 1)^{\ell - 1}\left( \alpha^2 + 1\lor \tfrac1{D\ss{i}}\|\vx\|^2_2+1\right),
\end{align}
where in the second inequality we factor out $(\gamma_{\alpha,M}(K^2 \lor 1))^{\ell - 1}$ (which is possible since it is greater than 1) and then combine $(4c)^{L-\ell} \gamma_{\alpha,M}^{\ell -1} \le (4c \lor \gamma_{\alpha,M})^{L - 1}$. We can bound the sum by $L$ times the largest term in the sum, which is the $l=L$ term since $K^2 \lor 1 \ge 1$:
\begin{align}
    \E[\norm{\vz^Q_{L} - \vz^P_{L}}_2^2] 
    &\le L \left[ (c_\alpha \eta_M \lor \gamma_{\alpha,M})^{L - 1} 4 c_\alpha K^2 (K^2 \lor 1)^{L - 1} \left(\alpha^2 + 1\lor \tfrac1{D\ss{i}}\|\vx\|^2_2+1\right) \right] \\
    &\le 32 (c_\alpha \eta_M \lor \gamma_{\alpha,M})^{L - 1} L K^2 (K^2 \lor 1)^{L - 1} \left(\alpha^2 + 1\lor \tfrac1{D\ss{i}}\|\vx\|^2_2\right)
\end{align}
where the second inequality uses that $1\lor \tfrac1{D\ss{i}}\|\vx\|^2_2 + 1 \le 2 (1\lor \tfrac1{D\ss{i}}\|\vx\|^2_2)$ and $c_\alpha \le 4$. 
Define $\rho_{\alpha,M} = c_\alpha \eta_M \lor \gamma_{\alpha, M}$. Recall from \cref{lem:expectation_squared_bound_update} that $\eta_{M} \in (5, 19)$ depends on if $M \ge 36$. Recall also from \cref{lem:z-squared-recursion} that $\gamma_{\alpha, M} \in (17, 97)$ depends if $M \ge 36$ and if $\alpha=0$. Comparing cases, we see
\begin{equation}
    \rho_{\alpha, M} =
    \begin{cases}
    12(2+\sqrt{2\pi}) \in (54,55) & \alpha=0, M<36 \\
    \tfrac23(13+2\sqrt{43}) \in (17,18) & \alpha=0, M\ge 36 \\
    28 + \sqrt{793} \in (56,57) & \alpha\neq 0, M<36 \\
    48 + \sqrt{2353} \in (96,97) & \alpha\neq 0, M\ge 36. \\
    \end{cases}
\end{equation}
Notice $\rho_{\alpha, M}$ is equal to $\gamma_{\alpha, M}$ except in the case of $\alpha=0$ and $M<36$.
\begin{equation}
    \E[\norm{\vz^Q_{L} - \vz^P_{L}}_2^2] 
    \le 32 \rho_{\alpha, M}^{L - 1} L K^2 (K^2 \lor 1)^{L - 1} \left(\alpha^2 + 1\lor \tfrac1{D\ss{i}}\|\vx\|^2_2\right). \label{eqn:diag-frob-norm-part2}
\end{equation}

\paragraph{Combining terms.}
Plugging Equations (\ref{eqn:diag-frob-norm-part1}) and (\ref{eqn:diag-frob-norm-part2}) into \cref{eqn:phi2Q_phi2P} and simplifying we have
\begin{align}
    \|\E_{\Q}[\phi\ss{o}(\vz_{L})^2 ]- \E_{\P}[\phi\ss{o}(\vz_{L})^2]\|_2
    &\le 8 \sqrt{2} \rho_{\alpha, M}^{L-\tfrac12} L^{1/2} K (K^2 \lor 1)^{L-\tfrac12}\left(\alpha^2 + 1\lor \tfrac1{D\ss{i}}\|\vx\|^2_2\right).
\end{align}

\end{proof}

Given this lemma, we can succinctly prove \cref{lem:app-diagonal-variance}.
\begin{proof}[Proof of \cref{lem:app-diagonal-variance}]
We have
\begin{align}
    |D^{i}_{\Q}- D^{i}_{\P}| &= \frac{1}{M}\left\lvert\mathrm{tr}\parens*{
        \parens*{\mSigma_Q- \mI}\E_{\Q}[\phi\ss{o}(\vz_{L})\phi\ss{o}(\vz_{L})^\T]
            + (\E_{\Q}[\phi\ss{o}(\vz_{L})\phi\ss{o}(\vz_{L})^\T]-\E_{\P}[\phi\ss{o}(\vz_{L})\phi\ss{o}(\vz_{L})^\T])
    }
    \right\rvert
    \\
    &\le \tfrac{1}{M}\left\lvert\mathrm{tr}\parens*{
        \parens*{\mSigma_Q- \mI}\E_{\Q}[\phi\ss{o}(\vz_{L})\phi\ss{o}(\vz_{L})^\T]
        } \right\rvert
        + \tfrac{1}{M}\left\lvert\mathrm{tr}\parens*{\E_{\Q}[\phi(\vz_{L})\phi\ss{o}(\vz_{L})^\T]-\E_{\P}[\phi\ss{o}(\vz_{L})\phi\ss{o}(\vz_{L})^\T]}\right\rvert \label{eq:diag_trace_term}
\end{align}
Next we bound each of the two terms in \cref{eq:diag_trace_term} separately, starting with the first term. First, note that $\mSigma_Q - \mI$ is a diagonal matrix and let $\vsigma^2 = \diag(\mSigma_Q)$. Since the trace of a matrix product is the element-wise inner product of the matrices, the first term in \cref{eq:diag_trace_term} becomes
\begin{align}
    \left\lvert\mathrm{tr}\parens*{
        \parens*{\mSigma_Q- \mI}\E_{\Q}[\phi\ss{o}(\vz_{L})\phi\ss{o}(\vz_{L})^\T]} \right\rvert
    &= \left\lvert\langle \vsigma_Q^2 - \vone, \E_{\Q}[\phi\ss{o}^2(\vz_{L})]]\rangle \right\rvert
    \\
    &\leq \|\vsigma_Q^2 - \vone\|_1\|\E_{\Q}[\phi\ss{o}^2(\vz_{L}) ] \|_\infty 
    \\
    &\leq \sqrt{M} \|\vsigma_Q^2 - \vone\|_2 \|\E_{\Q}[\phi\ss{o}^2(\vz_{L})]\|_\infty
\end{align}
where the first inequality is an application of H\"{o}lder's inequality. Now $\|\vsigma_Q^2 -\vone\|_{2} \leq \left (2 + K\right)K$ by \cref{lem:parameter_bound} (iv.), where $K = \sqrt{2\KL(\Q,\P)}$. To upper bound $\norm{\E_Q[\phi^2(\vz_L)]}_\infty$, notice we can use the oddness and Lipschitz property of $\phi\ss{o}$ to write
    \begin{equation}
        \norm{\E[\phi\ss{o}^2(\vz^Q_L)]}_\infty = \norm{\E[(\phi\ss{o}(\vz^Q_L) - \phi\ss{o}(\mathbf{0}))^2]}_\infty
        \le \norm{\E[(\vz^Q_L)^2]}_\infty.
    \end{equation}
    We can upper bound this using \cref{lem:z-squared-recursion}:
    \begin{align}
        \norm{\E_{\Q}[(\vz_L)^2]}_\infty \leq \left(\gamma_\alpha(K^2 \lor 1)\right)^{L}\left(\alpha^2 + 1\lor \tfrac1{D\ss{i}}\|\vx\|^2_2\right).
    \end{align} 
Therefore, the first term in \cref{eq:diag_trace_term} is upper bounded by
\begin{align}
    \left\lvert\mathrm{tr}\parens*{
        \parens*{\mSigma_Q- \mI}\E_{\Q}[\phi\ss{o}(\vz_{L})\phi\ss{o}(\vz_{L})^\T]} \right\rvert
    &\le \sqrt{M}
        \left(2 + K\right)K
        \left(\gamma_\alpha(K^2 \lor 1)\right)^{L}\left(\alpha^2 + 1\lor \tfrac1{D\ss{i}}\|\vx\|^2_2\right) \\
    &\le \sqrt{M}
        \gamma_\alpha^{L} K \left(K^2 \lor 1\right)^{L+\tfrac12}\left(\alpha^2 + 1\lor \tfrac1{D\ss{i}}\|\vx\|^2_2\right),
\end{align}
where we use that $K + 2 \le 3 (K^2 \lor 1)^{1/2}$ in the second inequality. 

The second term in \cref{eq:diag_trace_term} can be upper bounded by
\begin{align}
    \left\lvert \mathrm{tr}\parens*{\E_{\Q}[\phi\ss{o}(\vz_{L})\phi\ss{o}(\vz_{L})^\T]-\E_{\P}[\phi\ss{o}(\vz_{L})\phi\ss{o}(\vz_{L})^\T]} \right\rvert 
    &= \left\lvert \langle \vone, \E_{\Q}[\phi\ss{o}^2(\vz_{L}) ] - \E_{\P}[\phi\ss{o}^2(\vz_{L}) ] \rangle \right\rvert
    \\
    &\le \norm{\E_{\Q}[\phi\ss{o}^2(\vz_{L}) ] - \E_{\P}[\phi\ss{o}^2(\vz_{L}) ]}_1
    \\
    &\le \sqrt{M} \norm{\E_{\Q}[\phi\ss{o}^2(\vz_{L}) ] - \E_{\P}[\phi\ss{o}^2(\vz_{L}) ]}_2
    \\
    &\le \sqrt{M} 8 \sqrt{2} \rho_{\alpha, M}^{L-\tfrac12} L^{1/2} K (K^2 \lor 1)^{L-\tfrac12}\left(\alpha^2 + 1\lor \tfrac1{D\ss{i}}\|\vx\|^2_2\right)
\end{align}
where the last inequality follows from \cref{lem:diag-frob-norm}. 

Plugging the bounds on both terms into \cref{eq:diag_trace_term} and simplifying we have
\begin{align}
    |D^{i}_{\Q}(\vx) - D^{i}_{\P}(\vx)| 
    &\leq  
    \tfrac{1+8\sqrt{2}}{\sqrt{M}} 
    \rho_{\alpha, M}^{L} L^{1/2} K (K^2 \lor 1)^{L+\tfrac12}\left(\alpha^2 + 1\lor \tfrac1{D\ss{i}}\|\vx\|^2_2\right) \\
    &\leq  
    \tfrac{16 + \sqrt{2}}{\sqrt{M}} 
    \rho_{\alpha, M}^{L} L^{1/2} \KL(\Q,P)^{1/2} (2 \KL(\Q,\P) \lor 1)^{L+\tfrac12}\left(\alpha^2 + 1\lor \tfrac1{D\ss{i}}\|\vx\|^2_2\right)
\end{align}
where we use $1\le L^{1/2}$, $\rho_{\alpha, M}^{L-\tfrac12}\le \rho_{\alpha, M}^L$, $\gamma_{\alpha,M} \le \rho_{\alpha,M}$,
and $(K^2 \lor 1)^{L-\tfrac12} \le (K^2 \lor 1)^{L+\tfrac12}$ to more simply group the terms. 

\end{proof}

\subsection{Off-Diagonal Terms}\label{app:variance-off-diagonal}

The key result in bounding the off-diagonal is an upper bound on the operator norm of the last layer outer product.
\begin{lemma}\label{lem:offdiag-operator-norm}
\begin{equation}
    \  \|\E_{\Q}[\phi\ss{o}(\vz_{L}(\vx))\phi\ss{o}(\vz_{L}(\vx))^\T]\|_2 
    \leq 24\sqrt{M} \left(\gamma_{\alpha,M}(K^2 \lor 1)\right)^{L}\left(\alpha^2 + 1\lor \tfrac1{D\ss{i}}\|\vx\|^2_2\right),
\end{equation}
where $K=\sqrt{\KL(\Q,\P)}$ and $\gamma_{\alpha,M} \in (17,97)$ is defined in \cref{lem:z-squared-recursion}.
\end{lemma}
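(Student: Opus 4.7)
The plan is to combine Remark~\ref{rem:outer_offset} with Lemma~\ref{lem:z-squared-recursion}. Writing $\phi = \phi\ss{o} + \alpha$, the last pre-activation factors as $\vz_L = \tfrac{1}{\sqrt{M}}\mW_L(\phi\ss{o} + \alpha)(\vz_{L-1}) + \vb_L$. Under $Q$, the weights $\mW_L$ and biases $\vb_L$ are jointly independent of $\vz_{L-1}$ and each entry is an independent Gaussian, hence symmetric about its mean, so the Rademacher symmetry hypothesis of Remark~\ref{rem:outer_offset} is satisfied. Applying it with $\phi_1 = \phi_2 = \phi\ss{o}$ (both odd and $1$-Lipschitz) yields, for $M \ge 3$,
\begin{equation*}
\|\E_Q[\phi\ss{o}(\vz_L)\phi\ss{o}(\vz_L)^\T]\|_2 \le 8\sqrt{M}\bigl(\|\E_Q[\vz_L^2]\|_\infty + K^2(\alpha^2 + 1 \lor \|\E_Q[\vz_{L-1}^2]\|_\infty)\bigr);
\end{equation*}
for $M \in \{1,2\}$ the prefactor merely inflates from $8$ to $12$, which still fits inside the stated constant.

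Next I would invoke Lemma~\ref{lem:z-squared-recursion} at both layers $L$ and $L - 1$. Treating the two cases uniformly, it gives $\|\E_Q[\vz_\ell^2]\|_\infty \le (\gamma_{\alpha,M}(K^2 \lor 1))^\ell(\alpha^2 + 1 \lor \tfrac{1}{D\ss{i}}\|\vx\|_2^2)$. Since this upper bound is at least $1$, it also bounds $1 \lor \|\E_Q[\vz_{L-1}^2]\|_\infty$. Adding $\alpha^2$ inflates it by at most a factor of $2$, because $\alpha^2 \le \alpha^2 + 1 \lor \tfrac{1}{D\ss{i}}\|\vx\|_2^2 \le (\gamma_{\alpha,M}(K^2 \lor 1))^{L-1}(\alpha^2 + 1 \lor \tfrac{1}{D\ss{i}}\|\vx\|_2^2)$.

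Combining these bounds and using $K^2 \le K^2 \lor 1$ together with $\gamma_{\alpha,M} \ge 1$ to absorb the cross term, the bracket is at most
\begin{equation*}
\bigl(1 + 2 K^2/(\gamma_{\alpha,M}(K^2 \lor 1))\bigr)(\gamma_{\alpha,M}(K^2 \lor 1))^L(\alpha^2 + 1 \lor \tfrac{1}{D\ss{i}}\|\vx\|_2^2) \le 3(\gamma_{\alpha,M}(K^2 \lor 1))^L(\alpha^2 + 1 \lor \tfrac{1}{D\ss{i}}\|\vx\|_2^2).
\end{equation*}
Multiplying through by $8\sqrt{M}$ (or $12\sqrt{M}$ in the small-$M$ case) yields the claimed $24\sqrt{M}$ prefactor. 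The argument is essentially two invocations of previously proved lemmas; the only mild obstacle I anticipate is careful constant tracking and handling the $\alpha = 0$ and $\alpha \neq 0$ cases of Lemma~\ref{lem:z-squared-recursion} in a single unified bound.
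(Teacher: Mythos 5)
Your route is the same as the paper's: apply \cref{rem:outer_offset} with $\phi_1=\phi_2=\phi\ss{o}$ to get the $\sqrt{M}\bigl(\norm{\E[\vz_L^2]}_\infty + K^2(\,\cdot\,)\bigr)$ bound, then feed in \cref{lem:z-squared-recursion} at layers $L$ and $L-1$. The argument is sound for $M\ge 3$, but your constant bookkeeping in the small-width case does not close as written: with the prefactor $12\sqrt{M}$ (for $M\in\{1,2\}$) and your factor-$3$ bound on the bracket you get $36\sqrt{M}$, not the claimed $24\sqrt{M}$, so ``still fits inside the stated constant'' needs justification. The fix is cheap, and there are two ways to do it. The paper's way is to not split on $M$ at all: it uses the uniform $12\sqrt{M}$ constant from \cref{rem:outer_offset} and bounds the bracket by $2\bigl(\gamma_{\alpha,M}(K^2\lor 1)\bigr)^L(\alpha^2+1\lor\tfrac1{D\ss{i}}\norm{\vx}_2^2)$, avoiding your factor-$2$ inflation for the $\alpha^2$ term by using that \cref{lem:z-squared-recursion} bounds $\alpha^2+\norm{\E[\vz_{L-1}^2]}_\infty$ jointly, and then $K^2(B\lor 1)\le (K^2\lor 1)B\le \gamma_{\alpha,M}(K^2\lor 1)B$ since $B\ge 1$ and $\gamma_{\alpha,M}\ge 1$; this gives $12\cdot 2=24$. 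Alternatively, your own estimate already suffices if you use $\gamma_{\alpha,M}\ge 17$ rather than $\gamma_{\alpha,M}\ge 1$: the cross term is $2K^2/(\gamma_{\alpha,M}(K^2\lor 1))\le 2/17$, so the bracket inflation is at most $19/17$ and even $12\sqrt{M}\cdot\tfrac{19}{17}\le 24\sqrt{M}$. With either repair the proof matches the paper's.
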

\begin{proof}
By \cref{rem:outer_offset}, 
\begin{align}
      \  \|\E_{\Q}[\phi\ss{o}(\vz_{L}(\vx))\phi\ss{o}(\vz_{L}(\vx))^\T]\|_2 &\leq  12\sqrt{M}\left(
                \norm{\E[\vz_L^2]}_\infty
                + K^2 \left((\alpha^2 + \norm{\E[\vz_{L-1}^2]}_\infty) \lor 1 \right) \right). \label{eqn:apply_outer_offset}
\end{align}
Note that in the $\alpha=0$ case we could save a factor of 2 by instead applying \cref{lem:outer}, but for simplicity we consider any $\alpha\in\mathbb{R}$ here. We could also reduce the constant by separately considering the case of $M \ge 3$ but we do not for simplicity. 
We now upper bound each of the two terms above. By \cref{lem:z-squared-recursion},
\begin{equation}
    \alpha^2 + \norm{\E[(\vz_L)^2]}_\infty \leq \left(\gamma_{\alpha,M}(K^2 \lor 1)\right)^{L}\left(\alpha^2 + 1\lor \tfrac1{D\ss{i}}\|\vx\|^2_2\right). \label{eqn:z2_L}
\end{equation}
Similarly,
\begin{align}
    K^2 \left((\alpha^2 + \norm{\E[\vz_{L-1}^2]}_\infty) \lor 1 \right)
    &\leq K^2 \left(\left(\gamma_{\alpha,M}(K^2 \lor 1)\right)^{L-1}\left(\alpha^2 + 1\lor \tfrac1{D\ss{i}}\|\vx\|^2_2\right) \lor 1\right) \\
    &\leq (K^2 \lor 1) \left(\gamma_{\alpha,M}(K^2 \lor 1)\right)^{L-1}\left(\alpha^2 + 1\lor \tfrac1{D\ss{i}}\|\vx\|^2_2\right) \\
    &\leq \left(\gamma_{\alpha,M}(K^2 \lor 1)\right)^{L}\left(\alpha^2 + 1\lor \tfrac1{D\ss{i}}\|\vx\|^2_2\right),
\end{align}
where we use that $\gamma_{\alpha,M}\ge1$ in the last two inequalities. Adding the two terms together in \cref{eqn:apply_outer_offset}, noting that we drop the $\alpha^2$ term in the \cref{eqn:z2_L} to more simply group the terms, we have
\begin{equation}
    \\|\E_{\Q}[\phi\ss{o}(\vz_{L}(\vx))\phi\ss{o}(\vz_{L}(\vx))^\T]\|_2 
    \leq 24\sqrt{M} \left(\gamma_{\alpha,M} (K^2 \lor 1)\right)^{L}\left(\alpha^2 + 1\lor \tfrac1{D\ss{i}}\|\vx\|^2_2\right).
\end{equation}

\end{proof}

Given \cref{lem:offdiag-operator-norm}, we can prove \cref{lem:app-off-diagonal-variance} using the definition of operator norm.
\begin{proof}[Proof of \cref{lem:app-off-diagonal-variance}]
We have 
\begin{align}
    |O^i_{\Q}(\vx)| = \frac{1}{M}|\E_{\Q}[\vw_{L+1}]^\T\E_{\Q}[\phi\ss{o}(\vz_{L})\phi\ss{o}(\vz_{L})^\T]\E_{\Q}[\vw_{L+1}]|
    \leq \frac{1}{M}\|\E_{\Q}[\vw_{L+1}]\|_2^2\norm{\E_{\Q}[\phi\ss{o}(\vz_{L})\phi\ss{o}(\vz_{L})^\T]}_2,
\end{align}
by the definition of the operator norm. \Cref{lem:parameter_bound} tells us that $\|\E_{\Q}[\vw_{L+1}]\|_2^2 \leq K^2 = 2\KL(\Q,\P)$. Combining this estimate with \cref{lem:offdiag-operator-norm} gives
\begin{align}
    |O^i_{\Q}(\vx)| 
    &\leq \tfrac{1}{M} \left(K^2\right) \left(24 \sqrt{M} \left(\gamma_{\alpha,M} (K^2 \lor 1)\right)^{L}\left(\alpha^2 + 1\lor \tfrac1{D\ss{i}}\|\vx\|^2_2\right) \right)\\
    &\leq \tfrac{48}{\sqrt{M}}
    \KL(\Q,\P) \left(\gamma_{\alpha,M} (2\KL(\Q,\P) \lor 1)\right)^{L}\left(\alpha^2 + 1\lor \tfrac1{D\ss{i}}\|\vx\|^2_2\right).
\end{align}
\end{proof}

\section{Quantitative Bounds on the KL divergence for General  Likelihoods}\label{app:kl-bounds}
The sketch of the proof given to upper bound $\KL(\Q^*,\P)$ in the main text relied on that any convergent sequence is bounded. Unfortunately, this is not quantitative, in that it does not allow an upper bound on a computable upper bound on $\KL(\Q^*,\P)$. In this section, we show how a modification of the approach described in \cref{sec:proof-sketch} can yield a computable upper bound on $\KL(\Q^*,\P)$.

We take the same assumptions as in \cref{sec:proof-sketch}:

\begin{enumerate}[label=(\roman*)]\itemsep0pt
   \item The likelihood factorizes over data points, i.e.~$\log \mathcal{L}(\vtheta) = \sum_{n=1}^N \log p(\vy_n | \vf_{\vtheta}(\vx_n))$, for some function $p$;
    \item there exists a $C$ such that $\log p(\vy_n | \vf_{\vtheta}(\vx_n)) \leq C$;
    \item for any fixed $\vy_n$, $\log p(\vy_n | \vf_{\vtheta}(\vx_n))$ can be lower bounded by a quadratic function in $\vf_{\vtheta}(\vx_n)$.
\end{enumerate}

Also, the proof follows the same initial approach: by the optimality of $\Q^*$, we have
\begin{align}
	0 &\leq \text{ELBO}(Q^*) -  \text{ELBO}(P) \\
	&= \mathbb{E}_{\vtheta\sim Q^*}[\log \mathcal{L}(\vtheta)]- \KL(Q^*, P) - \mathbb{E}_{\vtheta\sim P}[\log \mathcal{L}(\vtheta)].     
\end{align}
Rearranging and using the assumptions on $\log \mathcal{L}(\vtheta)$,
\begin{align}
    \!\!\!\KL(Q^*, P) &\leq \mathbb{E}_{\vtheta\sim Q^*}[\log \mathcal{L}(\vtheta)] -\mathbb{E}_{\vtheta\sim P}[\log \mathcal{L}(\vtheta)]\\
    & \leq \!CN \!-\! \mathbb{E}_{\vtheta\sim P}[\textstyle\sum_{n=1}^N \!\log p(\vy_n | \vf_{\vtheta}(\vx_n))] \\
    & \leq \! CN \!-\! \E_{\vtheta \sim \P}[\textstyle\sum_{n=1}^N h_n(\vf_{\vtheta}(\vx_n))]
\end{align} 
where $h_n$ is quadratic. Since $h_n$ is quadratic, $\E_{\P}[h_n(\vf_{\vtheta}(\vx_n))]$ is a linear combination of the first and second moments of $\vf_{\vtheta}(\vx_n)$. As the top layer weight matrix has mean $0$, $\E_{\P}[\vf_{\vtheta}(\vx_n)]=0$ independent of width. We therefore need only to upper bound $\E_{\P}[\vf_{\vtheta}(\vx_n)^2]$ independent of width.

It suffices to prove this in the case $D\ss{o}=1$. In the general case, the resulting bound can simply be summed over output dimension. For the variance, note that
\begin{align}
    \V[f(\vx)]
    &= \tfrac1M\V[\lra{\vw_{L+1}, \phi(\vz_L)}] + \V[b_{L+1}] \\
    &= \tfrac1M\E[\tr[\vw_{L+1} \vw_{L+1}^\T \phi(\vz_L) \phi^\T(\vz_L)]] + 1 \\
    &= \tfrac1M\E[\norm{\phi(\vz_L)}^2_2] + 1 \\
    &\le \tfrac1M\E[\norm{\vz_L}^2_2] + 1.
\end{align}
Similarly,
\begin{align}
    \tfrac1M\E[\norm{\vz_\ell}^2_2]
    &= \tfrac1{M^2}\E[\tr[\mW_{\ell} \mW^\T_{\ell} \phi(\vz_{\ell - 1}) \phi^\T(\vz_{\ell - 1})] ]+ \tfrac1M\E[\norm{\vb_{\ell}}^2_2] \\
    &= \tfrac1M\E[\norm{\phi(\vz_{\ell - 1})}_2^2] + 1 \\
    &\le \tfrac1M\E[\norm{\vz_{\ell - 1}}_2^2] + 1.
\end{align}
Therefore,
\begin{equation}
    \V[f(x)] \le L + 1 + \tfrac{1}{D\ss{i}}\norm{\vx}_2^2. \label{eqn:var-prior-bound}
\end{equation}
\begin{example}[Gaussian Likelihood]
Consider a Gaussian likelihood so that,
\begin{equation}
    \log p(y_n | f(\vx_n)) = -\log 2 \pi \sigma^2 - \tfrac{1}{2\sigma^2}(y-f(\vx_n))^2.
\end{equation}
We then have $\log p(y_n | f(\vx_n)) \leq -\log 2 \pi \sigma^2=:C$. Also, 
\begin{align}
    \mathbb{E}_{\P}[\log p(y_n | f(\vx_n))] = CN - \frac{\sum_{n=1}^N y_n^2+\V[f(\vx_n)]}{2\sigma^2}
\end{align}
Hence by \cref{eqn:var-prior-bound},
\begin{align}
     \KL(Q^*, P) \leq \frac{(L+1)N+ \sum_{n=1}^{N} y_n^2 + \|\vx_n\|^2_2 }{2\sigma^2}. \label{eqn:kl-upper-bound-gaussian}
\end{align}
\end{example}

\begin{example}[Student $t$ likelihood]
    For a Student $t$ likelihood (used in robust regression) with $\nu >0$ degrees of freedom, we have
    \begin{equation}
        \log p(y_n
        |f(\vx_n)) = c(\nu) - \frac{\nu+1}{2} \log \parens*{1+\frac{(f(\vx_n)-y_n)^2}{\nu}}.
    \end{equation}
    As the second term is non-negative, this is upper bounded by $c(\nu)$. 
    Applying $\log (1+a) \leq a$, we have
    \begin{equation}
        \log p(y_n
        |f(\vx_n)) \geq c(\nu) - \frac{\nu+1}{2}\frac{ (f(x_n)-y_n)^2}{\nu}
    \end{equation}
    which provides the desired quadratic lower bound.
\end{example}

\begin{example}[Logistic likelihood]
    For a logistic likelihood, we have
    \begin{equation}
        \log p(y_n
        |f(\vx_n)) = y_n\log\parens*{\frac{1}{1+e^{-f(\vx_n)}}} + (1-y_n)
       \log\left(\frac{e^{-f(\vx_n)}}{1+e^{-f(\vx_n)}}\right)
    \end{equation}
    As both terms are non-positive, we have $        \log p(y_n
        |f(\vx_n))\leq 0$. 
    As $g(a)=\log(1+e^{-a})$ is three times differentiable, we have that for any $a \in \R$, there exists a $\xi_a \in \R$ such that,
    \begin{equation}
        \log(1+e^{-a}) = \log 2 - \frac{a}{2} + \frac{a^2}{8} + \frac{g^{(3)}(\xi_a)}{3!}a^3.
    \end{equation}
    As $\mathrm{sign}(g^{(3)}(\xi_a)) = \mathrm{sign}(\xi_a) =\mathrm{sign}(a) = \mathrm{sign}(a^3)$, the final term is non-negative, so
    \begin{equation}
        \log(1+e^{-a}) \leq \log 2 - \frac{a}{2} + \frac{a^2}{8}.
    \end{equation}
    Hence
    \begin{align}
        \log p(y_n=1
        |f(\vx_n)) &\geq -\log 2 + \frac{f(\vx_n) }{2}-\frac{f(\vx_n) ^2}{8} \text{ and} \\
        \log p(y_n=0
        |f(\vx_n)) &=     \log p(y_n=1
        |f(\vx_n))-f(\vx_n) \geq -\log 2 -\frac{f(\vx_n) }{2}-\frac{f(\vx_n) ^2}{8}.
    \end{align}
\end{example}

\section{Proof of Convergence in Distribution of Finite Marginals of the Variational Posterior for Odd Activation Functions}\label{app:convergence-in-dist}

In this section we derive a generalization of \cref{thm:main} that incorporates a bias.

\begin{theorem}
    Consider $N$ one-dimensional data points $(\vx_n, y_n)_{n=1}^N$ ($D\ss{i}\in \N$ and $D\ss{o}=1$) and let $\overline{y}$ be the average of the observed values.
    Let $Q^*$ be the optimal mean-field variational posterior for a neural network with $L$ hidden layers and $M$ neurons per hidden layer.
    Suppose $\phi\ss{e}=\alpha$ for some $\alpha \in \R$ and $\phi\colon \R \to \R$ is $1$-Lipschitz.
    Also suppose that the likelihood is Gaussian with variance parameter $\sigma^2$.
    Then, along any finite-dimensional distribution, as $M \to \infty$, $f \sim Q^*$ converges weakly to the sum of the NNGP and an independent Gaussian with distribution 
    \begin{equation}
        \Normal\parens*{ \parens*{\beta \times 1 + (1 - \beta) \times \frac{1}{1 + \alpha^2 + \frac{\sigma^2}{N}}} \overline{y}, \frac{\sigma^2}{\sigma^2 + N}}
        \quad \text{where} \quad
        \beta = \frac{\alpha^2}{\alpha^2 + \frac{\sigma^2}{N}}.
    \end{equation}
\end{theorem}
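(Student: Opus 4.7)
The plan is to peel off the final bias and reduce to the setting of \cref{thm:main}. Decompose $f(\vx) = \widetilde{f}(\vx) + c$, where $\widetilde{f}(\vx)$ is the output excluding $b_{L+1}$ and the $\alpha$-contribution $\tfrac{\alpha}{\sqrt M}\vw_{L+1}^\T\vone$ (so it is exactly the $\widetilde{\vf}_{\theta}$ of \cref{thm:app-convergence-of-mean,thm:app-convergence-of-variance}), and $c = \tfrac{\alpha}{\sqrt M}\vw_{L+1}^\T\vone + b_{L+1}$ collects both bias-like terms into a single scalar random variable, Gaussian under $Q^*$ by mean-field, that does not depend on $\vx$. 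The goal is to show that $\widetilde{f}$ converges weakly under $Q^*$ to the NNGP of the network without final bias (as in \cref{thm:main}), that $c$ converges weakly to an explicit Gaussian determined by first-order conditions on the ELBO, and that these limits are independent; splitting $c$ into a mean-zero $\alpha^2$-variance piece (absorbed back into the NNGP) plus the remaining contribution then yields the stated "NNGP plus independent Gaussian" form.

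The key steps, in order, are as follows. First, bound $\KL(Q^*,P) = O(1)$ uniformly in $M$ via the Gaussian-likelihood argument of \cref{lem:kl-bound} and \cref{app:kl-bounds}. Second, apply \cref{thm:app-convergence-of-mean,thm:app-convergence-of-variance} to deduce that the first and second moments of $\widetilde{f}$ under $Q^*$ converge to their prior-predictive values (mean zero, variance given by the NNGP-of-$\widetilde{f}$ kernel), and show by the same oddness-based cancellations underlying \cref{lem:last-layer-bound} and \cref{lem:cov} that the cross-moment $\E_{Q^*}[\widetilde{f}(\vx)\, c] \to 0$. The bounded-KL constraint simultaneously forces $\tfrac{1}{M}\sum_m \sigma_{w,m}^2 \to 1$ (via $r(a)\ge(\sqrt a-1)^2$ summed and Cauchy--Schwarz), so the $\alpha$-part of $\Var_{Q^*}(c)$ tends to $\alpha^2$. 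Third, expand the Gaussian-likelihood ELBO under the decomposition $f = \widetilde{f} + c$; the pieces depending on $\widetilde{f}$ become additive constants in the $M \to \infty$ limit, reducing the problem to an explicit scalar optimization in $(\mu_c, \sigma_b^2)$. Among mean-field weights realizing a given $\mu_{B_2} := \tfrac{\alpha}{\sqrt M}\vone^\T\vmu_w$, Cauchy--Schwarz forces the KL contribution from $\vmu_w$ to be at least $\mu_{B_2}^2/(2\alpha^2)$; optimally splitting $\mu_c = \mu_{B_1} + \mu_{B_2}$ between $\mu_b$ and the weight-sum mean yields a combined KL contribution $\mu_c^2/(2(1+\alpha^2))$. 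Solving the resulting first-order conditions in $(\mu_c, \sigma_b^2)$ gives $\mu_c^* = \tfrac{N(1+\alpha^2)\overline y}{N(1+\alpha^2)+\sigma^2}$, which matches the stated mean after algebraic simplification in terms of $\beta$, and $\sigma_b^{2*} = \tfrac{\sigma^2}{\sigma^2+N}$.

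Finally, to match the stated form, split the limiting $c$ as $c = Z_1 + Z_2$ with $Z_1 \sim \Normal(0,\alpha^2)$ and $Z_2 \sim \Normal(\mu_c^*, \tfrac{\sigma^2}{\sigma^2+N})$ independent; then $f = [\widetilde f + Z_1] + Z_2$, where $\widetilde f + Z_1$ is in distribution the NNGP of the network without final bias (the NNGP referenced by \cref{thm:main}) and $Z_2$ is the stated independent Gaussian correction. The main obstacle will be upgrading the moment-level control of the second step into rigorous joint weak convergence of finite-dimensional marginals of $(\widetilde{f}(\vx_1), \ldots, \widetilde{f}(\vx_k), c)$: the marginal convergences follow from \cref{thm:main} and its $\alpha \neq 0$ extension, but the joint convergence requires that $\vone$ and $\phi\ss{o}(\vz_L(\vx))$ be asymptotically orthogonal under $Q^*$ at each hidden layer, which ultimately reduces to the same odd-activation cancellations driving \cref{lem:last-layer-bound}.
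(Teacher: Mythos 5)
Your decomposition $f = \widetilde{f} + c$, with $\widetilde{f}$ the bias-free odd part and $c = \tfrac{\alpha}{\sqrt M}\vw_{L+1}^\T\vone + b_{L+1}$, is genuinely different from the paper's. The paper instead defines $\vw'_{L+1} = \vw_{L+1} - \tfrac{c_M}{\sqrt M}\vone$ for a carefully chosen $c_M$ and decomposes $\tfrac{1}{\sqrt M}\lra{\vw_{L+1},\phi_\alpha(\vz_L)} = g_\theta + \tfrac{c_M}{M}\lra{\vone,\phi_\alpha(\vz_L)}$, where $g_\theta$ is the network with the \emph{shifted} last-layer weights. The purpose of this shift is the crux of the argument: it is precisely what lets the paper prove $\KL(Q^*_{\vw'_{L+1},\vtheta_{1:L}},P_{\vw_{L+1},\vtheta_{1:L}}) \to 0$ (the paper's Claim~1), which via Pinsker and data processing gives total-variation convergence of the predictive of $g_\theta$ to the prior predictive, and thus weak convergence to the NNGP.

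This is where your proposal has a real gap. You correctly identify the final distributional answer and your scalar first-order conditions on $(\mu_c,\sigma_b^2)$ produce the right $\mu_c^*$ and $\sigma_b^{2*}$, but your mechanism for weak convergence of $\widetilde{f}$ is moment-level: you apply \cref{thm:app-convergence-of-mean,thm:app-convergence-of-variance} to control the first two moments and then assert convergence to the NNGP. First- and second-moment convergence does not upgrade to convergence in distribution of finite-dimensional marginals (neither does adding the vanishing cross-moment $\E_{Q^*}[\widetilde f(\vx)\,c]\to0$). You flag "upgrading the moment-level control" as the main obstacle, but mischaracterize its resolution as a question of oddness-based orthogonality. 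The actual resolution in the paper is to show that the \emph{distribution} of the optimal $(\vw_{L+1},\vtheta_{1:L})$, after centering $\vw_{L+1}$ by the explicitly computed mean $\tfrac{c_M}{\sqrt M}\vone = (\mA+\sigma^2\mI)^{-1}\vb$, converges to the prior in KL; this requires the full optimality analysis of the last-layer variational parameters, not just Cauchy--Schwarz lower bounds on the KL contribution. Relatedly, citing \cref{thm:main} (the simplified form of exactly the theorem you are proving) to establish "the marginal convergences" is circular. If you instead carried out the paper's Claim-1 mechanism, your decomposition would also work -- the identity $g_\theta = \tfrac{1}{\sqrt M}\lra{\vw'_{L+1},\phi\ss{o}(\vz_L)} + \tfrac{\alpha}{\sqrt M}\lra{\vw'_{L+1},\vone}$ lets you recover your $\widetilde f + Z_1$ split from the paper's objects -- but the KL-to-prior convergence of the shifted top layer is the ingredient you cannot skip.
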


Observe that the mean of the independent Gaussian is a convex combination of the maximum likelihood solution $\overline{y}$ of fitting a univariate Gaussian to the data and the posterior mean $(1 + \alpha^2 + \sigma^2/N)^{-1} \overline{y}$ of observing the data as noisy observations for the final bias with noise variance $\sigma^2 + N \alpha^2$.
The coefficient of the convex combination, $\beta \in [0, 1)$, measures the strength of $\alpha$ relative to the observation noise and number of observations.
As $\alpha \to \infty$, the mean converges to the maximum likelihood solution;
and as $\alpha \to 0$, the mean converges to the posterior mean.

\begin{proof}
    The proof conceptually proceeds in two steps.
    In the first conceptual step, we show that the parameters of all layers but the last layer converge to the prior.
    Intuitively, this means that the overall solution converges to mean-field inference in the Bayesian linear regression model with features $x \mapsto \E[\phi(z_{L,m})(x)]$.
    In the second conceptual step, we show that this mean-field solution of the Bayesian linear regression model converges to the prior, thereby completing the proof.
    Whereas conceptually the proof proceeds in two steps, below we split up these two steps into six steps, as follows:
    \begin{enumerate}[leftmargin=5em]
        \item[Step 1:] We prove that the parameters of the hidden layers converge to the prior.
        \item[Step 2:] By assuming two claims, Claim 1 and Claim 2, we almost conclude the proof. It only remains to compute the limiting distribution of the final bias.
        \item[Step 3:] We prove Claim 1.
        \item[Step 4:] Using Claim 1, we compute the limiting distribution of the final bias.
        \item[Step 5:] Using Claim 1 and the limiting distribution of the final bias, we prove Claim 2.
        \item[Step 6:] We reconcile Step 2 with the limiting distribution of the bias to finally conclude the proof.
    \end{enumerate}
    
    Throughout the proof and unlike in the theorem statement, we do \emph{not} incorporate $\alpha$ in $\phi$, but write $\phi_\alpha = \phi + \alpha$ for the version of $\phi$ that does include $\alpha$.
    Moreover, we write $\vtheta_{L+1}$ for the parameters of the final layer and let $\vtheta_{1:L}$ be all remaining parameters, so $\vtheta = (\vtheta_{L+1}, \vtheta_{1:L})$.
    We decompose the parameters of the final layer as $\vtheta_{L+1} = (\vw_{L+1}, b_{L+1})$, so also $\vtheta = (\vw_{L+1}, b_{L+1}, \vtheta_{1:L})$.
    
    We will consider the limits $M \to \infty$ and $\KL \to 0$.
    Using results from earlier sections in the appendix, we establish the following bounds which ignore proportionality constants irrelevant for the limits $M \to \infty$ and $\KL \to 0$:
    \begin{align}
        \norm{\E_Q[\phi(\vz_{L+1})]}_2
        \overset{\text{(\cref{lem:last-layer-bound})}}&{\lesssim}\!\!\!\!\! \sqrt{\KL(Q_{\vtheta_{1:L}}, P_{\vtheta_{1:L}})},
        \label{eq:dist:phi}
        \\
        \abs{\E_Q[\tfrac{1}{\sqrt{M}} \lra{\vw_{L+1}, \phi(\vz_{L+1})}]}
        \overset{\text{(\cref{thm:app-convergence-of-mean})}}&{\lesssim}\!\!\!\!\! \tfrac{1}{\sqrt{M}} \KL(Q_{\vw_{L+1}, \vtheta_{1:L}}, P_{\vw_{L+1}, \vtheta_{1:L}}),
        \label{eq:dist:mean}
        \\
        \abs{\E_Q[b_{L+1}]} \overset{\text{(\cref{lem:parameter_bound})}}&{\lesssim} \sqrt{\KL(Q_{b_{L+1}}, P_{b_{L+1}})},
        \label{eq:dist:bias}
        \\
        \tfrac1M \norm{\E_{Q}[\phi(\vz_L)\phi(\vz_L)^\T]}_2
        \overset{\text{(\cref{lem:offdiag-operator-norm})}}&{\lesssim}
        \tfrac{1}{\sqrt{M}},
        \label{eq:dist:outer}
        \\
        \norm{\E_Q[\phi_\alpha(\vz_L)^2]}_\infty
        \overset{\text{(\cref{lem:z-squared-recursion})}}&{\lesssim}
        1,
        \label{eq:dist:square}
        \\
        \abs{\E_{Q}[\phi(z_{L,m})\phi(z_{L,m'})]}
        \overset{\text{(\cref{lem:cov,lem:z-squared-recursion}, $m \neq m'$)}}&{\lesssim}
        \sqrt{\KL(Q_{\vtheta_{1:L}}, P_{\vtheta_{1:L}})},
        \label{eq:dist:cov}
        \\
        \!\!\!\!\!\!\!\!\tfrac1M\abs{\tr \left( \E_{Q}[\vw_{L+1} \vw_{L+1}^\T] (
            \E_{Q}[\phi(\vz_L) \phi(\vz_L)^\T]
            \!-\! \E_{P}[\phi(\vz_L) \phi(\vz_L)^\T]
        ) \right)}
        \overset{\text{(\cref{lem:second_moment_conditional_prior})}}&{\lesssim} \tfrac{1}{\sqrt{M}} \sqrt{\KL(Q_{\vw_{L+1}, \vtheta_{1:L}},\! P_{\vw_{L+1}, \vtheta_{1:L}})},
        \label{eq:dist:complicated_trace}
        \\
        \tfrac1{M}\abs{\E[\lra{\vw_{L+1}, \alpha \vone} \lra{\vw_{L+1}, \phi(\vz_L)}]}
        \overset{\text{(\cref{lem:inner_prod})}}&{\lesssim}
        \tfrac{1}{\sqrt{M}} \sqrt{\KL(Q_{\vw_{L+1}, \vtheta_{1:L}}, P_{\vw_{L+1}, \vtheta_{1:L}})}.
        \label{eq:dist:complicated_inner_prod}
    \end{align}
    Recall that any KL divergence between parameters of an optimal variational posterior and the prior is bounded uniformly over $M$ (\cref{app:kl-bounds}).
    This means, e.g., that the mean of any weight of any variational posterior can be considered as an unknown but bounded constant (\cref{app:parameter-bounds}).

    \textbf{Step 1 (convergence of hidden layers).}
    Let $P$ be the prior and let $Q^*$ be the optimal mean-field posterior.
    Decompose the KL divergence as follows:
    \begin{equation}
        \KL(Q^*, P)
        = \KL(Q^*_{\vtheta_{L+1}}, P_{\vtheta_{L+1}}) + \KL(Q^*_{\vtheta_{1:L}}, P_{\vtheta_{1:L}})
        = \E_{Q^*}[\log p(\vy \cond \vtheta)] - \operatorname{ELBO}(Q^*).
    \end{equation}
    Let $Q'$ be the modification of $Q^*$ where the distribution of $\vtheta_{1:L}$ is set to the prior.
    Then
    \begin{align}
        \KL(Q^*_{\vtheta_{1:L}}, P_{\vtheta_{1:L}})
        &=
            \E_{Q^*}[\log p(\vy \cond \vtheta)]
            - \operatorname{ELBO}(Q^*)
            - \KL(Q^*_{\vtheta_{L+1}}, P_{\vtheta_{L+1}}) \\
        &=
            \E_{Q^*}[\log p(\vy \cond \vtheta)]
            - \operatorname{ELBO}(Q^*)
            + \operatorname{ELBO}(Q')
            - \E_{Q'}[\log p(\vy \cond \vtheta)].
    \end{align}
    Therefore, by optimality of $Q^*$,
    \begin{align}
        &\KL(Q^*_{\vtheta_{1:L}}, P_{\vtheta_{1:L}}) \nonumber \\
        &\quad\le\vphantom{\sum^N_{n}} \E_{Q^*}[\log p(\vy \cond \vtheta)] - \E_{Q'}[\log p(\vy \cond \vtheta)] \\
        &\quad= -\frac1{2\sigma^2}\sum_{n=1}^N(
            \E_{Q^*}[(y_n - f(\vx_n))^2]
            - \E_{Q'}[(y_n - f(\vx_n))^2]
        ) \\
        &\quad= -\frac1{2\sigma^2}\sum_{n=1}^N(
            \E_{Q^*}[f^2(\vx_n) - 2 y_n f(\vx_n) ]
            - \E_{Q^*}[\E_{P}[f^2(\vx_n) - 2 y_n f(\vx_n) \cond \vtheta_{L+1}]]
        ) \\
        &\quad\le \frac1{2\sigma^2}\sum_{n=1}^N\Big(
            \abs{\E_{Q^*}[f^2(\vx_n) -\E_{P}[f^2(\vx_n)\cond \vtheta_{L+1}]]}
            + 2 \abs{y_n} \abs{\E_{Q^*}[f(\vx_n) -\E_{P}[f(\vx_n)\cond \vtheta_{L+1}]]}
        \Big).
    \end{align}
    Define $\tilde b = \tfrac1{\sqrt{M}}\lra{\vw_{L+1}, \alpha \vone} + b_{L+1}$
    and let $\tilde f(\vx_n) = f(\vx_n) - \tilde b$.
    Note that $\tilde b$ is $\sigma(\vtheta_{L+1})$-measurable (i.e.~$\tilde b$ is deterministic after conditioning on the top layer parameters).
    Rearrange as
    \begin{align}
        &\E_{Q^*}[f^2(\vx_n) -\E_{P}[f^2(\vx_n)\cond \vtheta_{L+1}]] \nonumber \\
        &\quad=
            \E_{Q^*}[
                \tilde f^2(\vx_n) + 2 \tilde b \tilde f(\vx_n) + \tilde b^2
                -\E_{P}[\tilde f^2(\vx_n) + 2 \tilde b \tilde f(\vx_n) + \tilde b^2\cond \vtheta_{L+1}]
            ] \\
        &\quad=
        \E_{Q^*}[
                \tilde f^2(\vx_n)
                -\E_{P}[\tilde f^2(\vx_n) \cond \vtheta_{L+1}]
            ]
        + 2\, \E_{Q^*}[\tilde b (\tilde f(\vx_n) - \E_P[\tilde f(\vx_n) \cond \vtheta_{L+1}])].
    \end{align}
    This gives
    \begin{align}
        &\KL(Q^*_{\vtheta_{1:L}}, P_{\vtheta_{1:L}}) \nonumber \\
        &\quad\le \frac1{2\sigma^2}\sum_{n=1}^N\Big(
            \abs{\E_{Q^*}[\tilde f^2(\vx_n) -\E_{P}[\tilde f^2(\vx_n)\cond \vtheta_{L+1}]]}
            + 2 \abs{y_n} \abs{\E_{Q^*}[\tilde f(\vx_n) -\E_{P}[\tilde f(\vx_n)\cond \vtheta_{L+1}]]} \nonumber \\
        &\hphantom{\quad\le \frac1{2\sigma^2}\sum_{n=1}^N\Big(}\; + 2\, \abs{\E_{Q^*}[\tilde b (\tilde f(\vx_n) - \E_P[\tilde f(\vx_n) \cond \vtheta_{L+1}])]} \Big).
    \end{align}
    By oddness of $\phi$, it can be seen that $\E_P[\tilde f(\vx_n) \cond \vtheta_{L+1}] = 0$.
    Therefore,
    \begin{align}
        &\KL(Q^*_{\vtheta_{1:L}}, P_{\vtheta_{1:L}}) \nonumber \\
        &\quad\le \frac1{2\sigma^2}\sum_{n=1}^N\Big(
            \abs{\E_{Q^*}[\tilde f^2(\vx_n) -\E_{P}[\tilde f^2(\vx_n)\cond \vtheta_{L+1}]]}
            + 2 \abs{y_n} \abs{\E_{Q^*}[\tilde f(\vx_n)]} + 2 \abs{\E_{Q^*}[\tilde b \tilde f(\vx_n)]} \Big).
    \end{align}
    Here
    \begin{align}
        \E_{Q^*}[\tilde f^2(\vx_n) -\E_{P}[\tilde f^2(\vx_n)\cond \vtheta_{L+1}]]
        &= \tfrac1M\tr \E_{Q^*}[\vw_{L+1} \vw_{L+1}^\T] (
            \E_{Q^*}[\phi(\vz_L) \phi(\vz_L)^\T]
            - \E_{P}[\phi(\vz_L) \phi(\vz_L)^\T]
        ), 
    \end{align}
    which is $O(1/\sqrt{M})$ by \cref{eq:dist:complicated_trace}, and
    \begin{align}
        \abs{\E_{Q^*}[\tilde b \tilde f(x_n)]}
        &\le
            \abs{\E_{Q^*}[b_{L+1}]} \abs{\E_{Q^*}[\tilde f(x_n)]}
            + \tfrac1{M}\abs{\E_{Q^*}[\lra{\vw_{L+1}, \alpha \vone} \lra{\vw_{L+1}, \phi(\vz_{L})}]},
    \end{align}
    which is $O(1/\sqrt{M})$ by \cref{eq:dist:bias,eq:dist:mean} applied to the first term and \cref{eq:dist:complicated_inner_prod} applied to the second term. Therefore, $\KL(Q^*_{\vtheta_{1:L}}, P_{\vtheta_{1:L}}) = O(1/\sqrt{M})$.
    
    \textbf{Step 2 (beginning of conclusion of proof).}
    Let
    \begin{equation}
        d_M = \sum_{n=1}^N (y_n - \E_{Q^*}[b_{L+1}]).
    \end{equation}
    Although $d_M$ depends on $Q^*_{b_{L+1}}$ which in turn depends on $M$, note that $\E_{Q^*}[b_{L+1}]$ and therefore $d_M$ can be treated like unknown but bounded constants.
    Set $c_M = \frac{\alpha d_M}{\alpha^2 N + \sigma^2}$.
    We make two claims:
    \begin{align}
        \KL(Q^*_{\vw_{L+1} - \frac{c_M}{\sqrt{M}} \vone}, P_{\vw_{L+1}}) &\to 0,
         \tag{Claim 1} \quad \\
        c_M &\to c_\infty \tag{Claim 2}
    \end{align}
    where $Q^*_{\vw_{L+1} - \frac{c_M}{\sqrt{M}} \vone}$ is the distribution of $\vw_{L+1} - \frac{c_M}{\sqrt{M}} \vone$ under $Q^*$ and $c_\infty \in \R$ is some constant.
    The claims will be proven in the next parts.
    Assuming the claims, denote $\vw_{L+1}' = \vw_{L+1} - \frac{c_M}{\sqrt{M}}  \vone$ and $\phi_\alpha = \phi + \alpha$ and decompose
    \begin{align}
        \tfrac{1}{\sqrt{M}}\lra{\vw_{L+1}, \phi_\alpha(\vz_L)}
        &= \tfrac{1}{\sqrt{M}}\lra{\vw'_{L+1}, \phi_\alpha(\vz_L)} + \tfrac{c_M}{M} \lra{\vone, \phi_\alpha(\vz_L)}.
    \end{align}
    By Chebyshev's inequality, we have $\tfrac{c_M}{M} \lra{\vone, \phi(\vz_L)} \to 0$ in probability under $Q^*$:
    \begin{align}
        \E_{Q^*}[\tfrac{1}{M} \lra{\vone, \phi(\vz_L)}]
        &\le \tfrac{1}{\sqrt{M}} \norm{\E_{Q^*}[\phi(\vz_L)]}_2 \\
        \V_{Q^*}[\tfrac1M\lra{\vone, \phi(\vz_L)}]
        &= \tfrac1{M^2} \lra{\vone, \E_{Q^*}[\phi(\vz_L)\phi(\vz_L)^\T] \vone}
        - \tfrac{1}{M^2} \lra{\vone, \E_{Q^*}[\phi(\vz_L)]}^2 \\
        &\le \tfrac1M \norm{\E_{Q^*}[\phi(\vz_L)\phi(\vz_L)^\T]}_2 + \tfrac1M \norm{\E_{Q^*}[\phi(\vz_L)]}_2^2,
    \end{align}
    which are both $O(1/\sqrt{M})$ using \cref{eq:dist:phi,eq:dist:outer}.
    Since $\tfrac{c_M}{M} \lra{\vone, \phi(\vz_L)} \to 0$ in probability, $\tfrac{c_M}{M} \lra{\vone, \phi_\alpha(\vz_L)} = \tfrac{c_M}{M} \lra{\vone, \phi(\vz_L)} + \alpha c_M \to \alpha c_\infty$ in probability by Claim 2.
    
    Suppressing the dependence on $\vx$, define $g_{\vtheta} = \tfrac{1}{\sqrt{M}}\lra{\vw'_{L+1}, \phi_\alpha(\vz_L)}$ and note that it is a deterministic function of $(\vw'_{L+1}, \vtheta_{1:L})$.
    Also write $f_{\vtheta} = \tfrac{1}{\sqrt{M}}\lra{\vw_{L+1}, \phi_\alpha(\vz_L)}$ and note that it the \emph{same} deterministic function of $(\vw_{L+1}, \vtheta_{1:L})$.
    (That it is the same deterministic function will allow us to use the data processing inequality below.)
    Let $I \in \mathbb{N}$ and $\mX = (\vx_1, \ldots, \vx_I) \in (\R^{D\ss{i}})^I$.
    Let $Q^{(M)}_{g}$ be the finite-dimensional distribution of $g$ at $\mX$ under the optimal mean-field solution $Q^*$ at width $M$
    and
    let $P^{(M)}_{f}$ be the finite-dimensional distribution of $f$ at $\mX$ under the prior $P$ at width $M$.
    Using Pinsker's inequality and the data processing inequality for KL-divergences we have
    \begin{align}
        \mathrm{TV}(Q^{(M)}_{g}, P^{(M)}_{f}) \leq \sqrt{\tfrac{1}{2}\KL(Q^{(M)}_{g}, P^{(M)}_{f})} \leq \sqrt{\tfrac{1}{2}\KL(\Q^{(M)}_{\vw'_{L+1}, \vtheta_{1:L}},\P^{(M)}_{\vw_{L+1}, \vtheta_{1:L}})}, \label{eqn:pinsker0}
    \end{align}
    which goes to zero as $M \to \infty$ by the previous part and Claim 1.
    Let $d$ be the L\'evy--Prokhorov metric on $\R^I$ with the Borel $\sigma$-algebra.
    Since $\R^I$ is separable, the L\'evy--Prokhorov metric metrizes weak convergence.
    Moreover, since $\R^I$ is separable, the L\'evy--Prokhorov metric is upper bounded by the total variation distance.
    By triangle inequality we then bound the distance between $Q^{(M)}_{g}$ and the NNGP, which we denote by $P\ss{NN}$:
    \begin{align}
        d(Q^{(M)}_{g}, P\ss{NN})
        \leq
            d(Q^{(M)}_{g}, P^{(M)}_{f})
            + d(P^{(M)}_{f}, P\ss{NN})
        \leq
            \operatorname{TV}(Q^{(M)}_{g}, P^{(M)}_{f})
            + d(P^{(M)}_{f}, P\ss{NN}).
    \end{align}
    As $M \to \infty$, the first term converges to zero by \cref{eqn:pinsker0} and the second term converges to zero because $P_f^{(M)}$ converges to the NNGP \citep{matthews_2018}.
    Hence, $Q^{(M)}_{g}$ converges weakly to the NNGP, $P\ss{NN}$.
    Since $f = g + \tfrac{c_M}{M} \lra{\vone, \phi_\alpha(\vz_L)}$ and we previously showed that $\tfrac{c_M}{M} \lra{\vone, \phi_\alpha(\vz_L)} \to \alpha c_\infty$ in probability under $Q^*$, we conclude that, along any finite-dimensional distribution, $f$ converges to the NNGP plus the constant $\alpha c_\infty$.
    It remains to add the limiting distribution of the bias, which we do in the last step of the proof.
    
    \textbf{Step 3 (proof of Claim 1).}
    We previously claimed that
    \begin{equation}
        \KL(Q^*_{\vw_{L+1} - \frac{c_M}{\sqrt{M}} \vone}, P_{\vw_{L+1}}) \to 0.
    \end{equation}
    We now prove this claim.
    Let $q^*(\vtheta) = q^*(\vw_{L+1})q^*(b_{L+1})q^*(\vtheta_{1:L})$ be the density of $Q^*$ w.r.t.\ the Lebesgue measure.
    Let $\L(q(\vw_{L+1}),q(b_{L+1}),q(\vtheta_{1:L}))$ be the ELBO:
    \begin{align}
        &\L(q(\vw_{L+1}),q(b_{L+1}),q(\vtheta_{1:L})) \nonumber \\
        &\qquad= \E_q[\log p(\vy \cond \vtheta)]
            - \KL(q(\vw_{L+1}),p(\vw_{L+1}))
            - \KL(q(b_{L+1}),p(b_{L+1}))
            - \KL(q(\vtheta_{1:L}),p(\vtheta_{1:L})).
    \end{align}
    Because $Q^*$ is optimal, $q^*(\vw_{L+1})$ maximizes the function $q(\vw_{L+1}) \mapsto \L(q(\vw_{L+1}),q^*(b_{L+1}),q^*(\vtheta_{1:L}))$.
    We therefore parametrize $q(\vw_{L+1}) = \Normal(\vmu, \diag(\vnu))$ and set the gradients of $(\vmu, \vnu) \mapsto \L(q(\vw_{L+1}),q^*(b_{L+1}),q^*(\vtheta_{1:L}))$ to zero to find equations which characterize the mean and variance of $q^*(\vw_{L+1}) = \Normal(\vmu^*, \diag(\vnu^*))$.
    Consider the joint density $q(\vtheta) = q(\vw_{L+1})q^*(b_{L+1})q^*(\vtheta_{1:L})$.
    Denote $\phi_\alpha = \phi + \alpha$.
    Compute
    \begin{align}
        &\E_q[\log p(\vy \cond \vtheta)]
        - \KL(q(\vw_{L+1}),p(\vw_{L+1})) \nonumber \\
        &\;
        = -\frac1{2\sigma^2}\sum_{n=1}^N
        \parens*{
            \tfrac1M\lra{
                \vmu \vmu^\T + \diag(\vnu),
                \E_{Q^*}[\phi_\alpha(\vz_L)\phi_\alpha(\vz_L)^\T]
            }
            - 2 \tfrac{1}{\sqrt{M}}\lra{
                    \vmu,
                    \E_{Q^*}[(y_n - b_{L+1})\phi_\alpha(\vz_L)]
            }
            + \E_{Q^*}[y_n - b_{L+1}]^2
        } \nonumber\\
        &\quad\qquad - \frac12 \norm{\vmu}_2^2 - \frac12 \sum_{m=1}^M(\nu_m - 1 - \log(\nu_m)).
    \end{align}
    Denote
    \begin{equation}
        \mA = \frac1M \sum_{n=1}^N \E_{Q^*}[\phi_\alpha(\vz_L(\vx_n))\phi_\alpha(\vz_L(\vx_n))^\T],
        \quad
        \vb = \frac1{\sqrt{M}}\sum_{n=1}^N(y_n - \E_q[b_{L+1}]) \E_{Q^*}[\phi_\alpha(\vz_L(\vx_n))].
    \end{equation}
    Then the part of the ELBO depending on $\vmu$ and $\vnu$ can be written as
    \begin{equation}
         -\frac1{2\sigma^2}
        \sbrac*{
            \lra{
                \vmu \vmu^\T + \diag(\vnu),
                \mA
            }
            - 2 \lra{
                    \vmu,
                    \vb
            }
            + (y_n - \E_{Q^*}[b_{L+1}])^2
        }  - \frac12 \norm{\vmu}_2^2 - \frac12 \sum_{m=1}^M(\nu_m - 1 - \log(\nu_m)).
    \end{equation}
    Setting the gradient with respect to $\vmu$ to zero gives
    \begin{equation}
        -\frac1{\sigma^2}\mA \vmu^* + \frac1{\sigma^2}\vb - \vmu^* = 0
        \implies
        \vmu^* = (\mA + \sigma^2 \mI)^{-1}\vb.
    \end{equation}
    Similarly, setting the gradient with respect $\nu_m$ to zero gives
    \begin{equation}
        -\frac{1}{2\sigma^2} A_{mm} - \frac12 + \frac12\frac1{\nu_m^*} = 0
        \implies
        \nu_m^* = \frac{1}{1 + \frac{1}{\sigma^2}A_{mm}}.
    \end{equation}
    Therefore,
    \begin{align}
        \KL(Q^*_{\vw_{L+1} - \tfrac{c_M}{\sqrt{M}} \vone}, P_{\vw_{L+1}})
        &= \frac12 \norm{\vmu^* - \tfrac{c_M}{\sqrt{M}} \vone}_2^2
        + \frac12\sum_{m=1}^M (\nu_m^* - 1 - \log(\nu_m^*)) \\
        &= \frac{1}{2 \sigma^2} \norm{\vmu^* - \tfrac{c_M}{\sqrt{M}} \vone}_2^2
        + \frac{1}{2} \sum_{m=1}^M\parens*{
            \log(1 + \tfrac{1}{\sigma^2}A_{mm})
            - \frac{\tfrac{1}{\sigma^2}A_{mm}}{1 + \tfrac{1}{\sigma^2}A_{mm}}
        } \\
        &\overset{\text{(i)}}{\le} \frac{1}{2 \sigma^2} \norm{\vmu^* - \tfrac{c_M}{\sqrt{M}} \vone}_2^2
        + \frac{1}{2} M\parens*{
            \log(1 + \tfrac1{M}\tfrac{M}{\sigma^2}A_{m^*})
            - \frac{\tfrac1{M}\tfrac{M}{\sigma^2}A_{m^*}}{1 + \tfrac1{M}\tfrac{M}{\sigma^2}A_{m^*}}
        } \\
        &\overset{\text{(ii)}}{\le} \frac{1}{2 \sigma^2} \norm{\vmu^* - \tfrac{c_M}{\sqrt{M}} \vone}_2^2
        + \frac{1}{2 \sigma^2} \frac{M}{\sigma^2}A^2_{m^*}
    \end{align}
    where in (i) $A_{m^*} = \max_{m \in [M]} A_{mm}$ and we use that $a - 1 - \log(a)$ is increasing in $a$ for $a<1$ and that $\nu^*_m<1$. In (ii) we used the inequality\footnote{
        Note that the inequality is equivalent to
        $
            \log(1 + x) - \frac{x}{1 + x} \le x^2
        $
        for all $x > 0$, which follows from $\log(1 + x) < x$ for all $x > 0$.
    }
    \begin{equation}
        x \parens*{\log(1 + \tfrac{c}{x}) - \frac{\tfrac{c}{x}}{1+\tfrac{c}{x}}} \le \frac{c^2}{x}
        \quad 
        \text{for all $c \ge 0$ and $x > 0$}
    \end{equation}
    with $c = \tfrac{M}{\sigma^2}A_{m^*} \ge 0$ and $x = M > 0$.
    By \cref{eq:dist:square}, $M A^2_{m^*} = O(1/M)$, so the claim is shown if
    \begin{equation}
        \norm{(\mA + \sigma^2 \mI)^{-1} \vb - \tfrac{c_M}{\sqrt{M}} \vone}_2 \to 0.
    \end{equation}
    Using that
    \begin{equation}
        \norm{(\mA + \sigma^2 \mI)^{-1} \vb - \tfrac{c_M}{\sqrt{M}} \vone}_2
        \le \norm{(\mA + \sigma^2 \mI)^{-1}}_2 \norm{\vb - \tfrac{c_M}{\sqrt{M}} (\mA + \sigma^2 \mI)\vone}_2
        \le \sigma^{-2} \norm{\vb - \tfrac{c_M}{\sqrt{M}} (\mA + \sigma^2 \mI)\vone}_2,
    \end{equation}
    it suffices to show that
    $\sqrt{M}\norm{\vb - \tfrac{c_M}{\sqrt{M}} (\mA + \sigma^2 \mI)\vone}_\infty \to 0$:
    \begin{align}
        &\max_{m \in [M]} \Bigg| \sum_{n=1}^N(y_n - \E_q[b_{L+1}])\E_{Q^*}[\phi_\alpha(z_{L,m}(\vx_n))] \\
        &\qquad\qquad- \frac{\alpha d_M}{\alpha^2 N + \sigma^2}
        \parens*{\sigma^2 +\frac1M\sum_{m'=1}^M \sum_{n=1}^N \E_{Q^*}[\phi_\alpha(z_{L,m}(\vx_n))\phi_\alpha(z_{L,m'}(\vx_n))]}\Bigg| \to 0. \nonumber
    \end{align}
    Expand
    \begin{align}
        &\frac1M\sum_{m'=1}^M \sum_{n=1}^N \E_{Q^*}[\phi_\alpha(z_{L,m}(\vx_n))\phi_\alpha(z_{L,m'}(\vx_n))] \nonumber \\
        &\quad= 
            \frac1M\sum_{m'=1}^M \sum_{n=1}^N
            (
                \E_{Q^*}[\phi(z_{L,m}(\vx_n)) \phi(z_{L,m'}(\vx_n))]
                + \alpha( \E_{Q^*}[\phi(z_{L,m}(\vx_n))] + \E_{Q^*}[\phi(z_{L,m'}(\vx_n))]) + \alpha^2
            ) \\
        &\quad= 
            \frac1M\sum_{n=1}^N
                \E_{Q^*}[\phi(z_{L,m}(\vx_n))^2]
                 + \frac1M\sum_{m'=1}^M \sum_{n=1}^N \alpha^2
             \\
        &\quad\qquad
        + \frac1M\sum_{m'\neq m}^M \sum_{n=1}^N
                \E_{Q^*}[\phi(z_{L,m}(\vx_n)) \phi(z_{L,m'}(\vx_n))]
        +\frac1M\sum_{m'=1}^M \sum_{n=1}^N
            \alpha( \E_{Q^*}[\phi(z_{L,m}(\vx_n))] + \E_{Q^*}[\phi(z_{L,m'}(\vx_n))]). \nonumber
    \end{align}
    Collecting the first two terms of the expansion, note that, by definition of $d_M$ and \cref{eq:dist:square},
    \begin{equation}
        \max_{m \in [M]} \Bigg|\sum_{n=1}^N (y_n - \E_{Q^*}[b_{L+1}]) \alpha
        - \frac{\alpha d_M}{\alpha^2 N + \sigma^2}
        \parens*{\sigma^2
            +\frac1M \sum_{n=1}^N \E_{Q^*}[\phi(z_{L,m}(\vx_n))^2] + \frac1M \sum_{n=1}^N \sum_{m'=1}^M \alpha^2
        }\Bigg|
        \to 0.
    \end{equation}
    It therefore remains to show that the remainder also goes, which follows from the following three limits:
    \begin{align}
        \max_{m \in [M]}\,\Bigg|\sum_{n=1}^N (y_n - \E_{Q^*}[b_{L+1}]) \E_{Q^*}[\phi(z_{L,m}(\vx_n))]\Bigg| \overset{\text{\cref{eq:dist:phi,eq:dist:bias} with } \KL(Q^*_{\vtheta_{1:L}}, P_{\vtheta_{1:L}}) \to 0}&{\to} 0, \\
        \max_{m \in [M]}\,\Bigg|\frac1M \sum_{n=1}^N \sum_{m' \neq m}^M \E_{Q^*}[\phi(z_{L,m}(\vx_n))\phi(z_{L,m'}(\vx_n))]\Bigg| \overset{\text{\cref{eq:dist:cov} with } \KL(Q^*_{\vtheta_{1:L}}, P_{\vtheta_{1:L}}) \to 0}&{\to} 0, \\
        \max_{m \in [M]}\,\Bigg|\frac{\alpha}M \sum_{n=1}^N \sum_{m'=1}^M \E_{Q^*}[\phi(z_{L,m}(\vx_n)) + \phi(z_{L,m'}(\vx_n))]\Bigg| \overset{\text{\cref{eq:dist:phi} with } \KL(Q^*_{\vtheta_{1:L}}, P_{\vtheta_{1:L}}) \to 0}&{\to} 0.
    \end{align}
    
    \textbf{Step 4 (convergence of the bias).}
    The starting point is to note that
    \begin{align}
        &\tfrac{1}{\sqrt{M}}
        \E_{Q^*}[\lra{\vw_{L+1}, \phi_\alpha(\vz_{L})}] \nonumber \\
        &\quad=
            \tfrac{1}{\sqrt{M}}
            \lra{\E_{Q^*}[\vw_{L+1} - \tfrac{c_M}{\sqrt{M}} \vone],\E_{Q^*}[\phi_\alpha(\vz_{L})]}
            +
            \tfrac{c_M}{M}
            \lra{\vone,\E_{Q^*}[\phi(\vz_{L})]}
            + \alpha c_M
    \end{align}
    where
    \begin{equation}
        \alpha c_M
        = \frac{\alpha^2 d_M}{\alpha^2 N + \sigma^2}
        = \frac{\alpha^2}{\alpha^2 N + \sigma^2} \sum_{n=1}^N (y_n - \E_{Q^*}[b_{L+1}])
        \eqqcolon
        \beta ( \overline{y} - \E_{Q^*}[b_{L+1}])
    \end{equation}
    with
    \begin{equation}
        \overline{y} = \frac1N\sum_{n=1}^N y_n,
        \qquad
        \beta = \frac{\alpha^2}{\alpha^2 + \sigma^2 / N}.
    \end{equation}
    Moreover,
    \begin{align*}
        \abs{
            \tfrac{1}{\sqrt{M}}
            \lra{\E_{Q^*}[\vw_{L+1} - \tfrac{c_M}{\sqrt{M}} \vone],\E_{Q^*}[\phi_\alpha(\vz_{L})]}
        }
        &\le \tfrac{1}{\sqrt{M}} (2\KL(Q^*_{\vw_{L+1} - \frac{c_M}{\sqrt{M}} \vone}, P_{\vw_{L+1}}))^{1/2} \norm{\E_{Q^*}[\phi_\alpha(\vz_{L})]}_2 \\
        &\le \tfrac{1}{\sqrt{M}} (2\KL(Q^*_{\vw_{L+1} - \frac{c_M}{\sqrt{M}}\vone}, P_{\vw_{L+1}}))^{1/2} (\sqrt{M} \abs{\alpha} + \norm{\E_{Q^*}[\phi(\vz_{L})]}_2),
    \end{align*}
    which is $o(1)$ by the Claim 1 and \cref{eq:dist:phi}; and
    \begin{align*}
        \abs{
            \tfrac{c_M}{M}
            \lra{\vone,\E_{Q^*}[\phi(\vz_{L})]}
        }
        &\le \tfrac{c_M}{\sqrt{M}} \norm{\E_{Q^*}[\phi(\vz_{L})]}_2,
    \end{align*}
    which is also $o(1)$ by \cref{eq:dist:phi}.
    We conclude that 
    \begin{equation}
        \tfrac{1}{\sqrt{M}}
        \E_{Q^*}[\lra{\vw_{L+1}, \phi_\alpha(\vz_{L})}]
        = \beta ( \overline{y} - \E_{Q^*}[b_{L+1}]) + o(1).
    \end{equation}
    
    We proceed like in the second part:
    Because $Q^*$ is optimal, $q^*(b_{L+1})$ maximizes the function $q(b_{L+1}) \mapsto \L(q^*(\vw_{L+1}),q(b_{L+1}),q^*(\vtheta_{1:L}))$.
    We therefore parametrize $q(b_{L+1}) = \Normal(\mu_M, \nu_M)$ and set the gradients of $(\mu_M, \nu_M) \mapsto \L(q(\vw^*_{L+1}),q(b_{L+1}),q^*(\vtheta_{1:L}))$ to zero to find equations which characterize the mean and variance of $q^*(b_{L+1}) = \Normal(\mu_M^*, \nu_M^*)$.
    Consider the joint density $q(\vtheta) = q^*(\vw_{L+1})q(b_{L+1})q^*(\vtheta_{1:L})$.
    Denote $\phi_\alpha = \phi + \alpha$.
    Compute
    \begin{align}
        &\E_q[\log p(\vy \cond \vtheta)]
        - \KL(q(b_{L+1}),p(b_{L+1})) \nonumber \\
        &\quad
        = -\frac1{2\sigma^2}\sum_{n=1}^N
        \parens*{
            \nu_M + \mu_M^2
            - 2 \mu_M (
                y_n
                - \tfrac{1}{\sqrt{M}}\E_{Q^*}[\lra{
                    \vw_{L+1},
                    \phi_\alpha(\vz_L)
                }]
            )
            + \E_{Q^*}[(
                y_n
                - \tfrac{1}{\sqrt{M}}\lra{
                    \vw_{L+1},
                    \phi_\alpha(\vz_L)
                }
            )^2]
        } \nonumber\\
        &\quad\qquad - \frac12\mu_M^2 - \frac12(\nu_M - 1 - \log(\nu_M)) \\
        &\quad
        = -\frac1{2\sigma^2}\sum_{n=1}^N
        \sbrac*{
            \nu_M + \mu_M^2
            - 2 \mu_M (
                y_n
                - \beta ( \overline{y} - \mu^*_M) + o(1)
            )
            + \E_{Q^*}[(
                y_n
                - \tfrac{1}{\sqrt{M}}\lra{
                    \vw_{L+1},
                    \phi_\alpha(\vz_L)
                }
            )^2]
        } \nonumber\\
        &\quad\qquad - \frac12\mu_M^2 - \frac12(\nu_M - 1 - \log(\nu_M)).
    \end{align}
    Setting the derivative w.r.t.\ $\nu_M$ to zero and solving gives $\nu_M^* = \sigma^2/(\sigma^2 + N)$.
    Setting the derivative w.r.t.\ $\mu_M$ to zero, we find
    \begin{align}
        0 &= -\frac1{2\sigma^2}\sum_{n=1}^N
        \sbrac*{
            2\mu_M^*
            - 2 (
                y_n
                - \beta ( \overline{y} - \mu_M^*)+o(1)
            )
        } - \mu_M^*\\
        &= -\frac N{2\sigma^2}
        \sbrac*{
            2\mu_M^*
            - 2 (
                y_n
                - \beta ( \overline{y} - \mu_M^*)
            )
        } - \mu_M^*  + o(1)\\
        &= -\frac N{\sigma^2}
        \sbrac*{
            \mu_M^*
            - (
                \overline{y}
                - \beta ( \overline{y} - \mu_M^*)
            )
        } - \mu_M^* + o(1) \\
        &= -\frac N{\sigma^2}
        \sbrac*{
            (1 - \beta) \mu_M^*
            - (1 - \beta) \overline{y}
        } - \mu_M^* + o(1)\\
        &= - (1 - \beta) \frac N{\sigma^2}
        \sbrac*{
            \mu_M^*
            - \overline{y}
        } - \mu_M^* + o(1) \\
        &= - \frac{1}{\alpha^2 + \frac{\sigma^2}{N}}
        \sbrac*{
            \mu_M^*
            - \overline{y}
        } - \mu_M^* + o(1) \\
        \implies
        \mu_M^* &= \frac{1}{(\alpha^2 + \frac{\sigma^2}{M})^{-1} + 1}\parens*{
            \frac{1}{\alpha^2 + \frac{\sigma^2}{M}} \overline{y}
            + o(1)
        }.
    \end{align}
    Therefore, taking $M \to \infty$, under $Q^*$,
    \begin{equation}
        b_{L+1} \distto \Normal\parens*{\frac{1}{1 + \alpha^2 + \frac{\sigma^2}{N}} \overline{y}, \frac{\sigma^2}{\sigma^2 + N}}.
    \end{equation}
    
    \textbf{Step 5 (proof of Claim 2).}
    By the previous step, we have that
    \begin{align}
        \alpha c_M
        &= \beta (\overline{y} - \mu^*_M) \\
        &\to \beta \parens*{1 - \frac{1}{1 + \alpha^2 + \frac{\sigma^2}{N}}} \overline{y} \\
        &= c_\infty,
    \end{align}
    which proves Claim 2.
    
    \textbf{Step 6 (end of conclusion of proof).}
    In Step 2, we showed that, under $Q^*$, along any finite-dimensional distribution, $\tfrac{1}{M}\lra{\vw_{L+1}, \phi_\alpha(\vz_L)}$ converges to the NNGP plus the constant $\alpha c_\infty$.
    We now add the limiting distribution of the final bias to this, which concludes the proof.
    For this, we note that $\alpha c_\infty$ simply adds to the mean of the bias:
    \begin{equation}
        \alpha c_M + \mu^*_M \to \parens*{\beta \times 1 + (1 - \beta) \times \frac{1}{1 + \alpha^2 + \frac{\sigma^2}{N}}} \overline{y},
    \end{equation}
    which agrees with the theorem statement.
\end{proof}

\section{Example Showing that the Mean of the Variational Posterior Need not Converge if Activation Functions are not Odd}\label{app:counterexample}

We begin by recalling that any function $\phi: \R \to \R$ can be decomposed into the sum of an even and an odd function in a (unique) way. In particular, we have,
\begin{align}
    \phi(a) = \frac{\phi(a)+\phi(-a)}{2}+\frac{\phi(a)-\phi(-a)}{2}.
\end{align}
We define $\phi_{e}(a)=\frac{\phi(a)+\phi(-a)}{2}$ and $\phi_{o}(a) =\frac{\phi(a)-\phi(-a)}{2}$ to be the even and odd parts of $\phi$ respectively.

The main goal of this section is to prove the following theorem, which shows that for certain activations including ReLU, the variational posterior need not converge to the prior as the width tends to infinity.

\begin{theorem}\label{thm:app-counterexample}
 We consider a Bayesian neural network prior with a standard Gaussian distribution over the weights defines in \cref{eqn:nn-output,eqn:nn-recursion,eqn:nn-base}. Let $\mathcal{Q}$ denote the set of all mean field variational distributions over feed-forward neural networks with activation function $\phi$, $L$ hidden layers, and $M$ neurons per layer.
    Assume the following conditions:
    \begin{enumerate}[label=\roman*.,itemsep=0pt,topsep=0pt,partopsep=0pt,parsep=0pt]
        \item $\phi$ is continuous.
        \item
            $\phi(a) = O(|a|+1)$.
            This condition is equivalent to the linear envelop condition \citet{matthews_2018}.
        \item There exist $a,a' \in \R$ such that $\phi_e(a) \neq \phi_e(a')$. In words, it is not true that $\phi$ is equal to the sum of an odd function and a constant.
        \item There exists an $a \in \R$ such that $\phi(a)^2+\phi(-a)^2 \neq 2\phi(0)^2$. This is equivalent to $\phi^2$ is not equal to an odd function plus a constant.
    \end{enumerate}
    For a dataset $D=(\vx_n, y_n)\ss{n=1}^N$, with $y_n \in \R^{D\ss{i}}$ and a given homoscedastic Gaussian likelihood let $\Q^* \in \mathcal{Q}$ be the optimal variational posterior for this dataset and likelihood. Then, there exists a dataset $D$ and a homoscedastic Gaussian likelihood such that $\Q^*$ satisfies 
    \begin{align}
        |\E_{\Q^*}[f_{\theta}(\vx)]-\E_{\Q^*}[f_{\theta}(\vx')]| \geq c
    \end{align}
    where $c$ is a constant independent of $M$. 
\end{theorem}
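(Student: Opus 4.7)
The plan is to exhibit a one-parameter family of mean-field Gaussian variational distributions $\{Q_{M,c}\}_{M,c}$ whose limiting predictive mean is a non-constant function of $\vx$ whenever $\phi$ fails to be odd modulo a constant, then to design a dataset on which the best member of this family beats the prior in ELBO by a quantitative margin; the optimality of $\Q^*$ will then force its predictive mean to differ between two points $\vx_1, \vx_2$ by at least a constant independent of $M$.

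I would take $Q_{M,c}$ to be the prior on $\vtheta_{1:L}$ together with $\vw_{L+1} \sim \Normal(\tfrac{c}{\sqrt{M}}\vone, \mI)$ and $b_{L+1} \sim \Normal(0,1)$, so that $\KL(Q_{M,c}, \P) = c^2/2$ is independent of $M$. Using exchangeability of the top-layer units under the prior together with the NNGP convergence of \citet{matthews_2018} (in particular, asymptotic pairwise independence of distinct top-layer units), I would verify
\[
\E_{Q_{M,c}}[f(\vx)] \to c\,\mu(\vx), \qquad \V_{Q_{M,c}}[f(\vx)] \to \nu(\vx),
\]
where $\mu(\vx) = \E_{Z \sim \Normal(0, \kappa_L(\vx,\vx))}[\phi(Z)] = \E[\phi\ss{e}(Z)]$ (since the odd part integrates to zero against a centered Gaussian) and $\nu(\vx) = \V_\P[f(\vx)]$ is the NNGP prior variance. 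Assumption~(iii) forces the map $s \mapsto \E_{Z \sim \Normal(0,s^2)}[\phi\ss{e}(Z)]$ to be non-constant (its Fourier representation $\tfrac{1}{2\pi}\int \widehat{\phi\ss{e}}(t)\, e^{-s^2 t^2/2}\isd t$ can only be constant in $s$ if $\widehat{\phi\ss{e}}$ is supported at the origin, contradicting non-constancy of $\phi\ss{e}$), and since $\kappa_L(\vx,\vx)$ varies with $\|\vx\|$, I may choose $\vx_1, \vx_2$ with $\mu_1 := \mu(\vx_1) \neq \mu(\vx_2) =: \mu_2$.

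I then take $N=2$ with $(y_1, y_2) = (Y, -Y)$ for a scalar $Y > 0$ fixed later, and a homoscedastic Gaussian likelihood of variance $\sigma^2$. Closed-form optimization of the Gaussian ELBO over $c$ within $\{Q_{M,c}\}$ yields $c^* = Y(\mu_1 - \mu_2)/(\mu_1^2 + \mu_2^2 + \sigma^2)$, for which, in the limit $M \to \infty$,
\[
\textstyle\sum_{n} \E_{Q_{M,c^*}}[(y_n - f(\vx_n))^2] + 2 \sigma^2 \KL(Q_{M,c^*}, \P) \;\to\; Y^2(2-r) + \nu(\vx_1) + \nu(\vx_2), \qquad r := \tfrac{(\mu_1 - \mu_2)^2}{\mu_1^2 + \mu_2^2 + \sigma^2} \in (0, 2).
\]
By $\operatorname{ELBO}(\Q^*) \geq \operatorname{ELBO}(Q_{M,c^*})$ combined with $\E_{\Q^*}[(y_n - f(\vx_n))^2] \geq (y_n - \E_{\Q^*}[f(\vx_n)])^2$, writing $a_n = \E_{\Q^*}[f(\vx_n)]$ I obtain $\sum_n(y_n - a_n)^2 \leq Y^2(2-r) + \nu(\vx_1) + \nu(\vx_2) + \epsilon_M$ with $\epsilon_M \to 0$. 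The reverse triangle and Cauchy--Schwarz inequalities then yield $|a_1 - a_2| \geq 2Y - \sqrt{Y^2(4-2r) + 2\nu(\vx_1) + 2\nu(\vx_2) + 2\epsilon_M}$; since $r > 0$ implies $\sqrt{4-2r} < 2$, choosing $Y$ sufficiently large (independently of $M$) makes the right-hand side strictly positive and uniformly bounded below by some $c > 0$ for all sufficiently large $M$.

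The main technical obstacle is verifying the NNGP-limit identities for $Q_{M,c}$ with enough quantitative control to bound $\epsilon_M$; in particular, showing $\V_{Q_{M,c}}[f(\vx)] \to \nu(\vx)$ rather than picking up an additional $c^2 \mu(\vx)^2$ term reduces to the claim $\tfrac{1}{M^2}\sum_{m \neq m'} \E_{\P}[\phi(z_{L,m}(\vx))\phi(z_{L,m'}(\vx))] \to \mu(\vx)^2$, which follows from the asymptotic pairwise independence of distinct top-layer hidden units established in \citet{matthews_2018} but requires care about uniformity of the convergence. A secondary, more cosmetic point is producing $\vx_1, \vx_2$ with $\mu(\vx_1) \neq \mu(\vx_2)$ from assumption~(iii) via the Gaussian-smoothing/Fourier argument above; this is straightforward once the non-constancy of $s \mapsto \E[\phi\ss{e}(sW)]$ is formalized.
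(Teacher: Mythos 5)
Your proposal follows essentially the same strategy as the paper: choose a one-parameter family of competitors $\Q_{M,c}$ that shift the mean of the output-layer weights by $(c/\sqrt{M})\vone$, note $\KL(\Q_{M,c},\P)=c^2/2$, use the non-oddness of $\phi$ to make the limiting predictive mean $c\,\lambda(\vx)$ non-constant, craft a two-point dataset on which these competitors beat the prior, and then transfer the ELBO advantage to a lower bound on $|\E_{\Q^*}[f(\vx_1)]-\E_{\Q^*}[f(\vx_2)]|$ via Jensen and the triangle inequality. Your symmetric dataset $(Y,-Y)$ with a closed-form optimal $c^*$ and the paper's choice $y_n=\sqrt{C}\lambda(\vx_n)$ are cosmetically different but arithmetically equivalent; the final ``pick $Y$ large / pick $\sigma^2=1/C$ and $C$ large'' steps play the same role.

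Two parts of your argument need more care, and both are places where the paper spends real effort. First, your Fourier-inversion justification of ``$s\mapsto\E[\phi_{\mathrm e}(sZ)]$ non-constant $\Rightarrow$ $\phi_{\mathrm e}$ constant'' does not work as stated: $\phi_{\mathrm e}$ is only assumed to have linear growth, so $\widehat{\phi_{\mathrm e}}$ need not exist as a function, and the ``supported at the origin'' claim doesn't directly apply. The paper instead differentiates the identity $\int \phi(u)e^{-bu^2}\,du=\mathrm{const}$ in $b$ (Leibniz, justified by the linear-envelope condition) to obtain orthogonality of $\phi_{\mathrm e}$ to all monomials $u^{2n}$, and then uses density of polynomials in $L^2(\R,e^{-bu^2}\,du)$ to conclude $\phi_{\mathrm e}$ is constant (Lemma~\ref{lem:odd-fn1}). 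Second, you assert ``$\kappa_L(\vx,\vx)$ varies with $\|\vx\|_2$'' without noticing that this is exactly where condition~(iv) enters: for $L>1$ the image of $\vx\mapsto k^{\ell}(\vx,\vx)$ only stays non-degenerate through the recursion $k^{\ell}=1+h(k^{\ell-1})$ if $h(a)=\E[\phi^2(aZ)]$ is non-constant on the relevant interval, which needs $\phi^2$ not to be odd-plus-constant. The paper proves this by induction (Proposition~\ref{prop:kernel-diagonal}), applying the same $L^2$-density argument to $\phi^2$. With these two gaps filled, the rest of your calculation (the value $Y^2(2-r)+\nu_1+\nu_2$ at $c^*$, the resulting bound $|a_1-a_2|\ge 2Y-\sqrt{Y^2(4-2r)+2\nu_1+2\nu_2+2\epsilon_M}$, and the choice of $Y$) is correct.
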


Our proof strategy will be to find a sequence of variational distributions (indexed by $M$) with a mean function that does not tend to a constant, and show that there exists a dataset such that sequence of ELBOs defined by this sequence  converges to a number that is higher than the ELBOs of any sequence of variational distributions that have a mean that tends to a constant function.

To this end, we introduce the following sequence of distributions, that will serve as our candidate set of distributions with non-constant means and `good' ELBOs:
\begin{definition}
 For $C \in \R$, define $\Q^C$ to be the mean field Gaussian distribution over weights of a neural network with $L$ hidden layers and $M$ neurons such that $\mathbf{W\ss{L-1}},\dotsc, \mathbf{W\ss{0}}, \mathbf{b\ss{L}}, \dotsc, \mathbf{b\ss{0}} \sim \Normal(\vnull, \mI)$ and $\mathbf{W\ss{L}}\sim \Normal(\frac{\sqrt{C}}{\sqrt{M}}\mathbf{1},\mI)$.
\end{definition}

\subsection{Sketch of Construction}
We now sketch the ideas behind the counterexample. 
The KL divergence $\KL(\Q^{C},\P)=\frac{C}{2}$ (\cref{prop:kl-to-prior}). Hence it suffices to show that $\Q^{C}$ has a expected log likelihood term at least $\frac{C}{2}$ better than any constant predictor. We note that under $\Q^{C}$, the expected value of each post-activation in the last hidden layer will be the same by exchangeability. In the case of odd activations, by symmetry this was $0$, but for other activations this expectation will generally depend on $\vx$. By choosing the final weight layer to be parallel to $\mathbf{1}$, we will make the variation in the mean as large as possible (as the mean of the last layer is parallel to the expected value of the post-activations). If the $y$ values happen to fall on this line, $\Q^{C}$ will obtain a much better mean-square error than any constant predictor. We can upper bound the variance of $\Q^{C}$ at the data, as this is the same as the prior variance. The proof is then completed by choosing values for parameters to show that $\Q^{C}$ is better than any hypothetical variational approximation with near constant mean.

\subsection{Preliminary Definitions and Results}

In this section, we define several quantities and state the necessary preliminaries to construct the counter-example. We include the proofs when they are brief, but defer the proof of \cref{prop:kernel-diagonal}, which is more involved until after constructing the counterexample.

We first define a function to represent the expected value of the post-activations in the final hidden layer,
\begin{definition}
Define $ \lambda_M\colon \R^{D\ss{i}} \to \R$ by
\begin{align}
    \lambda_M(\vx) = \E_{\P}[\phi(\vz_{L-1}^{M})].
\end{align}
Further, define
\begin{align}
    \lambda(\vx) = \E[
    \vz_{L-1}(\vx)], \qquad \vz_{L-1} \sim \Normal(0, k^{L-1}(\vx,\vx))
\end{align}
where $k^{L-1}$ is defined by the recursion,
\begin{align}\label{eqn:kernel-recursion}
    k^0(\vx,\vx) = 1 + \|\vx\|^2_2, \qquad k^{\ell}(\vx,\vx) = 1 + h(k^{\ell-1}(\vx,\vx)),
\end{align}
with $h\colon (0,\infty) \to \R$ defined by $h(a) = \E_{z\sim \Normal(0,1)}[\phi(az)^2]$. 
\end{definition}
In words, $\lambda_M(\vx)$ is the expected value of the output of the each neuron in the final hidden layer of the network under the prior at input $\vx$ (after applying the activation) for a network of width $M$ and intuitively $\lambda(\vx)$ is the limit of $\lambda_M$ as $M \to \infty$ (this will be carefully proven in \cref{prop:convergence-lambda}). $k^\ell(\cdot, \cdot)$ is the kernel function associated to the BNN with this activation under the prior as $M \to \infty$.

In order to construct the counterexample, we need three preliminary results. The first is the following calculation, 

\begin{proposition}
\label{prop:kl-to-prior}
Let $\P$ denote the prior for a network with $L$ hidden layers and $M$ neurons per hidden layer, i.e.~ $\P = \Normal(\vnull, \mI)$. Then, $\KL(\Q^C,\P) = \tfrac{C}{2}$.
\end{proposition}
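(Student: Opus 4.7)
The plan is to decompose the KL divergence across the layer-wise independent weight and bias parameters, observe that every layer except the last contributes zero because $Q^C$ agrees with the prior there, and then directly apply \cref{lem:gaussian-kl} to the final layer.

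Specifically, since $P = \Normal(\vnull, \mI)$ and $Q^C$ are both product measures over the individual weights and biases, the KL decomposes as a sum over layers:
\begin{equation*}
    \KL(Q^C, P) = \sum_{\ell} \KL(Q^C_{\mW_\ell}, P_{\mW_\ell}) + \sum_{\ell} \KL(Q^C_{\vb_\ell}, P_{\vb_\ell}).
\end{equation*}
By the definition of $Q^C$, every bias and every weight matrix other than $\mW_L$ is distributed as $\Normal(\vnull, \mI)$, which matches the prior exactly, so each of those KL terms vanishes. Only $\KL(Q^C_{\mW_L}, P_{\mW_L})$ remains, and under $Q^C$ the matrix $\mW_L$ is distributed as $\Normal(\tfrac{\sqrt{C}}{\sqrt{M}} \vone, \mI)$ (with $M$ entries, matching the dimension of the final hidden layer output).

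Applying \cref{lem:gaussian-kl} to this remaining term with $\vmu_Q = \tfrac{\sqrt{C}}{\sqrt{M}} \vone \in \R^M$ and $\vsigma^2_Q = \vone$, the variance contribution vanishes because $r(1) = 0$, so
\begin{equation*}
    \KL(Q^C, P) = \tfrac{1}{2} \norm{\vmu_Q}_2^2 = \tfrac{1}{2} \cdot M \cdot \tfrac{C}{M} = \tfrac{C}{2},
\end{equation*}
which is the claim. There is no real obstacle here; the only subtle point is keeping track of the dimension of $\mW_L$ so that the factor $\tfrac{1}{\sqrt{M}}$ in the mean precisely cancels the summation over the $M$ entries, yielding a KL that is independent of the width $M$. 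This width-independence is exactly what makes $Q^C$ a useful candidate sequence in the subsequent ELBO comparison.
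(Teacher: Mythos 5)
Your proof is correct and follows the same approach as the paper's: decompose the KL by independence, observe that all factors except the final weight layer match the prior and so contribute zero, and apply \cref{lem:gaussian-kl} to the remaining Gaussian to get $\tfrac12\|\sqrt{C/M}\,\vone\|_2^2 = \tfrac{C}{2}$. The paper's version is just terser, collapsing the layer-wise decomposition into a single line appealing to ``independence and the form of the variational posterior.''
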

\begin{proof}
By independence  and the form of the variational posterior, we have, 
\begin{align}
\KL(\Q^C,\P) 
&= \KL\parens*{\Normal(\sqrt{C/M}\mathbf{1}, \mI\ss{M}),\Normal(\vnull, \mI\ss{M})}
= \frac{1}{2}\left\|\sqrt{C/M}\mathbf{1}\right\|_2^2 
= \frac{C}{2M}\left\|\mathbf{1}\right\|_2^2 
= \frac{C}{2}. \qedhere
\end{align}
\end{proof}

\begin{proposition}\label{prop:convergence-lambda}
    Suppose $\phi$ satisfies conditions i.\ and ii.\ in \cref{thm:app-counterexample}.
    Then, for all $\vx \in \R^{D\ss{i}}$,
    $\lambda_M(\vx ) \to \lambda(\vx )$ and $1 + \E[\phi(\vz_{L-1}^{M}(\vx)])^2] \to k(\vx, \vx)$.
\end{proposition}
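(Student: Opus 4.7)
The plan is to prove both convergences by induction on the depth $\ell$, establishing at each level that $z_{\ell,1}^M(\vx)$ (a single scalar pre-activation coordinate) converges in distribution to $\Normal(0, k^{\ell}(\vx,\vx))$ as $M\to\infty$ and that the families $\set{\phi(z_{\ell,1}^M(\vx))}_M$ and $\set{\phi(z_{\ell,1}^M(\vx))^2}_M$ are uniformly integrable. Taking $\ell = L-1$ and combining continuous mapping with uniform integrability immediately yields $\lambda_M(\vx)\to\lambda(\vx)$ and $1 + \E[\phi(\vz_{L-1}^M(\vx))^2] \to k(\vx,\vx)$, since $\vz_{L-1}^M$ is exchangeable across neuron indices and its one-dimensional marginal is what these expectations depend on.

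For the base case $\ell=1$, $\vz_1^M(\vx) = \tfrac{1}{\sqrt{D\ss{i}}}\mW_1 \vx + \vb_1$ is \emph{exactly} Gaussian with mean zero and per-coordinate variance $1 + \tfrac{1}{D\ss{i}}\|\vx\|_2^2 = k^0(\vx,\vx)$ for every $M$, so the distributional claim is trivial, and uniform integrability of every polynomial in $\phi(z_{1,1}^M(\vx))$ follows from condition (ii), since $|\phi(a)|^p = O(1+|a|^p)$ and all moments of a Gaussian are finite. For the inductive step, fix $\ell\ge 2$ and condition on $\vz_{\ell-1}^M$. Because $\mW_\ell$ and $\vb_\ell$ are independent of $\vz_{\ell-1}^M$ with Gaussian entries, the conditional distribution of $z_{\ell,1}^M(\vx)$ is $\Normal\bigl(0,\,1 + \tfrac{1}{M}\sum_{m'=1}^M \phi(z_{\ell-1,m'}^M(\vx))^2\bigr)$. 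Exchangeability of $(z_{\ell-1,m'}^M(\vx))_{m'=1}^M$ plus the inductive hypothesis (convergence of $\E[\phi(z_{\ell-1,1}^M(\vx))^2]$ to $h(\sqrt{k^{\ell-1}(\vx,\vx)})$ together with uniform integrability of $\phi^2$ at the previous level) yields, via a law of large numbers for exchangeable arrays, that $\tfrac{1}{M}\sum_{m'=1}^M \phi(z_{\ell-1,m'}^M(\vx))^2$ converges in probability to $h(\sqrt{k^{\ell-1}(\vx,\vx)})$. Hence the conditional variance converges to $k^\ell(\vx,\vx)$ in probability, and by a standard subsequence/characteristic function argument $z_{\ell,1}^M(\vx) \distto \Normal(0, k^\ell(\vx,\vx))$. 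Uniform integrability of $\phi(z_{\ell,1}^M(\vx))^p$ for any fixed $p$ then follows by combining the linear envelope bound $|\phi(a)|^p \lesssim 1 + |a|^p$ with boundedness of $\E[|z_{\ell,1}^M(\vx)|^{p+1}]$ uniformly in $M$, which in turn is inherited from the previous layer through the explicit conditional Gaussian representation.

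The main obstacle is the law of large numbers step in the inductive argument: the summands $\phi(z_{\ell-1,m'}^M(\vx))^2$ are not independent across $m'$, only exchangeable, so one must control the variance of their empirical average using either the conditional independence structure across layers or the fact that the off-diagonal correlations vanish at rate $1/M$. This is exactly the setting handled in the deep NNGP convergence theorem of \citet{matthews_2018} under the linear envelope condition, whose machinery (or statement) may be invoked to supply both the required weak convergence and the moment control; the linear envelope condition is precisely what they assume and is exactly condition (ii) here, so the appeal is direct.
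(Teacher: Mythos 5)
Your fallback position --- invoke the NNGP convergence theorem of \citet{matthews_2018} for the weak convergence and their moment/uniform-integrability control, then apply the continuous mapping theorem and Vitali-type convergence to pass to expectations --- is exactly the paper's proof: the paper cites Theorem 4 of \citet{matthews_2018} for $z^M_{L-1,m}(\vx) \distto \Normal(0,k^{L-1}(\vx,\vx))$, their Lemma 21 for uniform integrability of $(\phi(z^M_{L-1,m}(\vx)))_M$, and Billingsley's theorems for the continuous mapping and the upgrade from convergence in distribution to convergence of means; the second-moment statement is handled identically. In that reading your argument is correct and essentially the same, just wrapped in scaffolding the proposition does not need: only the \emph{univariate} marginal of a single coordinate of $\vz^M_{L-1}$ enters $\lambda_M$ and the second moment, so no layer-by-layer induction is required once the cited theorem is in hand.

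If, however, the induction is meant to be self-contained, there is a genuine gap at precisely the step you flag. Your inductive hypothesis carries only the one-dimensional marginal law of $z^M_{\ell-1,1}(\vx)$ together with uniform integrability of $\phi$ and $\phi^2$ of it. That is enough to conclude $\E\bigl[\tfrac1M\sum_{m'}\phi(z^M_{\ell-1,m'}(\vx))^2\bigr]$ converges, but \emph{not} that the empirical average converges in probability: for that you need its variance to vanish, i.e.\ $\operatorname{cov}\bigl(\phi^2(z^M_{\ell-1,m}(\vx)),\phi^2(z^M_{\ell-1,m'}(\vx))\bigr)\to 0$ for $m\neq m'$, which requires convergence of the \emph{bivariate} marginals to independent Gaussians plus uniform integrability of the products. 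Nothing in the hypothesis you propagate delivers this, so the induction as stated does not close; to make it close you would have to strengthen the hypothesis to finite-dimensional marginals (and corresponding moment control), at which point you are reproving the main technical content of \citet{matthews_2018}. The clean resolution is the one the paper takes and that you ultimately gesture at: cite their theorem and uniform-integrability lemma directly, and reserve your own work for the continuous-mapping and expectation-convergence steps, which you handle correctly.
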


\Cref{prop:convergence-lambda} is essentially a corollary of results in \citet{matthews_2018}.

\begin{proof}
    Fix $\vx\in \R^{D\ss{i}}$.
    Let $z \sim \Normal(0, k^{L-1}(\vx,\vx))$ and let $\distto$ denote convergence in distribution.
    By \citet[Theorem 4]{matthews_2018} under $\P$, and hence also under $\Q$, $z_{L-1,m}^M(\vx) \distto z$ for each $m$.
    Since $\phi$ is continuous, by the continuous mapping theorem \citep[Theorem 25.7]{billingsley2008probability}, $\phi(z_{L-1,m}^M(\vx)) \distto \phi(z)$.
    To strengthen convergence in distribution to convergence of the means, we note that $(\phi(z_{L-1, m}^M(\vx)))_{M=1}^\infty$ is uniformly integrable \citep[Lemma 21 by][]{matthews_2018} and apply \citet[Theorem 25.12]{billingsley2008probability}:
    $\lambda_{M}(\vx) = \E[\phi(z_{L-1,m}^M(\vx))] \to \E[\phi(z)] = \lambda(\vx)$.
    Noting that $k^{L}(\vx,\vx) = 1 + \E[\phi(z)^2]$, the proof to show the second limit is exactly the same.
\end{proof}

The final two propositions we need will show states that for activations satisfying conditions i-iv. in \cref{thm:app-counterexample}, $\lambda(x)$ takes at least two values:
\begin{proposition}\label{prop:kernel-diagonal}
Suppose $\phi$ is continuous and $\phi(a)^2+\phi(-a^2) \neq c$. Define $\kappa^{\ell}: \R^{D\ss{i}} \to \R$ by $\kappa^{\ell}(\vx) = k^{\ell}(\vx,\vx)$ with $k^{\ell}(\vx,\vx)$ defined by  \cref{eqn:kernel-recursion}. Then for all $\ell \in \mathbb{N} \cup \{0\}$, $\kappa^{\ell}(\R^{D\ss{i}})$ contains an open interval.
\end{proposition}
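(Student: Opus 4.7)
I would proceed by induction on $\ell$. The base case $\ell = 0$ is immediate since $\kappa^0(\vx) = 1 + \|\vx\|_2^2$ has image $[1,\infty)$. For the inductive step, note that $\kappa^\ell(\R^{D\ss{i}}) = 1 + h(\kappa^{\ell-1}(\R^{D\ss{i}}))$ and that $\kappa^{\ell-1}(\R^{D\ss{i}}) \subseteq [1,\infty)$ by a simple induction using $h \ge 0$. By the inductive hypothesis $\kappa^{\ell-1}(\R^{D\ss{i}})$ contains an open interval $I \subseteq [1,\infty)$, and since $h$ is continuous $h(I)$ is an interval, which contains an open subinterval exactly when $h$ is non-constant on $I$. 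The induction therefore reduces to the key claim that \emph{$h$ is non-constant on every nonempty open subinterval of $(0,\infty)$}.

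The first step in proving the key claim would be to show that $h$ is real-analytic on $(0,\infty)$. For $a > 0$, the substitution $u = az$ yields the representation $h(a) = \sqrt{t/\pi}\,F(t)$ where $t = 1/(2a^2)$ and $F(t) = \int \phi(u)^2 e^{-tu^2}\isd u$. Using the linear-envelope bound on $\phi$ (available from the hypotheses of \cref{thm:app-counterexample}), one can differentiate $F$ under the integral sign arbitrarily many times, bound the resulting moment integrals by Gaussian moments, and check that the Taylor series of $F$ around any $t_0 > 0$ converges on $|t - t_0| < t_0$. This makes $F$, and hence $h$, real-analytic on $(0,\infty)$, so it suffices to rule out the possibility that $h$ is globally constant on $(0,\infty)$.

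The second step handles the latter by contradiction. Assuming $h \equiv c_0$ on $(0,\infty)$, the representation $h(a) = \sqrt{t/\pi}\,F(t)$ together with $\int e^{-tu^2}\isd u = \sqrt{\pi/t}$ gives $\int (\phi(u)^2 - c_0)\, e^{-tu^2}\isd u = 0$ for every $t > 0$. Splitting $\phi^2$ into its even part $\phi_e^2$ and odd part, and noting that the odd part integrates to zero against the even kernel $e^{-tu^2}$, this simplifies to $\int (\phi_e^2(u) - c_0)\, e^{-tu^2}\isd u = 0$ for all $t > 0$. By evenness of the integrand the substitution $s = u^2$ rewrites the identity as the Laplace transform of $s \mapsto (\phi_e^2(\sqrt{s}) - c_0)/\sqrt{s}$ vanishing on $(0,\infty)$; uniqueness of the Laplace transform then forces $\phi_e^2 \equiv c_0$, and evaluating at $0$ gives $c_0 = \phi(0)^2$, contradicting condition iv. The main technical care lies in verifying the integrability conditions that license both the differentiation under the integral and the Laplace-transform uniqueness, both of which rest on the linear-envelope bound on $\phi$.
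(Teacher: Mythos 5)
Your proof is correct, but it takes a genuinely different route from the paper's. The paper handles the key non-constancy step by invoking \cref{lem:odd-fn1} applied to $\phi^2$: writing the condition $h\equiv\text{const}$ on an open interval $I'$ of the reparametrized variable $b=1/(2a^2)$, differentiating under the integral repeatedly, and then using density of polynomials in $L^2(\R, e^{-bu^2}\,du)$ to force the even part of $\phi^2$ to be constant. That argument works directly \emph{on each open interval}, so no analyticity is needed. You instead establish that $h$ (equivalently $F(t)=\int\phi(u)^2 e^{-tu^2}\,du$) is real-analytic on $(0,\infty)$, which upgrades ``constant on some nonempty open interval'' to ``globally constant'' by the identity theorem, and you then kill global constancy via uniqueness of the Laplace transform (after passing to the even part and substituting $s=u^2$). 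Both routes rest on the same structural fact — a one-parameter family of Gaussian-kernel integrals of a fixed function cannot be constant unless the (even part of the) function is — but invoke different off-the-shelf machinery to prove it. Your approach buys a cleaner completeness/uniqueness step: Laplace-transform injectivity is a standard black box, whereas the paper's coefficient-extraction step (``taking the inner product of both sides with respect to $u^i$ shows that $\alpha_i=0$'') is informal, since the monomials $\{u^i\}$ are a complete but non-orthogonal family and one cannot read off expansion coefficients from inner products without first orthogonalizing. The cost is that you must verify real-analyticity of $F$, which requires the moment estimates you sketch; the paper avoids that machinery. Both proofs ultimately draw on the linear-envelope hypothesis of \cref{thm:app-counterexample} even though the proposition statement does not list it, so that reliance is not a deviation on your part.
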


\begin{proposition}\label{lem:odd-fn1}
 Suppose $\phi$ satisfies condition i-iii. of \cref{thm:app-counterexample}. Then for any open interval $I \subset (0, \infty)$ there exists an $a,a' \in I$ such that
\begin{align}
    \gamma(a) \neq \gamma(a').
\end{align}
with $\gamma: (0,\infty) \to \R$ defined by $\gamma(a) = \E_{z\sim \Normal(0,1)}[\phi(az)]$.
Further $\gamma(I)$ contains an open interval.
\end{proposition}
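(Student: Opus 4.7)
The key observation is that by the symmetry $z \disteq -z$ of the standard Gaussian, the odd part of $\phi$ contributes zero to the expectation, so $\gamma(a) = \E[\phi\ss{e}(az)]$ depends only on the even part $\phi\ss{e}$. My plan is to show that $\gamma$ is real-analytic on $(0,\infty)$, so that if it were constant on the open interval $I$, the identity theorem would force it to be constant on all of $(0,\infty)$; I then invert a Laplace transform to conclude $\phi\ss{e}$ is itself constant, contradicting assumption iii of \cref{thm:app-counterexample}.

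To carry this out, I recast $\gamma$ as a Laplace transform. Substituting $u = az$ and then $s = u^2/2$, and using evenness of $\phi\ss{e}$, gives
\begin{equation*}
    \gamma(a) = \frac{\sqrt{2}}{a\sqrt{\pi}}\, (\L g)(1/a^2), \qquad g(s) = \frac{\phi\ss{e}(\sqrt{2s})}{\sqrt{2s}},
\end{equation*}
where $(\L g)(t) := \int_0^\infty g(s) e^{-ts}\, ds$. The linear-envelope condition (ii) gives $|g(s)| \le C(1 + 1/\sqrt{s})$, which is locally integrable on $(0,\infty)$, so $(\L g)(t)$ converges absolutely for every $t > 0$; differentiating under the integral shows $\L g$ is real-analytic on $(0,\infty)$, and therefore so is $\gamma$ (and in particular continuous).

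Now assume for contradiction that $\gamma \equiv c$ on $I$. Since $\gamma$ is real-analytic on the connected set $(0,\infty)$, the identity theorem promotes this to $\gamma \equiv c$ throughout $(0,\infty)$, i.e.\ $(\L g)(t) = c\sqrt{\pi/(2t)}$ for every $t > 0$. Recognising $\sqrt{\pi/t} = (\L s^{-1/2})(t)$ via $\Gamma(1/2) = \sqrt{\pi}$, uniqueness of the Laplace transform on locally integrable functions of mild growth (Lerch's theorem) forces $g(s) = c/\sqrt{2s}$ almost everywhere, and then everywhere by continuity of $\phi\ss{e}$. Hence $\phi\ss{e}(\sqrt{2s}) = c$ for all $s > 0$, and by evenness and continuity $\phi\ss{e} \equiv c$ on $\R$, contradicting assumption iii. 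Therefore there exist $a, a' \in I$ with $\gamma(a) \ne \gamma(a')$. For the second assertion, continuity of $\gamma$ makes $\gamma(I)$ a connected subset of $\R$, i.e.\ an interval; by the previous paragraph it is non-degenerate, and hence contains an open sub-interval.

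The main obstacle is justifying the passage from ``constant on $I$'' back to ``$\phi\ss{e}$ constant,'' which relies on two classical facts: real-analyticity of the Laplace transform on the interior of its convergence region, and injectivity of the Laplace transform on locally integrable functions of mild growth. Both apply here because the envelope bound from (ii) places $g$ squarely in the setting where these theorems hold; if $\phi$ were allowed to grow super-polynomially, one would need a more delicate argument to keep everything inside the classical convergence regime.
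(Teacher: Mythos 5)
Your proof is correct, and it takes a genuinely different route from the paper's. The paper also argues by contradiction, but instead of passing through real-analyticity and Laplace inversion it differentiates the identity $\int \phi(u)e^{-bu^2}\,du = c'$ repeatedly in the parameter $b$ (justified by the linear-envelope bound) to obtain $\int u^{2n}\phi(u)e^{-bu^2}\,du = 0$ for all $n \ge 1$; combined with the automatic vanishing of the odd moments by symmetry, this shows $\phi\ss{e}$ is orthogonal in $L^2(\R, e^{-bu^2}\,du)$ to every non-constant polynomial, and density of polynomials in that weighted space forces $\phi\ss{e}$ to be constant. Your route instead uses the substitution $s = u^2/2$ to expose $\gamma$ as an explicit Laplace transform, then invokes analyticity of the Laplace transform together with the identity theorem to upgrade ``constant on $I$'' to ``constant on $(0,\infty)$,'' and finally Lerch's uniqueness theorem to invert the transform and recover $\phi\ss{e}\equiv c$. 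A pleasant side benefit is that continuity of $\gamma$, needed for the final image-of-an-interval step, falls out of analyticity for free, whereas the paper has to establish it separately in \cref{prop:e-is-continuous}. Both arguments deploy the linear-envelope condition in exactly the same places (to justify integrability/differentiation under the integral sign and to place $g$ in the class where the relevant uniqueness theorem applies), and both ultimately rest on a classical injectivity fact for an integral transform — completeness of polynomials in Gaussian-weighted $L^2$ on the one hand, injectivity of the Laplace transform on the other — so the difference is one of presentation rather than depth. Your version is a bit tidier once the two Laplace-transform facts are granted; the paper's is more self-contained in that it only needs Weierstrass approximation.
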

\begin{remark}\label{rem:combining-props}
Note that taken together, these imply that if $\phi$ satisfies $i-iv.$ then the image of $\lambda$ contains an open interval. This can be seen by applying \cref{prop:kernel-diagonal} with $\ell = L-1$, to conclude the diagonal of $k^{L-1}$ contains and open interval, then noting that $\lambda(\vx) = \E_{z\sim \Normal(0,1)}[\phi(\sqrt{k^{L-1}(\vx,\vx)})z]$, so we may apply \cref{lem:odd-fn1}.
\end{remark}

\subsection{Construction}

\begin{proof}[Construction of counterexample in \cref{thm:app-counterexample}]
Suppose we have a data-set consisting of two points, $((\vx,y), (\vx',y'))$, with $\vx, \vx'$ such that $\lambda(\vx) \neq \lambda(\vx')$. The existence of such an $\vx, \vx'$ is guaranteed by \cref{prop:kernel-diagonal,lem:odd-fn1}, see \cref{rem:combining-props}.

Choose $y=\sqrt{C}\lambda(\vx),y'=\sqrt{C}\lambda(\vx')$ with $C\in (0,\infty)$ to be chosen later. We supress the dependce on $C$ for convenience and write $\Q:=\Q^{C}$. Let $\Q^*$ be any posterior with ELBO better than $\Q$, then
\begin{align}
    \frac{1}{2\sigma^2}&\left(\E_{\Q^*}[(y-f_{\theta}(\vx))^2]+\E_{\Q^*}[(y'-f_{\theta}(\vx'))^2]\right) \\ 
    &\leq \KL(\Q,\P)-  \KL(\Q^*,\P) +     \frac{1}{2\sigma^2}\left(\E_{\Q}[(y-f_{\theta}(\vx))^2]+\E_{\Q}[(y'-f_{\theta}(\vx'))^2]\right) \\
    & \leq \KL(\Q,\P) +     \frac{1}{2\sigma^2}\left(\E_{\Q}[(y-f_{\theta}(\vx))^2]+\E_{\Q}[(y'-f_{\theta}(\vx'))^2]\right) \\
    & = \frac{C}{2} + \frac{1}{2\sigma^2}\left(\E_{\Q}[(y-f_{\theta}(\vx))^2]+\E_{\Q}[(y'-f_{\theta}(\vx'))^2]\right).
\end{align}
where the first inequality comes from rearranging both ELBOs, the second uses non-negativity of the KL divergence and makes use of \cref{prop:kl-to-prior}. 

Using convexity of the squared function to apply Jensen's inequality to the left hand side, and multiplying both sides by $2\sigma^2$ we have,
\begin{align}
    (y-\E_{\Q^*}[f_{\theta}(\vx)])^2+(y'-\E_{\Q^*}[f_{\theta}(\vx')])^2 & \leq \E_{\Q^*}[(y-f_{\theta}(\vx))^2]+\E_{\Q^*}[(y'-f_{\theta}(\vx'))^2] \\
     & \leq \sigma^2 C +\E_{\Q}[(y-f_{\theta}(\vx))^2]+\E_{\Q}[(y'-f_{\theta}(\vx'))^2].
\end{align}
Since $\lambda(\vx) \neq \lambda(\vx')$ there exists a $\beta>0$ such that $|\lambda(\vx)-\lambda(\vx')| \geq \beta/\sqrt{2}$. Using our choice of $y,y'$ 
\begin{align}
    \E_{\Q^*}[(y-f_{\theta}(\vx))^2]&= \E_{\Q^*}{[\sqrt{C}\lambda(\vx)-f_{\theta}(\vx)]^2}\\
    & =(\sqrt{C}\lambda(\vx)-\sqrt{C}\lambda_M(\vx))^2+ \kappa_M(\vx),
\end{align}
where $\kappa_M$ is the variance function for the width $M$ neural net prior. By \cref{prop:convergence-lambda}, for $M$ sufficiently large, we have 
\begin{align}
    (\sqrt{C}\lambda(\vx)-\sqrt{C}\lambda_M(\vx))^2+ \kappa_M(\vx) \leq \kappa(\vx) + \beta^2/2,
\end{align}
where $\kappa$ is variance function of the limiting NNGP kernel. The same argument can be applied to $\vx'$. This gives us the upper bound, for $M$ sufficiently large, 
\begin{align}
    \E_{\Q}[(y-f_{\theta}(\vx))^2]& \leq \kappa(\vx)+\kappa(\vx') + \beta^2.
\end{align}
Combining with our earlier equation, we have
\begin{align}\label{eqn:bound-on-mse}
    (\sqrt{C}\lambda(\vx)-\E_{\Q^*}[f_{\theta}(\vx)])^2+(\sqrt{C}\lambda(\vx')-\E_{\Q^*}[f_{\theta}(\vx')])^2 \leq \sigma^2C + \kappa(\vx) + \kappa(\vx') + \beta^2.
\end{align}
Therefore,
\begin{align}
    &|\E_{\Q^*}[]f_{\theta}(\vx)]- \E_{\Q^*}[f_{\theta}(\vx')]| \nonumber \\
    &\quad=
        |
            (\E_{\Q^*}[f_{\theta}(\vx)] - \sqrt{C}\lambda(\vx)) 
            - (\E_{\Q^*}[f_{\theta}(\vx')] - \sqrt{C}\lambda(\vx'))
            + (\sqrt{C}\lambda(\vx)  - \sqrt{C}\lambda(\vx')
        | \\
    &\quad\ge
        |\sqrt{C}\lambda(\vx)  - \sqrt{C}\lambda(\vx')|
        -
            |
                (\E_{\Q^*}[f_{\theta}(\vx)] - \sqrt{C}\lambda(\vx))
                - (\E_{\Q^*}[f_{\theta}(\vx')] - \sqrt{C}\lambda(\vx'))
            | \\
    &\quad\overset{\smash{\text{(i)}}}{\ge}
        |\sqrt{C}\lambda(\vx)  - \sqrt{C}\lambda(\vx')|
        -
            \sqrt{2}
            \sqrt{
                (\E_{\Q^*}[f_{\theta}(\vx)] - \sqrt{C}\lambda(\vx))^2
                + (\E_{\Q^*}[f_{\theta}(\vx')]- \sqrt{C}\lambda(\vx'))^2
            } \\
    &\quad\ge
        \sqrt{C}\beta/\sqrt{2}
        -
            \sqrt{2}
            \sqrt{\sigma^2C + \kappa(\vx) + \kappa(\vx') + \beta^2} \label{eqn:lb-on-diff}
\end{align}
using in (i) $|(x - a) - (y - b)| \le |x - a| + |y - b| \le \sqrt{2}\sqrt{(x - a)^2 + (y - b)^2}$.
To finish the proof, choose $\sigma^2 = 1/C$ and take $C$ large enough that the first term in \cref{eqn:lb-on-diff} is larger than the second.
\end{proof}

\subsection{Proof of \cref{prop:kernel-diagonal,lem:odd-fn1}}

Having completed the construction of the counterexample, it remains to prove \cref{prop:kernel-diagonal,lem:odd-fn1} in order to verify that we can indeed select two points $\vx, \vx'$ such that $\lambda(\vx) \neq \lambda(\vx')$. Note that for any typical activation, this could simply be verified numerically by working through the recursion for kernel functions, so the main purpose of the following proofs is generality.

\begin{proposition}\label{prop:e-is-continuous}
    Define $\alpha\colon (0,\infty) \to \R$ by $\alpha(a) = \E_{z\sim \Normal(0,1)}[\phi(az)]$. Suppose $\phi$ satisfies conditions i, ii of \cref{thm:app-counterexample}. Then $\alpha$ is continuous.
\end{proposition}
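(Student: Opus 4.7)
The plan is a direct application of the dominated convergence theorem, leveraging the linear envelope condition (assumption ii) to produce a dominating function and the continuity of $\phi$ (assumption i) to obtain pointwise convergence of the integrand.

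Fix $a_0 \in (0,\infty)$ and let $(a_n)_{n\ge 1} \sub (0,\infty)$ be any sequence with $a_n \to a_0$. I would first pass to a tail of the sequence and assume without loss of generality that $a_n \in [a_0/2, 2 a_0]$ for all $n$, so that in particular $A := \sup_n a_n < \infty$. Writing $\varphi$ for the standard Gaussian density on $\R$, I would then rewrite
\begin{equation}
    \alpha(a_n) = \int_{\R} \phi(a_n z) \varphi(z) \isd{z}.
\end{equation}

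By assumption i, the integrand $\phi(a_n z) \varphi(z)$ converges pointwise in $z$ to $\phi(a_0 z)\varphi(z)$ as $n \to \infty$. By assumption ii, there is a constant $C > 0$ such that $|\phi(u)| \le C(|u| + 1)$ for all $u \in \R$; hence
\begin{equation}
    |\phi(a_n z) \varphi(z)| \le C(A|z| + 1)\varphi(z) =: g(z),
\end{equation}
and $g$ is integrable against Lebesgue measure on $\R$ since the standard Gaussian has finite first moment. The dominated convergence theorem then yields $\alpha(a_n) \to \alpha(a_0)$, which gives continuity of $\alpha$ at the arbitrary point $a_0 \in (0,\infty)$.

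There is no substantive obstacle here: assumption ii is precisely the condition needed to manufacture a dominating function uniform over $a_n$ in a bounded neighborhood of $a_0$, and assumption i is precisely what is needed to pass the limit inside $\phi$. The restriction to $(0,\infty)$ is used only in the mild form of keeping $a_n$ bounded away from zero when choosing the window $[a_0/2, 2a_0]$, though this is not strictly required for continuity itself.
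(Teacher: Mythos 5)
Your proof is correct and takes essentially the same approach as the paper: both use the sequential criterion for continuity, exploit boundedness of a convergent sequence to obtain a dominating function linear in $|z|$ (hence integrable against the Gaussian), and apply dominated convergence. The only cosmetic difference is that the paper spells out how the $O(|a|+1)$ asymptotic bound together with continuity of $\phi$ yields the global bound $|\phi(u)| \le M + C|u|$, whereas you state the equivalent pointwise inequality $|\phi(u)| \le C(|u|+1)$ directly.
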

\begin{proof}
    Let $(a_i)_{i \ge 1} \subseteq \R$ be convergent to some $a \in \R$.
    We show that $\E[\phi(a_i z)] \to \E[\phi(a z)]$.
    If we can interchange limit and expectation, then the result follows from continuity of $\phi$.
    To show that we can interchange limit and expectation, we demonstrate an integrable dominating function.
    Since $(a_i)_{i \ge 1}$ is convergent, it is bounded, hence contained in some interval $[-K, K] \subseteq \R$.
    Using that $\phi(a) = O(|a|+1)$, let $|\phi(a)| \le C |a|$ for all $|a| \ge R$ for some $C > 0$.
    Let $M$ be the maximum of $|\phi|$ on $[-R, R]$, which is finite because $|\phi|$ is continuous.
    Then estimate
    \begin{equation}
        |\phi(a_i z)|
        \le \sup_{a \in [-K, K]} |\phi(a z)|
        \le M + C \sup_{a \in [-K, K]} |a z|
        \le M + C K |z|,
    \end{equation}
    which is integrable because $z \sim \Normal(0,1)$.
\end{proof}

\begin{proof}[Proof of \cref{lem:odd-fn1}]
Our proof follows \citet{StackExchangeGaussiansDense}.
Without loss of generality, we may assume $I=(s,t)$ is bounded with $s>0$.

Towards contradiction, suppose there exists a $c \in \R$ such that $\alpha(a)=c$ for all $a \in I$. This supposition can be written as,
\begin{align}\label{eqn:constant-expectation}
    \int \phi(az) e^{-z^2/2} dz = c' \quad \forall a \in I,
\end{align}
where $c' = c\sqrt{2\pi}$. Define $u = az$, $b = 1/(2a^2)$ and $I'= \left(\frac{1}{2t^2}, \frac{1}{2s^2}\right)$. Then we can rewrite \cref{eqn:constant-expectation} as 
\begin{align}
    \int \phi(u) e^{-bu^2} du = c' \quad \forall b \in I'.
\end{align}
We have $|u^{2n}\phi(u)e^{-bu^2}|\leq u^{2n}|\phi(u)|e^{-bu^2} \leq   u^{2n}|\phi(u)|e^{-\frac{u^2}{2t^2}}$, which is integrable since $\phi(u)=O(|x|+1)$. Hence we may take $n$ derivatives and apply Leibniz's rule,
\begin{align}\label{eqn:orthogonal-even}
    \frac{\partial^n}{\partial^n b} \int \phi(u) e^{-bu^2} du =  \int u^{2n} \phi(u) e^{-bu^2} du = 0 \quad \forall b \in I', \forall n \in \mathbb{N}.
\end{align}
For a (arbitrary) $b \in I'$, define $w(u) = e^{-bu^2}$ and $L^2(\R, w)$ to be the Hilbert space with inner product $\langle f,g \rangle = \int f(u)g(u)w(u)du$. 
Following 
\citet{126471}, we note that the set of compactly supported functions is dense in $L^2(\R, w)$, and by the Weirstrauss theorem the set of polynomial is dense in the set of compact functions. As a dense subset of a dense set is again dense, we conclude the set of polynomial is dense in $L^2(\R, w)$. 

Writing $\phi(u) = \phi_e(u) + \phi_o(u)$, we have,
\begin{align}
    \int_{-\infty}^\infty u^{2i-1} \phi_e(u) w(u) du &= 0 \quad \forall i \in \mathbb{N}
\end{align}
as the integrand is odd and we integrate over a symmetric domain. Also,
\begin{align}
    \int u^{2n} \frac{\phi(u)+ \phi(-u)}{2}w(u) du &=
    \frac{1}{2}\int u^{2i} \phi(u)w(u) du + \frac{1}{2}\int u^{2n} \phi(-u)w(u) du \\
    &=\frac{1}{2}\int u^{2i} \phi(u)w(u) du + \frac{1}{2}\int u^{2n} \phi(u)w(u) du \\
    &=0 + 0, \quad \forall i \in \mathbb{N},
\end{align}
by \cref{eqn:orthogonal-even}. Since the polynomial are a basis, we have $\phi_e(u) = \sum_{i=0}^\infty \alpha_i u^i$ for some $(\alpha_i)_{i=1}^\infty$. Taking the inner product of both sides with respect to $u^i$, shows that $\alpha_i=0$ for all $i >0$, hence  $\phi_e(u)= c^{\prime\prime}$ for some $c^{\prime\prime} \in \R$, with equality in $L^2(\R,w)$. But as $\phi_e$ is continuous, this implies it is equal to a $c^{\prime\prime}$ everywhere.

As $I$ contains an open set, it contains a ball. As $\alpha$ is continuous (\cref{prop:e-is-continuous}) the image of this ball under $\alpha$ is connected i.e.~an interval. As the interval contains at least two points, it contains an open interval. 
\end{proof}

\begin{proof}[Proof of \cref{prop:kernel-diagonal}]
The proof proceeds by induction. 

\paragraph{Base case:}
We have $\kappa^{0}(\vx) = 1+ \|\vx\|^2_2$. As $\R^{D\ss{i}}$ contains an open set, it contains an open ball. The image of this open ball under the map $\kappa$, which is continuous, must be connected, hence an interval. Therefore, if it contains at least two points, it contains an open interval. But in any ball, there are two points with different norms, so this must be the case. 

\paragraph{Inductive step:}
Defining $h(a) = \E[\phi(az)]$ and following the recursion for kernels for deep networks \citet[Lemma 2]{matthews_2018}, we have $\kappa^{\ell}(\R^{D\ss{i}}) = 1+ h(\kappa^{\ell-1}(\R^{D\ss{i}}))$. By the inductive hypothesis, $\kappa^{\ell-1}(\R^{D\ss{i}})$ contains an open interval, $I$, the image of which must be connected as $h$ is continuous ($\phi^2$ is continuous and is $O(|a|^2+1)$ so we may apply \cref{prop:e-is-continuous}). Hence, it suffices to show that $\phi^2$ satisfies the conditions of \cref{lem:odd-fn1}. This follows from the assumption that $\phi(a)^2+\phi(-a^2) \neq c$.
\end{proof}
\section{Proof of Constants for Mean Result in Single Hidden-Layer Network}\label{app:1hl-good-constants}

We have,
\begin{align}
    \|\E[\vf(\vx)]\|_2 = \tfrac1{\sqrt{M}} \|\E[\mW_2]\E[\phi(\tfrac{1}{\sqrt{D\ss{i}}}\mW_1\vx+\vb_1)]\| \leq \|\E[\mW_2]\|\ss{F}\|\E[\phi(\tfrac{1}{\sqrt{D\ss{i}}}\mW_1\vx+\vb)]\|_2 \label{eqn:1hl-init-bound}
\end{align}
Defining $\mW' = \mW_1 - \E[\mW_1]$ and $\vb' = \vb_1 - \E[\vb_1]$, we have 
\begin{align}
    \|\E[\phi(\tfrac{1}{\sqrt{D\ss{i}}}\mW_1\vx+\vb)]\|_2  &= \|\E[\phi(\tfrac{1}{\sqrt{D\ss{i}}}\mW_1\vx+\vb_1)] -\E[\phi(\mW'\vx+\vb')] \|_2 \\
    & \leq \E\|\phi(\tfrac{1}{\sqrt{D\ss{i}}}\mW_1\vx+\vb_1)] -\phi(\tfrac{1}{\sqrt{D\ss{i}}}\mW'\vx+\vb')\| \\
    & \leq \E\|(\tfrac{1}{\sqrt{D\ss{i}}}\mW_1-\mW)\vx+(\vb-\vb')\|\\
    & \leq \|\E[\mW_1]\|\ss{F}\tfrac{1}{\sqrt{D\ss{i}}}\|\vx\|_2 + \|\E[\vb_1]\|_2 \label{eqn:1hl-lower-bound}
\end{align}
Combining \cref{eqn:1hl-init-bound} and \cref{eqn:1hl-lower-bound}, 
\begin{align}
    \|\E[\vf(\vx)]\|_2 &\leq  \|\E[\mW_2]\|\ss{F}\|\E[\mW_1]\|\ss{F}\tfrac{1}{\sqrt{D\ss{i}}}\|\vx\|_2 + \|\E[\mW_2]\|\ss{F}\|\E[\vb_1]\|_2.\label{eqn:1hl-norm-bound}
\end{align}
We also know from \cref{lem:parameter_bound} that
\begin{equation}
    \|\E[\mW_1]\|\ss{F}^2+\|\E[\mW_2]\|\ss{F}^2+ \|\E[\vb_1]\|_2 \leq \KL(\Q,\P).\label{eqn:1hl-kl-constraint}
\end{equation}

Combining \cref{eqn:1hl-norm-bound} and \cref{eqn:1hl-kl-constraint} we obtain the following upper bound phrased as an optimization problem which is convex, 
\begin{align}
   \|\E[\vf(\vx)]\|_2 &\leq 
   \max_{\alpha} \quad \alpha_1\alpha_2 \tfrac{1}{\sqrt{D\ss{i}}}\|\vx\|_2+\alpha_1\alpha_3 \\
    & \;\; \text{s.t.} \; \;
    \frac{1}{2}\sum \alpha_i^2 = \KL(\Q,\P),\quad  \alpha_i  \geq 0.
\end{align}

We can solve this optimization via Lagrange multipliers. We form the Lagrangian,
\begin{align}
    \alpha_1\alpha_2 \tfrac{1}{\sqrt{D\ss{i}}}\|\vx\|_2 + \alpha_1\alpha_3 - \lambda \parens*{\frac{1}{2}\sum \alpha_i^2 - \KL(\Q,\P)}.
\end{align}
For convenience, name $c = \tfrac{1}{\sqrt{D\ss{i}}}\|\vx\|_2$. Differentiating with respect to each variable and setting to $0$ gives,
\begin{align}
    c\alpha_2 + \alpha_3 - \lambda \alpha_1 = 0 \\
   \alpha_2  =\frac{c}{\lambda}\alpha_1  \\
    \alpha_3 = \frac{1}{\lambda}\alpha_1.
\end{align}
Plugging these back in to the constraint,
\begin{align}
    \frac{1}{2}\alpha_1^2\parens*{1+\frac{1}{\lambda^2} + \frac{c^2}{\lambda^2}} = \KL(\Q,\P)
\end{align}
Solving for $\lambda$ yields,
\begin{align}
    \lambda = \alpha_1 \sqrt{\frac{1+c^2}{\KL(\Q,\P)-\frac{1}{2}\alpha_1^2}} 
\end{align}
Hence
\begin{align}
    \alpha_2  =c\sqrt{\frac{\KL(\Q,\P)-\frac{1}{2}\alpha_1^2}{1+c^2}}, \quad 
    \alpha_3 = \sqrt{\frac{\KL(\Q,\P)-\frac{1}{2}\alpha_1^2}{1+c^2}}.
\end{align}
We can now plug these back into the remaining constraint to give,
\begin{align}
\sqrt{1+c^2}\sqrt{\KL(\Q,\P)-\frac{1}{2}\alpha_1^2} - \alpha_1^2\sqrt{\frac{1+c^2}{\KL(\Q,\P)-\frac{1}{2}\alpha_1^2}} = 0
\end{align}
Simplifying slightly,
\begin{align}
\parens*{\KL(\Q,\P)-\frac{1}{2}\alpha_1^2}^2 = \alpha_1^4
\end{align}
We recognize this as a quadratic form in $\alpha_1^2$, which can be solved yielding $\alpha_1 = \sqrt{\frac{2}{3}\KL(\Q,\P)}$. So $\alpha_2 = c \sqrt{\frac{\frac{2}{3}\KL(\Q,\P)}{1+c^2}}$ and $\alpha_3 = \sqrt{\frac{\frac{2}{3}\KL(\Q,\P)}{1+c^2}}.$ The result then follows from a short calculation.

\section{Convergence of Mean of Linear Networks}\label{app:convergence-linear-nets}

We consider the case of networks with only affine layers. In particular, we suppose $\phi(a) = a$ for all $a \in \R$. In this case, we can prove upper and lower bounds on the discrepancy between the mean function at two points in the input space.
\subsection{Upper Bounds}
Then for two points $\vx, \vx' \in \R^{D\ss{i}}$,
\begin{align}
    \E[\vf(\vx)] - \E[\vf(\vx')] & = \E[\vf(\vx) - \vf(\vx')] \\
    & =D\ss{i}^{-1/2}M^{-L/2}\E\Big[\prod_{\ell=1}^{L+1} \mW_\ell\Big](\vx -\vx') \\
    & \leq D\ss{i}^{-1/2}M^{-L/2}\norm*{\prod_{\ell=1}^{L+1} \E[\mW_\ell]}_2\|\vx -\vx'\|.
\end{align}

Using sub-multiplicativity of operator norm and that $\|\cdot \|_2 \leq \| \cdot \|_F$, 
\begin{align}
        \|\E[\vf(\vx)] - \E[\vf(\vx')]\|_2 & \leq D\ss{i}^{-1/2}M^{-L/2}\prod_{\ell=1}^{L+1} \norm{\E[\mW_\ell]}\ss{F}\|\vx -\vx'\|_2. \label{eqn:linear-norm-product}
\end{align}
We can now apply the arithmetic-geometric mean inequality, to conclude 
\begin{align}
    \prod_{\ell=1}^{L+1}\norm*{\E[\mW_\ell]}_F \leq  \parens*{\frac{\sum_{\ell=1}^{L+1} \norm{\E[\mW_\ell]}_F}{L+1}}^{L+1} \label{eqn:am-gm-linear}
\end{align}
Using the $\ell^2$-$\ell^1$-inequality, we have, and \cref{lem:parameter_bound},
\begin{align}
    \sum_{\ell=1}^{L+1} \norm{\E[\mW_\ell]}_F \leq \sqrt{L+1} \sqrt{    \sum_{\ell=1}^{L+1} \norm{\E[\mW_\ell]}_F^2} \leq \sqrt{L+1} \sqrt{2\KL(\Q,\P)}. \label{eqn:l1-l2-params}
\end{align}
Combining \cref{eqn:l1-l2-params}, \cref{eqn:am-gm-linear} and \cref{eqn:linear-norm-product} we obtain,
\begin{align}
     \|\E[\vf(\vx)] - \E[\vf(\vx')]\|_2  \leq D\ss{i}^{-1/2}M^{-L/2}\parens*{\frac{2\KL(\Q,\P)}{L+1}}^{\frac{L+1}{2}}\|\vx -\vx'\|_2
\end{align}
\subsection{Lower Bounds}
We consider the case when $\vx'=\vnull$ and $\vx = \ve_1$. We consider the $\Q$ with variance of each parameter equal to $1$ and mean of all bias parameters equal to $0$. We select $\E[W_\ell]$ to be the matrix with entry $1,1$, $c$ and entries $0$ elsewhere. Then, $\frac{1}{2}(L+1)c^2 = \KL(\Q,\P)$, so $c=\frac{\sqrt{2\KL(\Q,\P)}}{L+1}$. Also, 
\begin{align}
       \|\E[\vf(\vx)] - \E[\vf(\vnull)]\|_2 = D\ss{i}^{-1/2}M^{-L/2}c^{L+1} = D\ss{i}^{-1/2}M^{-L/2}(2\KL(\Q,\P))^{\frac{L+1}{2}}(L+1)^{-(L+1)}.
\end{align}
This bound differs from the upper bound by a factor of $(L+1)^{-\frac{L+1}{2}}$.

\section{Lower Bound on Convergence for Non-Linear Networks}\label{app:convergence-lower-bound}

\begin{theorem}
    Assume the following:
    \begin{enumerate}[label=(\roman*),topsep=0pt]
        \item $D\ss{o} = 1$.
        \item $\phi$ is a sum of an odd function and a constant: $\phi\ss{e}$ is constant.
        \item $\phi$ is twice continuously differentiable with $\norm{\phi'}_\infty \le 1$ and $\norm{\phi''}_\infty < \infty$.
        \item $\phi^2$ is not a sum of an odd function and a constant: $\phi^2(a) + \phi^2(-a) \neq 2 \phi^2(0)$ for some $a \in \R$.
    \end{enumerate}
    Then, if $\phi\ss{o}$ is non-linear, there exist two inputs $\vx, \vx' \in \R^{D\ss{i}}$ and a constant $c > 0$ such that, for every $K > 0$, there exists a sequence of mean-field distributions $(\Q_M)_{M \ge 1}$ with $\KL(\Q_M, \P) = K $, one for every network width $M \ge 1$, that achieves
    \begin{equation}
        \lim_{M \to \infty} \sqrt{M}\abs{\E_{\Q_M}[f(\vx)] - \E_{\Q_M}[f(\vx')]} = c K.
    \end{equation}
\end{theorem}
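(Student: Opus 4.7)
The plan is to exhibit an explicit sequence of mean-field distributions $(\Q_M)_{M\geq 1}$ saturating the $M^{-1/2}$ rate by perturbing only the means of the last-hidden-layer bias $\vb_L$ and the output weights $\vw_{L+1}$ away from the prior. Specifically, set $\vb_L \sim \Normal((b/\sqrt{M})\vone, \mI)$ and $\vw_{L+1} \sim \Normal((a/\sqrt{M})\vone, \mI)$ with $a=b=\sqrt{K}$, and leave every other parameter at its prior. The choice is calibrated so that $\KL(\Q_M,\P) = \tfrac12(a^2+b^2) = K$ independently of $M$, while the $1/\sqrt{M}$ scaling of the means recombines with the $1/\sqrt{M}$ normalization of the top layer to yield an $O(1)$ factor times a first-order cross term.

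The second step is a first-order Taylor expansion of $\phi$ at the prior. Write $\vz_L = \vz_L^P + \vmu_b$, where $\vz_L^P$ is the layer-$L$ pre-activation obtained by replacing $\vb_L$ by the centred $\vb_L - \vmu_b$, so that $\vz_L^P$ has the prior law. The hypothesis $\phi \in C^2$ with $\|\phi''\|_\infty < \infty$ yields $\phi(\vz_L) = \phi(\vz_L^P) + \vmu_b \had \phi'(\vz_L^P) + R$ with $\|R\|_\infty \leq \tfrac12 \|\phi''\|_\infty \|\vmu_b\|_\infty^2 = O(1/M)$ almost surely. Combining $\phi = \phi_o+\alpha$ with prior symmetry (which forces $\E_P[\phi_o(z_{L,m}^P)] = 0$ by the oddness argument used throughout \cref{app:main-recursion}) yields
\begin{equation*}
    \E_Q[\phi(z_{L,m}(\vx))] = \alpha + \tfrac{b}{\sqrt{M}}\, g_L(\vx, M) + O(1/M),
\end{equation*}
with $g_L(\vx, M) := \E_P[\phi_o'(z_{L,m}^P(\vx))]$. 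Contracting against $\vmu_w$, using $\vmu_w^\T\vone/\sqrt{M} = a$ and that $\vb_{L+1}$ contributes zero mean under the prior, we get
\begin{equation*}
    \E_{Q_M}[f(\vx)] - \E_{Q_M}[f(\vx')] = \tfrac{ab}{\sqrt{M}}\,(g_L(\vx, M) - g_L(\vx', M)) + O(1/M).
\end{equation*}

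The final step is to pass to the limit and certify a strictly positive constant. By the NNGP convergence result used in \cref{prop:convergence-lambda}, $z_{L,m}^P(\vx) \distto \Normal(0, k_L(\vx,\vx))$ under the prior, and since $\phi_o' = \phi'$ is bounded, uniform integrability gives $g_L(\vx,M) \to H(k_L(\vx,\vx))$, where $H(\kappa) := \E_{z\sim\Normal(0,\kappa)}[\phi_o'(z)]$. By \cref{prop:kernel-diagonal}, the range of $\vx\mapsto k_L(\vx,\vx)$ contains an open interval $I$. If $H$ were constant on $I$, then after the change of variable $u=1/(2\kappa)$ the identity reads $\int \phi_o'(s)\, e^{-us^2}\isd s = c\sqrt{\pi/u}$ on an open $u$-interval; since the left-hand side extends to an analytic function of $u$ on the right half-plane (the bound $|\phi_o'| \le 1$ gives uniform integrability), the identity propagates to the whole half-plane, and injectivity of the Gaussian (Weierstrass) transform applied to $\phi_o'-c$ would force $\phi_o' \equiv c$, contradicting non-linearity of $\phi_o$. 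Hence one can pick $\vx,\vx'$ with $H(k_L(\vx,\vx)) \neq H(k_L(\vx',\vx'))$, and setting $c := |H(k_L(\vx,\vx)) - H(k_L(\vx',\vx'))|>0$ gives $\sqrt{M}|\E_{Q_M}[f(\vx)] - \E_{Q_M}[f(\vx')]| \to cK$ as desired.

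The main obstacle is this last step: non-linearity of $\phi_o$ alone does not rule out that the Gaussian averaging kills all $\kappa$-sensitivity, so the analyticity-plus-injectivity argument is the substantive piece. The remaining steps --- the KL calculation, the second-order Taylor control, and the contraction against $\vmu_w$ --- are routine once one notes that the same prior symmetry that underpins the convergence-to-prior upper bounds is precisely what forces the $O(1/\sqrt{M})$ vanishing but not more.
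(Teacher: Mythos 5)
Your proposal is correct and follows essentially the same route as the paper: perturb only $\E[\vb_L]$ and $\E[\mW_{L+1}]$ by $\sqrt{K/M}\,\vone$, expand $\phi$ to first order (with the second-derivative bound controlling the remainder at $O(1/M)$), and use NNGP convergence plus a completeness argument to certify that $\kappa\mapsto\E_{z\sim\Normal(0,\kappa)}[\phi'(z)]$ is non-constant. Your final step phrases the non-constancy via analytic continuation and ``Weierstrass-transform injectivity,'' which is a slight mislabel (the family here is width-varying Gaussians centered at the origin, not translated Gaussians of fixed width), but the argument is sound because $\phi'=\phi_{\mathrm{o}}'$ is even and the family $\{e^{-us^2}\}_{u>0}$ is complete against even functions (e.g.\ by the Laplace-transform substitution $v=s^2$); this is the same substance as the paper's appeal to \cref{lem:odd-fn1}.
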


\begin{proof}
    Consider the distribution of $\vz_{L-1}$ under the prior.
    Let $k$ be the covariance function of the NNGP associated to $\tfrac1{\sqrt{M}}\lra{\vep, \phi(\vz_{L-1})} + Z$ as $M \to \infty$, where $\vep$ is a vector with i.i.d.\ $\Normal(0, 1)$ entries and $Z \sim \Normal(0, 1)$.
    Henceforth, denote $k(\vx) = k(\vx, \vx)$.

    Note that $\phi' = \phi\ss{e}' + \phi\ss{o}' = \phi\ss{o}'$ is an even function.
    Suppose that $\vx \mapsto \E_{P}[\phi'(\sqrt{k(\vx)} Z)]$ is a constant function.
    By \cref{prop:convergence-lambda,prop:kernel-diagonal} in combination with the assumed conditions on $\phi$, it follows that $k(\R^{D\ss{i}})$ contains an open interval.
    Therefore, since $\phi'$ is a continuous even function and $\vx \mapsto \E_{P}[\phi'(\sqrt{k(\vx)} Z)]$ is a constant function, by an argument similar to the proof of \cref{lem:odd-fn1}, $\phi'$ must be equal to a constant function.
    However, since $\phi\ss{o}$ is non-linear, $\phi' = \phi\ss{o}'$ cannot be equal to a constant function.
    We conclude that $\vx \mapsto \E_{P}[\phi'(\sqrt{k(\vx)} Z)]$ cannot be equal to a constant function: there exist two inputs $\vx, \vx' \in \R^{D\ss{i}}$ such that $\E_{P}[\phi'(\sqrt{k(\vx)} Z)] \neq \E_{P}[\phi'(\sqrt{k(\vx')} Z)]$.
    Let $c > 0$ be the constant $c = \abs{\E_{P}[\phi'(\sqrt{k(\vx)} Z)] - \E_{P}[\phi'(\sqrt{k(\vx')} Z)]}.$

    Let $K > 0$.
    Consider the sequence of mean-field distributions $(\Q_M)_{M \ge 1}$ constructed by setting everything equal to the prior except for $\E_{Q_M}[\vb_L]= \mu \vone$ and $\E_{\Q_M}[\mW_{L+1}] = \mu \vone^\T$ with $\mu = \sqrt{K/M}$.
    Then indeed $\KL(Q_M, P) = K$.
    Moreover,
    \begin{equation}
        \tfrac1{\sqrt{M}}\mW_{L+1} \phi(\tfrac1{\sqrt{M}}\mW_L \phi(\vz_{L-1}) + \vb_L) + \vb_{L+1}
        \disteq
        \tfrac1{\sqrt{M}}\lra{\mu \vone + \vep, \phi(\tfrac1{\sqrt{M}}\mathbfcal{E} \phi(\vz_{L-1}) + \mu \vone + \vep')} + \vep''
    \end{equation}
    where $\vep$, $\vep'$, $\vep''$, and $\mathbfcal{E}$ are respectively three vectors and a matrix with i.i.d.\ $\Normal(0,1)$ entries and where $\vz_L$ is distributed under the prior.
    Using the observation that the elements of $\vz_L$ are identically distributed, compute
    \begin{equation}
        \E_{Q_M}[f(\vx)] = \tfrac\mu{\sqrt{M}} \E_{P}[\lra{\vone, \phi(\tfrac1{\sqrt{M}}\mathbfcal{E} \phi(\vz_{L-1}) + \mu \vone + \vep')}]
        = \sqrt{M} \mu\, \E_{P}[\phi(\tfrac1{\sqrt{M}}\lra{\vep''', \phi(\vz_{L-1})} + Z + \mu)]
    \end{equation}
    where $\vep'''$ is a vector with i.i.d.\ $\Normal(0, 1)$ entries and $Z \sim \Normal(0, 1)$.
    Note that $\sqrt{M}\mu = \sqrt{K}$ and
    call $Y_M = \tfrac1{\sqrt{M}}\lra{\vep''', \phi(\vz_{L-1})} + Z$.
    From Theorem 4 by \citet{matthews_2018} in combination with the assumed conditions on $\phi$, it follows that $Y_M \smash{\distto \sqrt{k(\vx)}} Z$ where $k$ is the earlier defined covariance function of the corresponding NNGP.
    
    We finally establish the asymptotic behaviour of $\E_{Q_M}[f(\vx)] - \E_{Q_M}[f(\vx')]$ in two steps.
    First, by a second-order Taylor expansion of $\mu \mapsto \E_P[\phi(Y_M + \mu)]$ around $\mu = 0$, using (a) $\norm{\phi'} < \infty $ and $\norm{\phi''}_\infty < \infty$ to interchange derivative and expectation and (b) $\norm{\phi''}_\infty < \infty$ and $\mu^2 = o(\tfrac1{\sqrt{M}})$ to determine the order of the error term,
    \begin{equation}
        \E_{Q_M}[f(\vx)] 
        = \sqrt{K} \Big[
            \E_{P}[\phi(Y_M)]
            + \mu \,\E_{P}[\phi'(Y_M)]
            + o(\tfrac1{\sqrt{M}})
        \Big].
    \end{equation}
    Note that $Y_M \disteq -Y_M$, so $\E_{P}[\phi(Y_M)] = \phi\ss{e} + \E_{P}[\phi\ss{o}(Y_M)] = \phi\ss{e}$.
    Second, since $Y_M \smash{\distto \sqrt{k(\vx)}} Z$ and $\phi'$ is continuous and bounded, $\E_P[\phi'(Y_M)] \to \E_P[\phi'(\sqrt{k(\vx)} Z)]$.
    Therefore,
    \begin{equation}
         \E_{Q_M}[f(\vx)] 
        = \sqrt{K} \Big[
            \phi\ss{e}
            + \mu \Big(\E_{P}[\phi'(\sqrt{k(\vx)} Z)] + o(1) \Big)
            + o(\tfrac1{\sqrt{M}})
        \Big],
    \end{equation}
    so, again using that $\sqrt{M} \mu = \sqrt{K}$,
    \begin{equation}
        \sqrt{M}
        (\E_{Q_M}[f(\vx)] 
        - \E_{Q_M}[f(\vx')])
        =
            K \Big(\E_{P}[\phi'(\sqrt{k(\vx)} Z)] - \E_{P}[\phi'(\sqrt{k(\vx')} Z)] + o(1) \Big)
            + o(1)
        = c K + o(1),
    \end{equation}
    which concludes the proof.
\end{proof}

\section{Experimental Setup}\label{app:exp-setup}

In this section we describe the details of our experiments.

\paragraph{Additional experiments}
Figure \cref{fig:many_datasets_var} shows the RMSE between the MFVI posterior posterior predictive variance and the prior predictive variance. 

\begin{figure}[t]
	\centering
	\includegraphics[width=.47\textwidth]{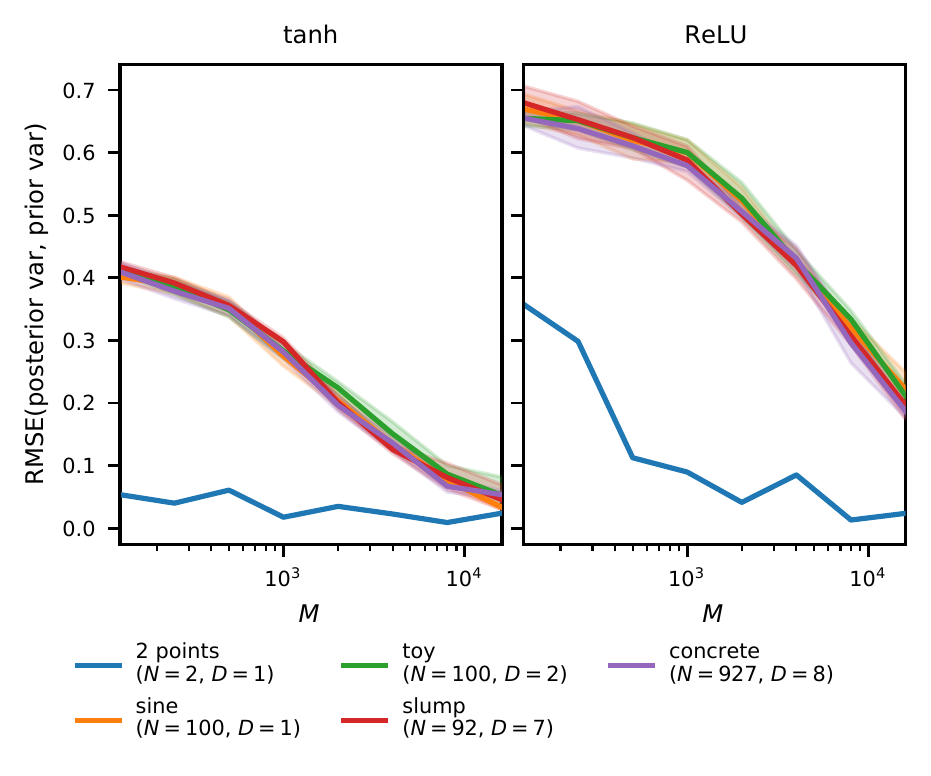}
	\caption{Root mean squared error (RMSE) of the posterior variance to the prior variance.}
	\label{fig:many_datasets_var}
\end{figure}

\paragraph{Architecture.} We only train single-layer ($L=1$), feed-forward networks of varying width $M$, as defined in Equations \ref{eqn:nn-output}-\ref{eqn:nn-base}. Notice we use the NTK parameterization, which scales the post-activations by the network width \citep{jacot_2020}. Depending on the experiment, we use ReLU, tanh, or erf activation functions, where $\erf{z} = 2/\sqrt{\pi} \int_0^z e^{-t^2}~dt$ is the error function. 

\paragraph{Prior and Likelihood.} We use a $\vtheta \sim \Normal(\vnull, \mI)$ prior for the neural network weights and a heteroscedastic Gaussian likelihood with a known variance. Except for the counterexample dataset, we set this variance to $0.025$.

\paragraph{Variational Family.} We use a mean-field Gaussian family for variational inference. In other words, we model each weight and bias parameter of the neural network by an independent Gaussian distribution. 

\paragraph{Initialization of Variational Parameters.} We initialize the variational mean and variance parameters from a normal-inverse-gamma family. Specifically, for any weight (or bias) of the neural network $\theta$, let $\mathcal{N}(\mu_Q, \sigma_Q^2)$ denote its variational distribution. We randomly initialize $\mu_Q\sim\mathcal{N}(0,1)$ and $\sigma_Q^2 \sim \mathcal{I}\mathcal{G}(\nu+1, \nu)$. It follows from the laws of total expectation and variance that $\mathbb{E}[\theta] = 0$ and $\mathbb{V}[\theta] = 2$. This allows the for a width-independent initialization of the weights, as is standard for the NTK parameterization, while allowing the hyperparameter $\nu$ to control the concentration of $\sigma_Q^2$ around its initial mean of one (i.e., $\mathbb{E}[\sigma_Q^2]=1$ and $\mathbb{V}[\sigma_Q]=1/(\nu-1)$). We set $\nu=100$ in our experiments. 

\paragraph{Datasets.} 
\begin{itemize}
    \item \textit{2 points} ($N=2$, $D\ss{i}=1$): This dataset consists of two points: $(-1,-1)$ and $(1,1)$. This dataset is used in all figures in this paper and is shown in \cref{fig:posteriors}.
    
    \item \textit{sine} ($N=100$, $D\ss{i}=1$): This is a synthetic dataset generated by $y = \sin(x) + \epsilon$, where $\epsilon \sim \mathcal{N}(0, .025)$ and $x\sim\text{Unif}(-5,5)$. 
    
    \item \textit{toy} ($N=100$, $D\ss{i}=2$): This is a synthetic dataset generated by $y = x_0 \sin(x_1) + \epsilon$, where $\epsilon \sim \mathcal{N}(0, .025)$,  $x_0\sim\text{Unif}(-5,5)$, and $x_1\sim\text{Unif}(-5,5)$.
    
    \item \textit{counterexample} ($N=2$, $D\ss{i}=1$): This synthetic dataset consists of two observations, $(0,8.24)$ and $(1,11.66)$. It is constructed to meet the conditions of the example discussed in \cref{app:counterexample}. The mean-field posterior predictive of a network with ReLU activation need not converge the prior when trained on this dataset. As part of the construction, we set the observational noise variance of the likelihood to $2.34 \times 10^{-3}$. This dataset is shown in \cref{fig:counterexample}.
    
    \item \textit{slump} ($N=103$, $D\ss{i}=7$): This is the Concrete Slump Test Data Set, available in the UCI Machine Learning Repository \citep{uci_slump}.
    
    \item \textit{concrete} ($N=1030$, $D\ss{i}=8$): This is the Concrete Compressive Strength Data Set, available in the UCI Machine Learning Repository \citep{uci_concrete}.
\end{itemize}
All variables (inputs $x$ and observations $y$) are z-scored standardized (i.e., by subtracting their mean and dividing by their standard deviation). For the synthetic datasets of only two training observations we do not construct test observations. For the larger synthetic datasets, we sample 100 test observations. For the real datasets, we use 10\% of the observations as test observations.

\paragraph{Training Procedure.} We use $20{,}000$ steps of stochastic gradient descent with a batch size of 100, a learning rate of 0.001, and a momentum of 0.9 for optimization. Note that since the post-activations are already scaled by $1/\sqrt{M}$ in the network definition, we do not scale the learning rate with the network width (see, e.g., Appendix F of \citep{lee_2019} for a discussion of the learning rates under the NTK parameterization). We use gradient clipping and cosine annealing of the learning rate, with warm restarts every 500 steps \citep{loshchilov_2017}. To evaluate the ELBO, we use the analytical form of the KL divergence and the reparameterization trick \citep{kingma_2014} with 16 samples to approximate the expected log likelihood term. 

\paragraph{Optimal bias.} In the case of a Gaussian likelihood, we can solve for the optimal variational distribution over bias when all variational parameters are set to the prior. This enables a smaller bound on $KL(Q^*,P)$ in practice. Let $\widetilde{P}$ be the standard prior distribution except with the distribution over the output bias replaced by a normal distribution $\mathcal{N}(\mu_b, \sigma^2_b)$. We will choose $\mu_b$ and $\sigma^2_b$ to maximize the ELBO. 
\begin{align}
\mathrm{ELBO}(\widetilde{P}) &= -\frac{N}{2}\log 2 \pi \sigma^2 - \frac{1}{2\sigma^2} \sum_{n=1}^N \E\left[\parens*{y_n - f(x_n)}^2\right] - \frac{1}{2}\parens*{\mu_b^2 + \sigma_b^2 -1 - \log(\sigma_b^2)} 
\\
& = C - \frac{1}{2\sigma^2} \sum_{n=1}^N \parens*{\E\left[ f(x_n)^2\right] - 2y_n\E\left[ f(x_n)\right]} -  \frac{1}{2}\parens*{\mu_b^2 + \sigma_b^2 - \log(\sigma_b^2)} 
\\
& = C' - \frac{1}{2\sigma^2} \sum_{n=1}^N \parens*{\Var[f(x_n)] + \E\left[ f(x_n)\right]^2 - 2y_n\E\left[ f(x_n)\right]} -  \frac{1}{2}\parens*{\mu_b^2 + \sigma_b^2 - \log(\sigma_b^2)} 
\\
& = C' - \frac{1}{2\sigma^2} \sum_{n=1}^N \parens*{\sigma_b^2 + \mu_b^2 - 2y_n\mu_b} -  \frac{1}{2}\parens*{\mu_b^2 + \sigma_b^2 - \log(\sigma_b^2)}
\\ 
& = C' - \frac{N}{2\sigma^2}\sigma_b^2 - \frac{N}{2\sigma^2} \mu_b^2 - \frac{\mu_b}{\sigma^2}\sum_{n=1}^Ny_n -  \frac{1}{2}\parens*{\mu_b^2 + \sigma_b^2 - \log(\sigma_b^2)},
\end{align}
where $C$ and $C'$ are constants. 
Differentiating with respect to $\mu_b$ and setting to $0$ we have,
\begin{align}
   \mu_b   = \frac{\sum_{n=1}^Ny_n}{N+ \sigma^2} 
\end{align}
and 
Similarly, we can differentiate with respect to $\sigma^2_b$ and set to 0, to obtain,
\begin{align}
    \sigma_b^2 = \frac{\sigma^2}{N+ \sigma^2}. 
\end{align}

\paragraph{Computing a bound on $KL(Q^*,P)$.}
By the optimality of $Q^*$ we have $\operatorname{ELBO}(Q^*) \ge \operatorname{ELBO}(\widetilde{P})$. As in step 3 of \cref{sec:proof-sketch}, it follows that
\begin{align}
    \KL(Q^*, P) &\le \E_{Q^*}[\mathcal{L}(\vtheta)] - \E_{\widetilde{P}}[\mathcal{L}(\vtheta)] + \KL(\widetilde{P}, P)
    \\
    &\le -\E_{\widetilde{P}}[\mathcal{L}(\vtheta)] + \KL(\mathcal{N}(\mu_b,\sigma^2_b), \mathcal{N}(0,1))
    \\
    &\le \frac{1}{2\sigma^2}\sum_{n=1}^N \E_{\widetilde{P}}[(y_n - f(\vx_n))^2] + \frac{1}{2}\parens*{\mu_b^2 + \sigma_b^2 - \log(\sigma_b^2)}. \label{eqn:kl-ub-exp}
\end{align}
For our experiments we compute the expectation by Monte Carlo sampling. Using $\widetilde{P}$ instead of $P$ lowers the upper bound on $\KL(Q^*, P)$ for any dataset for which the increase in the log likelihood from using the optimal bias more than offsets the increase in the KL divergence to the prior (e.g., datasets that are shifted by a constant from the prior mean of zero, as in the counterexample dataset). In the case of a one-hidden layer network, we can evaluate the expectation in \cref{eqn:kl-ub-exp} either using properties of the activation in closed form or up to special function, or via one-dimensional Gaussian quadrature more generally. Additionally, the expectation is independent of $M$.

\paragraph{Figures.} Here we explain a few details specific to each figure \begin{itemize}
    \item \textit{\cref{{fig:posteriors}}}: The shaded region represent $\pm 1$ standard deviation. 
    
    \item \textit{\cref{{fig:convergence}}}: We train on the ``2 points'' dataset. We use $1{,}000$ samples to estimate the posterior predictive mean on a grid of $25$ inputs spaced uniformly over $[-1,1]$. To reduce Monte-Carlo error, we also estimate the predictive mean under the prior with the same random seed. For each $M$, we use the same random seed, so the shaded regions reflect the randomness in the variational parameter initialization only (we use 10 random initializations). We then plot the largest absolute difference from the prior mean of zero. To compute the theoretical bound we use \cref{eqn:1hl-good-constant}, with the KL divergence estimated as in \cref{eqn:kl-ub-exp} and $\norm{\vx}_2^2\le 1$.

    \item \textit{Figures \ref{fig:many_datasets_mean_tanh} and \ref{fig:many_datasets_mean_relu} }: We use 5 different train/test splits (or 5 different random datasets in the case of the synthetic datasets). For each dataset and each $M$, we select the the model with the highest ELBO among two random restarts of the variational parameters.  The shaded regions represent 95\% confidence intervals estimated by boostrapping. To compute the RMSE to the prior, we use $1{,}000$ samples of the posterior predictive evaluated at 100 input points drawn randomly from a uniform distribution over $[-1,1]$ in each input dimension. 
    
    \item \textit{\cref{{fig:counterexample}}}: We train single-layer networks of width $4{,}096{,}000$.
    
\end{itemize}

\end{document}